\documentclass[letterpaper, english]{article}

\usepackage[T1]{fontenc}
\usepackage{babel,csquotes}
\usepackage{microtype}
\usepackage{hyperref}
\hypersetup{
	unicode = true,
	colorlinks=true,
	linkcolor=blue,
	citecolor=blue,
	urlcolor=blue,
    pdftitle = {Characterization of the Basin of Convexity for Multi-Snapshot Spike Deconvolution via Variable Projection},
	pdfauthor = {Meghna Kalra, Maxime Ferreira Da Costa, and Kiryung Lee}
	}
\usepackage[margin=1in]{geometry}
\usepackage{lmodern}
\usepackage{enumitem}

\usepackage{biblatex}
\usepackage{graphicx}%
\usepackage{multirow}%
\usepackage{amsmath,amssymb,amsfonts}%
\usepackage{amsthm}%
\usepackage{mathrsfs}%
\usepackage{xcolor}%
\usepackage{booktabs}%
\usepackage{algorithm}%
\usepackage{algorithmicx}%
\usepackage{algpseudocode}%
\usepackage[svgnames]{xcolor}
\usepackage{subcaption}
\usepackage{bm,bbm}   
\usepackage{amsmath}
\usepackage{mathtools}

\newtheorem{theorem}{Theorem}%
\usepackage{algorithm}%
\usepackage{algorithmicx}%
\usepackage{algpseudocode}%
\newtheorem{remark}{Remark}%

\newcommand\norm[1]{\lVert#1\rVert}
\newtheorem*{theorem*}{Theorem}

\newtheorem{lemma}[theorem]{Lemma}

\usepackage{comment}

\let\hat\widehat

\title{Characterization of the Basin of Convexity for\\ Multi-Snapshot Spike Deconvolution via Variable Projection}

\author{
Meghna Kalra$^{1}$,
Maxime Ferreira Da Costa$^{2}$, and
Kiryung Lee$^{1}$
\\[1em]
$^{1}$Department of Electrical and Computer Engineering\\
The Ohio State University\\
2015 Neil Ave., Columbus, OH 43210, USA
\\[1em]
$^{2}$Laboratory of Signals and Systems\\
CentraleSupélec, Université Paris--Saclay\\
3 rue Joliot Curie, Gif-sur-Yvette, 91190, France\thanks{This work was supported in part by the National Science Foundation under 
Award CCF-1943201 (M.K.\ and K.L.) and by the Agence Nationale de la 
Recherche under Award ANR-24-CE48-3094-01
 (M.F.D.C.).
}
}

\date{\today}

\begin{document}

\maketitle

\abstract{The problem of multi-snapshot spike deconvolution is studied, where the goal is to recover the locations of sparse impulses from their noisy convolution with a known point spread function (PSF) across multiple snapshots. A variable-projection formulation is adopted, in which the amplitudes are eliminated in closed form, thereby reducing the task to a nonconvex least-squares problem over the spike locations alone. This formulation is referred to as the variable-projection formulation of spike deconvolution (VarProSD). An explicit characterization of the basin of convexity of the VarProSD objective is provided in terms of key PSF properties, including its power spectral density and smoothness, revealing how sampling bandwidth and spike separation affect the local geometry. Within this basin, consistency of the estimator in the number of snapshots is established under stochastic noise, and a complementary, sharper error bound is derived under adversarial noise through the local Lipschitz property of the inverse map. Local convergence guarantees for gradient descent are further established when initialization is performed within the basin. A central role throughout the analysis is played by Beurling--Selberg extremal approximations, which enable sharp, PSF-agnostic bounds on the conditioning of the structured matrices arising in the optimization landscape. Numerical experiments are presented to corroborate the theoretical findings and demonstrate the effectiveness of modified ESPRIT initialization followed by gradient-based refinement.}

\paragraph{Keywords:} Spike Deconvolution, Variable Projection, Beurling–Selberg Approximation, Local convergence.

\paragraph{MSC Classification:} 65K10, 65T99, 94A12, 94A20, 9408

\maketitle

\newpage

\tableofcontents

\clearpage

\section{Introduction}\label{sec1 intro}

\subsection{Problem Formulation}

Consider the problem of estimating pulse parameters from multiple
snapshots. Each snapshot is composed of a superposition of $K$ pulses,
each shaped by a known point spread function (PSF)
$g: \mathbb{R} \to \mathbb{R}$ centered at common locations
$\{\tau_k\}_{k=1}^K$, with varying amplitudes across snapshots.
The observed signal in the $\ell$th snapshot is given by
\begin{equation*}
y_\ell(t) = \sum_{k=1}^K x_{k,\ell}\, g(t-\tau_k) + \mathrm{noise},
\quad \ell \in [L] := \{1,\dots,L\}.
\end{equation*}
The objective is to estimate the unknown locations
$\{\tau_k\}_{k=1}^K$ from the Fourier transform coefficients of
$y_\ell$, sampled at frequencies in the set
$\Omega := \bigl\{f_i = \frac{2i-N-1}{2T} : i \in [N]\bigr\}$
across $L$ snapshots.
Let $\bm{Y} \in \mathbb{C}^{N \times L}$ denote the matrix collecting
the measurements, where each entry $[\bm{Y}]_{i,\ell}$ corresponds to
the Fourier transform of $y_{\ell}$ evaluated at frequency $f_i$,
\emph{i.e.}
$\widehat{y_\ell}(f) = \int_{-\infty}^\infty y_\ell(t)
e^{-\mathsf{j} 2\pi f t}\,dt$, corrupted with additive noise.
Define the mapping $\bm{\gamma} \mapsto \bm{\Phi}_{\bm{\gamma}}$
from $\bm{\gamma} = [\gamma_1,\dots,\gamma_K]^\mathsf{T} \in
\mathbb{R}^K$ to a Vandermonde matrix
$\bm{\Phi}_{\bm{\gamma}} \in \mathbb{C}^{N \times K}$ given by
\begin{equation}
\label{eq:Phi_tau_def}
[\bm{\Phi}_{\bm{\gamma}}]_{i,k} = e^{-\mathsf{j} 2\pi \gamma_k f_i},
\end{equation}
and let $\bm{\tau} = [\tau_1,\dots,\tau_K]^\mathsf{T}$ denote the
ground-truth parameter vector.
Then the matrix $\bm{Y}$ can be compactly expressed as
\begin{align*}
\bm{Y} = \bm{G}\bm{\Phi}_{\bm{\tau}}\bm{X} + \bm{Z},
\end{align*}
where $\bm{Z} \in \mathbb{C}^{N \times L}$ denotes the noise matrix,
and $\bm{X} \in \mathbb{C}^{K \times L}$ is defined elementwise by
$[\bm{X}]_{k,\ell} = x_{k,\ell}$.
The diagonal matrix $\bm{G} \in \mathbb{C}^{N \times N}$ is given by
\begin{equation}
\label{eq:G_def}
\bm{G} = \mathrm{diag}\!\left(\widehat{g}(f_1),\dots,
\widehat{g}(f_N)\right) \in \mathbb{C}^{N \times N},
\end{equation}
where $\widehat{g}$ denotes the Fourier transform of $g$.
Given the measurement matrix $\bm{Y}$ and the known point spread function (PSF) $\bm{G}$,
the goal is to recover the spike locations
$\{\tau_k\}_{k=1}^K$ without knowledge of the amplitude matrix $\bm{X}$.

This problem formulation aligns with the well-known \emph{spike
deconvolution} problem in the literature.
Such a model arises in many real-world signal processing applications
like array processing, medical imaging, radar, and wireless
communications. For instance, in radar sensing, the received signal is
modeled as the convolution of a known, well-calibrated transmitted
waveform with a sparse set of target echoes, corrupted by noise.
Recovering the precise delays of these echoes reduces to a spike
deconvolution problem, which is central to high-resolution target
detection and localization~\cite{bajwa2011identification}.
Similarly, in seismic imaging, the received signal is modeled as the
convolution of a known seismic wavelet with a sparse reflectivity
profile (series of spikes), corrupted by noise~\cite{velis2008stochastic}.
The goal is to recover the precise locations and amplitudes of
subsurface reflectors from convolved and noisy measurements.

We formulate the estimation task as an optimization problem that seeks
the spike locations best explaining the observed data. Concretely,
this amounts to minimizing the overall discrepancy across the Fourier
measurements:
\begin{equation}\label{eq:least_squares1}
\mathop{\mathrm{minimize}}_{\bm{\gamma}\in \mathbb{R}^K,\,
\bm{\Upsilon} \in \mathbb{C}^{K \times L}}
~\frac{1}{2L}
\norm{\bm{Y}-\bm{G}\bm{\Phi}_{\bm{\gamma}}\bm{\Upsilon}}_{\mathrm{F}}^2.
\end{equation}
Substituting $\bm \Upsilon$ in \eqref{eq:least_squares1} with its optimal solution conditioned on $\bm \gamma$, we arrive at an equivalent optimization formulation, which we refer to as the Variable Projection formulation of Spike Deconvolution (\emph{VarProSD}):
\begin{align*}
 \mathop{\rm minimize}_{\bm{\gamma} \in \mathbb{R}^K}\, \, 
\ell(\bm{\gamma})
\end{align*}
where $\ell(\bm{\gamma})$ is defined as
\begin{equation}
\label{eq:loss_function}
\ell(\bm{\gamma}) := 
~\frac{1}{2L}\norm{\bm{P}_{\bm{\gamma}}^\perp \bm{Y}}_\mathrm{F}^2,
\end{equation}
and
\begin{equation}
\label{eq:def:orth_proj}
\bm{P}_{\bm{\gamma}}^\perp :=
\bm{I}_N - \bm{G}\bm{\Phi}_{\bm{\gamma}}
(\bm{G}\bm{\Phi}_{\bm{\gamma}})^\dagger 
\end{equation} denotes the projector onto the orthogonal complement of the column
space of $\bm{G}\bm{\Phi}_{\bm{\gamma}}$.
This reformulation strategy, known as \emph{variable projection} (VP)
in the general literature, was introduced by
\cite{golub1973differentiation}.
Golub and Pereyra further derived lemmas for the derivatives of the
pseudoinverse and the projection operator, yielding a closed-form
expression for the gradient of the cost function
in~\eqref{eq:loss_function}.
We prefer the VarProSD formulation~\eqref{eq:loss_function} over jointly optimizing~\eqref{eq:least_squares1} for two reasons.
First, while the number of spike locations is fixed at $K$, the
number of amplitude variables in $\bm{\Upsilon}$ scales with the
number of snapshots $L$, leading to a high-dimensional optimization
problem in the large-$L$ regime. 
Second, and more fundamentally, the spike locations and the amplitudes
behave asymmetrically: the locations $\bm{\tau}$ form a parameter of
fixed dimension $K$, independent of $L$, and can be consistently
estimated as $L \to \infty$ even under noisy conditions. In contrast,
the amplitude estimates do not converge to the ground-truth amplitudes,
even with oracle knowledge of $\bm{\tau}$. This asymmetry highlights
the advantage of reducing the optimization to the spike locations,
which are the parameters of primary interest in spike deconvolution.

\subsection{Related Work}
In the classical noise-free setting, equalization—implemented as pointwise division by the Fourier transform of the PSF—can simplify spike deconvolution to a harmonic retrieval problem. This reduced problem can be solved exactly using well-established algorithms such as MUSIC \cite{schmidt1986multiple}, ESPRIT (Estimation of Signal Parameters via Rotational Invariance Techniques) \cite{roy1989esprit}, and Prony’s method \cite{prony1795essai}. However, in the presence of noise, equalization can significantly amplify estimation errors. Although the original ESPRIT algorithm has been shown to tolerate the effects of the PSF to some extent—even without explicit knowledge of its shape \cite{bresler1989resolution,kalra2024stable}—the reconstruction still suffers from persistent distortion due to the PSF, even when a large number of snapshots are available.
In addition, a variant of ESPRIT has been proposed to estimate spike locations by leveraging the known PSF without relying on equalization \cite{swindlehurst1999methods}. While this approach was also extended to handle unknown PSFs, its theoretical guarantees are limited to the noise-free setting. A key limitation of ESPRIT and its variants is their inability to fully exploit the structural assumptions inherent in the problem formulation. In the special case of a trivial PSF (Dirac), TLS-ESPRIT has been shown to be equivalent to a nonconvex optimization formulation that factorizes the split data into subarrays using rotation invariance \cite{ottersten2002performance}. However, this analysis reveals that TLS-ESPRIT overlooks the Vandermonde structure embedded in the measurement model.

In recent years, convex optimization methods—such as the atomic norm minimization framework—have been developed to fully exploit the problem structure without discretizing the parameter space \cite{decastro2012exact,candes2013super,candes2014towards,chi2020harnessing}. This framework has been extended to a broad class of measurement models, including time-domain sampling and scenarios with missing data \cite{schiebinger2018superresolution,tang2013compressed}.

Non-convex optimization approaches, including simple first-order methods, have shown strong performance in signal estimation and machine learning tasks~\cite{traonmilin2020basins, traonmilin2024strong}. In the context of spike deconvolution, preconditioned gradient methods have been studied to exploit the specific geometry of the optimization landscape~\cite{ferreira2023local,gabet2024preconditioned}. These methods offer improved convergence rates toward the ground truth in the absence of noise, along with provable statistical estimation error bounds. However, existing theoretical analyses assume a fixed and specific PSF and are limited to the single-snapshot observation setting. Yet, those previous analysis studies the contraction of the gradient descent iterates in $\ell_\infty$-norm, which lacks scaling when the model order is high, and utilizes loose numerical approximations of the operator norms.

Another line of work focuses on the super-resolution regime (where the minimum distance between the spikes falls below the Rayleigh limit) for reconstruction of Diracs. Their reconstruction typically require extra assumptions, such as the spikes forming specific patterns like clusters and pairs \cite{li2020super,li2022stability,batenkov2021super,kunis2021condition}. 
While this line of research has addressed challenging and important scenarios in practical applications, the present work focuses on a different difficulty—signal recovery in the presence of blurring caused by a non-trivial PSF.

\subsection{Contributions}

We study spike deconvolution under a non-trivial PSF from multiple
measurement snapshots. Since the resulting estimation problem is
non-convex, a rigorous characterization of when and how
optimization-based recovery succeeds is essential.
As illustrated in Figure~\ref{fig:motivation_snr1}, modified ESPRIT remains stable under noise but exhibits a persistent 
gap to the $\sqrt{\mathrm{CRB}}$ (square root of the Cram\'{e}r--Rao 
bound, a fundamental lower bound on estimation variance) ~\cite{ferreira2023conditionNumber,scharf1993geometry}. Initializing gradient-based iterative
refinement from modified ESPRIT substantially reduces this gap,
approaching the $\sqrt{\mathrm{CRB}}$ at moderate-to-high SNR. 
This motivates the following contributions.

\begin{figure}[t]
  \centering
\includegraphics[width=0.55\textwidth]{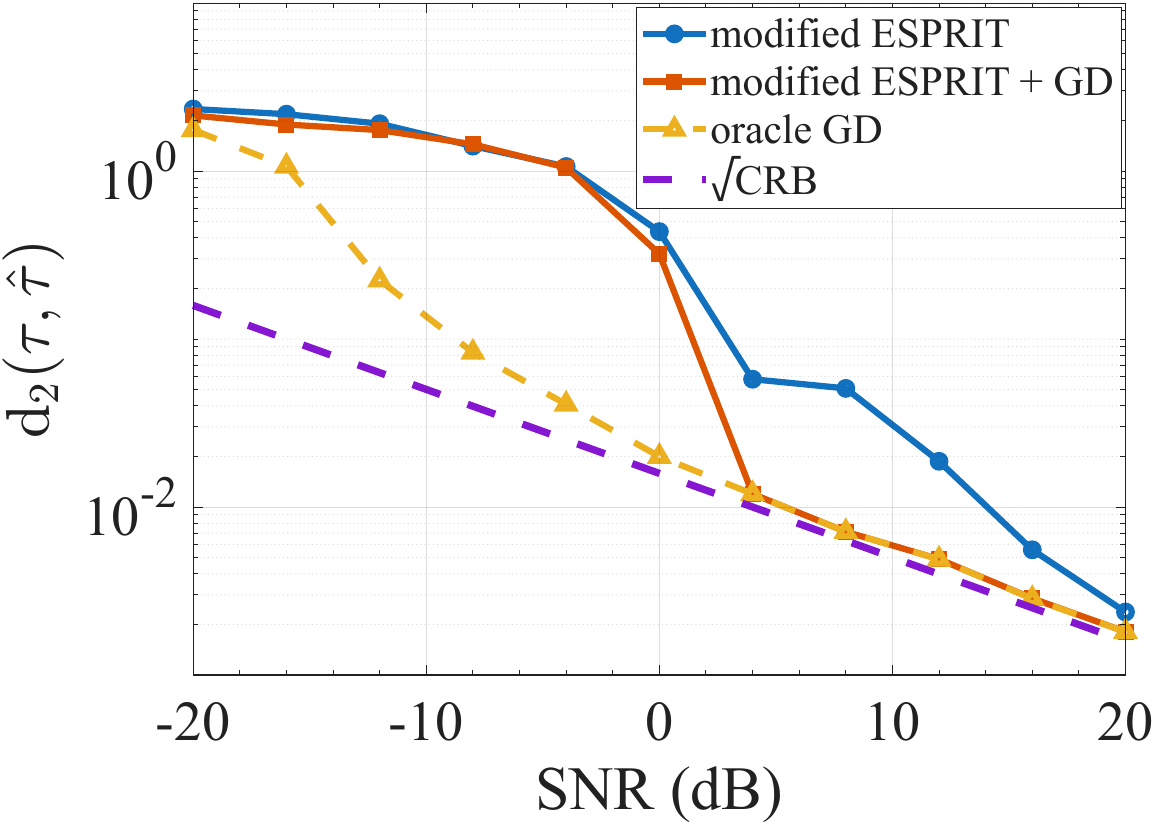}
  \caption{Estimation error vs.\ SNR under a Gaussian PSF for
    different initialization strategies ($\sigma = 0.1$, $B = 11$,
    $N = 51$, $L = 20$). Modified ESPRIT exhibits a persistent gap
    to the $\sqrt{\mathrm{CRB}}$, while initializing gradient-based
    optimization from modified ESPRIT significantly reduces this gap,
    approaching the $\sqrt{\mathrm{CRB}}$ for moderate to high SNR
    (approximately $\geq 5$~dB).}
  \label{fig:motivation_snr1}
\end{figure}

\begin{enumerate}[wide]

\item \textbf{Characterization of the Size of the Basin of
Convexity:}
We provide the first explicit characterization of the basin of
convexity for the VarProSD objective in the spike deconvolution
setting.
Specifically, we characterize a neighborhood
$\mathcal{N}(\bm{\tau},\varrho)$ of radius $\varrho$ around the
ground truth $\bm{\tau}$ within which the objective is strongly convex and admits a unique local minimizer $\bm{\gamma}_\star$.
The radius $\varrho$ is expressed explicitly in terms of interpretable
problem parameters: the spectral properties of the PSF, the minimum
separation between spikes, the dynamic range of the amplitudes, and
the sampling bandwidth $B = N/T$. No prior result quantifies the basin size for spike deconvolution with an arbitrary PSF under multiple snapshots. The radius $\varrho$ is given 
explicitly in~\eqref{eq:basin_radius_cond} in Section~\ref{subsec:basin}.

To state this precisely and build intuition, we introduce the following
definitions, and then visualize the geometry of the cost
function~\eqref{eq:loss_function} under different separation regimes
in Figure~\ref{fig:basin_vs_min_sep11}.
Since sampling in the Fourier domain over the uniform grid $\Omega$,
spaced by intervals of $1/T$, introduces modulo-$T$ ambiguities in
the time domain, we recover the locations $\{\tau_k\}_{k=1}^K$ on
the length-$T$ torus $\mathbb{T} \simeq \mathbb{R}/T\mathbb{Z}$,
with the wrap-around distance
\begin{align}
\label{eq:torus_def}
d_{\mathbb{T}}(\tau,\gamma) = \min_{p \in \mathbb{Z}}
|\tau - \gamma + pT|.
\end{align}
The minimum separation between the ground-truth locations is defined
as
\begin{equation}
\label{eq:min_sep}
\Delta := \begin{cases}
\displaystyle\min_{j \neq k} d_{\mathbb{T}}(\tau_j,\tau_k)
& \text{if } K \geq 2,\\
T & \text{if } K = 1.
\end{cases}
\end{equation}
We use the two-spike case for ease of illustration, as it allows the
cost function to be visualized as a two-dimensional surface.
Specifically, we evaluate the Hessian of the cost
function~\eqref{eq:loss_function} at each point $(\gamma_1,\gamma_2)$
and compute its smallest eigenvalue.
A binary mask is then constructed by labeling a point as inside
(white) the region of convergence if the minimum eigenvalue is
strictly positive, and outside (black) otherwise.
In other words, the white region corresponds to points where the
Hessian is positive definite, indicating that the cost function is
locally strongly convex in a neighborhood of that point.
In our work, we focus on the region of convergence that contains the
ground truth and mark the corresponding local minimizer as an orange
square.
The blue cross indicates the ground-truth location, while the magenta
circle represents the neighborhood $\mathcal{N}(\bm{\tau},\varrho)$
as predicted by our theoretical analysis, lying entirely within the
white region of convergence.
The radius of this magenta circle is defined as the largest value for
which it remains fully contained within the region of convergence
(note that this region need not be convex as a set).

\begin{figure}[t]
  \centering
  \begin{subfigure}[b]{0.40\textwidth}
    \includegraphics[width=\columnwidth]{%
      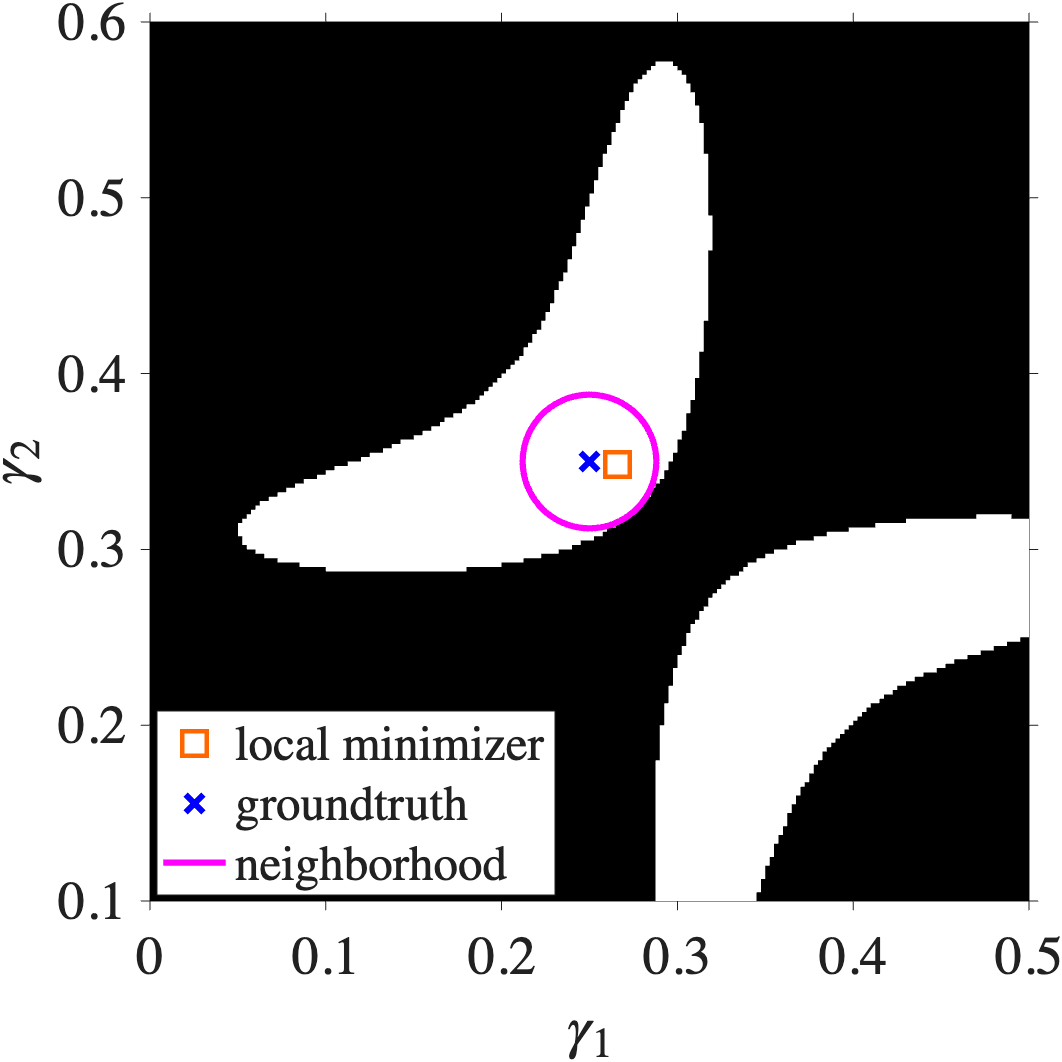}
    \caption{$\Delta = 0.1$}
  \end{subfigure}
  \hfill
  \begin{subfigure}[b]{0.40\textwidth}
    \includegraphics[width=\columnwidth]{%
      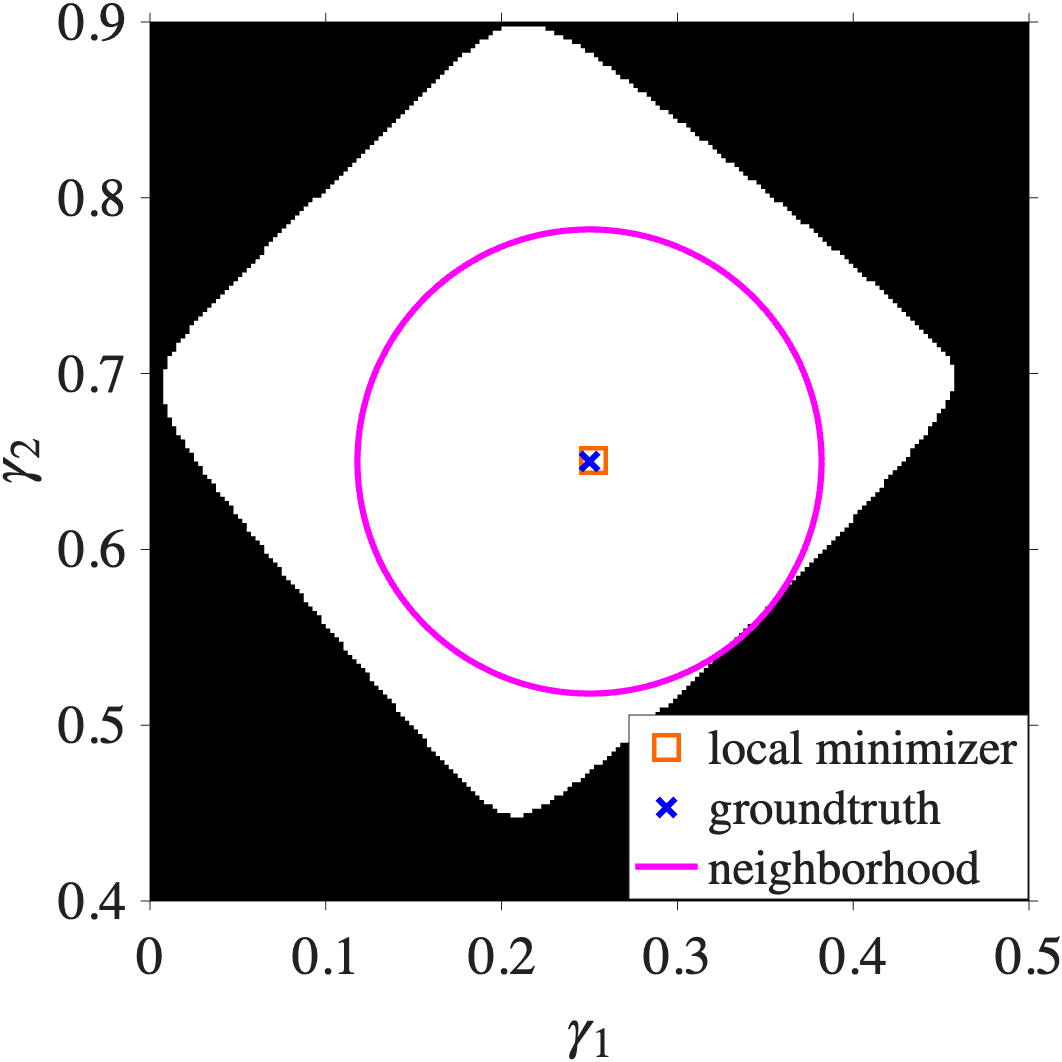}
    \caption{$\Delta = 0.4$}
  \end{subfigure}
  \caption{Visualization of the region of convergence of the VarProSD
    objective~\eqref{eq:loss_function} with varying minimum separation
    $\Delta$ under a Gaussian PSF ($\sigma = 0.3$, $K=2$,
    SNR $= 20$~dB, $L=20$, $N=11$, $T=1$).
    The white region is the region of convergence (positive-definite
    Hessian); the magenta circle is the theoretically characterized
    neighborhood $\mathcal{N}(\bm{\tau},\varrho)$ around the ground
    truth (blue cross); the orange square marks the local minimizer.}
  \label{fig:basin_vs_min_sep11}
\end{figure}
The plots in Figure~\ref{fig:basin_vs_min_sep11} illustrate how the
region of convergence and the neighborhood
$\mathcal{N}(\bm{\tau},\varrho)$ expand as the minimum separation
$\Delta$ between spikes increases, shown here under the Gaussian PSF
$g(t) = \frac{1}{\sqrt{2\pi}\sigma}
\exp\!\left(-\frac{t^2}{2\sigma^2}\right)$ with $\sigma=0.3$.
As $\Delta$ increases from $0.1$ to $0.4$, both the white region of
convergence and the magenta neighborhood grow, confirming that greater
source separation yields a larger basin within which recovery is
guaranteed. A rigorous statement of this characterization appears in Section~\ref{subsec:basin}.

\item \textbf{Estimation Error Analysis:}
Having established the basin of convexity and the existence 
of a unique local minimizer $\bm{\gamma}_\star$ within it, 
we also characterize its accuracy relative to the ground-truth 
spike locations $\bm{\tau}$. Prior non-asymptotic analyses 
of ESPRIT~\cite{li2022stability, yang2022nonasymptotic} 
establish consistency and $\mathcal{O}(1/\sqrt{L})$ error 
decay in the number of snapshots, but only for the trivial 
PSF. Prior work with a non-trivial PSF~\cite{kalra2024stable} 
establishes a similar non-asymptotic bound for ESPRIT, but 
with an extra term due to the PSF that persists even as 
$L \to \infty$. Under stochastic i.i.d.\ Gaussian noise, we show that the local minimizer $\bm{\gamma}_\star$ achieves consistency and the same $\mathcal{O}(1/\sqrt{L})$ decay rate for an arbitrary PSF, with the estimation error depending explicitly on the spectral properties of the PSF, the noise level, the number of spikes, and the dynamic range of the 
amplitudes. A rigorous statement appears in 
Theorem~\ref{thm:minimizer_error} in Section~\ref{subsec:basin}.

We also characterize the estimation error under adversarial 
perturbations. While the stochastic analysis above assumes 
i.i.d.\ Gaussian noise, perturbations in many real-world 
applications are structured rather than random. For example, 
in a spike-injection attack on localization, an adversary 
concentrates all noise energy into fake 
pulses~\cite{li2023channel}, yielding significantly larger 
estimation errors than stochastic models predict. Similarly, in surface EMG, cross-talk spikes from neighboring muscles~\cite{de2012inter} can be misinterpreted as legitimate motor-unit firings. These structured perturbations may be significantly underestimated by stochastic noise models.
We construct a worst-case adversarial perturbation as a
rank-one noise matrix that maximally degrades the estimator's
performance, and establish a deterministic error bound via a
local Lipschitz characterization of the inverse map from noise
realizations to the least-squares estimate. This viewpoint is inspired by the work in~\cite{basu2000stability}, 
where the authors proved the existence of a smooth inverse map for general nonlinear least-squares problems. Our contribution 
is to explicitly compute the Jacobian of this map in the spike 
deconvolution setting and characterize the worst-case adversarial 
noise, with the explicit construction given in Section~\ref{sec:adv_noise}. Fixing all problem parameters and varying only the noise, 
the adversarial bound is sharper than the stochastic bound by a 
factor of $\sqrt{K}$ in the high-SNR, large-snapshot regime. 
A rigorous statement appears in Theorem~\ref{thm:mainalt} 
in Section~\ref{sec:adv_noise}.
\item \textbf{Convergence of Gradient Descent:}
The existence and accuracy of the local minimizer 
$\bm{\gamma}_\star$ established in previous contributions hold independently of the algorithm used to find it.
We now analyze gradient descent for reaching $\bm{\gamma}_\star$ 
from within the basin $\mathcal{N}(\bm{\tau},\varrho)$.

While the variable projection framework provides a useful
reformulation by eliminating the amplitudes, its convergence analysis
under gradient descent in the spike deconvolution setting, to the
best of our knowledge, has not been previously addressed in the
literature.
Convergence of VP has been analyzed in non-smooth optimization via
proximal-gradient methods~\cite{van2021variable}, and for regularized VP
in ill-posed inverse problems, where local convergence is
established~\cite{espanol2025local}. Neither framework addresses plain gradient descent with the structure
inherent in spike deconvolution.
While these analyses establish local convergence of Gauss--Newton under
a sufficiently accurate initialization \cite{ruhe1980algorithms}, they do
not characterize the explicit size or radius of such a neighborhood.
In contrast, our contribution lies not in proposing a new optimization method but in providing the first explicit characterization of the
basin of convexity for the VP objective in the spike deconvolution
setting, in terms of interpretable problem parameters such as the
PSF properties, the sampling bandwidth, the spike separation, and
the dynamic range of the amplitudes.
Specifically, we establish that gradient descent initialized within
$\mathcal{N}(\bm{\tau},\varrho)$ converges linearly to
$\bm{\gamma}_\star$.
Rigorous convergence guarantees for gradient descent are established in 
Theorem~\ref{thm:gd_convergence} (Section~\ref{subsec:gd_convergence}).

\item \textbf{Beurling--Selberg Approximation Theory as a Key Theoretical Tool:}
Our theoretical analysis leverages bounds on the extreme singular
values of generalized Vandermonde matrices, also known as the large
sieve inequality, through the Beurling--Selberg approximation theory
of functions of bounded variation~\cite{vaaler1985ExtremalFunctions,
vaaler2023NumberLattice}. These functions are used in~\cite{Ferreira2022Stability} to
characterize a phase transition on the minimal separation needed to
stably estimate spike locations in the presence of noise given an
arbitrary PSF, and we build on this foundation throughout our analysis.
This approach enables our results to accommodate an arbitrary PSF,
assumed only to have a power spectral density of bounded variation. The resulting bounds are independent of the number of spikes $K$, providing guarantees that scale cleanly with the model order.
\item \textbf{Utility of PSF Parameter Characterization for Sampling 
Bandwidth Selection:}
A practical consequence of our theoretical analysis is a principled 
strategy for selecting the sampling bandwidth $B = N/T$ given a known 
PSF. The spectral characteristics of the PSF that govern the size of 
the basin of convexity also depend on $B$, and our analysis reveals 
that choosing a large $B$ is not always optimal. Hence, we develop a 
strategy to empirically choose the optimal $B$ based on the PSF 
characteristics and how it enlarge the basin and improve recovery. 
Our contribution is not to provide a theoretically or mathematically 
optimal $B$. Instead, we show empirically in 
Section~\ref{sec3 numerical result} that the bandwidth predicted by 
our PSF parameter characterization closely tracks the empirically 
optimal bandwidth that minimizes the empirical error across a range 
of PSFs (Gaussian, Morlet) and pulse widths. This provides a practical, PSF-driven method for sampling bandwidth selection.

\end{enumerate}

\subsection{Mathematical Notation}
Vectors and matrices are written in small boldface $\bm{a}$ and
capital boldface $\bm{A}$, respectively.
The notations $\bm{A}^\mathsf{T}$, $\bm{A}^\mathsf{H}$, and
$\bar{\bm{A}}$ denote the transpose, the conjugate transpose, and
the entry-wise conjugate of matrix $\bm{A}$, respectively.
The notations $\norm{\bm{A}}$, $\norm{\bm{A}}_{\mathrm{F}}$,
$\norm{\bm{A}}_{\infty \rightarrow 2}$ refer to the spectral norm,
Frobenius norm, and the largest $\ell_2$ norm of the columns of a
matrix $\bm{A}$, respectively.
The maximum (resp.\ minimum) eigenvalue and singular value of
$\bm{A}$ are denoted $\lambda_{\max}(\bm{A})$ and
$\sigma_{\max}(\bm{A})$ (resp.\ $\lambda_{\min}(\bm{A})$ and
$\sigma_{\min}(\bm{A})$).
The shorthand notation $[n]$ denotes the set of integers
$\{1,\ldots,n\}$.
The number $j$ is the basis of imaginary numbers, $j^2 = -1$. The notation $a\lesssim b$ denotes the relationship between two scalars $a$ and $b$ such that there exists an absolute constant $C>0$ for which $a \leq Cb$.
The symbols $\odot$ and $*$ denote the Hadamard and Khatri-Rao
product (also known as the column-wise Kronecker product) respectively.
The symbol $\hat{h}$ denotes the Fourier transform of $h$.
The total variation of a measure $\mu$ is written $V(\mu)$ and
defined by
\begin{align*}
V(\mu) := \sup \left\{ \int_{-\infty}^{+\infty} h(f)\,
\mathrm{d}\mu(f)\;;\quad h \in \mathcal{C}_0,\;
\norm{h}_{L_\infty} \leq 1 \right\},
\end{align*}
where $\mathcal{C}_0$ denotes the space of continuous functions of
the real variable.

\subsection{Organization of the paper}
The rest of the paper is organized as follows.
Section~\ref{sec:main_results} presents the main results, including the
local geometry of the basin of convexity, the estimation error of the
local minimizer under stochastic noise, an alternative stability analysis
under adversarial noise via the local Lipschitz characterization of the
inverse map, and local convergence guarantees for gradient descent.
Section~\ref{sec3 numerical result} provides numerical results to
validate the theoretical findings, including performance under different
initializations, resolution limits, the role of bandwidth selection, and
error scaling with key parameters.
Section~\ref{sec4 key lemma} introduces the key lemmas on the conditioning
of structured matrices that support the main theorems, while
Section~\ref{sec5 derivation proof} details the derivation of the main
results.
Finally, Section~\ref{sec6 discussion} presents the discussion and open
questions to conclude the paper.
The appendices provide supplementary material, including prior art on the
conditioning of structured matrices, properties of the Khatri--Rao and
Hadamard products, the closed-form gradient, Jacobian, and Hessian
identities for the VarProSD objective, and the proofs of the supporting
lemmas.

\section{Main Results}
\label{sec:main_results}

In this section, we present theoretical guarantees for the VarProSD estimator. We first characterize the local geometry of the VarProSD objective in a neighborhood of the ground truth spike locations. In particular, we define a quantifiable basin of convexity within which the objective function is strongly convex and has Lipschitz continuous gradient, ensuring the existence and uniqueness of a local minimizer.

We then analyze the estimation accuracy of this minimizer and derive error bounds that characterize how the estimation error depends on the residual covariance. In addition, we study the effect of structured or adversarial perturbations and establish an alternative stability bound that captures worst-case perturbations.

Finally, we analyze the behavior of practical optimization algorithms used to minimize the VarProSD objective and establish local convergence guarantees for gradient descent when initialized within the basin of attraction.

To state our results, we rely on the spectral properties of the PSF which is captured through the following quantities.
Let $P_g: \mathbb{R} \to \mathbb{R}$ denote the \textit{truncated} power spectral density (PSD) of $g$ restricted to the interval $J_N:= [-\frac{N-1}{2T}, \frac{N-1}{2T}]$ defined by
\begin{equation}
\label{eq:def_P_g}
P_g(f) := |\widehat{g}(f)|^2 \mathbbm{1}_{J_N}(f), \quad \forall f \in \mathbb{R}, 
\end{equation}
where $\mathbbm{1}_{J_N}: \mathbb{R} \to \mathbb{R}$ denotes the indicator function of the interval $J_N$. 
The truncated PSDs $P_{g'}$ and $P_{g''}$ of the first and second derivatives of $g$ are defined similarly when $g$ in \eqref{eq:def_P_g} is substituted by $g'$ and $g''$ respectively. 
The $L_1$-norm of $P_g$ is denoted by $E_{g}$ as
\begin{equation}
\label{eq:def_energy}
E_g := \norm{P_g}_{L_1}
\end{equation}
The parameter $E_{g'}$ (resp. $E_{g''}$) is defined similarly when $g$ in \eqref{eq:def_energy} is substituted by the first derivative $g'$ (resp. the second derivative $g''$). 
\begin{remark}
As an illustrative example, consider when the PSF is an ideal low-pass filter in the time domain. In this case, the Fourier transform is constant over $[-N/2T, N/2T]$, allowing explicit evaluation of the following parameters: \[ E_g = \frac{N}{T}, \, \,E_{g'} = \int_{-N/2T}^{N/2T} 4 \pi^2 u^2 du = \frac{\pi^2}{3} \left(\frac{N}{T} \right)^3 , \, \, E_{g''} = \int_{-N/2T}^{N/2T} 16 \pi^4 u^4 du = \frac{\pi^4}{5} \left(\frac{N}{T} \right)^5, \] 
\end{remark}
The ratio of the total variation of $P_g$ over its $L_1$ norm is denoted by $\rho_g$, \emph{i.e.}
\begin{equation}
\label{eq:def_rho_g}
\rho_g := \frac{V(P_g)}{E_{g}}. 
\end{equation}
The parameter $\rho_{g'}$ (resp. $\rho_{g''}$) is defined similarly when $g$ in \eqref{eq:def_rho_g} is substituted by the first derivative $g'$ (resp. the second derivative $g''$). 
Then $\rho$ is defined as the maximum normalized total variations in the bandwidth $J_N$, \emph{i.e.}
\begin{equation}\label{eq:rho_def_max}
   \rho :=  \max \{\rho_g,\rho_{g'},\rho_{g''} \}
\end{equation} 
These quantities will be used to characterize the size of the neighborhood and the local curvature of the VarProSD objective in the results that follow. 
For notational convenience, we furthermore also define
\begin{align}
\label{eq:def_Agamma}
\bm A_\gamma := \bm G \bm \Phi_\gamma,
\end{align}
which will be used in the analysis throughout.
We also define the $\ell_2$ and $\ell_\infty$ torus distances between
$\bm{\tau}$ and its estimate $\bm{\gamma}$ as
\begin{align}
\label{eq:d2_torus}
\mathrm{d}_2(\bm{\tau}, \bm{\gamma})
&:= \sqrt{\sum_{j=1}^K d_{\mathbb{T}}(\tau_j,\gamma_j)^2}, 
\end{align}
\begin{align*}
\mathrm{d}_\infty(\bm{\tau}, \bm{\gamma})
&:= \max_{j \in [K]} d_{\mathbb{T}}(\tau_j, \gamma_j), 
\end{align*}
where $d_{\mathbb{T}}$ is defined in~\eqref{eq:torus_def},
and the dynamic range of the amplitudes as
\begin{align*}
\kappa = \frac{\norm{\bm{X}}}{r_{\min}(\bm{X})},
\end{align*}
where $r_{\min}(\bm{X}) := \min_{j \in [K]}
\norm{\bm{e}_j^\mathsf{H}\bm{X}}_2$ denotes the minimum row norm
of $\bm{X}$.

\subsection{Local Neighborhood and Geometry}
\label{subsec:basin}

We begin by defining a neighborhood around the ground truth spike
locations within which the VarProSD objective \eqref{eq:loss_function}
exhibits well-behaved local
geometry. Then, we characterize the behavior of the objective within this region and establish conditions under which it admits a unique local minimizer.

\begin{theorem}[Local regularity and stability of the minimizer within the neighborhood]
\label{thm:local_geometry}

Let $\bm{\tau} \in \mathbb{R}^K$ denote the ground truth spike locations and assume the minimum separation between the spikes satisfies $\Delta > \frac{2}{3}\rho \kappa^2$. 
We define the following neighborhood
\begin{equation}
\label{eq:neighborohood}
\mathcal{N}(\bm{\tau},\varrho)
:=
\left\{
\bm{\gamma} \in \mathbb{R}^K :
\mathrm{d}_2(\bm{\gamma},\bm{\tau}) \le \varrho
\right\},
\end{equation}
where the radius $\varrho$ is given by
\begin{equation}
\label{eq:basin_radius_cond}
\varrho
=
\frac{1}{2}\!\left(\Delta - \frac{2}{3}\rho \kappa^2\right)
\;\wedge\;
c_1 \kappa^{-2}
\sqrt{\frac{E_g}{E_{g'}} \wedge \frac{E_{g'}}{E_{g''}}}.
\end{equation}
Furthermore, also suppose the residual covariance $\norm{\bm{R}}  = \min_{c\in \mathbb{R}}\norm{\frac{1}{L}\bm{Y}\bm{Y}^\mathsf{H}- \frac{1}{L}\bm{Y}_0\bm{Y}_0^\mathsf{H} - c\bm{I}_N}$ satisfies
\begin{equation}
\label{eq:cond_on_R1}
\frac{\|\bm R\|}{T E_g \, r_{\min}^2(\bm X)}
\le
c_1 
\sqrt{\frac{E_{g'}}{E_g}}
\left(
\sqrt{\frac{E_g}{E_{g'}} \wedge \frac{E_{g'}}{E_{g''}}}
\;\wedge\;
\frac{\varrho}{\sqrt{K}}
\frac{r_{\max}(\bm X)}{r_{\min}(\bm X)}
\right).
\end{equation}
for some absolute constant $c_1 >0$.
Then, over $\mathcal N(\bm{\tau},\varrho)$, 
the objective $\ell(\bm{\gamma})$ is strongly convex and has Lipschitz
continuous gradient. In particular, there exists a unique local minimizer
$\bm{\gamma}_\star \in \mathcal N(\bm{\tau},\varrho)$.
\end{theorem}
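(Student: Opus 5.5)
The plan is to write the objective as a trace against the sample covariance and split it into a noiseless part and a residual part. Since $\bm P_{\bm\gamma}^\perp$ is a projector,
\[
\ell(\bm\gamma)=\tfrac12\operatorname{tr}\bigl(\bm P_{\bm\gamma}^\perp \tfrac1L\bm Y\bm Y^{\mathsf H}\bigr).
\]
The term $c\bm I_N$ adds only the constant $c(N-K)/2$ to $\ell$, because $\operatorname{tr}\bm P_{\bm\gamma}^\perp=N-K$ whenever $\bm A_{\bm\gamma}$ has full column rank. Hence
\[
\ell(\bm\gamma)=\ell_0(\bm\gamma)+\tfrac12\operatorname{tr}\bigl(\bm P^\perp_{\bm\gamma}\bm R\bigr)+\text{const},
\qquad
\ell_0(\bm\gamma)=\tfrac{1}{2L}\norm{\bm P_{\bm\gamma}^\perp\bm A_{\bm\tau}\bm X}_{\mathrm F}^2 .
\]
It then suffices to find constants $0<\mu\le M$ such that $\mu\bm I\preceq\nabla^2\ell(\bm\gamma)\preceq M\bm I$ uniformly on $\mathcal N(\bm\tau,\varrho)$. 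Strong convexity and a Lipschitz gradient follow directly. Since $\mathcal N$ is convex (locally a Euclidean ball on the torus), a unique minimizer over it follows. To place this minimizer in the interior as a local minimizer, I will use the condition on $\norm{\bm R}$ together with the bound $\norm{\nabla\ell(\bm\tau)}\lesssim \sqrt{K}\,\norm{\bm R}\cdot(\text{derivative norms})$. This gives $\norm{\nabla\ell(\bm\tau)}<\mu\varrho/2$, so $\mathrm d_2(\bm\gamma_\star,\bm\tau)\le 2\norm{\nabla\ell(\bm\tau)}/\mu<\varrho$.

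The first step is conditioning. For $\bm\gamma\in\mathcal N$ we have $\mathrm d_\infty\le\mathrm d_2\le\varrho\le\frac12(\Delta-\frac23\rho\kappa^2)$. So the combined set $\{\gamma_k\}\cup\{\tau_k\}$ splits into pairs that are close, while distinct pairs are separated by at least $\Delta-2\varrho\ge\frac23\rho\kappa^2\ge \rho$, using $\kappa\ge1$. Applying the Beurling--Selberg large sieve bounds from the appendix (prior art), the matrices $\bm A_{\bm\gamma}$, $\bm G\bm\Phi'_{\bm\gamma}$ and $\bm G\bm\Phi''_{\bm\gamma}$ then have Gram matrices close to $TE_g\bm I$, $TE_{g'}\bm I$ and $TE_{g''}\bm I$. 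The relative errors are of order $\rho/\text{separation}$, up to the $2/3$ normalization. The same tools bound the cross Gram $\bm A_{\bm\gamma}^{\mathsf H}\bm A_{\bm\tau}$ between the two close configurations.

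The second step is the Hessian of $\ell_0$. Using the Golub--Pereyra derivative formulas (appendix), at $\bm\gamma=\bm\tau$ the Hessian is
\[
\nabla^2\ell_0(\bm\tau)=\operatorname{Re}\Bigl[\bigl((\bm D^{\mathsf H}\bm P^\perp_{\bm\tau}\bm D)\odot \overline{\tfrac1L\bm X\bm X^{\mathsf H}}\bigr)\Bigr],
\qquad \bm D=\bm G\,\partial\bm\Phi .
\]
By Schur-product eigenvalue bounds (appendix), its minimum eigenvalue is at least $\lambda_{\min}(\bm D^{\mathsf H}\bm P^\perp\bm D)\cdot r^2_{\min}(\bm X)/L$. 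The first factor is at least $cTE_{g'}$, because the large sieve bounds control the projection of $\bm D$ onto $\operatorname{range}\bm A_{\bm\tau}$; this is where the $\frac23\rho\kappa^2$ margin absorbs the $\kappa^2$ loss. The maximum eigenvalue is bounded analogously by $TE_{g'}\norm{\bm X}^2/L$.

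The third step, which I expect to be the main obstacle, is controlling $\nabla^2\ell_0(\bm\gamma)-\nabla^2\ell_0(\bm\tau)$ for $\bm\gamma\ne\bm\tau$. The Hessian at $\bm\gamma$ contains terms involving $\bm P^\perp_{\bm\gamma}\bm A_{\bm\tau}\bm X$, which no longer vanish, together with second derivatives. I would Taylor-expand in $\bm\gamma-\bm\tau$ and bound each term by products of $\sqrt{TE_g}$, $\sqrt{TE_{g'}}$ and $\sqrt{TE_{g''}}$ times $\mathrm d_2(\bm\gamma,\bm\tau)\norm{\bm X}^2$, using the Khatri--Rao and Hadamard norm identities to keep the bounds independent of $K$. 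This yields a deviation of order $TE_{g'}\,\varrho\,\kappa^2 r_{\min}^2\bigl(\sqrt{E_{g'}/E_g}+\sqrt{E_{g''}/E_{g'}}\bigr)/L$. The second term in $\varrho$ in \eqref{eq:basin_radius_cond} makes this at most half the curvature. Finally, the noise Hessian $\frac12\nabla^2\operatorname{tr}(\bm P^\perp_{\bm\gamma}\bm R)$ is bounded by $\norm{\bm R}$ times a constant multiple of $(E_{g'}/E_g+\sqrt{E_{g''}/E_g})$. The first branch of \eqref{eq:cond_on_R1} makes it at most a quarter of the curvature, and its second branch gives the gradient bound used above.
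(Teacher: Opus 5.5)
Your plan is sound in outline and uses the same ingredients as the paper: invariance under $c\bm{I}_N$, the split $\bm{Y}\bm{Y}^{\mathsf H}=\bm{Y}_0\bm{Y}_0^{\mathsf H}+L\bm{R}$, Beurling--Selberg conditioning of $\bm{A}_{\bm{\gamma}}$, $\bm{\Lambda}\bm{A}_{\bm{\gamma}}$, $\bm{\Lambda}^2\bm{A}_{\bm{\gamma}}$ and $\bm{S}_{\bm{\gamma}}$, and Hadamard-product norm bounds. The organization differs, though.

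\textbf{Anchoring.} The paper never anchors at $\bm{\tau}$. It bounds the closed-form Hessian at each $\bm{\gamma}\in\mathcal N(\bm{\tau},\varrho)$ directly, because Lemma~\ref{lem:exsv_schur_comp} applies at any configuration, and $\bm{\gamma}$ has separation at least $\Delta-2\delta$. The point $\bm{\tau}$ enters only through $\bm{A}_{\bm{\gamma}}^\dagger\bm{Y}_0=(\bm{I}_K+\bm{A}_{\bm{\gamma}}^\dagger(\bm{A}_{\bm{\tau}}-\bm{A}_{\bm{\gamma}}))\bm{X}$ and through $\norm{\bm{P}_{\bm{\gamma}}^\perp\bm{A}_{\bm{\tau}}}\le\norm{\bm{A}_{\bm{\gamma}}-\bm{A}_{\bm{\tau}}}\lesssim\delta\sqrt{TE_{g'}}$. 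This makes your third step unnecessary. If you keep it, bound the terms that vanish at $\bm{\tau}$ directly, and bound the variation of $\bm{S}_{\bm{\gamma}}$ through the projector derivative. Do not use a literal Taylor remainder: $\nabla^3\ell_0$ contains $\bm{\Lambda}^3\bm{A}_{\bm{\gamma}}$, i.e.\ an $E_{g'''}$ that the theorem does not control.

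\textbf{Schur step.} Your bound $\lambda_{\min}(\operatorname{Re}[\bm{S}\odot\overline{\bm{B}}])\ge\lambda_{\min}(\bm{S})\min_i B_{ii}$, for Hermitian PSD $\bm{S},\bm{B}$, is sharper than the paper's treatment. The paper writes $\bm{S}_{\bm{\gamma}}=TE_{g'}\bm{I}_K+(\bm{S}_{\bm{\gamma}}-TE_{g'}\bm{I}_K)$ and applies Zhan's bound. This pays roughly $\tfrac23 TE_{g'}\rho(\Delta-2\delta)^{-1}\norm{\bm{X}}^2$ against $TE_{g'}r_{\min}^2(\bm{X})$. That loss is exactly the source of the $\kappa^2$ in $\Delta>\frac23\rho\kappa^2$ and in the first branch of $\varrho$. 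So in your Step 2 there is no $\kappa^2$ loss for the separation margin to absorb. In your argument $\kappa^2$ enters only through the deviation terms, and the $c_1\kappa^{-2}$ branch of $\varrho$ handles those. Applying the Schur bound at $\bm{\gamma}$ itself would combine the advantages of both routes.

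\textbf{Interior minimizer.} Your interior argument is a real addition. The paper infers a local minimizer from strong convexity alone, which only gives a unique minimizer of $\ell$ restricted to $\mathcal N(\bm{\tau},\varrho)$, possibly on its boundary. As written, however, the step does not close under \eqref{eq:cond_on_R1}.
\begin{itemize}
\item Since $\nabla\ell(\bm{\tau})=-\operatorname{Re}[\operatorname{diag}(\bm{A}_{\bm{\tau}}^\dagger\bm{R}\bm{P}_{\bm{\tau}}^\perp\bm{\Lambda}\bm{A}_{\bm{\tau}})]$, you get $\norm{\nabla\ell(\bm{\tau})}\lesssim\sqrt{K}\sqrt{E_{g'}/E_g}\,\norm{\bm{R}}$.
\item The second branch of \eqref{eq:cond_on_R1} then only yields $\norm{\nabla\ell(\bm{\tau})}\lesssim c_1\frac{r_{\max}(\bm{X})}{r_{\min}(\bm{X})}TE_{g'}r_{\min}^2(\bm{X})\varrho$.
\item Even with $\mu\asymp TE_{g'}r_{\min}^2(\bm{X})$, getting below $\mu\varrho/2$ requires $c_1 r_{\max}(\bm{X})/r_{\min}(\bm{X})$ to be small, which an absolute $c_1$ does not ensure.
\end{itemize}
The step works if the factor $r_{\max}/r_{\min}$ in that branch is dropped or inverted. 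The paper's own invariance step for gradient descent needs the same correction.

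\textbf{Normalization.} You rightly keep the $1/L$ in the curvature. But then \eqref{eq:cond_on_R1} controls the noise Hessian relative to the curvature only if $r_{\min}(\bm{X})$ and $r_{\max}(\bm{X})$ are read as those of $\bm{X}/\sqrt{L}$. Otherwise your claim that the first branch makes the noise Hessian a quarter of the curvature is off by a factor $L$. The paper's Appendix~\ref{secA4} silently drops this $1/L$.
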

The proof of Theorem~\ref{thm:local_geometry} is provided in Section~\ref{subsec:proof_local_geometry}.
\noindent
\newline
\textbf{Interpretation:}
The quantity $\varrho$ in \eqref{eq:basin_radius_cond} characterizes the size of the neighborhood around the ground truth within which the objective admits a well-behaved local geometry. For this neighborhood to exist, the separation condition $\Delta > \frac{2}{3}\rho \kappa^2$ must hold. In particular, when the dynamic range $\kappa$ is large, a larger separation is required to ensure the existence of such a region.

The expression for $\varrho$ consists of two terms. The first term arises from a fundamental minimum separation requirement, ensuring that the spikes are sufficiently well-resolved. The second term is introduced to simplify our theoretical analysis in interpretable form. It depends explicitly on the spectral properties of the PSF through $E_g, E_{g'}$, and $E_{g''}$.
For instance, in the case of an ideal low-pass PSF, the second term scales as $\kappa^{-2} \frac{T}{N}$, up to some constants. This allows for a more explicit interpretation in terms of the bandwidth $B = N/T$. When $B$ is small, the first term in \eqref{eq:basin_radius_cond} dominates. In this regime, $\Delta$ must scale with $\rho$ to keep the radius positive, thereby expanding the neighborhood. As $B$ grows, the second term dominates, and the radius correspondingly shrinks. This illustrates how the geometry of the basin radius depends on the sampling bandwidth.

The quantity $\frac{\|\bm R\|}{T E_g \, r_{\min}^2(\bm X)}$ in \eqref{eq:cond_on_R1} can be related to a normalized covariance estimation error of the form $\frac{\|\bm Y \bm Y^\mathsf{H} - \bm Y_0 \bm Y_0^\mathsf{H}\|}{\|\bm Y_0 \bm Y_0^\mathsf{H}\|}$ up to conditioning of $\bm X$. In particular, $T E_g \, r_{\min}^2(\bm X)$ can be used as a lower bound on $\|\bm Y_0 \bm Y_0^\mathsf{H}\|$.
This provides an interpretation of the residual covariance condition as requiring the covariance estimation error to be sufficiently small relative to the signal energy.

After establishing that the VarProSD objective exhibits
favorable local geometry within the basin $\mathcal N(\bm{\tau},\varrho)$ and therefore, admits a unique local minimizer $\bm{\gamma}_\star$ in this region, we now characterize the accuracy of this minimizer relative to the ground truth
spike locations $\bm{\tau}$.

\begin{theorem}[Estimation error of the local minimizer]
\label{thm:minimizer_error}
Under the assumptions of Theorem~\ref{thm:local_geometry}, the unique local
minimizer $\bm{\gamma}_\star \in \mathcal N(\bm{\tau},\varrho)$ satisfies the following estimation error bound
\begin{equation}
\mathrm{d}_2(\bm{\gamma}_\star,\bm{\tau})
\le   
c_2 \sqrt{K}  \, \sqrt{\frac{E_g}{E_{g'}}} \frac{\|\bm{R}\|}{T E_g \,  r_{\min}(\bm{X})^2},
\label{eq:thm:error_bonnd}
\end{equation}
for some absolute constant $c_2>0$,
\end{theorem}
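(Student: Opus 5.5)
Since Theorem~\ref{thm:local_geometry} already gives strong convexity of $\ell$ on $\mathcal N(\bm\tau,\varrho)$ and places $\bm\gamma_\star$ inside it, I will prove the bound by pairing strong convexity with a bound on the gradient at the ground truth. Strong convexity with modulus $\mu$ gives
\[
\mu\,\norm{\bm\gamma_\star-\bm\tau}_2^2 \le \langle \nabla\ell(\bm\gamma_\star)-\nabla\ell(\bm\tau),\,\bm\gamma_\star-\bm\tau\rangle .
\]
Here $\nabla\ell(\bm\gamma_\star)=0$, since $\bm\gamma_\star$ is an interior minimizer, or at least a stationary point of the convex restriction. Hence $\mathrm d_2(\bm\gamma_\star,\bm\tau)\le \norm{\nabla\ell(\bm\tau)}_2/\mu$, after choosing representatives on the torus so that $\mathrm d_2$ coincides with the Euclidean distance inside the neighborhood. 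The proof therefore reduces to two quantities: a lower bound on $\mu$ and an upper bound on $\norm{\nabla\ell(\bm\tau)}_2$.

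\textbf{Curvature lower bound.} Write $\ell(\bm\gamma)=\frac12\mathrm{tr}(\bm P_{\bm\gamma}^\perp \hat{\bm\Sigma})$ with $\hat{\bm\Sigma}=\frac1L\bm Y\bm Y^\mathsf H$. Then split $\hat{\bm\Sigma}=\bm\Sigma_0+c\bm I_N+\bm R$, where $\bm\Sigma_0=\frac1L\bm Y_0\bm Y_0^\mathsf H$. The $c\bm I_N$ term contributes $\frac c2(N-K)$, a constant, so it drops out of all derivatives. This is exactly why only $\norm{\bm R}$ appears in the statement.

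I will reuse the Hessian estimate from the proof of Theorem~\ref{thm:local_geometry}. In the noiseless part at $\bm\tau$, the Hessian equals $2\,\mathrm{Re}\big((\bm D^\mathsf H \bm P_{\bm\tau}^\perp\bm D)\odot \overline{\bm\Sigma_X}\big)$, where $\bm D=\bm G\,\partial\bm\Phi$ and $\bm\Sigma_X=\frac1L\bm X\bm X^\mathsf H$. The Schur product theorem together with the Beurling--Selberg bounds then gives $\lambda_{\min}\gtrsim T E_{g'}\,r_{\min}^2(\bm X)$, up to absolute constants, uniformly over $\mathcal N(\bm\tau,\varrho)$. This uses the separation condition $\Delta>\frac23\rho\kappa^2$ and the smallness of $\norm{\bm R}$ from \eqref{eq:cond_on_R1}. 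I take $\mu\gtrsim T E_{g'}r_{\min}^2(\bm X)$.

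\textbf{Gradient at truth.} Since $\bm P_{\bm\tau}^\perp\bm A_{\bm\tau}=0$, the noiseless term has zero gradient at $\bm\tau$. So
\[
[\nabla\ell(\bm\tau)]_k=-\mathrm{Re}\,\mathrm{tr}\big(\bm P_{\bm\tau}^\perp\,\partial_k\bm A_{\bm\tau}\bm A_{\bm\tau}^\dagger\bm R\big)\cdot 2/2 .
\]
This follows from the Golub--Pereyra derivative formula for the projector, whose two conjugate terms combine into a single real part. Each entry is bounded by
\[
\norm{\bm R}\,\norm{\bm P^\perp_{\bm\tau}\bm g'_k}_2\,\norm{\bm A_{\bm\tau}^{\dagger\mathsf H}\bm e_k}_2 .
\]
Here $\bm g'_k$ is the $k$th column of $\bm G\partial\bm\Phi$, with norm at most $\sqrt{T E_{g'}}$ up to Riemann-sum constants. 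The second factor is at most $\sigma_{\min}(\bm A_{\bm\tau})^{-1}\lesssim (TE_g)^{-1/2}$ by the Beurling--Selberg lower bound on the generalized Vandermonde matrix. Summing over the $K$ coordinates gives
\[
\norm{\nabla\ell(\bm\tau)}_2\lesssim\sqrt K\,\norm{\bm R}\sqrt{E_{g'}/E_g}.
\]

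\textbf{Combining the bounds.} Dividing gives
\[
\mathrm d_2(\bm\gamma_\star,\bm\tau)\lesssim \sqrt K\,\frac{\norm{\bm R}}{T r_{\min}^2(\bm X)}\,\frac{\sqrt{E_{g'}/E_g}}{E_{g'}}=\sqrt K\sqrt{\frac{E_g}{E_{g'}}}\,\frac{\norm{\bm R}}{TE_g\,r_{\min}^2(\bm X)},
\]
which is exactly \eqref{eq:thm:error_bonnd}.

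The main obstacle is making the curvature constant explicit as $\asymp TE_{g'}r_{\min}^2(\bm X)$, rather than only positive, uniformly along the segment from $\bm\tau$ to $\bm\gamma_\star$. This requires controlling the $\bm R$-dependent Hessian term and the perturbation of $\bm P^\perp_{\bm\gamma}$ away from $\bm\tau$. My plan is to take the explicit lower bound from the proof of Theorem~\ref{thm:local_geometry}. Its hypotheses on $\varrho$ and \eqref{eq:cond_on_R1} were calibrated to keep the perturbations below half the noiseless curvature, and I will verify that they do.
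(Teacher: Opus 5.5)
Your argument is correct, but it is organized differently from the paper's. The paper never evaluates $\nabla\ell(\bm{\tau})$; it extracts the bound from its gradient-descent analysis.

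It writes $\bm{\gamma}_{t+1}-\bm{\tau}=\chi_1-\chi_2$. Here $\chi_1$ is a noiseless step contracted at rate $\frac{\nu-\mu}{\nu+\mu}$, so both constants of Lemma~\ref{lem:local_geometry} are used. The term $\chi_2=\alpha\,\mathrm{Re}\,\mathrm{diag}(\bm{A}_{\bm{\gamma}_t}^\dagger\bm{R}\bm{P}_{\bm{\gamma}_t}^\perp\bm{\Lambda}\bm{A}_{\bm{\gamma}_t})$ is the same perturbation you compute, but it is bounded uniformly over the ball. AM--GM on the cross term gives $\mathrm{d}_2(\bm{\gamma}_{t+1},\bm{\tau})^2\le\vartheta^2\,\mathrm{d}_2(\bm{\gamma}_t,\bm{\tau})^2+\varepsilon_R$. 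The paper then shows the iterates stay in $\mathcal N(\bm{\tau},\varrho)$ and lets $\bm{\gamma}_t\to\bm{\gamma}_\star$, obtaining $\mathrm{d}_2(\bm{\gamma}_\star,\bm{\tau})^2\le\varepsilon_R/(1-\vartheta^2)$, followed by $r_{\max}/r_{\min}$ bookkeeping.

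That route buys economy: one recursion also proves Theorem~\ref{thm:gd_convergence} and describes the transient of gradient descent. Your route is algorithm-free, which suits a statement about $\bm{\gamma}_\star$ alone. It uses only $\mu$. It evaluates the perturbation at the single point $\bm{\tau}$, where Lemmas~\ref{lem:ferreira_thm1} and~\ref{lem:cor_of_ferreira_thm1} apply with separation $\Delta$ rather than $\Delta-2\delta$. It has no step size or dynamic-range factor to absorb.

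Above all, your route needs no invariance step. It therefore uses only the curvature branch of \eqref{eq:cond_on_R1}, i.e.\ the hypothesis of Lemma~\ref{lem:local_geometry}. The paper's invariance requirement $\varepsilon_R\le(1-\vartheta^2)\varrho^2$ is not automatic when $r_{\max}(\bm{X})/r_{\min}(\bm{X})$ is large. The $\varrho$-branch of \eqref{eq:cond_on_R1} only gives $\varepsilon_R\lesssim c_1^2\varrho^2$, while $1-\vartheta^2\approx r_{\min}^2(\bm{X})/(3r_{\max}^2(\bm{X}))$.

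A few points to tidy.
\begin{itemize}
\item $\bm{\gamma}_\star$ is the minimizer of a strongly convex function over a closed ball, so it may sit on the boundary, where it need not be stationary. Replace $\nabla\ell(\bm{\gamma}_\star)=\bm{0}$ by the first-order condition $\langle\nabla\ell(\bm{\gamma}_\star),\bm{\tau}-\bm{\gamma}_\star\rangle\ge0$. The same chain then gives $\norm{\bm{\gamma}_\star-\bm{\tau}}_2\le\norm{\nabla\ell(\bm{\tau})}_2/\mu$, and shows a posteriori that $\bm{\gamma}_\star$ is interior whenever $\norm{\nabla\ell(\bm{\tau})}_2<\mu\varrho$.
\item Your noiseless Hessian is $\mathrm{Re}(\bm{S}_{\bm{\tau}}\odot\overline{\bm{\Sigma}_X})$, with no factor $2$. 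With $\bm{\Sigma}_X=\frac1L\bm{X}\bm{X}^\mathsf{H}$, its curvature is of order $TE_{g'}r_{\min}^2(\bm{X})/L$, matching the $1/L$ built into $\bm{R}$. The paper's Lemma~\ref{lem:local_geometry} drops the same factor, so for both proofs the bound should be read with $r_{\min}(\bm{X})$ normalized by $\sqrt{L}$.
\item Bound $\norm{\bm{\Lambda}\bm{A}_{\bm{\tau}}\bm{e}_k}_2$ by $\norm{\bm{\Lambda}\bm{A}_{\bm{\tau}}}$ via Lemma~\ref{lem:cor_of_ferreira_thm1}, rather than by a Riemann-sum heuristic.
\end{itemize}
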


The proof of Theorem~\ref{thm:minimizer_error} is provided in Section~\ref{subsec:proof_gd}.
\newline
The bound in \eqref{eq:thm:error_bonnd} characterizes how the estimation error
of the local minimizer depends on the residual covariance matrix $\bm{R}$.
In particular, under stochastic noise, suppose the entries of $\bm{Z}$ are i.i.d. $\mathcal{N}(0,\sigma_{\bm Z}^2)$.
Then, choosing $c=\sigma_{\bm Z}^2$ in the definition of $\bm{R}$, we obtain
\begin{align*}
L\norm{\bm R}
&=
\norm{
\bm Y \bm Y^\mathsf H
-
\bm Y_0 \bm Y_0^\mathsf H
-
L\sigma_{\bm Z}^2 \bm I
} \\
&=
\norm{
\bm Y_0 \bm Z^\mathsf H
+
\bm Z \bm Y_0^\mathsf H
+
\bm Z\bm Z^\mathsf H
-
L\sigma_{\bm Z}^2 \bm I
} \\
&\le
2\norm{\bm Y_0 \bm Z^\mathsf H}
+
\norm{\bm Z\bm Z^\mathsf H - L\sigma_{\bm Z}^2 \bm I }.
\end{align*}
By Gaussian concentration \cite{wainwright2019high,vershynin2018high} (also see \cite[Eq.~(67)]{yang2022nonasymptotic}), with probability at least $1 - 2e^{-c_2 N}$,
\begin{align*}
\norm{\bm{Z}\bm{Z}^\mathsf{H} - L\sigma_{\bm{Z}}^2 \bm{I}_N}
\le 8\sigma_{\bm{Z}}^2 \max\, \!\bigl(\sqrt{NL},\,N\bigr).
\end{align*}
Since we can write $\bm{Y}_0 \bm{Z}^\mathsf{H} = \bm{C}^{1/2}\bm{\Upsilon}$, where $\bm{C} = \sigma_{\bm{Z}}^2 \bm{Y}_0 \bm{Y}_0^\mathsf{H}$ and $\bm{\Upsilon} \in \mathbb{R}^{N \times N}$ is a random matrix with i.i.d.\ $\mathcal{N}(0,1)$ entries, it follows that
\begin{align*}
\norm{\bm{Y}_0 \bm{Z}^\mathsf{H}}
&\le \sigma_{\bm{Z}} \norm{(\bm{Y}_0 \bm{Y}_0^\mathsf{H})^{1/2}} \norm{\bm{\Upsilon}} \\
&\le 3\sigma_{\bm{Z}} \norm{\bm{Y}_0} \sqrt{N}
\end{align*}
with probability at least $1 - e^{-c_2 N}$.
Using $\norm{\bm Y_0} \leq \frac{2}{\sqrt{3}}\sqrt{T E_g}\,\norm{\bm X}$ (Lemma~\ref{lem:ferreira_thm1}), we further obtain
\begin{align*}
\norm{\bm{Y}_0 \bm{Z}^\mathsf{H}}
\le 2\sqrt{3}\,\sigma_{\bm{Z}} \sqrt{T E_g}\, \norm{\bm{X}} \sqrt{N}.
\end{align*}
Combining the bounds, with probability at least $1 - 3e^{-c_2 N}$,
\begin{align*}
L\norm{\bm{R}}
&\le 6\sqrt{3}\,\sigma_{\bm{Z}} \sqrt{T E_g}\, \norm{\bm{X}} \sqrt{N}
+ 8\sigma_{\bm{Z}}^2 \max\!\bigl(\sqrt{NL},\,N\bigr).
\end{align*}
For additional simplification, assume that the amplitude matrix $\bm{X}$ is deterministic and well-conditioned, with $\|\bm{X}\| = \mathcal{O}(\sqrt{L}), \quad
r_{\max}(\bm{X}) = \mathcal{O}(\sqrt{L}), \quad
r_{\min}(\bm{X}) = \Theta(\sqrt{L})$.
Then, in the high-SNR regime (noise variance is small relative to signal energy), the dominant terms yield
\begin{align*}
\|\bm{R}\|
\lesssim
\sigma_{\bm Z}\sqrt{\frac{N T E_g}{L}}.
\end{align*}
Substituting this into \eqref{eq:thm:error_bonnd} yields that the estimation error scales as
\begin{align*}
\mathrm{d}_2(\bm{\gamma}_\star,\bm{\tau})
\;\lesssim\;
\frac{ \sigma_{\bm Z} \sqrt{K\,N} }
{ \sqrt{L \,T\, E_{g'}}}.
\end{align*}
We can further specialize the bound to the ideal low-pass (LPF) case to make the dependence on the bandwidth more explicit. For an LPF kernel with bandwidth $B = \frac{N}{T}$, it holds that $E_{g'}$ scales as $B^3$. Substituting this into the above bound yields
\begin{align*}
\mathrm{d}_2(\bm{\gamma}_\star,\bm{\tau})
\;\lesssim\;
\sigma_{\bm Z} \sqrt{\frac{K}{L}} \;
\frac{T}{N}.
\end{align*}
This shows that the estimation error decreases with the number of snapshots $L$ as $L^{-1/2}$, improves with increasing bandwidth $B$, and grows linearly with the noise level $\sigma_{\bm Z}$. 

However, the above argument relies on stochastic assumptions on the noise and, in particular, on concentration properties of the empirical covariance matrix. Such assumptions may be violated in the presence of structured or adversarial perturbations, where the covariance error does not decay with the number of snapshots. This motivates a complementary, deterministic stability analysis of the least-squares estimator that does not rely on stochastic assumptions. We develop such an analysis in the next subsection.

\subsection{Adversarial Noise Analysis}
\label{sec:adv_noise}

The previous subsection characterized the accuracy of the least-squares
estimator $\bm{\gamma}_\star$ through the residual covariance bound.
We now study a complementary perturbation model in which the noise may be
structured or adversarial rather than stochastic. As discussed in the
introduction, such perturbations arise in applications including
spike-injection attacks~\cite{li2023channel} and surface
EMG~\cite{de2012inter}, where they can yield estimation errors far larger
than stochastic models predict.

Motivated by these examples, we now present an alternative stability
analysis of the constrained least-squares estimator that
yields a sharper error bound in the presence of adversarial noise.

\subsubsection{Alternative Stability Bound under Adversarial Noise}

Recall that under the assumptions introduced in Section~\ref{subsec:basin}, 
the VarProSD objective is strongly convex and has Lipschitz continuous gradient 
within the basin $\mathcal N(\bm{\tau},\varrho)$ and therefore admits a unique 
local minimizer in this neighborhood. Let $\bm{\gamma}_\star$ denote the 
constrained least-squares estimator, i.e., the unique local minimizer of 
$\ell(\bm{\gamma})$ over the basin $\mathcal N(\bm{\tau},\varrho)$.

By substituting the observation model
$\bm{Y} = \bm{A}_{\bm{\tau}} \bm{X} + \bm{Z}$ into the cost function
\eqref{eq:loss_function}, the objective can be parameterized explicitly by the
noise matrix $\bm{Z}$ as
\begin{align*}
\ell(\bm{\gamma};\bm{Z}) = 
\frac{1}{2L} \norm{\bm{P}_{\bm{\gamma}}^\perp (\bm{A}_{\bm{\tau}} \bm{X}+ \bm{Z})}_\mathrm{F}^2.
\end{align*}
We now define an inverse map that takes the 
vectorization of the noise 
matrix $\bm{Z}$ as input and outputs the constrained least-squares estimator 
$\bm{\gamma}_\star$, i.e.
\begin{equation}
\label{eq:Psi_def}
 \bm{\psi} : \bm{z} \to \bm{\gamma}_\star := \mathop{\mathrm{argmin}}_{\bm{\gamma} \in \mathcal{N}(\bm{\tau},\varrho)}  \, \ell(\bm{\gamma};\bm{Z}),
\end{equation} 
where $\bm{z} = \mathrm{vec}(\bm{Z})$.
By analyzing the local Lipschitz continuity of $\bm{\psi}$, we establish the
following error bound on $\bm{\gamma}_\star$.

\begin{theorem}
\label{thm:mainalt}
Suppose the assumptions of Theorem~\ref{thm:local_geometry} hold and that the
spectral norm of the noise matrix satisfies
\begin{align}
\label{eq:cond_on_Z_in_lemma}
\norm{\bm{Z}} \le c_5 \sqrt{T\,E_g}\,\norm{\bm{X}}
\end{align}
for some constant $c_5>0$. Then there exists a numerical constant $c_6>0$ 
such
that the estimation error of the constrained least-squares estimator
$\bm{\gamma}_\star$ satisfies
\begin{equation}
\label{eq:theorem_main_result}
\mathrm{d}_2(\bm{\gamma}_\star,\bm{\tau})
\le
\frac{c_6\,\kappa^2 \,\norm{\bm{Z}}_\mathrm{F}}
{\sqrt{T\,E_{g'}}\,\norm{\bm{X}}}.
\end{equation}
\end{theorem}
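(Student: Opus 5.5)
The plan is to view $\bm{\gamma}_\star=\bm{\psi}(\bm{z})$ as an implicit function of the noise and bound its deviation from $\bm{\psi}(\bm 0)=\bm{\tau}$ by integrating the Jacobian of $\bm\psi$ along the segment $t\mapsto t\bm z$, $t\in[0,1]$. The noise-free identity $\bm\psi(\bm 0)=\bm\tau$ holds because $\ell(\cdot;\bm 0)$ vanishes at $\bm\tau$ and, by Theorem~\ref{thm:local_geometry} (the residual covariance is zero when $\bm Z=\bm 0$), has a unique minimizer in $\mathcal N(\bm\tau,\varrho)$. For every $t\in[0,1]$ the noise $t\bm Z$ still satisfies \eqref{eq:cond_on_Z_in_lemma}. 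I would choose $c_5$ small enough that the residual covariance of $t\bm Z$ satisfies \eqref{eq:cond_on_R1}; this works since $\|\bm R\|\lesssim \|\bm Z\|\sqrt{TE_g}\|\bm X\|/L+\|\bm Z\|^2/L$. Hence Theorem~\ref{thm:local_geometry} applies uniformly along the path: each $\ell(\cdot;t\bm Z)$ is strongly convex on the basin with a unique minimizer $\bm\gamma(t)$.

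Next I would verify that this minimizer is interior, i.e.\ it solves $\nabla_{\bm\gamma}\ell(\bm\gamma(t);t\bm Z)=\bm 0$. Theorem~\ref{thm:minimizer_error} combined with the smallness of $c_5$ places $\bm\gamma(t)$ strictly inside $\mathcal N(\bm\tau,\varrho)$. The implicit function theorem (in the spirit of \cite{basu2000stability}) then gives differentiability, with
\begin{align*}
\frac{\partial\bm\psi}{\partial\bm z}=-\bigl(\nabla^2_{\bm\gamma\bm\gamma}\ell\bigr)^{-1}\nabla^2_{\bm\gamma\bm z}\ell,
\end{align*}
where $\bm z$ is treated as a real vector of real and imaginary parts. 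Consequently,
\begin{align*}
\mathrm{d}_2(\bm\gamma_\star,\bm\tau)\le \sup_{t}\bigl\|(\nabla^2_{\bm\gamma\bm\gamma}\ell)^{-1}\bigr\|\,\bigl\|\nabla^2_{\bm\gamma\bm z}\ell\bigr\|\,\|\bm z\|_2 .
\end{align*}
Note that $\|\bm z\|_2=\|\bm Z\|_{\mathrm F}$, which is where the Frobenius norm in \eqref{eq:theorem_main_result} comes from.

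The first factor comes directly from the strong convexity constant established in the proof of Theorem~\ref{thm:local_geometry}. I expect it to be of order $\lambda_{\min}(\nabla^2\ell)\gtrsim TE_{g'}\,r_{\min}^2(\bm X)/L$, up to constants. This rests on the Beurling--Selberg lower bounds on $\sigma_{\min}$ of $\bm P^\perp_{\bm\gamma}\bm G\bm\Phi'_{\bm\gamma}$, as in the structured-matrix lemmas of Section~\ref{sec4 key lemma}.

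For the mixed derivative I would use the closed-form VP gradient from the appendix,
\begin{align*}
\partial_{\gamma_k}\ell=-\tfrac1L\mathrm{Re}\,\mathrm{tr}\bigl(\bm Y^{\mathsf H}\bm P^\perp_{\bm\gamma}(\partial_k\bm A_{\bm\gamma})\bm A_{\bm\gamma}^\dagger\bm Y\bigr).
\end{align*}
Differentiating it in $\bm Z$ in a direction $\bm\Delta$ yields terms linear in $\bm\Delta$ of the form $\tfrac1L\mathrm{Re}\,\mathrm{tr}(\bm\Delta^{\mathsf H}\bm P^\perp\partial_k\bm A\,\bm A^\dagger\bm Y)$ together with the symmetric counterpart. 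The dominant term is the one in which $\bm A^\dagger\bm Y\approx\bm X$, because $\bm P^\perp_{\bm\gamma}\bm Y$ is small near the minimizer. Collecting over $k$ bounds the operator norm by
\begin{align*}
\tfrac1L\|\bm P^\perp\bm G\bm\Phi'_{\bm\gamma}\|\,\|\bm X\|\lesssim \tfrac1L\sqrt{TE_{g'}}\,\|\bm X\|,
\end{align*}
using the Beurling--Selberg upper bound (Lemma~\ref{lem:ferreira_thm1}) applied to $g'$. The remaining terms are controlled using \eqref{eq:cond_on_Z_in_lemma} and the bound on $\|\bm A_{\bm\gamma}^\dagger\|$.

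Combining the two factors gives
\begin{align*}
\mathrm d_2\lesssim \frac{\|\bm X\|}{\sqrt{TE_{g'}}\,r_{\min}^2(\bm X)}\|\bm Z\|_{\mathrm F}=\frac{\kappa^2\|\bm Z\|_{\mathrm F}}{\sqrt{TE_{g'}}\|\bm X\|},
\end{align*}
which is exactly \eqref{eq:theorem_main_result}.

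I expect the main obstacle to be bounding the mixed Hessian uniformly along the path. The terms involving derivatives of $\bm P^\perp_{\bm\gamma}$ and $\bm A^\dagger_{\bm\gamma}$, and the residual $\bm P^\perp\bm Y$, must be shown to be dominated by the main term. This requires carefully using that $\bm\gamma(t)$ lies within the basin radius \eqref{eq:basin_radius_cond}, which keeps $\bm A_{\bm\gamma}$ well conditioned, and that $\|\bm Z\|$ is small relative to $\sqrt{TE_g}\|\bm X\|$.
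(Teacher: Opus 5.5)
Your proposal is correct and follows essentially the same route as the paper. Both arguments bound the Jacobian of the inverse map $\bm{\psi}$, obtained from the implicit function theorem, by $\norm{\nabla_{\bm{z}}\nabla_{\bm{\gamma}}\ell}/\sigma_{\min}(\nabla^2_{\bm{\gamma}}\ell)$, control the denominator with Lemma~\ref{lem:local_geometry} and the numerator with Khatri--Rao and Beurling--Selberg bounds in which the $\sqrt{T E_{g'}}\,\norm{\bm{X}}$ term dominates (Lemma~\ref{lem:spectral_norm_grad}), and multiply by $\norm{\bm{z}}_2=\norm{\bm{Z}}_\mathrm{F}$; your integration along $t\mapsto t\bm{z}$ just makes explicit the mean-value step that the paper takes for granted in $\norm{\bm{\psi}(\bm{z})-\bm{\psi}(\bm{0})}_2\le\sup_{\mathcal{N}_{\bm{Z}}}\norm{\nabla_{\bm{z}}\bm{\psi}}\,\norm{\bm{z}}_2$.
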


The proof of Theorem~\ref{thm:mainalt} is presented in Section~\ref{subsec:proof_mainalt}.
\newline
Theorem~\ref{thm:mainalt} characterizes the sensitivity of the constrained 
least-squares estimator to structured perturbations in the observations. In 
contrast to the stochastic error bound in Theorem~\ref{thm:minimizer_error}, 
the estimation error now depends directly on the Frobenious norm of the 
perturbation matrix $\bm Z$.

To compare this bound with the one established in 
Theorem~\ref{thm:minimizer_error}, we construct an explicit worst-case noise 
instance in the next subsection that demonstrates the sharpness of 
Theorem~\ref{thm:mainalt} and clarifies when the inverse-map bound offers 
a tighter guarantee.

\subsubsection{Construction of Worst-Case Adversarial Noise}

We next demonstrate the sharpness of Theorem~\ref{thm:mainalt}'s error bound 
by constructing an adversarial noise instance. Specifically, we align all 
perturbation energy with the top singular vector of the Jacobian of $\bm{\psi}$ 
evaluated at $\bm{z}=\bm0$.
We begin by applying a Taylor expansion around $\bm{z}=\bm0$ to locally 
linearize the inverse map in \eqref{eq:Psi_def} as
\[
\bm{\psi}(\bm{z})\approx \bm{\psi}(\bm{0}) + \bm{z}^\mathsf{T} \, 
\nabla_{\bm{z}}\bm{\psi}(\bm{0}) 
\]
for sufficiently small $\bm{z}$, where $\nabla_{\bm{z}}\bm{\psi}(\bm{0})$ 
denotes the Jacobian of the inverse map at $\bm{z}=\bm{0}$.
By the implicit function theorem \cite{rudin1976principles} (see also 
\cite[Theorem~6]{basu2000stability}), the Jacobian of the inverse map $\bm{\psi}$ 
can be expressed as
\begin{equation}
\label{eq:jacobian}
\nabla_{\bm{z}} \bm{\psi}(\bm{z}) 
= - \left( \nabla^2_{\bm{\gamma}} \bm{\ell}({\bm{\gamma}}) \right)^{-1} 
\left( \nabla_{\bm{z}}\nabla_{\bm{\gamma}} \bm{\ell}({\bm{\gamma}} ;\bm{Z}) \right). 
\end{equation}
where $\bm{\ell}({\bm{\gamma}})$ is the cost function is defined in \eqref{eq:loss_function} and we identify that the first term on the right-hand side of \eqref{eq:jacobian} 
as the Hessian of that cost function. The detailed expression for the Hessian is given in Lemma~\ref{lem:expression_for_hess_alg} in Appendix~\ref{secA3}.
Next, we present 
the expression for the cross-derivative term 
$\nabla_{\bm{z}}\nabla_{\bm{\gamma}} \bm{\ell}({\bm{\gamma}};\bm{Z})$ on the 
right-hand side of \eqref{eq:jacobian} in the following lemma.
\begin{lemma}
\label{lem:jacobian_iota}
The cross derivative of the cost function with respect to $\bm{z}$ is
\begin{align*}
\nabla_{\bm{z}}\nabla_{\bm{\gamma}} \bm{\ell}({\bm{\gamma}};\bm{Z})   = 
& - \frac{1}{2L} \begin{bmatrix}
 \bm{Y}^\mathsf{T} \bm{A}_{\bm{\gamma}}^{\dagger\mathsf{T}} \ast 
 {\bm{P_{\bm{\gamma}}}^\perp \bm{\Lambda} \bm{A}_{\bm{\gamma}}}  + 
 \bm{Y}^\mathsf{H} {\bm{P_{\bm{\gamma}}}^\perp \bm{\Lambda} \bm{A}_{\bm{\gamma}}} 
 \ast \bm{A}_{\bm{\gamma}}^{\dagger\mathsf{T}}\end{bmatrix}^\mathsf{H} 
\end{align*}
where $\ast$ denotes the Khatri-Rao product. 
\end{lemma}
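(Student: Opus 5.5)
The plan is to differentiate a closed-form gradient of $\ell$ (Appendix~\ref{secA3}) once more, now with respect to the noise. I treat $\bm z$ and $\bar{\bm z}$ as independent Wirtinger variables and stack the two resulting blocks, which is what produces the $[\,\cdot + \cdot\,]^\mathsf{H}$ form in the statement.

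\emph{Step 1: gradient in $\bm\gamma$.} Since $[\bm\Phi_{\bm\gamma}]_{i,k}=e^{-\mathsf{j}2\pi\gamma_k f_i}$, we have $\partial_{\gamma_k}\bm A_{\bm\gamma} = \bm\Lambda\bm A_{\bm\gamma}\bm e_k\bm e_k^\mathsf{T}$. Here I take $\bm\Lambda=\mathrm{diag}(-\mathsf{j}2\pi f_1,\dots,-\mathsf{j}2\pi f_N)$; this is the diagonal matrix I expect Appendix~\ref{secA3} to use. Next I apply the Golub--Pereyra formula $\partial \bm P^\perp_{\bm\gamma} = -\bm P^\perp_{\bm\gamma}(\partial\bm A)\bm A^\dagger - (\bm P^\perp_{\bm\gamma}(\partial\bm A)\bm A^\dagger)^\mathsf{H}$, together with $\ell=\frac{1}{2L}\mathrm{tr}(\bm Y^\mathsf{H}\bm P^\perp_{\bm\gamma}\bm Y)$. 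This should give
\[
\partial_{\gamma_k}\ell(\bm\gamma;\bm Z) = -\frac{1}{L}\,\mathrm{Re}\bigl[\bm e_k^\mathsf{T}\bm A_{\bm\gamma}^\dagger\bm Y\bm Y^\mathsf{H}\bm P^\perp_{\bm\gamma}\bm\Lambda\bm A_{\bm\gamma}\bm e_k\bigr],
\]
which is the closed-form gradient recorded in the appendix. The essential feature is that $\nabla_{\bm\gamma}\ell$ is a real quadratic form in $\bm Y=\bm A_{\bm\tau}\bm X+\bm Z$, and that $\bm A_{\bm\gamma}^\dagger$, $\bm P^\perp_{\bm\gamma}$ and $\bm\Lambda$ do not depend on $\bm Z$.

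\emph{Step 2: differentiate in $\bm Z$.} Write $\bm B:=\bm P^\perp_{\bm\gamma}\bm\Lambda\bm A_{\bm\gamma}$ and use $\mathrm{Re}\,w=\tfrac12(w+\bar w)$; this is where the prefactor changes from $\frac1L$ to $\frac{1}{2L}$. A perturbation $d\bm Z$ changes $\bm A^\dagger\bm Y\bm Y^\mathsf{H}\bm B$ by $\bm A^\dagger\, d\bm Z\,\bm Y^\mathsf{H}\bm B+\bm A^\dagger\bm Y\,d\bm Z^\mathsf{H}\bm B$. I vectorize each diagonal entry with $\mathrm{vec}(\bm M\,d\bm Z\,\bm N)=(\bm N^\mathsf{T}\otimes\bm M)\,d\bm z$.

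For the first term this gives $\bm e_k^\mathsf{T}\bm A^\dagger d\bm Z\,\bm Y^\mathsf{H}\bm B\bm e_k = \bigl((\bm Y^\mathsf{H}\bm B\bm e_k)\otimes(\bm A^{\dagger\mathsf T}\bm e_k)\bigr)^\mathsf{T} d\bm z$. Stacking over $k$ turns the column-wise Kronecker products into the Khatri--Rao product $\bm Y^\mathsf{H}\bm P^\perp_{\bm\gamma}\bm\Lambda\bm A_{\bm\gamma}\ast\bm A_{\bm\gamma}^{\dagger\mathsf T}$. For the second term, the $d\bm Z^\mathsf{H}$ dependence is rewritten through $d\bar{\bm z}$ (equivalently, by transposing the scalar). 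This produces the column pairs $(\bm Y^\mathsf{T}\bm A^{\dagger\mathsf T}\bm e_k)\otimes(\bm B\bm e_k)$, i.e.\ $\bm Y^\mathsf{T}\bm A_{\bm\gamma}^{\dagger\mathsf T}\ast\bm P^\perp_{\bm\gamma}\bm\Lambda\bm A_{\bm\gamma}$. The conjugate halves coming from $\mathrm{Re}$ are exactly the complex conjugates of these two blocks, so they carry no new information. Collecting the coefficient of $d\bm z$ and taking the conjugate-transpose convention for the Jacobian should yield the stated expression, including the overall minus sign.

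\emph{Main obstacle.} No step is conceptually hard; the risk is bookkeeping. I must keep the $\bm z$ versus $\bar{\bm z}$ roles and the transpose versus conjugate-transpose conventions consistent with how \eqref{eq:jacobian} pairs the result with the Hessian. I must also get the Kronecker ordering inside each Khatri--Rao column right, which depends on whether $\mathrm{vec}$ stacks columns. I would settle these conventions first and check them on the scalar case $K=L=1$ before stacking.
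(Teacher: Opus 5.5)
Your Step~1 and the two coefficient computations in Step~2 are correct. The Wirtinger route is a cleaner alternative to the paper's, which splits $\bm Y=\bm Y_R+\mathsf{j}\bm Y_I$, differentiates four real bilinear/quadratic pieces, and assembles a $2NL\times K$ real Jacobian.

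The gap is the final collecting step. Put $\bm F_{\bm\gamma}=\bm P_{\bm\gamma}^\perp\bm\Lambda\bm A_{\bm\gamma}$. Let $\bm U:=\bm Y^\mathsf{H}\bm F_{\bm\gamma}\ast\bm A_{\bm\gamma}^{\dagger\mathsf T}$ be your $d\bm z$-block and $\bm V:=\bm Y^\mathsf{T}\bm A_{\bm\gamma}^{\dagger\mathsf T}\ast\bm F_{\bm\gamma}$ your $d\bar{\bm z}$-block, with columns $\bm u_k,\bm v_k$. The conjugate half produced by $\mathrm{Re}$ is not redundant. It is exactly what converts the $d\bar{\bm z}$-coefficient into a $d\bm z$-coefficient, so that
\[
d\,\partial_{\gamma_k}\ell=-\tfrac{1}{2L}\bigl[(\bm u_k+\bar{\bm v}_k)^\mathsf{T}d\bm z+(\bm v_k+\bar{\bm u}_k)^\mathsf{T}d\bar{\bm z}\bigr],
\qquad\text{i.e.}\qquad
\partial_{\bm z}\nabla_{\bm\gamma}\ell=-\tfrac{1}{2L}\bigl[\bm V+\bar{\bm U}\bigr]^\mathsf{H}.
\]
Exactly one block carries a conjugate. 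This equals the stated $-\frac{1}{2L}[\bm V+\bm U]^\mathsf{H}$ only when $\bm U$ is real. If you read the statement as $\partial_{\bar{\bm z}}$, it matches only when $\bm V$ is real. So no choice of convention closes the step.

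This is not a bookkeeping slip on your side: the identity as stated is false away from the ground truth, and the $K=L=1$ check you planned would reveal it. Take $N=2$, $T=1$, $\widehat{g}\equiv 1$, $\gamma=0$, so that $\bm A_\gamma=(1,1)^\mathsf{T}$ and $\bm F_\gamma=\mathsf{j}\pi(1,-1)^\mathsf{T}$. Then $\partial_\gamma\ell=\pi\,\mathrm{Im}(\bar y_1 y_2)$, whose differential at $\bm y=\bm e_1$ is $\pi\,\mathrm{Im}(dy_2)$. This is reproduced by the row $-\frac12[\bm V+\bar{\bm U}]^\mathsf{H}=(0,-\tfrac{\mathsf{j}\pi}{2})$, via $2\,\mathrm{Re}(\cdot\,d\bm y)$. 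The stated row $(\tfrac{\mathsf{j}\pi}{2},0)$ instead responds only to $dy_1$, under either reading.

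The paper's proof has the same defect in its last line. Its real-equivalent blocks $\mathrm{Re}(\bm V+\bm U)$ and $\mathrm{Im}(\bm V-\bm U)$ are correct, but they combine (top plus $\mathsf{j}$ times bottom) into $\bm V+\bar{\bm U}$, not $\bm V+\bm U$.

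What you can prove, and what the paper actually uses, is the following:
\begin{itemize}
\item the corrected identity above;
\item the stated form at $(\bm\tau,\bm 0)$, where $\bm Y_0^\mathsf{H}\bm P_{\bm\tau}^\perp=\bm X^\mathsf{H}(\bm P_{\bm\tau}^\perp\bm A_{\bm\tau})^\mathsf{H}=\bm 0$ forces $\bm U=\bm 0$ (this is the only place the exact form is needed, to build $\bm Z_{\mathrm{adv}}$);
\item the bound $\frac{1}{2L}\bigl(\norm{\bm U}+\norm{\bm V}\bigr)$ behind Lemma~\ref{lem:spectral_norm_grad}, which is unchanged because $\norm{\bar{\bm U}}=\norm{\bm U}$.
\end{itemize}
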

The proof of Lemma~\ref{lem:jacobian_iota} is given in Appendix~\ref{subsecA3:crossderiv}.

\noindent To construct the adversarial noise, we evaluate the Jacobian of the inverse 
map in \eqref{eq:jacobian} at the ground truth $\bm{\tau}$ with $\bm{z}=\bm0$, 
which yields
\begin{equation}
\label{eq:jacobian_at_0}
\nabla_{\bm{z}} \bm{\psi}(\bm{0}) 
= - \left( \nabla^2_{\bm{\gamma}} \bm{\ell}({\bm{\tau}}) \right)^{-1} 
\left( \nabla_{\bm{z}}\nabla_{\bm{\gamma}} \bm{\ell}({\bm{\tau}};\bm{0}) \right),
\end{equation}
where
\begin{subequations}
\label{eq:jacobian_at_zero}
\begin{equation}
\label{eq:jacobian_at_zero_gamma}
\nabla^2_{\bm{\gamma}} \bm{\ell}({\bm{\tau}}) 
=   \frac{1}{L}\mathrm{Re} \left( 
\overline{\bm{X}} \bm{X}^\mathsf{T}
\odot
\bm{A}_{\bm{\tau}}^\mathsf{H} \bm{\Lambda}^\mathsf{H}  \bm{P}_{\bm{\tau}}^\perp  
\bm{\Lambda} \bm{A}_{\bm{\tau}}  \right)
\end{equation}
and
\begin{align*}
\nabla_{\bm{z}}\nabla_{\bm{\gamma}} \bm{\ell}({\bm{\tau}};\bm{0}) = 
-\frac{1}{2L}\begin{bmatrix}
\bm{X}^\mathsf{T} \ast \bm{P}_{\bm{\tau}}^\perp \bm{\Lambda} \bm{A}_{\bm{\tau}}
\end{bmatrix}^\mathsf{H}. 
\end{align*}
\end{subequations}
Taking the singular value decomposition of \eqref{eq:jacobian_at_0}, we select 
the top right singular vector corresponding to the largest singular value. 
Reshaping this length-$NL$ vector into an $N \times L$ complex matrix produces 
the adversarial noise $\bm{Z}_{\mathrm{adv}}$.

\begin{remark}
If the entries of $\bm{X}$ are zero mean and uncorrelated across sources, then 
one can show that the Gramian of the Jacobian matrix in \eqref{eq:jacobian_at_0} 
converges to a Kronecker product of structured matrices as $L \to \infty$. 
First, note that the matrix $\overline{\bm{X}} \bm{X}^\mathsf{T}$ in 
\eqref{eq:jacobian_at_zero_gamma} converges to the complex conjugate of the 
autocorrelation matrix $\bm{R}_{\bm X} = \frac{1}{L} \bm X \bm X^\mathsf{H}$, 
which is a diagonal matrix in this scenario, whereby the right-hand side of 
\eqref{eq:jacobian_at_zero_gamma} also becomes a diagonal matrix after the 
Hadamard product. The Gramian $\bm{C}$ of the Jacobian matrix in 
\eqref{eq:jacobian_at_0} is then written as
\[
\bm{C} = \left[\bm{A} \ast \bm{B} \right] \bm{D} \left[\bm{A} \ast \bm{B} \right]^\mathsf{H},
\]
with $\bm{A} = \bm{X}^\mathsf{T}$ and $\bm{B} = \bm{P}_{\bm{\tau}}^\perp 
\bm{\Lambda} \bm{A}_{\bm{\tau}}$, where $\bm{D} = \frac{1}{4L^2} \left( 
\nabla^2_{\bm{\gamma}} \bm{\ell}({\bm{\tau}}) \right)^{-1}$ is a diagonal matrix. 
Let $\bm{a}_k$ and $\bm{b}_k$ denote the $k$th column of $\bm{A}$ and $\bm{B}$, 
respectively, and $d_k$ denote the $k$th diagonal entry of $\bm{D}$, for 
$k = 1,\dots,K$. Then
\[
\bm{C} = \sum_{k=1}^K (\bm{a}_k \otimes \bm{b}_k) d_k (\bm{a}_k \otimes \bm{b}_k)^\mathsf{H} 
= \sum_{k=1}^K d_k \bm{a}_k \bm{a}_k^\mathsf{H} \otimes \bm{b}_k \bm{b}_k^\mathsf{H}
= \bm{A} \bm{D} \bm{A}^\mathsf{H} \otimes \bm{B} \bm{B}^\mathsf{H}. 
\]
Since $\bm{C}$ is written as the Kronecker product of two matrices, by 
\cite[Proposition 7.1.10]{bernstein2009matrix}, each eigenvector of $\bm{C}$ is 
given as the Kronecker product of eigenvectors of the constituent factors, 
$\bm{A} \bm{D} \bm{A}^\mathsf{H}$ and $ \bm{B} \bm{B}^\mathsf{H}$.
Therefore, the adversarial noise matrix constructed from the dominant eigenvector 
of $\bm{C}$ has rank one since the Kronecker product is reshaped into the outer 
product. 
\end{remark}

\begin{figure}[t]
  \centering
  \begin{subfigure}[b]{0.49\textwidth}
    \includegraphics[width=\textwidth]{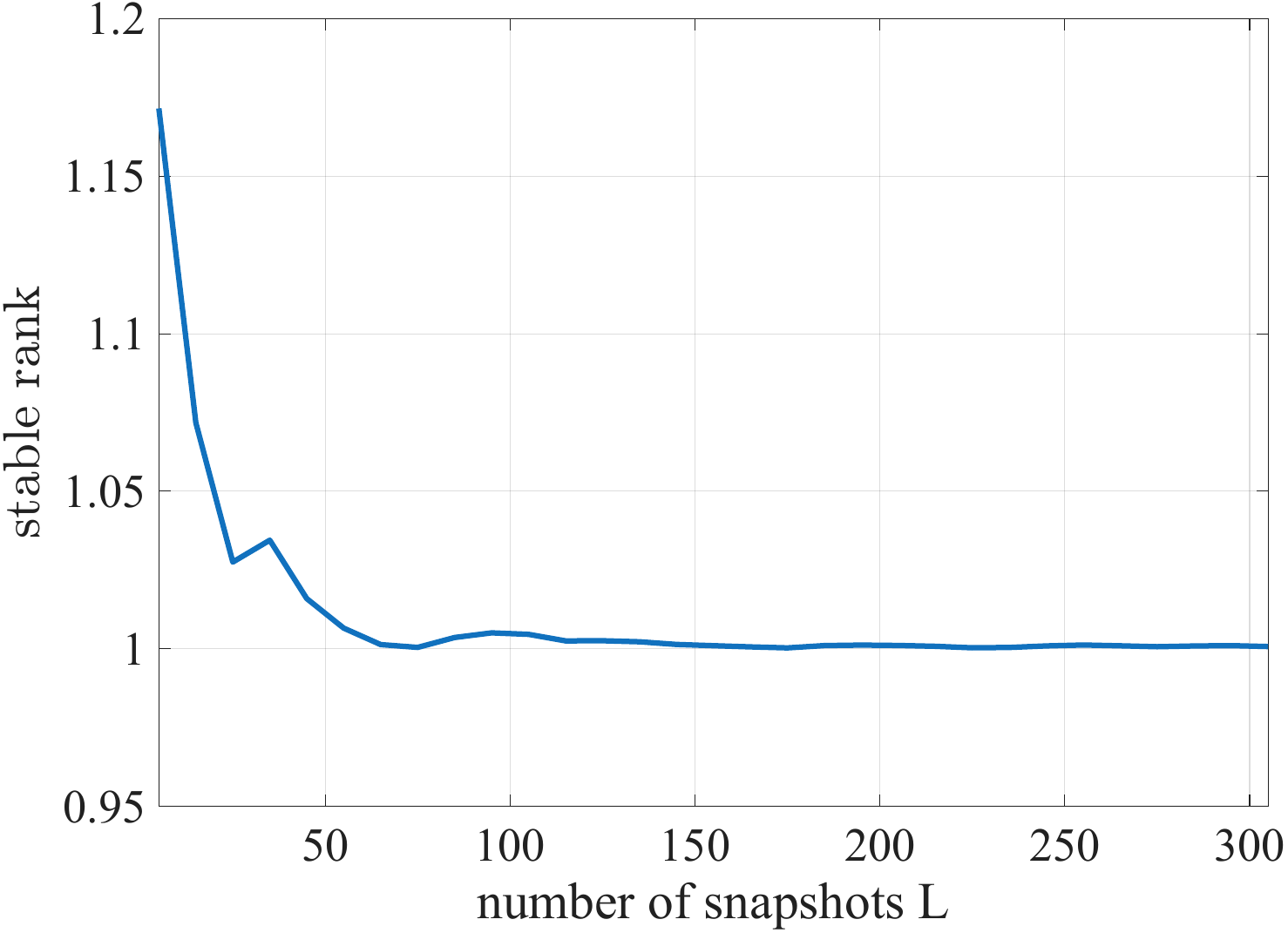}
    \caption{Stable rank of adversarial noise vs.\ $L$.}
    \label{fig:stablerank}
  \end{subfigure}
\end{figure}

In addition to this asymptotic rank-$1$ property of the adversarial noise, we 
also illustrate the empirical convergence in a Monte Carlo simulation. 
Figure~\ref{fig:stablerank} illustrates that an upper bound on the rank of 
$\bm{Z_{\text{adv}}}$ called the stable rank, which is 
$\norm{\bm{Z_{\text{adv}}}}_{\mathrm{F}}^2/\norm{\bm{Z}_{\text{adv}}}^2$, 
converges to one as the number of snapshots $L$ increases.

The preceding remark establishes that the adversarial noise has an asymptotic 
rank-one structure, with its stable rank converging to one as the number of 
snapshots grows. Consequently, its spectral and Frobenius norms coincide as 
$L \to \infty$.

\subsubsection{Comparison of Error Bounds (Theorem~\ref{thm:minimizer_error} vs. Theorem~\ref{thm:mainalt})}
\label{subsec:comp_of_bounds}
We now revisit the error bound in Theorem~\ref{thm:minimizer_error} and 
simplify it under an adversarial perturbation model to enable comparison with 
the inverse-map bound in Theorem~\ref{thm:mainalt}.
Recall that the estimation error satisfies
\begin{align}
\mathrm{d}_2(\bm{\gamma}_\star,\bm{\tau})
& \le   
c_2 \,\sqrt{K}\, \sqrt{\frac{E_g}{E_{g'}}}
\cdot \frac{\|\bm{R}\|}{T E_g  r_{\min}(\bm{X})^2}.
\label{eq:adv_start}
\end{align}
In the adversarial setting, we choose $c=0$ in the definition of $\bm R$, yielding
\begin{align*}
L\|\bm{R}\|
&=  \left\| \bm Y \bm Y^\mathsf{H} - \bm Y_0 \bm Y_0^\mathsf{H} \right\| \le  \left( 2 \|\bm Y_0 \bm Z^\mathsf{H}\| + \|\bm Z \bm Z^\mathsf{H}\| \right).
\label{eq:R_bound_adv}
\end{align*}
In the high-SNR condition where the cross-term dominates the pure noise term 
$\|\bm Y_0 \bm Z^\mathsf{H}\| \gg \|\bm Z \bm Z^\mathsf{H}\|$, the residual 
covariance is primarily governed by the cross-term, and we obtain
\begin{align*}
L \|\bm{R}\|
\lesssim  \|\bm Y_0 \bm Z^\mathsf{H}\|
\le  \|\bm A_{\bm \gamma}\| \, \|\bm X\| \, \|\bm Z\|.
\end{align*}

Using $\|\bm A_{\bm{\gamma}}\| \lesssim \sqrt{T E_g} \|\bm X\|$ 
(cf.~\cite[Theorem~1]{ferreira2023conditionNumber}), we obtain 
$L\|\bm{R}\| \lesssim \sqrt{T E_g} \, \|\bm X\| \, \|\bm Z\|$.
Substituting this bound into~\eqref{eq:adv_start} and using 
$\kappa = \frac{\|\bm X\|}{r_{\min}(\bm X)}$, we obtain
\begin{equation}
\label{eq:simplifies_gd_bound}
\mathrm{d}_2(\bm{\gamma}_\star,\bm{\tau}) 
\lesssim  
\frac{\sqrt{K} \, \kappa^2 \, \|\bm{Z}\|}{\sqrt{T E_{g'}} \, \|\bm X\|}.
\end{equation}

On comparing \eqref{eq:simplifies_gd_bound} with the inverse-map bound in 
\eqref{eq:theorem_main_result}, we observe that in the limit $L \to \infty$, 
the adversarial perturbation $\bm{Z}_{\mathrm{adv}}$ has stable rank one, 
ensuring that its spectral and Frobenius norms coincide. As a result, the two 
bounds differ only in the additional $\sqrt{K}$ factor appearing in the gradient 
descent expression. This establishes that the inverse-map bound is sharper by a 
factor of $\sqrt{K}$, thereby confirming its theoretical advantage in the 
high-SNR, large-snapshot regime.

This section establishes that the VarProSD objective admits a unique local 
minimizer within the basin $\mathcal N(\bm{\tau},\varrho)$ and characterizes its 
estimation accuracy. We will now analyze the behavior of optimization methods 
used to minimize the VarProSD objective in the next section. In particular, we 
establish local convergence guarantees for gradient descent when initialized within the basin of attraction.

\subsection{Local Convergence Analysis of Gradient Descent}
\label{subsec:gd_convergence}
We now analyze the behavior of gradient descent for minimizing the VarProSD
objective. Leveraging the local regularity properties established in
Theorem~\ref{thm:local_geometry}, we show that gradient descent initialized
inside $\mathcal N(\bm{\tau},\varrho)$ remains in this neighborhood and
converges linearly to the unique local minimizer.

Gradient descent updates the spike locations according to 
\begin{align*}
\bm{\gamma}_{t+1}
=
\bm{\gamma}_t - \alpha \nabla_{\bm{\gamma}} \ell(\bm{\gamma}_t),
\end{align*}
where $\alpha>0$ denotes a constant step size and $\bm{\ell}({\bm{\gamma}})$ is the cost function is defined in \eqref{eq:loss_function}.
The gradient of the VarProSD objective admits a closed-form expression,
stated below for completeness.
\begin{lemma}
\label{lem:expression_for_q_alg}
Suppose that all entries of $\bm{\gamma}$ are distinct and define
\begin{align}
\label{def:lambda}
[\bm{\Lambda}]_{i,i} = - \mathsf{j} 2 \pi f_i, 
\qquad i \in [N].
\end{align}
Then, the partial derivative of $\ell(\bm{\gamma})$ with respect to the
$k$-th coordinate $\gamma_k$ is given by
\begin{align*}
\frac{\partial \ell(\bm{\gamma})}{\partial \gamma_k}
=
- \frac{1}{L}\,
\mathrm{Re}\!\left[
\bm{e}_k^\mathsf{H}
(\bm{A}_{\bm{\gamma}})^\dagger
\bm{Y}\bm{Y}^\mathsf{H}
\bm{P}_{\bm{\gamma}}^\perp
\bm{\Lambda}
\bm{A}_{\bm{\gamma}}
\bm{e}_k
\right].
\end{align*}
\end{lemma}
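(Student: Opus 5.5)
We compute the derivative of $\ell(\bm\gamma)=\frac{1}{2L}\mathrm{tr}(\bm Y^\mathsf{H}\bm P_{\bm\gamma}^\perp\bm Y)$ through the Golub--Pereyra formula for the derivative of an orthogonal projector. Write $\bm A=\bm A_{\bm\gamma}$ and $\bm P_{\bm\gamma}=\bm A\bm A^\dagger$. Because the entries of $\bm\gamma$ are distinct, $\bm\Phi_{\bm\gamma}$ has full column rank $K\le N$, assuming $\bm G$ is invertible on the relevant frequencies. Hence $\bm A^\dagger=(\bm A^\mathsf{H}\bm A)^{-1}\bm A^\mathsf{H}$ is a smooth function of $\bm\gamma$ near this point, and the derivative formula below is legitimate.

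\textbf{Step 1: derivative of $\bm A$.} From \eqref{eq:Phi_tau_def} we have $\partial_{\gamma_k}[\bm\Phi_{\bm\gamma}]_{i,m}=-\mathsf{j}2\pi f_i e^{-\mathsf{j}2\pi\gamma_m f_i}\delta_{mk}$. Since $\bm G$ and $\bm\Lambda$ are both diagonal, they commute, and so
\[
\partial_k\bm A=\bm\Lambda\bm A\bm e_k\bm e_k^\mathsf{H}.
\]

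\textbf{Step 2: projector derivative.} The Golub--Pereyra identity $\partial_k\bm P^\perp=-\bm P^\perp(\partial_k\bm A)\bm A^\dagger-(\bm P^\perp(\partial_k\bm A)\bm A^\dagger)^\mathsf{H}$ can be rederived in one line. Differentiate $\bm P=\bm A(\bm A^\mathsf{H}\bm A)^{-1}\bm A^\mathsf{H}$, then use $\bm P^\perp\bm A=\bm 0$ and $\bm A^{\dagger}\bm A=\bm I$ to collect terms. Substituting Step 1 gives $\partial_k\bm P^\perp=-\bm M_k-\bm M_k^\mathsf{H}$, where
\[
\bm M_k:=\bm P^\perp\bm\Lambda\bm A\bm e_k\bm e_k^\mathsf{H}\bm A^\dagger .
\]

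\textbf{Step 3: trace manipulation.} We obtain
\[
\partial_k\ell=-\frac{1}{2L}\,\mathrm{tr}\bigl(\bm Y^\mathsf{H}(\bm M_k+\bm M_k^\mathsf{H})\bm Y\bigr)=-\frac{1}{L}\mathrm{Re}\,\mathrm{tr}(\bm Y\bm Y^\mathsf{H}\bm M_k),
\]
because $\mathrm{tr}(\bm Y^\mathsf{H}\bm M_k^\mathsf{H}\bm Y)$ is the complex conjugate of $\mathrm{tr}(\bm Y^\mathsf{H}\bm M_k\bm Y)$. Cyclicity of the trace then gives
\[
\mathrm{tr}(\bm Y\bm Y^\mathsf{H}\bm P^\perp\bm\Lambda\bm A\bm e_k\bm e_k^\mathsf{H}\bm A^\dagger)=\bm e_k^\mathsf{H}\bm A^\dagger\bm Y\bm Y^\mathsf{H}\bm P^\perp\bm\Lambda\bm A\bm e_k,
\]
which is the claimed formula.

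The only step needing care is Step 2. The main obstacle is justifying differentiability of the pseudoinverse, which holds exactly when the rank of $\bm A$ is locally constant. Distinct $\gamma_k$ (modulo $T$) together with $N\ge K$ and nonvanishing $\widehat g(f_i)$ guarantee full column rank through the Vandermonde structure. The remaining steps are routine algebra.
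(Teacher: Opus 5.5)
Your proof is correct and follows essentially the same route as the paper: the Golub--Pereyra projector derivative combined with $\partial \bm{A}_{\bm{\gamma}}/\partial\gamma_k=\bm{\Lambda}\bm{A}_{\bm{\gamma}}\bm{e}_k\bm{e}_k^\mathsf{T}$, followed by $\mathrm{Trace}(\bm{M}+\bm{M}^\mathsf{H})=2\,\mathrm{Re}\,\mathrm{Trace}(\bm{M})$ and cyclicity of the trace. Your side remark also adds something: smoothness of the pseudoinverse needs full column rank of $\bm{A}_{\bm{\gamma}}$, i.e.\ distinctness modulo $T$ and nonvanishing $\widehat{g}(f_i)$, not merely distinct entries of $\bm{\gamma}$, and the paper leaves this hypothesis implicit.
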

The proof of Lemma~\ref{lem:expression_for_q_alg} is given in Appendix~\ref{subsecA3:grad}.
\newline \noindent 
The explicit form in Lemma~\ref{lem:expression_for_q_alg} enables efficient
evaluation of the gradient without numerical differentiation and will be used
in the convergence analysis below. We now establish the local convergence guarantee for gradient descent.

\begin{theorem}[Local linear convergence of gradient descent]
\label{thm:gd_convergence}
Under the assumptions of Theorem~\ref{thm:local_geometry}, 
there exists a constant step size $\alpha>0$ such that, for any initialization
$\bm{\gamma}_0 \in \mathcal N(\bm{\tau},\varrho)$, 
the gradient descent iterates
$\{\bm{\gamma}_t\}_{t\ge0}$ converge
linearly to the unique local minimizer $\bm{\gamma}_\star$ within the basin, i.e.,
\begin{equation}
\label{eq:gd_linear_conv}
\mathrm{d}_2(\bm{\gamma}_t,\bm{\gamma}_\star)
\le
\vartheta^t \,
\mathrm{d}_2(\bm{\gamma}_0,\bm{\gamma}_\star),
\quad \forall t \ge 0,
\end{equation}
where 
\[
\vartheta
=
\frac{6r_{\max}^2(\bm X)}{6r_{\max}^2(\bm X)+r_{\min}^2(\bm X)}.
\]
Consequently, $\mathcal N(\bm{\tau},\varrho)$ serves as a basin of attraction for gradient descent.
\end{theorem}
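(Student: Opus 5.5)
The argument is the standard linear-convergence proof for gradient descent on a smooth, strongly convex function, applied on the basin $\mathcal N(\bm\tau,\varrho)$ from Theorem~\ref{thm:local_geometry}. It has three steps: extract explicit curvature constants, choose the step size, and verify that the iterates never leave the basin.

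\textbf{Step 1: explicit curvature constants.} Theorem~\ref{thm:local_geometry} only asserts strong convexity and a Lipschitz gradient, so I first reopen its proof to get the constants. On $\mathcal N(\bm\tau,\varrho)$ the proof should yield
\[
\mu\,\bm I_K \preceq \nabla^2\ell(\bm\gamma)\preceq M\,\bm I_K,
\qquad
\mu \asymp \frac{T E_{g'}\, r_{\min}^2(\bm X)}{L},
\qquad
M \asymp \frac{T E_{g'}\, r_{\max}^2(\bm X)}{L}.
\]
These come from the Hessian formula of Lemma~\ref{lem:expression_for_hess_alg}:
\begin{itemize}
\item at $\bm\gamma=\bm\tau$ the Hessian reduces to the Hadamard product \eqref{eq:jacobian_at_zero_gamma};
\item the Schur product theorem, combined with the diagonal of $\overline{\bm X}\bm X^\mathsf{T}$, gives eigenvalue bounds in terms of the row norms $r_{\min}^2(\bm X)$ and $r_{\max}^2(\bm X)$;
\item the Beurling--Selberg conditioning bounds on $\bm A_{\bm\tau}^\mathsf{H}\bm\Lambda^\mathsf{H}\bm P^\perp_{\bm\tau}\bm\Lambda\bm A_{\bm\tau}$ give a diagonal of order $T E_{g'}$, with constants like $2/3$ and $2$ (hence the factor $6$);
\item perturbation terms from $\bm\gamma\neq\bm\tau$ (controlled by $\varrho$) and from $\bm R$ (controlled by \eqref{eq:cond_on_R1}) are absorbed into these constants.
\end{itemize}
The condition number should satisfy $M/\mu \le 6 r_{\max}^2/r_{\min}^2$, up to the constant bookkeeping that produces exactly the $\vartheta$ in the statement.

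\textbf{Step 2: one-step contraction.} Take $\alpha = 2/(\mu+M)$. Since $\nabla\ell(\bm\gamma_\star)=0$ and the segment between $\bm\gamma_t$ and $\bm\gamma_\star$ lies in the convex set $\mathcal N$ (an $\ell_2$ ball, after lifting the torus distance to a consistent representative), the mean-value form gives
\[
\bm\gamma_{t+1}-\bm\gamma_\star = \Bigl(\bm I - \alpha\!\int_0^1\!\nabla^2\ell\bigl(\bm\gamma_\star+s(\bm\gamma_t-\bm\gamma_\star)\bigr)\,ds\Bigr)(\bm\gamma_t-\bm\gamma_\star).
\]
The matrix in parentheses has spectral norm at most $(M-\mu)/(M+\mu)\le 1-\mu/M$. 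The target rate $\vartheta = 6r_{\max}^2/(6r_{\max}^2+r_{\min}^2)$ has the form $M'/(M'+\mu')$. A cleaner route to exactly that form is the step $\alpha=1/(M+\mu)$, which gives the contraction factor $\max|1-\alpha\lambda| = M/(M+\mu)$. With $M/\mu = 6r_{\max}^2/r_{\min}^2$ this equals $\vartheta$ exactly, so I will use $\alpha=1/(M+\mu)$.

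\textbf{Step 3: invariance of the basin (main obstacle).} Contraction toward $\bm\gamma_\star$ does not by itself keep the iterates inside the ball centred at $\bm\tau$. The naive estimate is
\[
\mathrm d_2(\bm\gamma_{t+1},\bm\tau)\le \vartheta\,\mathrm d_2(\bm\gamma_t,\bm\gamma_\star)+\mathrm d_2(\bm\gamma_\star,\bm\tau),
\]
which can exceed $\varrho$. I will try two fixes in order.
\begin{itemize}
\item \textbf{First try.} Show that the curvature bounds in Step 1 hold on a slightly larger ball, say of radius $2\varrho$ (or $\varrho + \mathrm d_2(\bm\gamma_\star,\bm\tau)$). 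This is plausible because $\varrho$ was defined with slack (factor $1/2$ and $c_1$). It is enough because every iterate satisfies $\mathrm d_2(\bm\gamma_t,\bm\gamma_\star)\le\mathrm d_2(\bm\gamma_0,\bm\gamma_\star)\le \varrho+\mathrm d_2(\bm\gamma_\star,\bm\tau)$. Theorem~\ref{thm:minimizer_error} and the second branch of \eqref{eq:cond_on_R1} (involving $\varrho/\sqrt K$) give $\mathrm d_2(\bm\gamma_\star,\bm\tau)\lesssim\varrho$.
\item \textbf{Alternative.} Argue directly that $\langle\nabla\ell(\bm\gamma),\bm\gamma-\bm\tau\rangle>0$ on the boundary sphere, so that a small step cannot exit the ball.
\end{itemize}
With invariance in hand, induction on $t$ using Step 2 yields \eqref{eq:gd_linear_conv}.
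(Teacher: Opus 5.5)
There is a genuine gap in your Step 3, and Step 3 is where the content of this theorem lies. Your Steps 1--2 are fine. Lemma~\ref{lem:local_geometry} gives $\mu=\frac13TE_{g'}r_{\min}^2(\bm X)$ and $\nu=TE_{g'}r_{\max}^2(\bm X)$, so $\nu/\mu=3r_{\max}^2/r_{\min}^2$ and the classical contraction toward $\bm\gamma_\star$ already beats $\vartheta$. (The $6$ in $\vartheta$ comes from an AM--GM inflation used to absorb a noise term, not from Hessian constants.) But Step 2 needs $\bm\gamma_t\in\mathcal N(\bm\tau,\varrho)$, and neither of your two tries delivers this.

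\emph{First try.} It rests on a false premise. The factor $\frac12$ in the branch $\frac12(\Delta-\frac23\rho\kappa^2)$ is not slack. It is exactly what keeps the guaranteed separation $\Delta-2\delta$ of every $\bm\gamma$ in the ball above $\frac23\rho\kappa^2$, which the Beurling--Selberg conditioning bounds need (the $\eta_\pm$ factors, Lemma~\ref{lem:exsv_schur_comp}). When that branch is active, a ball of radius $2\varrho$ already contains configurations with separation below $\frac23\rho\kappa^2$: move two neighbouring spikes toward each other by $\sqrt2\,\varrho$ each. In that case no curvature bound is available beyond radius $\varrho$ at all.

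Also, Theorem~\ref{thm:minimizer_error} combined with the $\varrho/\sqrt K$ branch of \eqref{eq:cond_on_R1} only gives $\mathrm{d}_2(\bm\gamma_\star,\bm\tau)\lesssim\varrho\,r_{\max}(\bm X)/r_{\min}(\bm X)$, not $\lesssim\varrho$. Moreover, in the paper that theorem is itself derived from the invariance recursion, so it cannot serve as an input here. You would have to rederive it, e.g.\ from $\mu\norm{\bm\gamma_\star-\bm\tau}_2\le\norm{\nabla\ell(\bm\tau)}_2$.

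\emph{Alternative.} It is insufficient as stated. With a fixed step, positivity of $\langle\nabla\ell(\bm\gamma),\bm\gamma-\bm\tau\rangle$ on the boundary sphere does not stop an interior point from being mapped outside the ball. You need a quantitative inequality valid on the whole ball, and you give no way to prove one.

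The missing idea is to contract toward $\bm\tau$ rather than toward $\bm\gamma_\star$. Because $\bm A_{\bm\gamma}^\dagger\bm P_{\bm\gamma}^\perp=\bm 0$, the gradient splits as $\nabla\ell(\bm\gamma;\bm Z)=\nabla\ell(\bm\gamma;\bm 0)-\operatorname{Re}\operatorname{diag}\bigl(\bm A_{\bm\gamma}^\dagger\bm R\bm P_{\bm\gamma}^\perp\bm\Lambda\bm A_{\bm\gamma}\bigr)$. The noiseless loss has $\nabla\ell(\bm\tau;\bm 0)=\bm 0$ and obeys the same curvature bounds on $\mathcal N(\bm\tau,\varrho)$ (Lemma~\ref{lem:local_geometry} with $\bm R=\bm 0$). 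Hence with $\alpha=2/(\mu+\nu)$ its gradient step contracts toward $\bm\tau$ by $\frac{\nu-\mu}{\nu+\mu}$, using only points of the ball. The $\bm R$-term is at most $\alpha\sqrt K\norm{\bm A_{\bm\gamma}^\dagger}\norm{\bm R}\norm{\bm\Lambda\bm A_{\bm\gamma}}$.

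With AM--GM this yields $\mathrm{d}_2(\bm\gamma_{t+1},\bm\tau)^2\le\vartheta^2\,\mathrm{d}_2(\bm\gamma_t,\bm\tau)^2+\varepsilon_R$. The paper then invokes the $\varrho/\sqrt K$ branch of \eqref{eq:cond_on_R1} to get $\varepsilon_R\le(1-\vartheta^2)\varrho^2$, i.e.\ forward invariance of $\mathcal N(\bm\tau,\varrho)$. Only after that is the classical contraction toward $\bm\gamma_\star$ (your Step 2) applied at every iteration.

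Within your own framework there is a workable substitute. Once $\bm\gamma_t,\bm\gamma_\star\in\mathcal N$, Step 2 and the triangle inequality give $\mathrm{d}_2(\bm\gamma_{t+1},\bm\tau)\le q\varrho+(1+q)\,\mathrm{d}_2(\bm\gamma_\star,\bm\tau)$, where $q$ is your contraction factor. So invariance follows if $\mathrm{d}_2(\bm\gamma_\star,\bm\tau)\le\frac{1-q}{1+q}\varrho$. That requires a quantitative bound of order $\varrho\,r_{\min}^2/r_{\max}^2$, not merely $\lesssim\varrho$, and hence a correspondingly small $\norm{\bm R}$.
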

The proof of Theorem~\ref{thm:gd_convergence} is provided in Section~\ref{subsec:proof_gd}.
\noindent 
Theorem~\ref{thm:gd_convergence} shows that, once initialized within the basin of attraction, gradient descent converges at a linear rate to the local minimizer. This behavior is governed by the local strong convexity and smoothness of the objective established in Theorem~\ref{thm:local_geometry}.

\section{Numerical Results}\label{sec3 numerical result}
This section presents a series of numerical simulations designed to 
corroborate the theoretical framework presented in the preceding 
sections. The code to reproduce those experiments is available at \url{https://github.com/Meghna2608/varprosd-spike-deconvolution}. We demonstrate the effectiveness of our analyses for 
representative PSFs including the Dirac delta, the Gaussian kernel, 
and a Morlet-type wavelet. The Gaussian PSF is given by
$g(t) = \frac{1}{\sqrt{2\pi}\sigma} 
\exp\left(-\frac{t^2}{2\sigma^2}\right)$,
where $\sigma$ controls the effective width.

Throughout this section, the number of Fourier measurements is 
fixed at $N = 51$, and the sampling bandwidth $B = N/T$ is varied 
by adjusting the sampling stepsize $1/T$. Unless otherwise stated, 
we set the number of spikes to $K=3$ and the number of snapshots to 
$L=20$. The spike amplitudes are randomly selected, while the spike 
locations are chosen to ensure a minimum separation of at least 
$\Delta = 0.1$. In most experiments, we report the median 
$\ell_2$ torus distance $\mathrm{d}_2(\bm{\tau}, \bm{\gamma})$ 
defined in~\eqref{eq:d2_torus} over $25$ Monte Carlo realizations; 
however, we use the mean error when comparing against the 
Cram\'er--Rao bound (CRB) for consistency. Unless otherwise 
specified, the noise variance is set to $\sigma_{\bm{Z}}^2 = 0.025$. Under this configuration, assuming a Gaussian PSF 
with $\sigma = 0.1$ and bandwidth $B = 11$, the expected 
$\mathrm{SNR}$ is approximately $15$\,dB. 

In our theoretical analysis we focus on gradient descent (GD), while in practice the variable projection problem is typically solved using the Gauss--Newton (GN) method~\cite{golub1973differentiation,espanol2025local}. Accordingly, we use gradient descent in the experiments that corroborate our theoretical findings and select the optimal sampling bandwidth according to the PSF characteristics. We additionally demonstrate the performance of Gauss--Newton in the experiments that assess estimation performance once the iterates lie within the basin, each initialized from modified ESPRIT unless stated otherwise. Any parameter values that deviate from these defaults are stated explicitly in the corresponding subsection.

\subsection{Initialization methods and sensitivity to bandwidth}
The first experiment investigates various initialization methods. 
We focus on practical methods derived from variants of the ESPRIT algorithm.
Prior work has shown that the original ESPRIT 
algorithm~\cite{roy1989esprit} remains applicable in the presence of 
a PSF~\cite{bresler1989resolution}. Alternatively, ESPRIT can be 
applied after compensating for the PSF by filtering the Fourier 
measurements with the inverse filter $\bm{G}^{-1}$, referred to as 
``equalized ESPRIT''~\cite{yang2001channel}. Furthermore, the core 
formulation of ESPRIT can be modified to directly incorporate the 
known PSF structure, known as ``modified 
ESPRIT''~\cite{swindlehurst1999methods}.

Figure~\ref{fig:initialization_vs_B2} shows the mean 
$\mathrm{d}_2(\bm{\tau}, \bm{\gamma})$ error for these methods as a 
function of the sampling bandwidth $B = N/T$, under two scenarios: a 
Dirac PSF and a Gaussian pulse with $\sigma = 0.1$. For reference, 
we also include the CRB curve, which represents the fundamental 
limit of estimation accuracy.
For the Dirac PSF, all three ESPRIT variants exhibit similar 
behavior: the error decreases nearly monotonically with $B$, 
indicating that higher sampling bandwidth consistently improves 
performance in this case.
For the Gaussian PSF, however, the behavior is more nuanced. 
Equalized ESPRIT fails early, with error diverging as $B$ increases 
due to noise amplification from inverting the PSF at higher 
frequencies. Standard ESPRIT performs better, showing an initial 
decrease in error that eventually plateaus as $B$ continues to grow. 
In contrast, modified ESPRIT consistently achieves the lowest error 
across most of the range: its error decreases with $B$, reaches a 
minimum, and then gradually increases at larger $B$ values as 
additional noisy frequency bins are incorporated.

\begin{figure}[t]
  \centering
  \begin{subfigure}[b]{0.49\textwidth}
    \includegraphics[width=\textwidth]
    {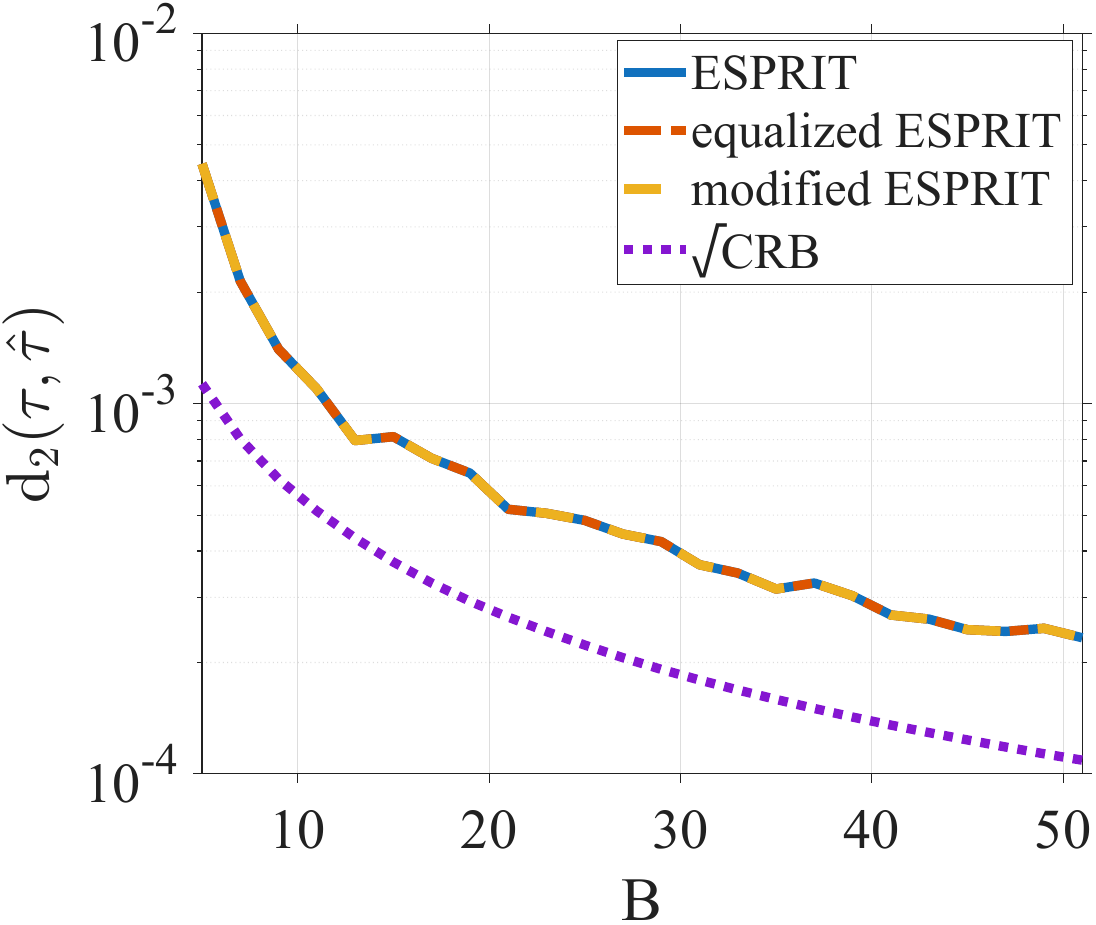}
    \caption{Dirac PSF}
  \end{subfigure}
  \hfill
  \begin{subfigure}[b]{0.49\textwidth}
    \includegraphics[width=\textwidth]
    {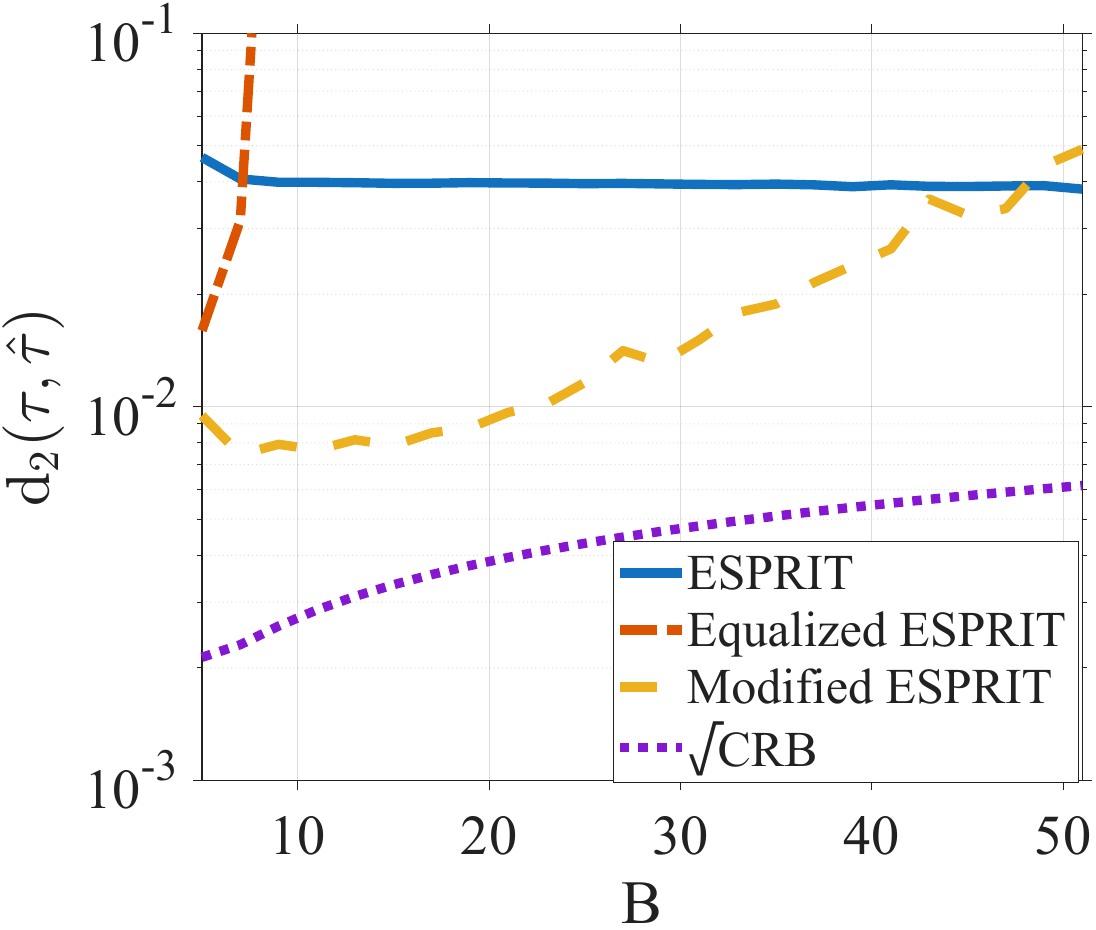}
    \caption{Gaussian PSF ($\sigma = 0.1$)}
  \end{subfigure}
  \caption{Performance of ESPRIT initialization methods and CRB across 
varying sampling bandwidth $B = N/T$ for two PSFs. The number of 
Fourier measurements $N$ is fixed; $B$ is varied by adjusting the 
sampling stepsize $1/T$.}
  \label{fig:initialization_vs_B2}
\end{figure}

These results highlight two key insights. First, while all ESPRIT 
variants perform similarly for the Dirac PSF, modified ESPRIT offers 
robust initialization under more realistic Gaussian PSFs. However, even modified ESPRIT (the best-performing initialization) exhibits a persistent gap to the $\sqrt{\mathrm{CRB}}$, as seen in 
Figure~\ref{fig:initialization_vs_B2}(b). This gap motivates the 
iterative refinement via gradient descent used in subsequent sections. Second, performance is highly sensitive to the choice of sampling bandwidth 
$B$. This implies that selecting an appropriate $B$ is critical. So, in the next subsection, we present a 
principled approach for selecting $B$ based on the characteristics 
of the underlying PSF.

\subsection{Optimal bandwidth selection via PSF characteristics}

Having established that modified ESPRIT provides the most effective initialization, we next turn to the selection of the sampling bandwidth $B$. 
Increasing the bandwidth provides access to more frequency information, but it can also introduce additional noise through high-frequency bins. To address this 
trade-off, we leverage the PSF characteristics introduced in 
Section~\ref{sec:main_results} to identify an optimal value for $B$.

Recall from Section~\ref{sec:main_results} that the basin radius 
$\varrho$ in~\eqref{eq:basin_radius_cond} depends on the PSF 
characteristic $\rho$, defined 
in~\eqref{eq:rho_def_max}, which captures the flatness of the 
PSF's power spectral density and its derivatives.
Smaller values of $\rho$
correspond to a larger basin of convexity, enabling successful 
recovery when the spikes are separated enough, as described 
in~\eqref{eq:basin_radius_cond}. It also depends on the 
sampling bandwidth $B=N/T$, and its behavior as a function of $B$ is 
illustrated in Figure~\ref{fig:2d_psf_charac}.

\begin{figure}[t]
  \centering
  \includegraphics[width=0.55\textwidth]{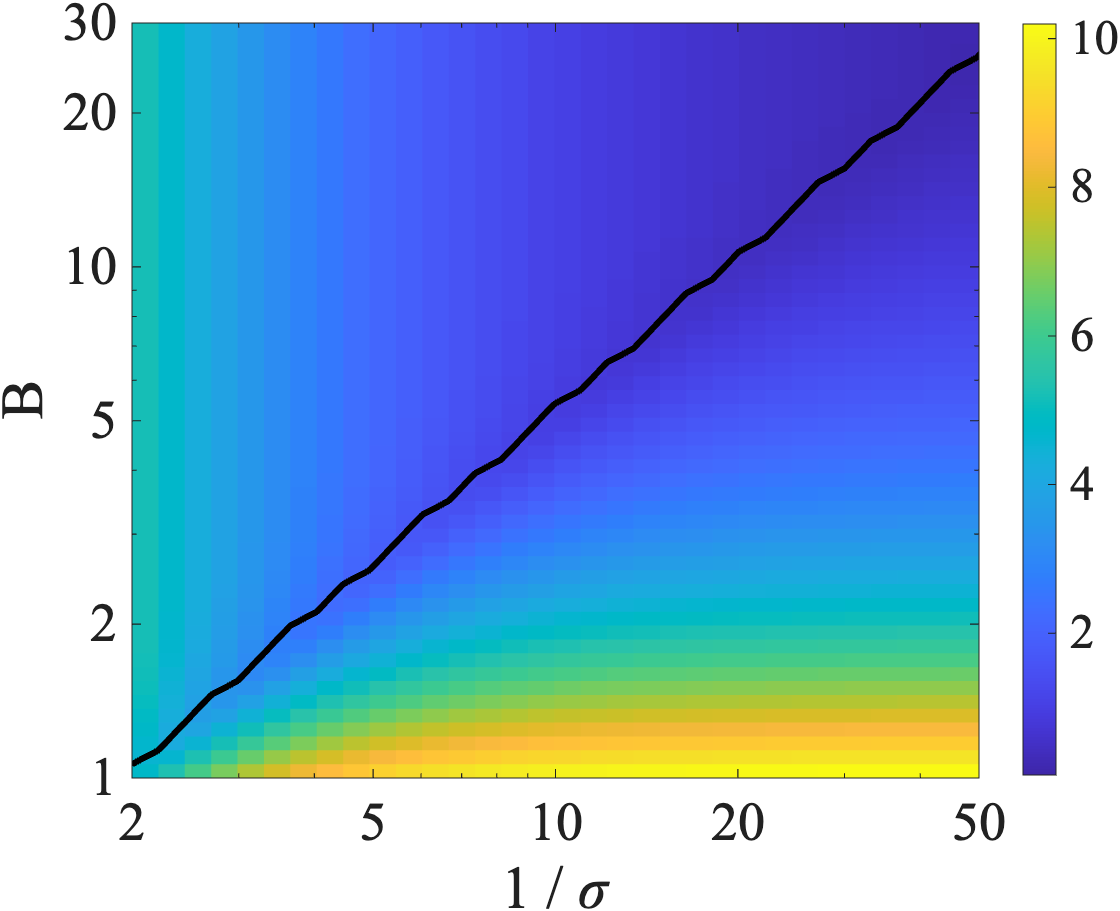}
  \caption{PSF characteristic $\rho$ plotted as a function of sampling bandwidth $B = N/T$ and inverse pulse width $1/\sigma$ on a log-log scale. $N$ is fixed; $B$ is varied by adjusting $T$.
The black curve indicates the optimal bandwidth $B_{\mathrm{opt}}$ that minimizes $\rho$ for each
$1/\sigma$.}
  \label{fig:2d_psf_charac}
\end{figure}

Figure~\ref{fig:2d_psf_charac} plots $\rho$ as a function of $B$ 
and inverse pulse width $1/\sigma$, with the black curve marking 
the bandwidth $B_{\mathrm{opt}}$ that minimizes $\rho$ for each 
$1/\sigma$. We next verify that this $\rho$-based criterion 
predicts the empirically optimal bandwidth.

\begin{figure}[t]
  \centering
  \begin{subfigure}[b]{0.49\textwidth}
    \includegraphics[width=\textwidth]{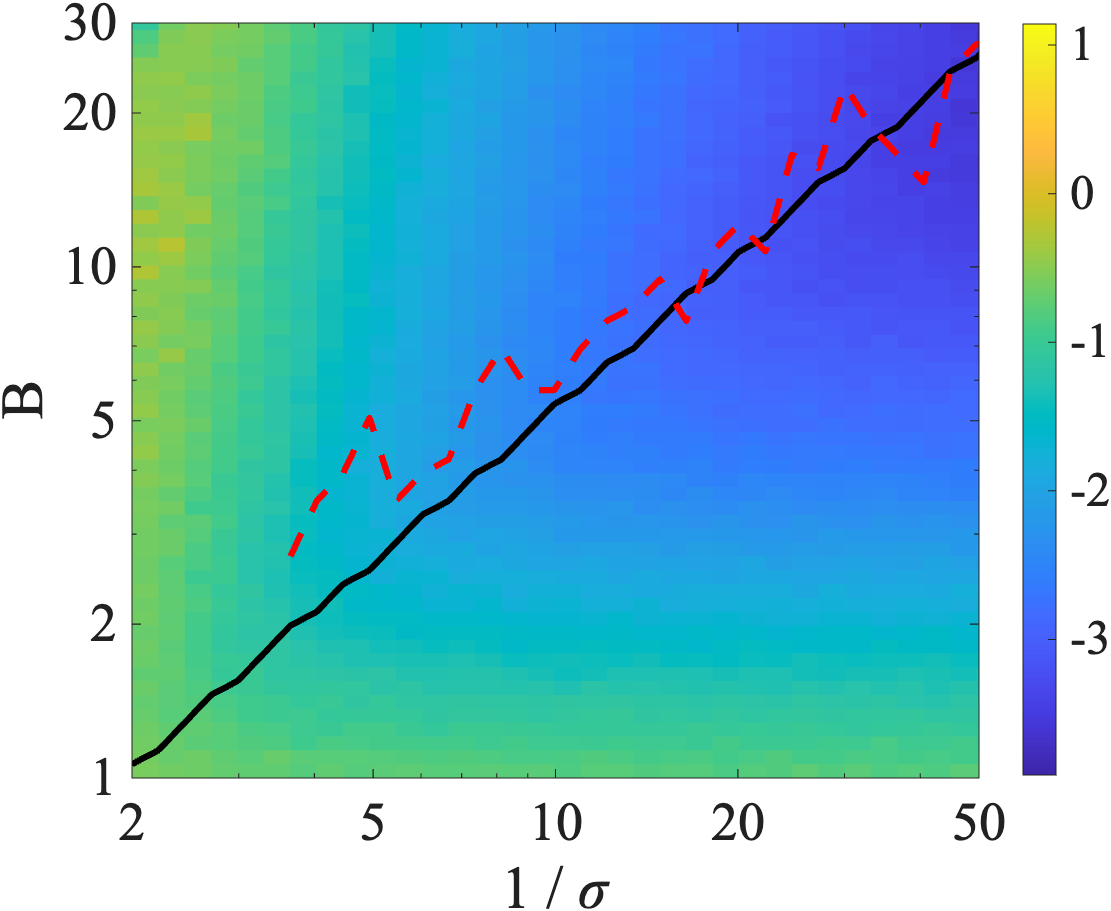}
    \caption{Oracle GD}
  \end{subfigure}
  \hfill
  \begin{subfigure}[b]{0.49\textwidth}
    \includegraphics[width=\textwidth]{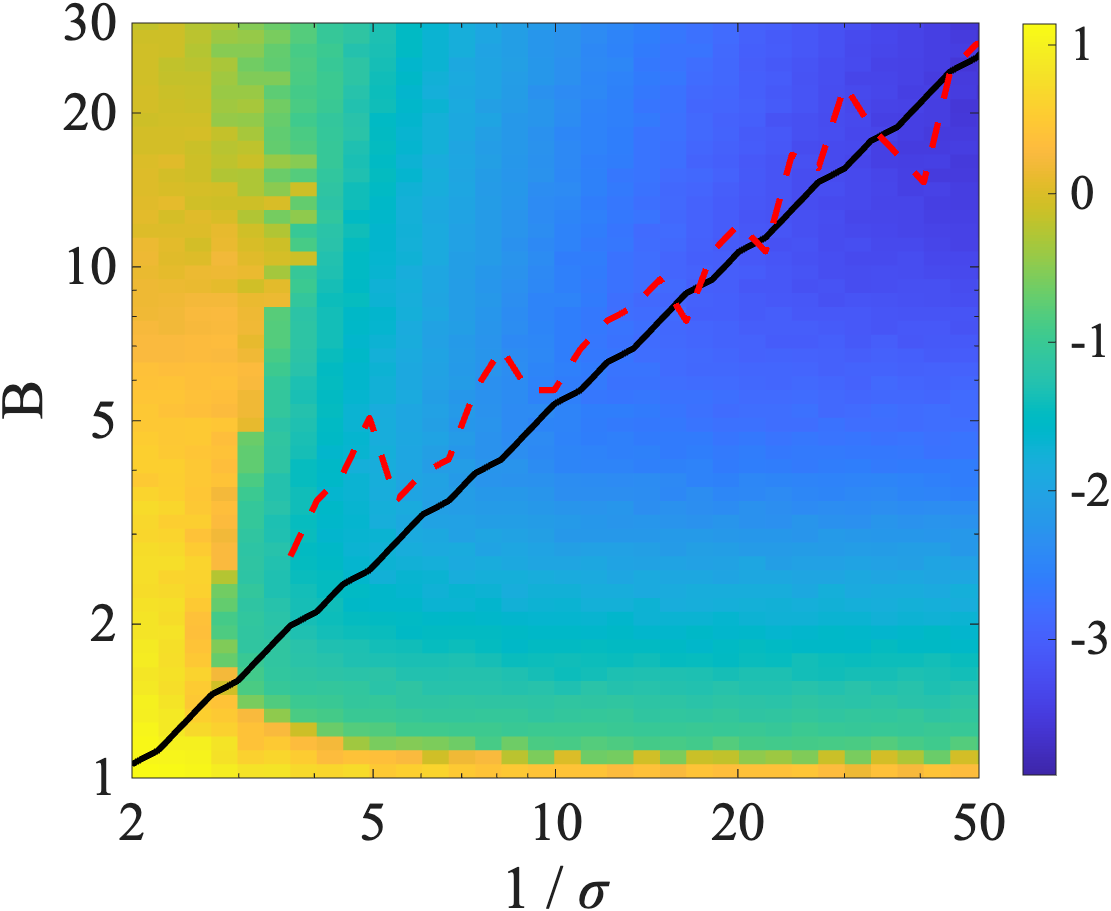}
    \caption{Modified ESPRIT + GD}
  \end{subfigure}
  \caption{Median $\log_{10}(\mathrm{d}_2)$ error of gradient descent 
  as a function of $B$ and $1/\sigma$, under oracle initialization (a) 
  and modified ESPRIT initialization (b). The black curve indicates 
  $B_{\mathrm{opt}}$ from $\rho$ (as in 
  Figure~\ref{fig:2d_psf_charac}); the red dashed curve indicates the 
  empirically optimal $B$ minimizing the $\mathrm{d}_2$ error.}
  \label{fig:2D_phase_plot}
\end{figure}

In Figure~\ref{fig:2D_phase_plot}, the transition boundaries in 
the error plots for oracle GD (where the algorithm is 
initialized at the ground-truth locations $\bm{\tau}$) and 
modified ESPRIT + GD align closely with the $B_{\mathrm{opt}}$ 
(black) curve derived from the $\rho$ criterion. The red dashed 
curve, which shows the bandwidth that minimizes the empirical 
$\mathrm{d}_2$ error across Monte Carlo trials, lies in close 
agreement with the theoretical prediction. Performance improves 
with $B$ up to some optimal bandwidth, after which it either 
saturates or degrades. For large $\sigma$ and large $B$, oracle GD 
continues to succeed because it is initialized at the ground truth 
and always remains within the basin of attraction, but modified 
ESPRIT fails: its initialization lies outside the shrinking basin 
of attraction, leading to the prominent yellow failure bands in 
the top-left region of the modified ESPRIT panel. Our theoretical 
analysis is most relevant in the ``blue'' region of the figures, 
corresponding to smaller $\sigma$ values; at larger $\sigma$ and 
large $B$, our theory does not fully explain the behavior.

Next, we broaden our experiment beyond the Gaussian PSF to include 
another pulse shape that illustrates a different spectral decay 
behavior: a Morlet-type wavelet. The frequency response for the 
Morlet-type wavelet is given as
\[
G_{\text{morlet}}(f) = 25\!\left(\frac{1}{4\pi(f-5\pi)^2+1} 
+ \frac{1}{4\pi(f+5\pi)^2+1}\right).
\]

\begin{figure}[t]
  \centering
  \begin{subfigure}[t]{0.49\textwidth}
    \includegraphics[width=\textwidth]{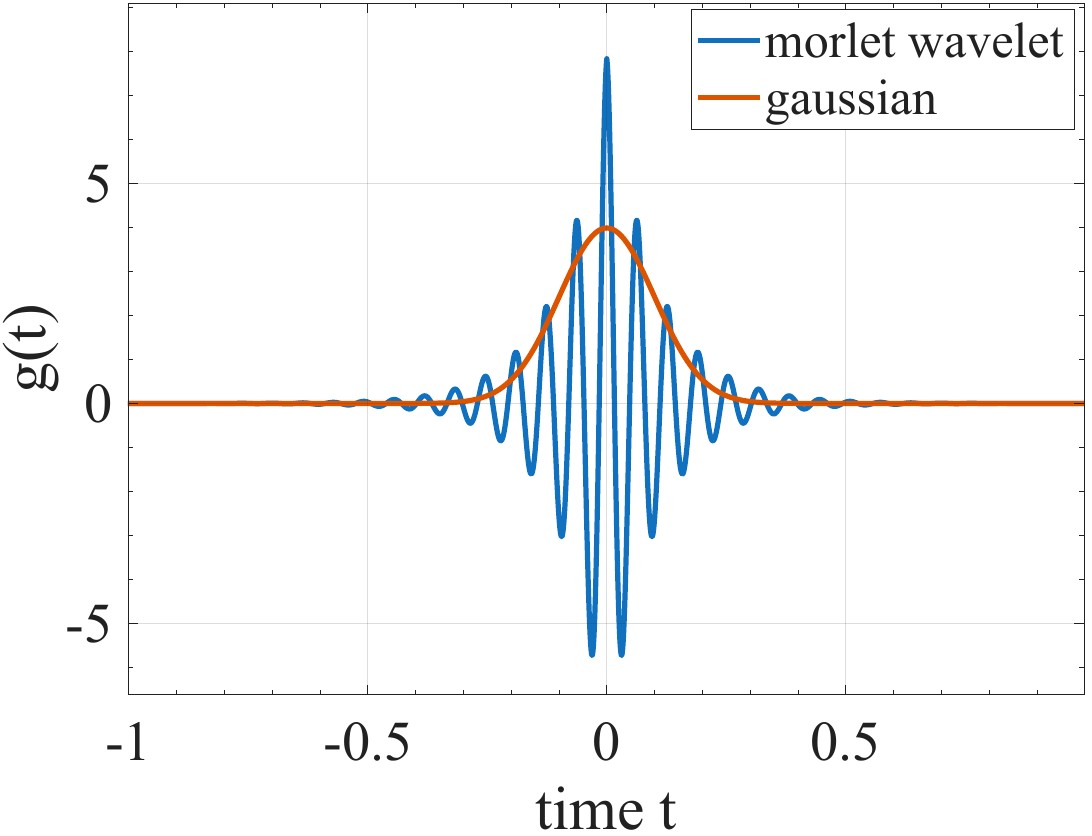}
    \caption{Time-domain shapes $g(t)$}
  \end{subfigure}
  \hfill
  \begin{subfigure}[t]{0.49\textwidth}
    \includegraphics[width=\textwidth]{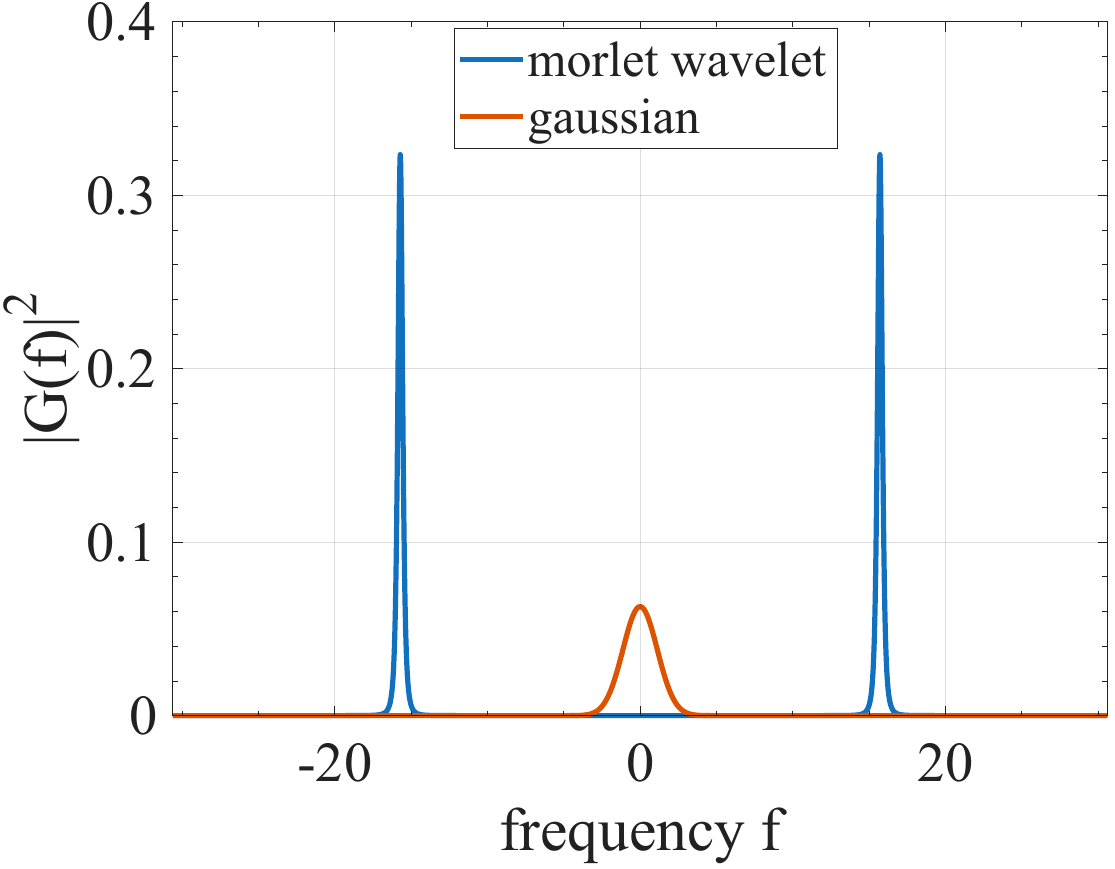}
    \caption{Power spectral densities $|G(f)|^2$}
  \end{subfigure}
  \caption{Illustration of the Gaussian and Morlet PSFs in the time domain (left) and frequency domain (right).}
  \label{fig:psf_charac}
\end{figure}

Figure~\ref{fig:psf_charac} shows the time-domain shapes and 
power spectral densities of the Gaussian and Morlet PSFs for 
illustration purposes. The Gaussian pulse exhibits a rapidly decaying PSD, with most of its energy concentrated in the main lobe near zero frequency. In contrast, the Morlet wavelet has a 
more oscillatory time-domain shape and a PSD consisting of two 
Lorentzian-like lobes centered at $\pm 5\pi \approx \pm 15.7$, 
which also decay sharply but away from the origin.

\begin{figure}[t]
  \centering
  \begin{subfigure}[t]{0.48\textwidth}
    \includegraphics[width=\textwidth]{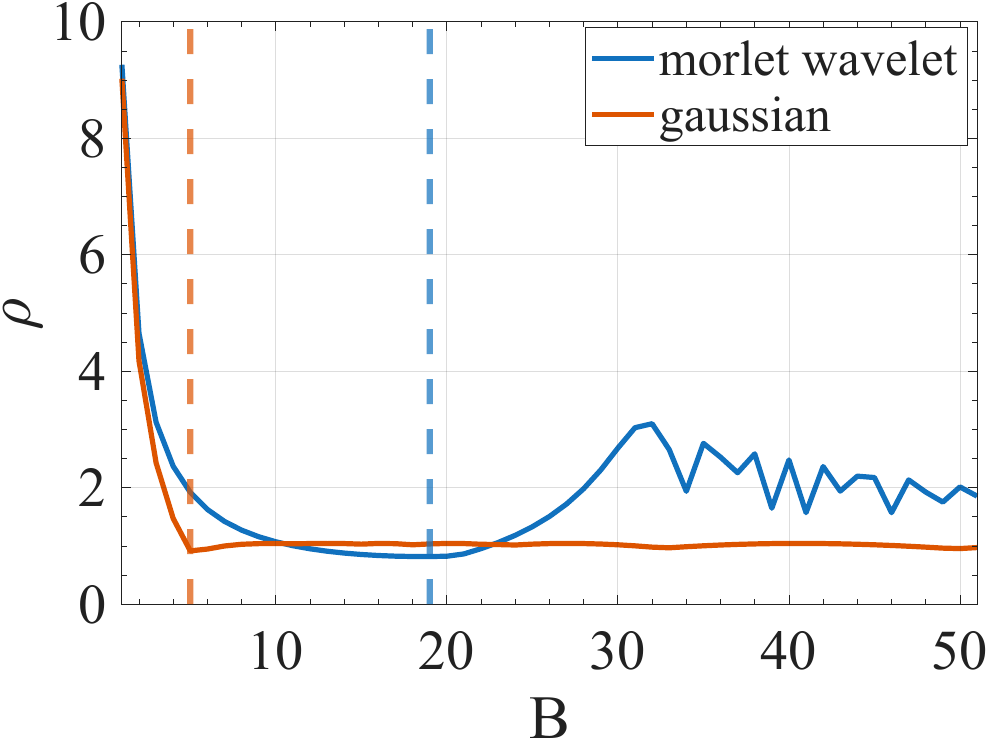}
    \caption{$\rho$ vs.\ sampling bandwidth $B$}
  \end{subfigure}

  \vspace{0.5em}

  \begin{subfigure}[t]{0.49\textwidth}
    \includegraphics[width=\textwidth]{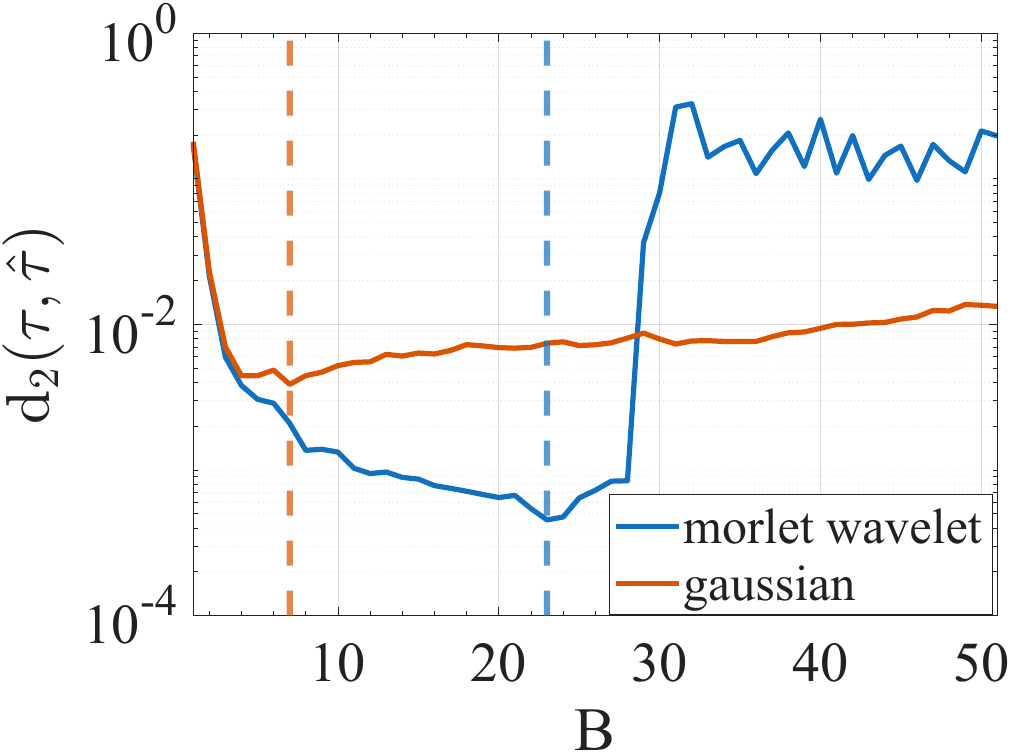}
    \caption{Modified ESPRIT + GD}
  \end{subfigure}
  \caption{Behavior of $\rho$ and estimation error as functions of sampling 
  bandwidth $B$ for Gaussian and Morlet PSFs. 
  (a)~$\rho$ vs.\ $B$, with vertical dashed lines indicating 
  $B_{\mathrm{opt}}$ minimizing $\rho$ for each PSF. (b)~Median 
  $\mathrm{d}_2(\bm{\tau}, \hat{\bm{\tau}})$ error versus $B$ for 
  modified ESPRIT + GD, with vertical dashed lines indicating the 
  empirically optimal $B$ minimizing the error for each PSF.}
  \label{fig:psf_cgd_gn}
\end{figure}

These spectral characteristics directly explain the trends in 
$\rho$ shown in Figure~\ref{fig:psf_cgd_gn}(a). To capture the 
majority of the signal energy, the sampling bandwidth $B$ must 
be at least large enough to cover the main lobes of $|G(f)|^2$ 
for each PSF. This sets the location of the ``sweet spot'' 
bandwidth in $\rho$: for the Gaussian, $\rho$ reaches its 
minimum near $B \approx 4$, matching the edge of its main lobe, 
while for the Morlet, the minimum occurs near $B \approx 19$, 
large enough to capture the lobes at $\pm 5\pi$. Beyond these 
sweet spots, $\rho$ flattens (Gaussian) or rises again (Morlet) 
because additional bandwidth captures only noise without adding 
signal energy. The empirical error curve in 
Figure~\ref{fig:psf_cgd_gn}(b) for modified ESPRIT + GD 
shows the same phenomenon: performance improves up to this 
range of $B$, after which the error starts to increase. Although 
the exact $B_{\mathrm{opt}}$ that minimizes $\rho$ does not 
always coincide with the $B$ that minimizes the median error, the 
two values are closer to each other, confirming that $\rho$ provides a reliable indicator of the near-optimal bandwidth (which is less than the maximum sampling bandwidth).

Finally, we examine how bandwidth choice interacts with the number 
of Fourier samples $N$ and the PSF's effective width $\sigma$. 

\begin{figure}[t]
  \centering
  \begin{subfigure}[t]{0.32\textwidth}
    \includegraphics[width=\textwidth]{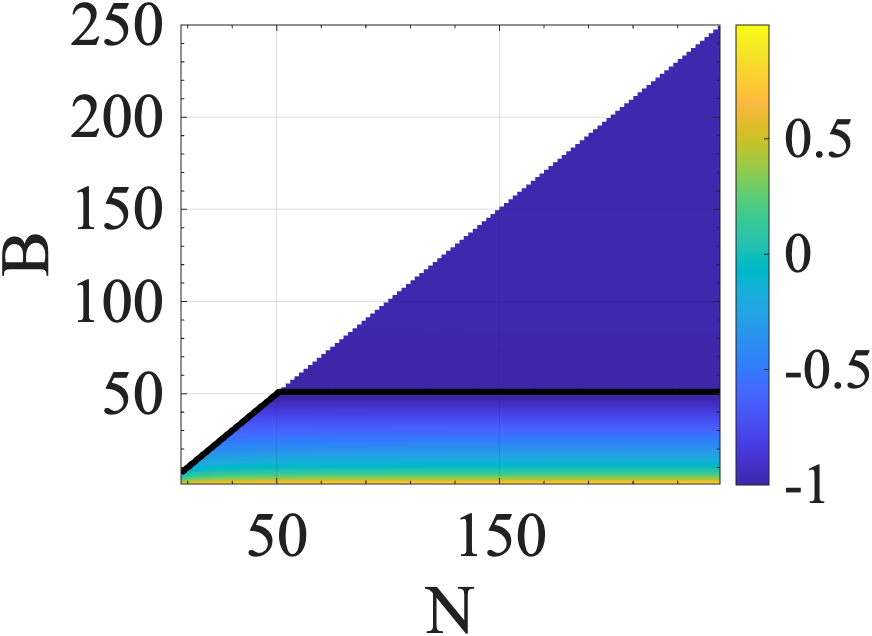}
    \caption{$\sigma = 0.01$}
  \end{subfigure}
  \hfill
  \begin{subfigure}[t]{0.32\textwidth}
    \includegraphics[width=\textwidth]{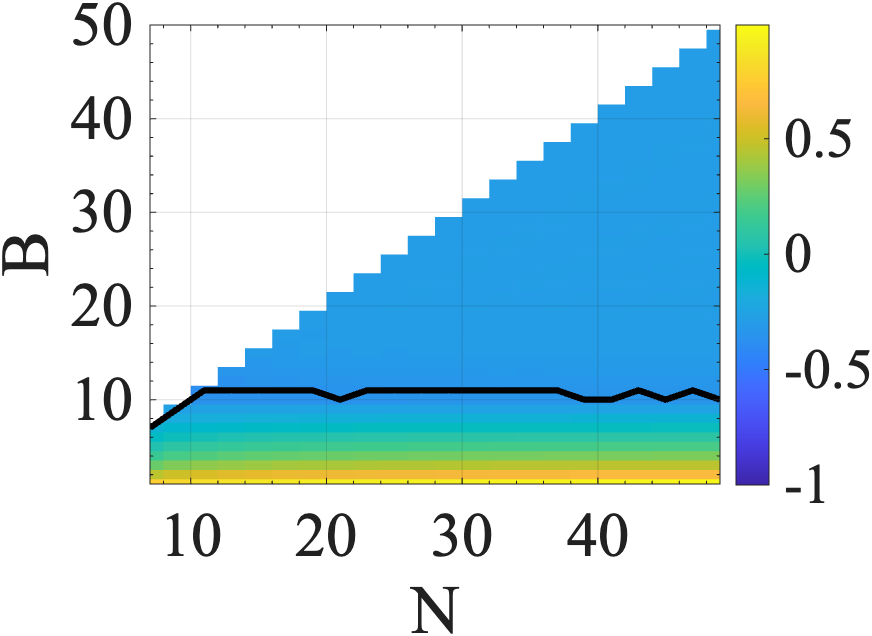}
    \caption{$\sigma = 0.05$}
  \end{subfigure}
  \hfill
  \begin{subfigure}[t]{0.32\textwidth}
    \includegraphics[width=\textwidth]{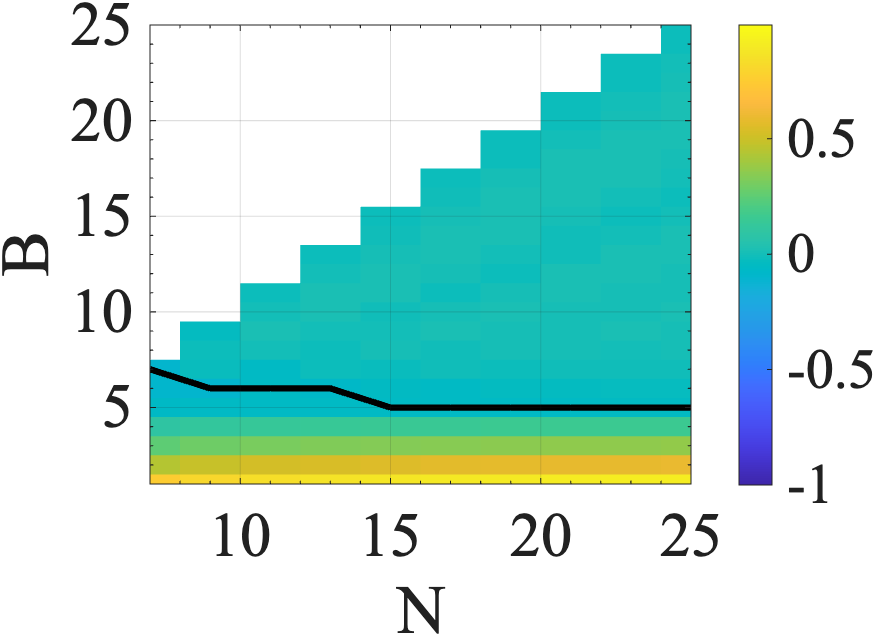}
    \caption{$\sigma = 0.10$}
  \end{subfigure}
  \caption{Parameter $\rho$ as a function of $B$ and $N$ for three Gaussian 
  PSF widths $\sigma$. Colormaps show $\log_{10}\rho$; the black 
  curve indicates $B_{\mathrm{opt}}$ minimizing $\rho$ for each $N$.}
  \label{fig:rho2}
\end{figure}

Figure~\ref{fig:rho2} shows the parameter $\rho$ as a function of $B$ and $N$ for three representative values of $\sigma$, where $B$ ranges from $1$ to $N$ such that the effective period $T = N/B$ always exceeds unity. For narrow pulses ($\sigma = 0.01$), $B_{\mathrm{opt}}$ grows with $N$ initially but quickly saturates 
near $B \approx 50$, beyond which additional frequency bins provide negligible benefit. For $\sigma = 0.05$, the saturation occurs earlier around $B \approx 15$, reflecting the fact that a broader kernel in time corresponds to a more localized spectrum, so fewer bins suffice. For the widest PSF considered ($\sigma = 0.1$), the frequency response is sharply localized, 
and the saturation occurs much earlier, around $B \approx 5$.

\begin{figure}[t]
  \centering
  \begin{subfigure}[t]{0.32\textwidth}
    \includegraphics[width=\textwidth]{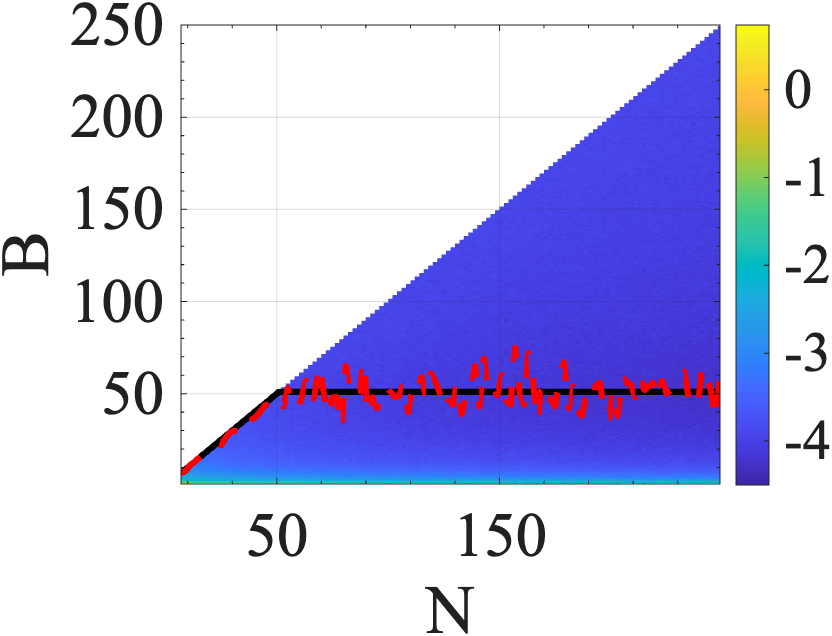}
    \caption{GD, $\sigma = 0.01$}
  \end{subfigure}
  \hfill
  \begin{subfigure}[t]{0.32\textwidth}
    \includegraphics[width=\textwidth]{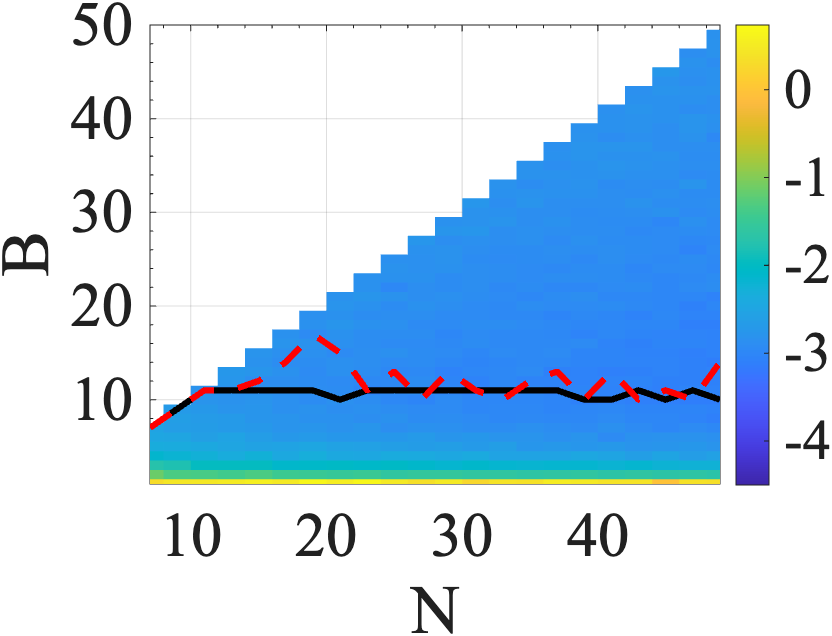}
    \caption{GD, $\sigma = 0.05$}
  \end{subfigure}
  \hfill
  \begin{subfigure}[t]{0.32\textwidth}
    \includegraphics[width=\textwidth]{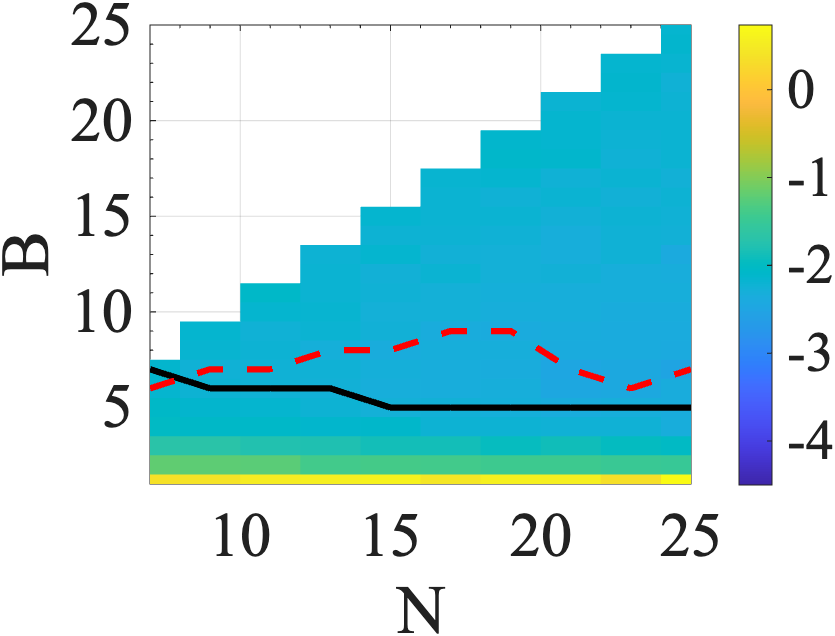}
    \caption{GD, $\sigma = 0.10$}
  \end{subfigure}
  \caption{Performance of Mod-ESPRIT + GD versus $B$ and $N$ for three 
  PSF widths. Colormaps show $\log_{10}$ of the median $\mathrm{d}_2$ 
  error. The black curve is $B_{\mathrm{opt}}$ predicted from $\rho$, 
  and the red dashed curve is the empirically optimal $B$ that 
  minimizes the median error.}
  \label{fig:erro2}
\end{figure}

The error plots in Figure~\ref{fig:erro2} for modified ESPRIT + GD mirror these trends. The accuracy 
improves as $B$ increases up to the predicted threshold (which 
depends on $\sigma$), but once $B$ is fixed, further increasing 
$N$ yields only marginal gains. The empirically optimal $B$ 
(red dashed curve) lies in close agreement with the 
$B_{\mathrm{opt}}$ predicted from $\rho$ (black curve) across all 
$\sigma$ values. These results show that 
bandwidth $B$, rather than the number of Fourier samples $N$, is 
the primary driver of resolution and accuracy, and that the scale 
at which $B$ saturates is dictated by the PSF width $\sigma$.

\subsection{Error scaling with key parameters}

In the next series of experiments, we evaluate how the performance 
of our algorithm scales with key parameters like $\mathrm{SNR}$ and 
number of snapshots $L$. For each setting, we first choose the 
sampling bandwidth $B$ and then vary the parameter of interest. In this subsection, we also present the performance of the Gauss--Newton method, as it is the method of choice for variable 
projection in practice.  

\begin{figure}[t]
  \centering
  \begin{subfigure}[b]{0.49\textwidth}
    \includegraphics[width=\textwidth]{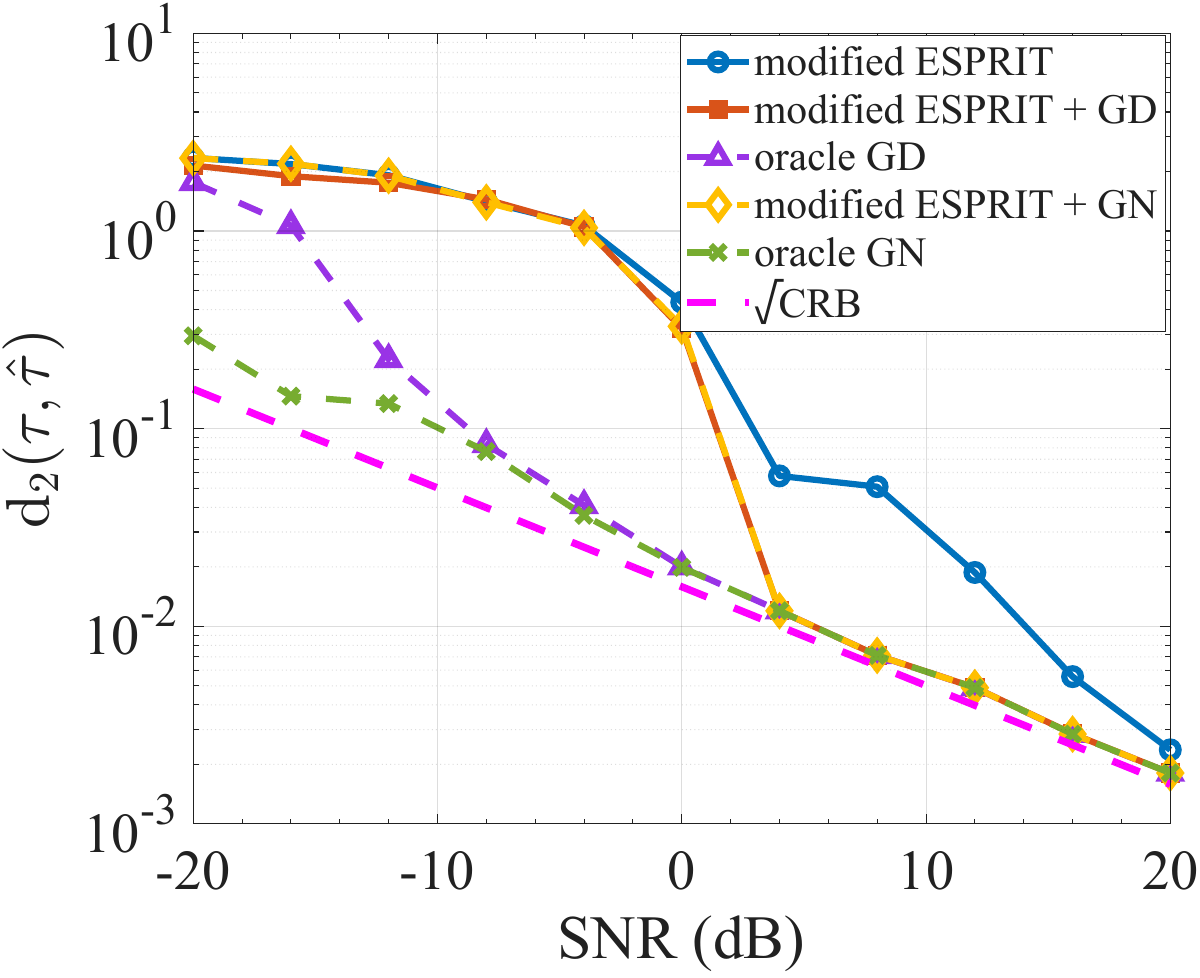}
  \end{subfigure}
  \hfill
  \begin{subfigure}[b]{0.49\textwidth}
    \includegraphics[width=\textwidth]{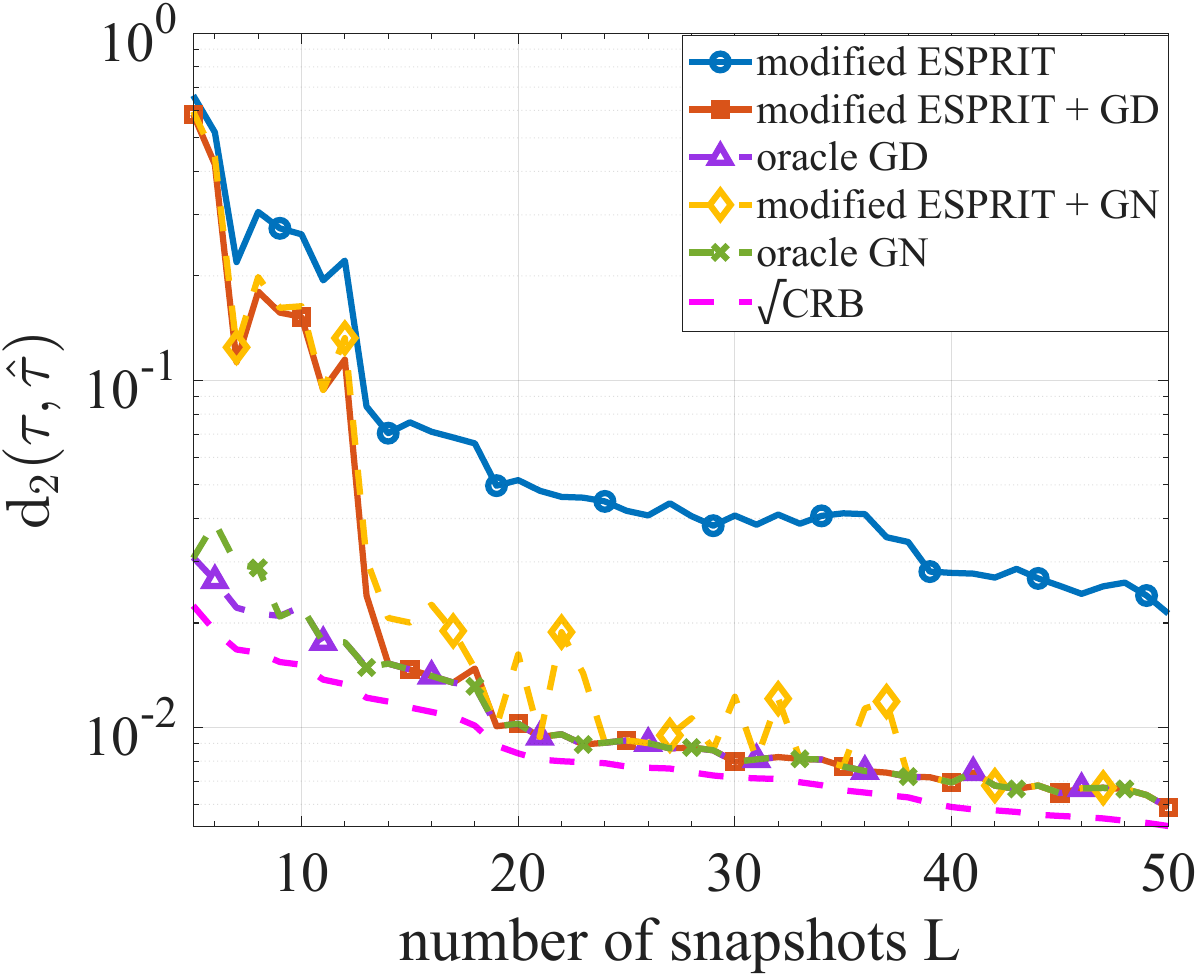}
  \end{subfigure}
  \caption{Performance of various methods with varying (a)~$\mathrm{SNR}$ and 
(b)~number of snapshots $L$. For (b), the noise variance is set to $\sigma_{\bm Z}^2 = 0.25$, corresponding to approximately $5$\,dB 
SNR.}
  \label{fig:error_performance_crb2}
\end{figure}

Figure~\ref{fig:error_performance_crb2}(a) shows how the mean 
recovery error varies with $\mathrm{SNR}$. We use the mean of 
Monte Carlo trials to ensure consistency with the CRB, which is 
defined in terms of mean-squared error. Both gradient descent and 
Gauss--Newton substantially improve upon the initialization. At 
very low $\mathrm{SNR}$ ($< 5$\,dB), however, both modified ESPRIT 
+ GD/GN and oracle GD/GN fail to match the CRB, with modified 
ESPRIT diverging earlier due to the sensitivity of its 
initialization under severe noise. For moderate to high 
$\mathrm{SNR}$ ($5$--$20$\,dB), the performance of modified ESPRIT 
+ GD/GN nearly coincides with oracle GD/GN and closely approaches 
the CRB.

Figure~\ref{fig:error_performance_crb2}(b) illustrates how the 
error scales with the number of snapshots $L$ in the presence of 
random Gaussian noise. For this experiment, we use a higher noise 
variance $\sigma_{\bm{Z}}^2 = 0.25$, corresponding to an 
$\mathrm{SNR}$ of approximately $5$\,dB. As predicted by our theoretical results, the error from GD decreases proportionally to $1/\sqrt{L}$. Gauss--Newton shows similar overall scaling but exhibits occasional spikes for modified ESPRIT + GN due to the initialization sensitivity. 

\subsection{Performance under random and adversarial noise}

In the final experiment, we investigate how the performance of 
oracle and modified ESPRIT initialized refinement varies as a 
function of $\mathrm{SNR}$ under two different noise models, using 
both GD and GN. The first noise model is random Gaussian noise with 
independent zero-mean entries. The second is an adversarial 
perturbation, constructed from the top right singular vector of the 
Jacobian of the inverse map introduced in 
Section~\ref{sec:adv_noise}. To ensure a fair comparison, the 
adversarial perturbation is rescaled in each trial so that its 
Frobenius norm matches that of the Gaussian noise at the same 
$\mathrm{SNR}$.

\begin{figure}[t]
  \centering
  \begin{subfigure}[b]{0.49\textwidth}
    \includegraphics[width=\textwidth]
    {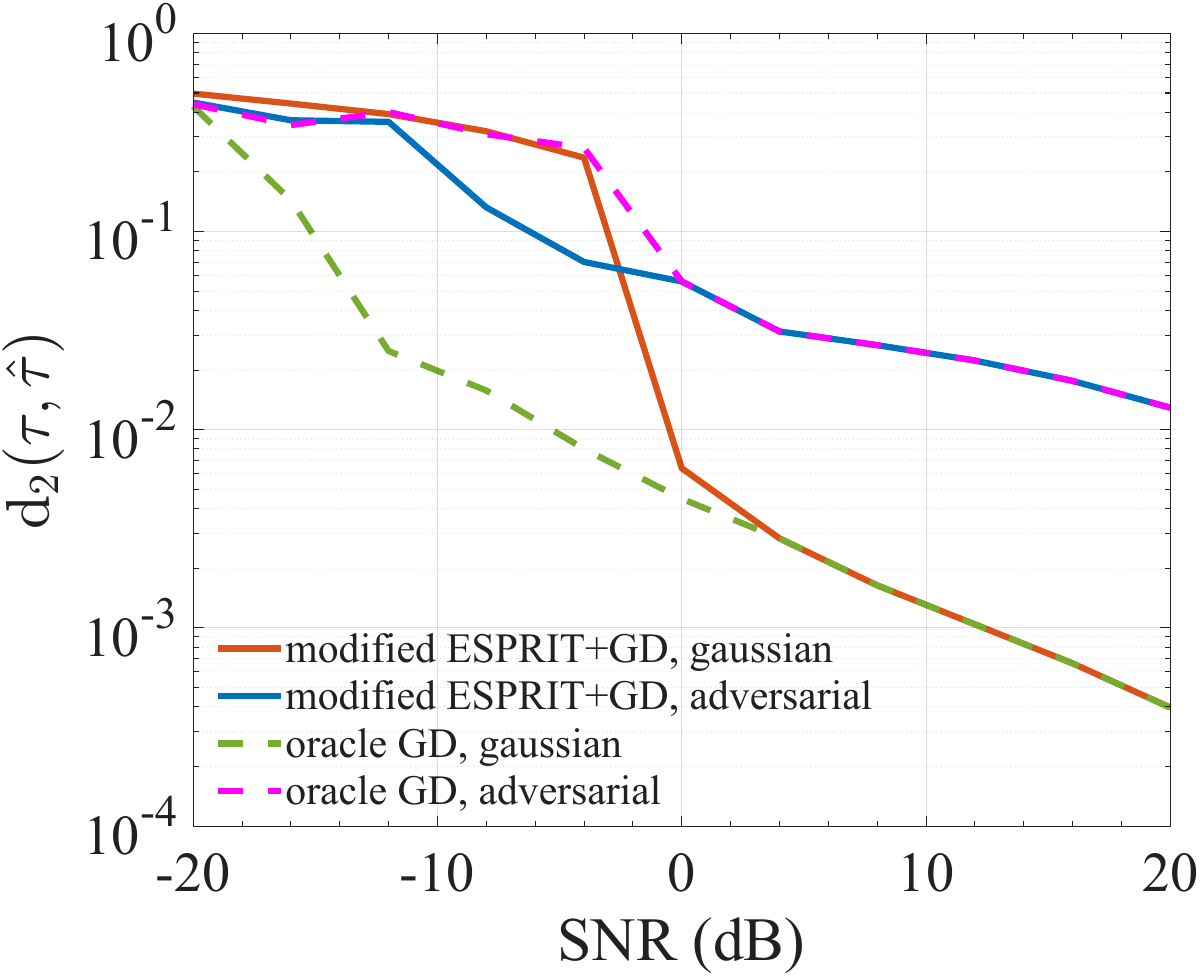}
    \caption{Modified ESPRIT + GD}
  \end{subfigure}
  \hfill
  \begin{subfigure}[b]{0.49\textwidth}
    \includegraphics[width=\textwidth]
    {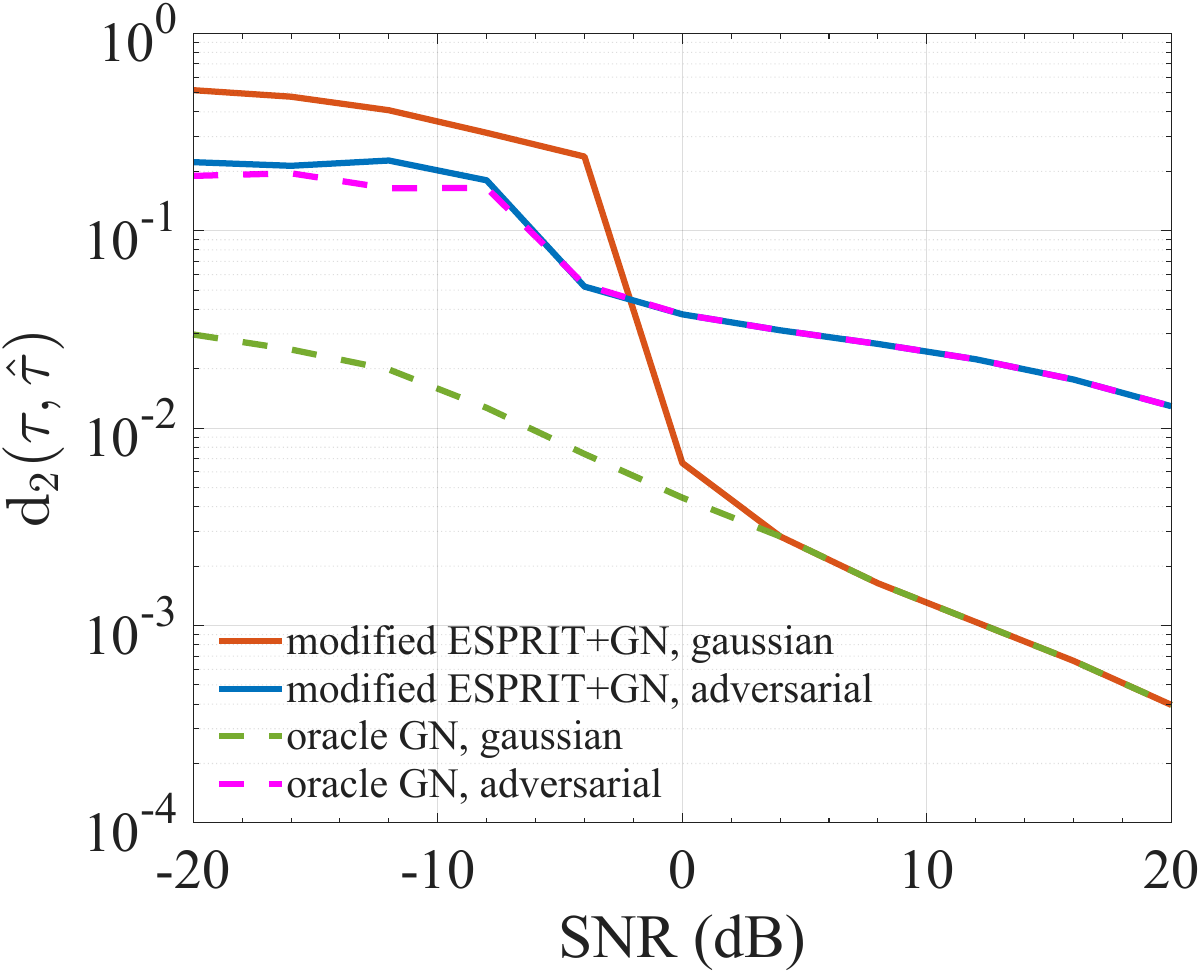}
    \caption{Modified ESPRIT + GN}
  \end{subfigure}
  \caption{Mean $\mathrm{d}_2(\bm{\tau}, \hat{\bm{\tau}})$ error 
  versus $\mathrm{SNR}$ under random Gaussian and adversarial noise 
  for GD (left) and GN (right).}
  \label{fig:error_vs_twonoise_mean}
\end{figure}

Figure~\ref{fig:error_vs_twonoise_mean} shows the mean 
$\mathrm{d}_2$ error as a function of $\mathrm{SNR}$. In the 
moderate-to-high SNR regime (above approximately $5$~dB), a clear 
and consistent gap emerges. The error under adversarial noise is 
significantly larger than under random Gaussian noise at the same 
$\mathrm{SNR}$, confirming that structured perturbations are 
substantially more damaging. Below approximately $0$~dB, the 
adversarial noise is strong enough to cause the estimator to 
collapse, and the behavior in this regime is not captured by our 
analysis. For both GD and GN, modified ESPRIT initialization 
achieves performance comparable to the oracle once the 
$\mathrm{SNR}$ is sufficient for the initialization to lie within 
the basin of attraction.

\section{Key Technical Lemmas about Conditioning of Structured Matrices}
\label{sec4 key lemma}
This section presents the supporting lemmas that bound some of the key quantities used in our analysis. 

\begin{lemma}
\label{lem:bounding_sing_val_phi_gamm_phi_tau}

Let $\Delta$ be the minimum separation in $\bm{\tau}$ defined in \eqref{eq:min_sep}. 
Consider another set $\bm{\gamma}$ such that $\mathfrak{d}_\infty < \Delta/2$ where $ \mathfrak{d}_\infty :=
\mathrm{d}_{\infty}(\bm{\tau},\bm{\gamma})$.
Suppose that the truncated power spectral density $P_g$ is a function in $L_1$ and of bounded variation, and that $f \mapsto 4 \pi^2 f^2 P_g(f)$ is also in $L_1$.
Let $\bm{A}_{\bm{\tau}},\bm{A}_{\bm{\gamma}},E_g,E_{g'},\rho_{g},\rho_{g'}$ be defined from $g$ using \eqref{eq:def_Agamma}, \eqref{eq:def_energy}, \eqref{eq:def_rho_g}.  
Then it holds for all $\mathbf{u}, \mathbf{v} \in \mathbb{C}^K$ that
\begin{subequations}\label{lem:bound_quadratic_term}
    \begin{align}
\label{eq:lemma_quadra_majorant}
    \norm{\bm{A}_{\bm{\gamma}}\mathbf{u} - \bm{A}_{\bm{\tau}} \mathbf{v} }_{2}^2 
    &\leq 
    T\!\cdot\!E_{g} \bigg( 1 + \frac{2\rho_g}{3(\Delta-2\mathfrak{d}_\infty)} \bigg) \| \mathbf{u} - \mathbf{v} \|^2 
    + 
    \mathfrak{d}_\infty^2\,
    T\!\cdot\!E_{g'} \bigg(1 + \frac{ \rho_{g'}}{2(\Delta-2\mathfrak{d}_\infty)}  \bigg)  |\Re (\mathbf{u}^{\mathsf{H}} \mathbf{v})|, \\
\label{eq:lemma_quadra_minorant}
    \norm{\bm{A}_{\bm{\gamma}}\mathbf{u} + \bm{A}_{\bm{\tau}} \mathbf{v} }_{2}^2 & \geq T\!\cdot\!E_{g}  \bigg( 1 - \frac{2\rho_g}{3 (\Delta-2\mathfrak{d}_\infty)} \bigg)\| \mathbf{u} + \mathbf{v} \|^2 -  
    \mathfrak{d}_\infty^2\,
   T\!\cdot\!E_{g'}  \bigg(1+ \frac{ \rho_{g'}}{2(\Delta-2\mathfrak{d}_\infty) }  \bigg) |\Re (\mathbf{u}^{\mathsf{H}} \mathbf{v})|.
\end{align}
\end{subequations}
\end{lemma}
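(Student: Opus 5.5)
Expanding $\norm{\bm{A}_{\bm{\gamma}}\mathbf{u} \mp \bm{A}_{\bm{\tau}}\mathbf{v}}_2^2$ as a Hermitian form in the stacked vector $[\mathbf{u};\mp\mathbf{v}]\in\mathbb{C}^{2K}$ gives
\[
\sum_{i=1}^N P_g(f_i)\,\Bigl|\sum_k u_k e^{-\mathsf{j}2\pi\gamma_k f_i} \mp \sum_k v_k e^{-\mathsf{j}2\pi\tau_k f_i}\Bigr|^2 .
\]
I would split this sum into diagonal pairs $(\gamma_k,\tau_k)$ and off-diagonal pairs involving distinct indices $j\neq k$. The off-diagonal nodes are separated by at least $\Delta-2\mathfrak{d}_\infty$ on the torus, since $d_{\mathbb{T}}(\gamma_j,\tau_k)\ge \Delta-\mathfrak{d}_\infty$ and $d_{\mathbb{T}}(\gamma_j,\gamma_k)\ge\Delta-2\mathfrak{d}_\infty$. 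The diagonal pairs are very close, at distance at most $\mathfrak{d}_\infty$.

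The natural reorganization is therefore to write, for each $k$, $u_k e_{\gamma_k} \mp v_k e_{\tau_k} = (u_k\mp v_k)e_{\tau_k} + u_k(e_{\gamma_k}-e_{\tau_k})$, where $e_\theta$ denotes the weighted exponential vector. Equivalently, one can expand $|u_k e_{\gamma_k}\mp v_ke_{\tau_k}|^2 = |u_k|^2\|e_{\gamma_k}\|^2+|v_k|^2\|e_{\tau_k}\|^2 \mp 2\Re(\bar u_k v_k\langle e_{\tau_k},e_{\gamma_k}\rangle)$. Here $\|e_\theta\|^2=\sum_i P_g(f_i)$, which is comparable to $T E_g$ by a Riemann sum argument.

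I then write $\langle e_{\tau_k},e_{\gamma_k}\rangle = \sum_i P_g(f_i)\cos(2\pi\delta_k f_i)+\mathsf{j}(\cdots)$ with $\delta_k=\gamma_k-\tau_k$. Using $1-\cos x\le x^2/2$, the real part differs from $\sum_iP_g(f_i)$ by at most $2\pi^2\delta_k^2\sum_i f_i^2P_g(f_i)$, which is about $\tfrac12\delta_k^2 T E_{g'}$. The diagonal contribution thus becomes $\sum_i P_g(f_i)\,|u_k\mp v_k|^2$, plus or minus a curvature correction bounded by $\mathfrak{d}_\infty^2 T E_{g'}\,|\Re(\bar u_k v_k)|$. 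The imaginary part of $\langle e_{\tau_k},e_{\gamma_k}\rangle$ also has to be controlled. I would handle it by symmetry of the frequency grid $\Omega$ and evenness of $P_g$; for real $g$, $|\widehat g|^2$ is even, so the sine sum vanishes. This produces the $\|\mathbf{u}\mp\mathbf{v}\|^2$ main term and the $E_{g'}$ correction term.

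The off-diagonal cross terms are bilinear forms $\sum_{j\ne k}\bar a_j a_k\sum_iP_g(f_i)e^{\mathsf{j}2\pi(\theta_j-\theta_k)f_i}$ over well-separated nodes. I would bound them with the Beurling--Selberg / large sieve estimate stated in the paper's prior-art appendix (the generalized Vandermonde conditioning result of \cite{Ferreira2022Stability,ferreira2023conditionNumber}, cf.\ Lemma~\ref{lem:ferreira_thm1}). That result bounds $\bigl|\sum_i w(f_i)|\sum_k a_k e^{\mathsf{j}2\pi\theta_kf_i}|^2 - T\|w\|_{L_1}\|a\|^2\bigr|$ by $T\|w\|_{L_1}\,\tfrac{2\rho_w}{3\,\mathrm{sep}}\,\|a\|^2$. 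I would apply it with weight $P_g$ to the combination $\mathbf{u}-\mathbf{v}$ placed at the nodes $\tau$, where the separation is $\Delta$, which gives the $\rho_g$ factor. I would apply it with weight $4\pi^2f^2P_g=P_{g'}$ to the difference part $u_k(e_{\gamma_k}-e_{\tau_k})/\delta_k$, whose separation is $\Delta-2\mathfrak{d}_\infty$, which gives the $\rho_{g'}/(2(\Delta-2\mathfrak{d}_\infty))$ factor.

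\textbf{Main obstacle.} The difficult step is making the difference vectors $e_{\gamma_k}-e_{\tau_k}$ fit the large-sieve framework, so that their cross terms come out proportional to $|\Re(\mathbf{u}^{\mathsf H}\mathbf{v})|$ with precisely the stated constants. I would try to write $e_{\gamma_k}-e_{\tau_k} = -\mathsf{j}2\pi\delta_k\int_0^1 \Lambda e_{\tau_k+s\delta_k}\,ds$. For each fixed $s$, the nodes $\tau_k+s\delta_k$ are still $(\Delta-2\mathfrak{d}_\infty)$-separated, so the $P_{g'}$-weighted large-sieve bound applies pointwise in $s$, and I would then integrate over $s$ (Jensen). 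The remaining work is constant bookkeeping to match $\tfrac23$ versus $\tfrac12$ and to pass from Riemann sums to $TE_g$ exactly. For the lower bound \eqref{eq:lemma_quadra_minorant}, I would repeat the argument with the minorant side of the Beurling--Selberg inequality, using the $+$ sign.
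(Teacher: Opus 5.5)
Your plan has a genuine gap, and it sits exactly at what you call the ``main obstacle''; this is not constant bookkeeping. The large sieve (Lemma~\ref{lem:ferreira_thm1}) is a two-sided bound on a full quadratic form over a single separated node set. It therefore cannot be applied to the $2K$-node form $\norm{\bm{A}_{\bm{\gamma}}\mathbf{u}-\bm{A}_{\bm{\tau}}\mathbf{v}}_2^2$, whose nodes $\gamma_k,\tau_k$ nearly coincide. Your workaround $\bm{A}_{\bm{\gamma}}\mathbf{u}-\bm{A}_{\bm{\tau}}\mathbf{v}=\bm{A}_{\bm{\tau}}(\mathbf{u}-\mathbf{v})+\mathbf{d}$, with $\mathbf{d}=\sum_k u_k(e_{\gamma_k}-e_{\tau_k})$, is lossy.

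Expanding the square produces the cross term $2\Re\langle \bm{A}_{\bm{\tau}}(\mathbf{u}-\mathbf{v}),\mathbf{d}\rangle$. Its $j=k$ part is second order. Its $j\neq k$ part, however, is genuinely first order in the $\delta_k$, since it involves the derivative of $x\mapsto\sum_i P_g(f_i)e^{-\mathsf{j}2\pi x f_i}$ at $\tau_k-\tau_j$. The tools you list control it only through Cauchy--Schwarz, giving a term of order $\mathfrak{d}_\infty T\sqrt{E_gE_{g'}}\,\norm{\mathbf{u}-\mathbf{v}}_2\norm{\mathbf{u}}_2$. Your integral-plus-Jensen step bounds $\norm{\mathbf{d}}_2^2$ by $\mathfrak{d}_\infty^2 T E_{g'}\bigl(1+\frac{\rho_{g'}}{2(\Delta-2\mathfrak{d}_\infty)}\bigr)\norm{\mathbf{u}}_2^2$, which is proportional to $\norm{\mathbf{u}}_2^2$, not to $\Re(\bar u_k v_k)$. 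Neither term has the shape of \eqref{eq:lemma_quadra_majorant}, and absorbing the first by AM--GM inflates the factor in front of $\norm{\mathbf{u}-\mathbf{v}}^2$.

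A quick sanity check: for $\mathbf{v}=\mathbf{0}$ the claim is just the large sieve at the $(\Delta-2\mathfrak{d}_\infty)$-separated nodes $\bm{\gamma}$. Yet your route adds spurious terms of order $\mathfrak{d}_\infty T\sqrt{E_gE_{g'}}\norm{\mathbf{u}}_2^2$ and $\mathfrak{d}_\infty^2TE_{g'}\norm{\mathbf{u}}_2^2$. Separately, mixing a diagonal computation (with its own Riemann-sum error) with large-sieve errors spends the single $\frac{2}{3}V(P_g)(\Delta-2\mathfrak{d}_\infty)^{-1}$ budget twice.

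The missing idea, which is the paper's route, is to apply a \emph{one-sided} Beurling--Selberg majorant to the whole form rather than two-sided bounds to its pieces. Set $\beta=\Delta-2\mathfrak{d}_\infty$ and take $C_+\ge P_g$ with $\widehat{C}_+$ supported in $[-\beta,\beta]$ (Lemma~\ref{lem:ferreira_thm_3}). Positivity of $C_+-P_g$ bounds $\sum_l P_g(l/T)|h(l/T)|^2$, where $h$ is the $2K$-node trigonometric polynomial, by the same sum with $C_+$. Poisson summation rewrites that sum as $T\sum_l\widehat{C}_+(lT-x)$ over all pairwise node differences $x$. Every term with $j\neq j'$ or $l\neq 0$ then vanishes \emph{exactly}, because all those shifts have modulus at least $\beta$. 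The first-order off-diagonal interactions that defeat your decomposition are thus absorbed by the slack $C_+-P_g\ge 0$ and never need to be estimated.

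What survives is $T\widehat{C}_+(0)\norm{\mathbf{u}-\mathbf{v}}_2^2+2T\sum_j\bigl(\widehat{C}_+(0)-\widehat{C}_+(\tau_j-\gamma_j)\bigr)\Re(u_j\bar v_j)$. Evenness of $C_+$ gives $\widehat{C}_+'(0)=0$, so the correction is second order. The bounds $\widehat{C}_+(0)\le E_g+\frac{2}{3}V(P_g)\beta^{-1}$ and $|\widehat{C}_+''|\le E_{g'}+\frac12 V(P_{g'})\beta^{-1}$ then give exactly the stated constants. No Riemann-sum step is needed, because $\widehat{C}_+(0)=\int C_+$ comes straight out of Poisson summation. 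The minorant $C_-$ handles \eqref{eq:lemma_quadra_minorant}.

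Finally, this argument, like your diagonal computation, yields $\sum_j|\Re(u_j\bar v_j)|$ in the correction term. That sum dominates $|\Re(\mathbf{u}^{\mathsf H}\mathbf{v})|$ but cannot replace it in general: take $\mathbf{u}=(1,1)^{\mathsf T}$, $\mathbf{v}=(1,-1)^{\mathsf T}$, $\gamma_2=\tau_2$, an ideal low-pass PSF and $N$ large. So the sum is the right target.
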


\begin{proof}
Let $\mathbf{u}, \mathbf{v} \in \mathbb{C}^{K}$ be arbitrarily fixed. 
Then, using the definition in \eqref{eq:def_Agamma}, i.e. $\bm{A}_{\bm{\tau}} = \bm{G}\bm{\Phi}_{\bm{\tau}}$ and $ \bm{A}_{\bm{\gamma}} = \bm{G}\bm{\Phi}_{\bm{\gamma}}$, 
the left-hand side of \eqref{eq:lemma_quadra_majorant} is written as
\begin{align}
\norm{\bm{A}_{\bm{\gamma}}\mathbf{u} + \bm{A}_{\bm{\tau}} \mathbf{v} }_{2}^2 
& = \displaystyle \sum_{l=-\frac{N-1}{2}}^{\frac{N-1}{2}} \bigg| \sum_{j=1}^K  \hat{g}\left(\frac{l}{T}\right) u_j e^{i 2 \pi \gamma_j l/T} + \hat{g}\left(\frac{l}{T}\right) v_j e^{i 2 \pi \tau_j l/T} \bigg|^2 \nonumber \\
& = \sum_{l=-\frac{N-1}{2}}^{\frac{N-1}{2}} \left|\hat{g}\left(\frac{l}{T}\right)\right|^2 \bigg| \sum_{j=1}^K  u_j e^{i 2 \pi \gamma_j l/T} +  v_j e^{i 2 \pi \tau_j l/T} \bigg|^2 \nonumber \\ 
& = \sum_{l=-\infty}^\infty P_g\left(\frac{l}{T}\right)  \bigg| \sum_{j=1}^K  u_j e^{i 2 \pi \gamma_j l/T} +  v_j e^{i 2 \pi \tau_j l/T} \bigg|^2. \label{eq:GPhi_tau_sq_step1}
\end{align}
Then we use Lemma~\ref{lem:ferreira_thm_3} to introduce a $(\Delta- 2\delta)-$bandlimited majorant function $C_+$ of $P_g$ such that  $C_+(f) \geq P_g(f)$ for all $f \in \mathbb{R}$. Hence, the right-hand side of \eqref{eq:GPhi_tau_sq_step1} is further upper-bounded to provide 
\begin{align}
\MoveEqLeft[0] \norm{\bm{A}_{\bm{\gamma}}\mathbf{u} +\bm{A}_{\bm{\tau}} \mathbf{v} }_{2}^2 & \\
& \leq  \sum_{l=-\infty}^\infty  C_+\left(\frac{l}{T}\right) \bigg| \sum_{j=1}^K u_j e^{i 2 \pi \gamma_j l/T} + v_j e^{i 2 \pi \tau_j l/T} \bigg|^2  \nonumber \\
& =  \sum_{j,j'=1}^K \sum_{l = -\infty}^{\infty} \left(
 u_j \bar{u}_{j'}   C_+\left(\frac{l}{T}\right)   e^{i 2 \pi (\gamma_j -\gamma_{j'}) l/T}   
+ 2 \Re \left(u_j \bar{v}_{j'}  C_+\left(\frac{l}{T}\right) e^{i 2 \pi (\gamma_j -\tau_{j'}) l/T} \right) \right) \nonumber \\
& + \sum_{j,j'=1}^K \sum_{l = -\infty}^{\infty} v_j \bar{v}_{j'}  C_+\left(\frac{l}{T}\right)   e^{i 2 \pi (\tau_j -\tau_{j'}) l/T}. \label{eq:GPhi_tau_sq_step2}
\end{align}
For fixed $j$ and $j'$, since the Fourier transform of $f \mapsto C_+(f) e^{i 2 \pi (\gamma_j -\gamma_{j'}) f}$ is written as 
$\widehat{C}_+(\xi - \gamma_j +\gamma_{j'})$ where $\widehat{C}_+(\xi) = \int_{-\infty}^\infty C_+(f) e^{-i2\pi \xi f} df$, by the Poisson summation formula, we obtain
\[
\sum_{l = -\infty}^{\infty} C_+\left(\frac{l}{T}\right) e^{i 2 \pi (\gamma_j -\gamma_{j'}) l/T} 
= T\sum_{l = -\infty}^{\infty} \widehat{C}_+(lT - \gamma_j +\gamma_{j'}). 
\]
Plugging in this result to \eqref{eq:GPhi_tau_sq_step2} yields
\begin{multline}
\label{eq:expand_quadratic_form}
\norm{\bm{ A}_{\bm{\gamma}}\mathbf{u} +\bm{A}_{\bm{\tau}} \mathbf{v} }_{2}^2 \\
\leq T \sum_{j,j'=1}^K \sum_{l = -\infty}^{\infty}
\bigg ( u_j \bar{u}_{j'} \widehat{C}_+(lT-\gamma_j +\gamma_{j'}) 
+ 2 \Re \left( u_j \bar{v}_{j'} \right) \widehat{C}_+(lT- \gamma_j +\tau_{j'})  \\
\qquad \qquad  + v_j \bar{v}_{j'} \widehat{C}_+(lT-\tau_j +\tau_{j'}) \bigg) .
\end{multline}
Since the majorant $C_+$ is $(\Delta-2\mathfrak{d}_\infty)$-bandlimited, we can further simplify the upper bound in \eqref{eq:expand_quadratic_form} by dropping most of the summands. First, we consider the separation conditions on $\bm{{\tau}}$ and $\bm{\gamma}$ to ensure that their entries are well-spaced.
Since $d_{\mathbb{T}}$ is a valid metric defined on torus via \eqref{eq:torus_def}, the triangle inequality implies that
$d_{\mathbb{T}}(\gamma_j,\tau_{j'}) \geq  d_{\mathbb{T}}({\tau}_j,\tau_{j'})- d_{\mathbb{T}}(\tau_{j} ,\gamma_j) \geq \Delta - \mathfrak{d}_\infty$  for all $j \neq j'$. Furthermore, this also implies a minimum separation condition on $\hat{\bm{\tau}}$ satisfying $d_{\mathbb{T}}(\gamma_j,\gamma_{j'}) \geq  d_{\mathbb{T}}(\gamma_{j},\tau_{j'}) - d_{\mathbb{T}}(\gamma_{j'} , {\tau}_{j'}) \geq \Delta - 2\mathfrak{d}_\infty$ for all $j \neq j'$. Due to the minimum separation condition, unless $j=j'$ and $l=0$, we have $|lT - \gamma_j + \gamma_{j^\prime}| \geq \Delta-2\mathfrak{d}_\infty$. Hence, since $\widehat{C}_+$ is supported within $[-(\Delta-2\mathfrak{d}_\infty), (\Delta-2\mathfrak{d}_\infty)]$, the terms $\widehat{C}_+(lT - \gamma_j + \gamma_{j^\prime})$ becomes zero unless $j=j'$ and $l=0$. 
The same argument holds for the other two quantities  $\widehat{C}_+(lT - \tau_j + \tau_{j^\prime})$ and $\widehat{C}_+(lT - \gamma_j + \tau_{j^\prime})$ which also vanish when $l\neq0$ or $j \neq j^\prime$. Hence, one can simplify the expression~\eqref{eq:expand_quadratic_form} as follows
\begin{align}
\label{eq:FT_majorant_upperbound}
\MoveEqLeft[0] \norm{\bm{ A}_{\bm{\gamma}}\mathbf{u} + \bm{A}_{\bm{\tau}} \mathbf{v} }_{2}^2 &\nonumber \\
& \leq T \widehat{C}_+(0) \displaystyle \sum_{j=1}^K \bigg( |u_j|^2 + 2 \Re \left( u_j \bar{v}_j \right)  +|v_j|^2 \bigg) + 2 T \sum_{j=1}^K \big( \widehat{C}_+(-\gamma_j+\tau_j) - \widehat{C}_+(0) \big)  \Re \left(u_j \bar{v}_j \right) \nonumber \\
& = T \widehat{C}_+(0) \displaystyle \sum_{j=1}^K  |u_j + v_j|^2
+  2T \sum_{j=1}^K \big( \widehat{C}_+(-\gamma_j+\tau_j) - \widehat{C}_+(0) \big)  \Re \left(u_j \bar{v}_j \right).
\end{align}
Next, we further upper-bound the right-hand side of \eqref{eq:FT_majorant_upperbound} by upper-bounding each of the summands separately. We first utilize the results from Lemma~\ref{lem:ferreira_thm_3} to bound $\widehat{C}_+(0)$ in \eqref{eq:FT_majorant_upperbound} as 
\begin{align*}
\widehat{C}_+(0) &= E_g + \left(\widehat{C}_+(0) - E_g\right) \nonumber \\
 &= E_g + \int_{-\infty}^{\infty} \left(C_+(f) - P_g(f) \right) df \nonumber \\
 &\leq   E_{g} + \frac{2}{3} V(P_g)(\Delta-2\mathfrak{d}_\infty)^{-1}.
\end{align*}
We now proceed to bound the term $ \widehat{C}_+(-\gamma_j+\tau_j) - \widehat{C}_+(0)$ in \eqref{eq:FT_majorant_upperbound} by using the Lemma~\ref{lem:ferreira_thm_3}. The construction of the majorant $C_+$ admits that it is even function and its Fourier transform is twice-differentiable. Furthermore, its first derivative $\widehat{C}_+'(0) = -i 2\pi \int_{-\infty}^\infty f C_+(f) df = 0$ since $f C_+(f)$ is an odd function whose integral over any symmetric interval is $0$.
Furthermore, the second order Taylor series expansion of $\widehat{C}_+$ at $0$ ensure the existence of $\alpha \in [0, |\gamma_j - \tau_j|] $ such that the following expression hold:
\begin{align} 
\label{eq:C_TaylorExpansion_max}
\big| \hat{C}_+( -\gamma_j + \tau_j) -\hat{C}_+(0) \big| &=  \bigg| \frac{ {(-\gamma_j + \tau_j)}^2}{2} \hat{C}_+''(\alpha) \bigg| \leq \frac{ \mathfrak{d}_\infty^2}{2} \!\cdot\! |\hat{C}_+''(\alpha) |  
\end{align}
where $\hat{C}_+'$ and $\hat{C}_+''$ represent the first and second derivative of $\hat{C}_+$ respectively. Next, we upper bound the term in the right-hand side of \eqref{eq:C_TaylorExpansion_max} by using that fact the construction of majorant also admits that $\hat{C}_+''$ is a good approximation of 
 $\hat{P}_g''$ which represents the second derivative of the Fourier transform of the power spectral density. Hence,
 one can utilize the absolute difference between
$\hat{C}_+''(\alpha)$ and $\hat{P}_g''(\alpha)$ to bound $|\hat{C}_+''(\alpha) |$ in \eqref{eq:C_TaylorExpansion_max}. For all $\alpha \in \mathbb{R}$, it comes that
\begin{align}\label{eq:positiveFourierTransform}
    | \hat{C}_+''(\alpha) - \hat{P}_g''(\alpha)| &=  \left| \int_{-\infty}^{\infty} - 4 \pi^2 f^2 \bigg( C_+(f) - P_g(f) \bigg) e^{i 2\pi \alpha f} df \right|  \nonumber \\
    & \leq \int_{-\infty}^{\infty} \left| -4 \pi^2 f^2 \bigg( C_+(f) - P_g(f) \bigg) e^{i 2\pi \alpha f} \right| df \nonumber\\
    & = \int_{-\infty}^{\infty}  4 \pi^2 f^2 \bigg( C_+(f) - P_g(f) \bigg)  df  \nonumber \\
    & = - \hat{C}_+'' (0) + \hat{P}_g''(0)
\end{align}
where we can remove the absolute value in the second last equality since $C_+(f)-P_g(f)\geq 0$ for all $f$ as $C_+$ is a majorant for $P_g$. 
Since $ 4 \pi^2 f^2 P_g(f) \geq 0$ for all $ f \in \mathbb{R}$, the Fourier transform  $|\hat{P}_g''|$ achieves its maximal value at $0$. 
Furthermore, by the triangle inequality, for all $\alpha \in \mathbb{R}$
\begin{align}\label{eq:intermediate}
     |\hat{P}_g''(\alpha)| &= \left| \int_{-\infty}^{\infty} -4\pi^2 f^2 P_g(f) e^{i 2\pi\alpha f} df \right| \nonumber \\
     &\leq \int_{-\infty}^{\infty} \left| -4\pi^2 f^2 P_g(f) e^{i 2\pi\alpha f} \right| df = \int_{-\infty}^{\infty} 4\pi^2 f^2 P_g(f) df = - P_g''(0) = E_{g'}
\end{align}
Thus, utilizing \eqref{eq:intermediate} and the triangle inequality, one can write
\eqref{eq:positiveFourierTransform} as
\begin{equation}
\label{eq:boundCalpha_majorant}
 \left| \hat{C}_+''(\alpha)\right|  \leq  \left|\hat{P}_g''(\alpha)\right|   - \hat{C}_+^{\prime\prime}(0) + \hat{P}_g(0) =  E_{g'} \left(1 + \frac{ - \hat{C}_+''(0) - E_{g'}}{E_{g'}} \right)
\end{equation}
where the term $- \hat{C}_+''(0) - E_{g'}$ in \eqref{eq:boundCalpha_majorant} can be bounded using the results from Lemma~\ref{lem:ferreira_thm_3} as 
\begin{align}
\label{eq:majorant_bound}
    -\hat{C}_+^{\prime\prime}(0) - E_{g'} &\leq \frac{1}{2} V(P_{g'})(\Delta-2\mathfrak{d}_\infty)^{-1}.
\end{align}
Finally, substituting  \eqref{eq:majorant_bound} into \eqref{eq:boundCalpha_majorant} and back into \eqref{eq:C_TaylorExpansion_max} and \eqref{eq:FT_majorant_upperbound} concludes the upper bound on $\norm{\bm{A}_{\bm{\gamma}}\mathbf{u} + \bm{A}_{\bm{\tau}} \mathbf{v} }_{2}^2$ in \eqref{eq:lemma_quadra_majorant}.

The second assertion in \eqref{eq:lemma_quadra_minorant} is derived similarly using the minorant $C_{-}$ of $P_g$ by Lemma~\ref{lem:ferreira_thm_3}. 
Similar to the derivation of \eqref{eq:FT_majorant_upperbound}, a lower bound on $\norm{\bm{A}_{\bm{\gamma}}\mathbf{u} +\bm{A}_{\bm{\tau}} \mathbf{v} }_{2}^2$ is given by
\begin{align}
\label{eq:FT_minorant_lowerbound}
\norm{\bm{A}_{\bm{\gamma}}\mathbf{u} + \bm{A}_{\bm{\tau}} \mathbf{v} }_{2}^2 \geq T \hat{C}_-(0) \displaystyle \sum_{j=1}^K  |u_j + v_j|^2
+ 2 T \sum_{j=1}^K \big( \hat{C}_-(-\gamma_j+\tau_j) - \hat{C}_-(0) \big)  \Re \left(u_j \bar{v}_j \right).
\end{align}
The minorant $C_-$ in \eqref{eq:FT_minorant_lowerbound} is also twice-differentiable, even symmetric and its first derivative of the Fourier transform at $0$ satisfies  $\hat{C}_-'(0)=0$. Furthermore, its second order Taylor series expansions at $0$ also ensures that the following expression holds
\begin{align*}
\big| \hat{C}_-( -\gamma_j + \tau_j) -\hat{C}_-(0) \big| &=  \bigg| \frac{ {(-\gamma_j + \tau_j)}^2}{2} \hat{C}_-''(\alpha) \bigg| \leq \frac{ \mathfrak{d}_\infty^2}{2} \!\cdot\! |\hat{C}_-''(\alpha) | 
\end{align*}
where $\alpha \in [0, |\gamma_j - \tau_j|] $.
Finally, we utilize the results from Lemma~\ref{lem:ferreira_thm_3} in a similar manner to bound $\hat{C}_-(0) $ and $|\hat{C}_-''(\alpha) |$ to obtain the expression in \eqref{eq:lemma_quadra_minorant}, which completes the proof. 
\end{proof}

The following lemma is obtained as a corollary of Lemma~\ref{lem:bounding_sing_val_phi_gamm_phi_tau}. 

\begin{lemma}
\label{cor:bounding_sing_val_phi_gamm_phi_tau}
Under the hypothesis of Lemma~\ref{lem:bounding_sing_val_phi_gamm_phi_tau},
the following inequalities hold:
\begin{equation}
\label{eq:main_lem_res}
\norm{\bm{A}_{\bm{\gamma}}^\mathsf{H} \bm{A}_{\bm{\tau}}  - T\!\cdot\!E_{g} \bm{I}_K }_2
 \leq \frac{ 2V(P_{g})}{3(\Delta-2\mathfrak{d}_\infty)} +  \mathfrak{d}_\infty^2  T\!\cdot\!E_{g'} \bigg( 1 + \frac{1}{2} \rho_{g'} (\Delta-2\mathfrak{d}_\infty)^{-1}\bigg)
\end{equation}
\begin{equation}
\label{eq:main_lem_res2}
\norm{\bm{P_{\bm{\gamma}}}^\perp \bm{A}_{\bm{\tau}}}_2 \leq \norm{\bm{A}_{\bm{\gamma}} -\bm{A}_{\bm{\tau}}}_{2} 
 \leq  \mathfrak{d}_\infty \sqrt{T\!\cdot\!E_{g'} \bigg( 1 + \frac{1}{2} \rho_{g'} (\Delta-2\mathfrak{d}_\infty)^{-1}\bigg)}.
\end{equation} 
\end{lemma}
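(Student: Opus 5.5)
Both inequalities come from Lemma~\ref{lem:bounding_sing_val_phi_gamm_phi_tau} through polarization and suitable choices of $\mathbf{u},\mathbf{v}$. We need the quadratic estimates for both $\bm{A}_{\bm{\gamma}}\mathbf{u}-\bm{A}_{\bm{\tau}}\mathbf{v}$ and $\bm{A}_{\bm{\gamma}}\mathbf{u}+\bm{A}_{\bm{\tau}}\mathbf{v}$, in the form of matching upper and lower bounds. These follow from \eqref{eq:lemma_quadra_majorant}--\eqref{eq:lemma_quadra_minorant} by replacing $\mathbf{v}$ with $-\mathbf{v}$. The majorant/minorant argument in that proof is symmetric in sign: \eqref{eq:FT_majorant_upperbound} and \eqref{eq:FT_minorant_lowerbound} hold for $\|\bm{A}_{\bm\gamma}\mathbf{u}\pm\bm{A}_{\bm\tau}\mathbf{v}\|^2$ with $T\widehat C_\pm(0)\|\mathbf{u}\pm\mathbf{v}\|^2$ plus a correction $\pm 2T\sum_j(\widehat C_\pm(\tau_j-\gamma_j)-\widehat C_\pm(0))\Re(u_j\bar v_j)$.

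\textbf{First inequality \eqref{eq:main_lem_res}.} I will compute $\Re\,\mathbf{u}^\mathsf{H}(\bm{A}_{\bm{\gamma}}^\mathsf{H}\bm{A}_{\bm{\tau}}-TE_g\bm I)\mathbf{v}$ by the polarization identity
\begin{equation*}
4\Re\langle \bm{A}_{\bm\gamma}\mathbf{u},\bm{A}_{\bm\tau}\mathbf{v}\rangle=\|\bm{A}_{\bm\gamma}\mathbf{u}+\bm{A}_{\bm\tau}\mathbf{v}\|^2-\|\bm{A}_{\bm\gamma}\mathbf{u}-\bm{A}_{\bm\tau}\mathbf{v}\|^2.
\end{equation*}
I will not bound the two norms separately. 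Instead I will work directly with the exact Poisson-summation expression before the majorant step, since doing so gives a cleaner argument. Because the PSD appears only through $|\widehat g|^2$, the entries are
\begin{equation*}
[\bm{A}_{\bm\gamma}^\mathsf{H}\bm{A}_{\bm\tau}]_{jj'}=\sum_l P_g(l/T)e^{i2\pi(\tau_{j'}-\gamma_j)l/T}.
\end{equation*}
The Gram-type quadratic form $\mathbf{u}^\mathsf{H}(\cdot)\mathbf{v}$ is then sandwiched by writing $P_g=C_-+(P_g-C_-)$, or $C_+-(C_+-P_g)$, with $0\le C_+-P_g$ of integral at most $\tfrac23V(P_g)/(\Delta-2\mathfrak d_\infty)$. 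After Poisson summation only the diagonal terms $T\widehat C_\pm(\tau_j-\gamma_j)$ survive, for the same support reason as in the proof of Lemma~\ref{lem:bounding_sing_val_phi_gamm_phi_tau}. The off-diagonal remainder is controlled through the nonnegative function $C_+-P_g$: Cauchy--Schwarz over the frequency sum, followed by another Poisson step, bounds it by $T\int(C_+-P_g)\le \frac{2TV(P_g)}{3(\Delta-2\mathfrak d_\infty)}$ times $\|\mathbf{u}\|\|\mathbf{v}\|$. The diagonal deviation $|T\widehat C_\pm(\tau_j-\gamma_j)-TE_g|$ splits into two parts. One is $T|\widehat C_\pm(0)-E_g|$, which is already absorbed in the previous term. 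The other is the Taylor term $\frac{\mathfrak d_\infty^2}{2}T|\widehat C''_\pm(\alpha)|\le \frac{\mathfrak d_\infty^2}{2}TE_{g'}(1+\frac{\rho_{g'}}{2(\Delta-2\mathfrak d_\infty)})$, which is \eqref{eq:boundCalpha_majorant}--\eqref{eq:majorant_bound}. Taking the supremum over unit $\mathbf{u},\mathbf{v}$ gives \eqref{eq:main_lem_res}. The factor $T$ in the first term and the factor $\tfrac12$ in the second can be absorbed or loosened as the statement allows.

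\textbf{Second inequality \eqref{eq:main_lem_res2}.} The first step follows because $\bm P_{\bm\gamma}^\perp$ annihilates $\bm{A}_{\bm\gamma}$ and is a contraction, so
\begin{equation*}
\bm P^\perp_{\bm\gamma}\bm{A}_{\bm\tau}=\bm P^\perp_{\bm\gamma}(\bm{A}_{\bm\tau}-\bm{A}_{\bm\gamma}).
\end{equation*}
For the second step, set $\mathbf{u}=\mathbf{v}$ in \eqref{eq:lemma_quadra_majorant}. The term in $\|\mathbf{u}-\mathbf{v}\|^2$ vanishes, leaving $\|(\bm{A}_{\bm\gamma}-\bm{A}_{\bm\tau})\mathbf{u}\|^2\le\mathfrak d_\infty^2TE_{g'}(1+\frac{\rho_{g'}}{2(\Delta-2\mathfrak d_\infty)})\|\mathbf{u}\|^2$. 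Taking the supremum and a square root finishes the proof.

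\textbf{Main obstacle.} The delicate point is the off-diagonal control in the first inequality. Majorant bounds on the individual norms only control the Hermitian parts, whereas $\bm{A}_{\bm\gamma}^\mathsf{H}\bm{A}_{\bm\tau}$ is not Hermitian. I would handle this by bounding the real part for all complex $\mathbf{u},\mathbf{v}$, replacing $\mathbf{v}$ with $e^{i\theta}\mathbf{v}$ to recover the modulus. If that route loses constants, a fallback is available: apply polarization to the two-sided bounds of Lemma~\ref{lem:bounding_sing_val_phi_gamm_phi_tau} and accept a constant-factor loss, which is harmless for the later use.
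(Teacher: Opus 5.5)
Your proof of \eqref{eq:main_lem_res2} is the paper's. You use $\bm P_{\bm\gamma}^\perp\bm A_{\bm\gamma}=\bm 0$ and $\norm{\bm P_{\bm\gamma}^\perp}\le 1$, then set $\mathbf u=\mathbf v$ in the bound for $\norm{\bm A_{\bm\gamma}\mathbf u-\bm A_{\bm\tau}\mathbf v}_2^2$; the paper sets $\mathbf v=-\mathbf u$ in the plus-version, which is the same thing.

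For \eqref{eq:main_lem_res}, however, the route you commit to breaks at the off-diagonal remainder. Write $D:=C_+-P_g\ge 0$ and $a_l:=\sum_j u_j e^{\mathsf{j}2\pi\gamma_j l/T}$.
\begin{itemize}
\item Your ``another Poisson step'' after Cauchy--Schwarz needs $D$ to be $(\Delta-2\mathfrak d_\infty)$-bandlimited, so that only diagonal terms survive. But $P_g$ is not bandlimited, so $\widehat D(mT+\gamma_j-\gamma_{j'})\neq 0$ in general.
\item Because the $C_+$-part does diagonalize, $\sum_l D(l/T)|a_l|^2=T\widehat C_+(0)\norm{\mathbf u}_2^2-\norm{\bm A_{\bm\gamma}\mathbf u}_2^2$. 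Your claimed bound $T\!\int D\cdot\norm{\mathbf u}_2^2$ is therefore equivalent to $\lambda_{\min}(\bm A_{\bm\gamma}^{\mathsf H}\bm A_{\bm\gamma})\ge TE_g$. That is false in general, already for $K=1$ whenever $\sum_l P_g(l/T)<TE_g$ (e.g.\ a PSD that peaks between grid points).
\item What is true requires the minorant as well: $\sum_l D(l/T)|a_l|^2\le T(\widehat C_+(0)-\widehat C_-(0))\norm{\mathbf u}_2^2\le\frac{4TV(P_g)}{3(\Delta-2\mathfrak d_\infty)}\norm{\mathbf u}_2^2$.
\item The diagonal offset $T(\widehat C_+(0)-E_g)$ cannot be ``absorbed'' either. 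That absorption is the sandwich $0\preceq\bm E\preceq T(\widehat C_+(0)-\widehat C_-(0))\bm I_K$ for the remainder matrix $\bm E$ built with weights $D(l/T)$. It needs $\bm E$ to be Hermitian, i.e.\ $\bm\gamma=\bm\tau$.
\end{itemize}
For $\bm\gamma\neq\bm\tau$, the repaired direct route only gives $\frac{2TV(P_g)}{\Delta-2\mathfrak d_\infty}$ in the first term, three times the target.

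The repair is the fallback you mention, and it is exactly the paper's argument. Polarize the two-sided bounds of Lemma~\ref{lem:bounding_sing_val_phi_gamm_phi_tau}, using the majorant for the upper side and the minorant for the lower side, each applied with $\pm\mathbf v$. Done as in your ``main obstacle'' remark, it costs nothing:
\begin{itemize}
\item Start from $4\Re\,\mathbf u^{\mathsf H}(\bm A_{\bm\gamma}^{\mathsf H}\bm A_{\bm\tau}-TE_g\bm I_K)\mathbf v=\bigl[\norm{\bm A_{\bm\gamma}\mathbf u+\bm A_{\bm\tau}\mathbf v}_2^2-TE_g\norm{\mathbf u+\mathbf v}_2^2\bigr]-\bigl[\norm{\bm A_{\bm\gamma}\mathbf u-\bm A_{\bm\tau}\mathbf v}_2^2-TE_g\norm{\mathbf u-\mathbf v}_2^2\bigr]$.
\item Use the parallelogram law $\norm{\mathbf u+\mathbf v}_2^2+\norm{\mathbf u-\mathbf v}_2^2=4$ for unit vectors.
\item Rotate $\mathbf v\mapsto e^{\mathsf{j}\theta}\mathbf v$ to pass from the real part to the modulus.
\end{itemize}
This yields $\frac{2TV(P_g)}{3(\Delta-2\mathfrak d_\infty)}+\frac12\mathfrak d_\infty^2TE_{g'}\bigl(1+\frac{\rho_{g'}}{2(\Delta-2\mathfrak d_\infty)}\bigr)$. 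The paper instead uses the four-term complex polarization bounded term by term. That picks up $\frac14\sum_{k=0}^{3}\norm{\mathsf{j}^k\mathbf u+\mathbf v}_2^2=2$, so it actually gives $\frac{4TV(P_g)}{3(\Delta-2\mathfrak d_\infty)}$ in the first term. Your real-part variant is the one that delivers the stated constant.

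Finally, the missing $T$ cannot be ``absorbed'', since $T\le 1$ is not assumed. The first term of \eqref{eq:main_lem_res} should read $\frac{2T\,V(P_g)}{3(\Delta-2\mathfrak d_\infty)}$. That is what Lemma~\ref{lem:bounding_sing_val_phi_gamm_phi_tau} delivers, since its deviation is $TE_g\cdot\frac{2\rho_g}{3(\Delta-2\mathfrak d_\infty)}$. The paper's own proof silently writes $E_g$ for $TE_g$, so this is a typo in the statement.
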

\begin{proof}
Using the definition of $\bm{A}_{\bm{\tau}}$ and $\bm{A}_{\bm{\gamma}}$ from \eqref{eq:def_Agamma},  the left-hand side of \eqref{eq:main_lem_res} can be written as 
\begin{align}
\label{intermediate_eq}
{\norm{\bm{A}_{\bm{\gamma}}^\mathsf{H}   \bm{A}_{\bm{\tau}}  - T \,E_{g} \bm{I}}}_2 = \sup_{\substack{\norm{\mathbf{u}}_2 = 1 \\ \norm{\mathbf{v}}_2 = 1}} \bigg| \mathbf{u}^\mathsf{H}\bm{A}_{\bm{\gamma}}^\mathsf{H}   \bm{A}_{\bm{\tau}} \mathbf{v}  - T\,E_{g} \mathbf{u}^\mathsf{H} \mathbf{v} \bigg|. 
\end{align}
We first consider the parallelogram identity for the complex components in the right-hand side of \eqref{intermediate_eq} as
\begin{multline}
\label{eq:parallelo_1}
\mathbf{u}^\mathsf{H}\bm{A}_{\bm{\gamma}}^\mathsf{H}  \bm{A}_{\bm{\tau}} \mathbf{v} 
= \frac{\norm{\bm{ A}_{\bm{\gamma}}\mathbf{u} + \bm{A}_{\bm{\tau}} \mathbf{v} }_{2}^2 - \norm{\bm{ A}_{\bm{\gamma}}\mathbf{u}  - \bm{A}_{\bm{\tau}} \mathbf{v} }_{2}^2  }{4} \\
+ j \frac{\norm{j\bm{ A}_{\bm{\gamma}}\mathbf{u} - \bm{A}_{\bm{\tau}} \mathbf{v} }_{2}^2 - \norm{j\bm{A}_{\bm{\gamma}}\mathbf{u}  +  \bm{A}_{\bm{\tau}} \mathbf{v} }_{2}^2  }{4}
\end{multline}
and  
\begin{align}
\label{eq:parallelo_2}
\mathbf{u}^\mathsf{H} \mathbf{v} 
& = \frac{\norm{\mathbf{u} +\mathbf{v} }_{2}^2 - \norm{\mathbf{u}  - \mathbf{v} }_{2}^2  }{4} + j \frac{\norm{j\,\mathbf{u} - \mathbf{v} }_{2}^2 - \norm{j\mathbf{u}  +  \mathbf{v} }_{2}^2  }{4}.
\end{align}
Then, we take the difference of the two quantities in \eqref{eq:parallelo_1}, \eqref{eq:parallelo_2} and scale $\mathbf{u}^\mathsf{H} \mathbf{v}$ with a constant $E_{g}$ such that we obtain 
\begin{align*}
\big| \mathbf{u}^\mathsf{H}\bm{A}_{\bm{\gamma}}^\mathsf{H}  \bm{A}_{\bm{\tau}} \mathbf{v}  - E_{g} \mathbf{u}^\mathsf{H} \mathbf{v} \big|
 = & \bigg| \frac{1}{4}  (\norm{\bm{ A}_{\bm{\gamma}}\mathbf{u} +\bm{A}_{\bm{\tau}} \mathbf{v} }_{2}^2 - E_{g} \norm{\mathbf{u} +\mathbf{v} }_{2}^2) \nonumber \\
& - \frac{1}{4} (\norm{\bm{A}_{\bm{\gamma}}\mathbf{u}  - \bm{A}_{\bm{\tau}} \mathbf{v} }_{2}^2 - E_{g} \norm{\mathbf{u}  - \mathbf{v} }_{2}^2 ) \nonumber \\
& + \frac{j}{4} (\norm{j\bm{A}_{\bm{\gamma}}\mathbf{u} -\bm{A}_{\bm{\tau}} \mathbf{v} }_{2}^2 - E_{g} \norm{j\mathbf{u} - \mathbf{v} }_{2}^2) \nonumber \\
& - \frac{j}{4} (\norm{j \bm{A}_{\bm{\gamma}}\mathbf{u}  + \bm{A}_{\bm{\tau}} \mathbf{v} }_{2}^2 - E_{g} \norm{j\mathbf{u}  +  \mathbf{v} }_{2}^2)\bigg|.
\end{align*}
Furthermore, we also have the following inequality
\begin{align}
\label{eq:parallelogram_iden_psf_bound}
   \big| \mathbf{u}^\mathsf{H}\bm{A}_{\bm{\gamma}}^\mathsf{H}  \bm{A}_{\bm{\tau}} \mathbf{v}  - E_{g} \mathbf{u}^\mathsf{H} \mathbf{v} \big| 
   & \leq \bigg| \frac{1}{4} (\norm{\bm{A}_{\bm{\gamma}}\mathbf{u} +\bm{A}_{\bm{\tau}} \mathbf{v}}_{2}^2 - E_{g} \norm{\mathbf{u} +\mathbf{v} }_{2}^2)\bigg| \nonumber \\
   & \quad + \bigg|  \frac{1}{4} (\norm{\bm{A}_{\bm{\gamma}}\mathbf{u}  - \bm{A}_{\bm{\tau}} \mathbf{v}}_{2}^2 - E_{g} \norm{\mathbf{u}  - \mathbf{v} }_{2}^2 )\bigg| \nonumber \\
& \quad + \bigg|\frac{1}{4} (\norm{j\bm{A}_{\bm{\gamma}}\mathbf{u} -\bm{A}_{\bm{\tau}} \mathbf{v} }_{2}^2 - E_{g} \norm{j\mathbf{u} - \mathbf{v} }_{2}^2)\bigg| \nonumber\\
& \quad + \bigg|  \frac{1}{4} (\norm{j \bm{A}_{\bm{\gamma}}\mathbf{u}  + \bm{A}_{\bm{\tau}} \mathbf{v} }_{2}^2 - E_{g} \norm{j\mathbf{u}  +  \mathbf{v} }_{2}^2)\bigg|.
\end{align}
Using the results in \eqref{lem:bound_quadratic_term} from Lemma~\ref{lem:bounding_sing_val_phi_gamm_phi_tau}, one could show analogously that the following inequalities hold for all the  terms in the parallelogram identity in \eqref{eq:parallelogram_iden_psf_bound} as
\begin{subequations}
\label{eq:parallelo_iden_interim}
\begin{align}
\begin{aligned}[t]
\left\vert \norm{\bm{A}_{\bm{\gamma}}\mathbf{u}
+\bm{A}_{\bm{\tau}}\mathbf{v}}_{2}^{2}
- E_{g}\norm{\mathbf{u}+\mathbf{v}}_{2}^{2} \right\vert
&\leq
\frac{2V(P_{g})}{3(\Delta-2\mathfrak{d}_\infty)}
\left\Vert \mathbf{u}+\mathbf{v}\right\Vert_{2}^{2}
\\
&\quad
+ \mathfrak{d}_\infty^{2}E_{g'}
\bigg(1+\frac{1}{2}
\frac{V(P_{g'})}
     {E_{g'}(\Delta-2\mathfrak{d}_\infty)}
\bigg)
\left\vert\Re(\mathbf{u}^{\mathsf H}\mathbf{v})\right\vert ,
\end{aligned}
\\[0.5em]
\begin{aligned}[t]
\left\vert \norm{\bm{A}_{\bm{\gamma}}\mathbf{u}
-\bm{A}_{\bm{\tau}}\mathbf{v}}_{2}^{2}
- E_{g}\norm{\mathbf{u}-\mathbf{v}}_{2}^{2}\right\vert
&\leq
\frac{2V(P_{g})}{3(\Delta-2\mathfrak{d}_\infty)}
\left\Vert \mathbf{u}-\mathbf{v}\right\Vert_{2}^{2}
\\
&\quad
+ \mathfrak{d}_\infty^{2}E_{g'}
\bigg(1+\frac{1}{2}
\frac{V(P_{g'})}
     {E_{g'}(\Delta-2\mathfrak{d}_\infty)}
\bigg)
\left\vert\Re(\mathbf{u}^{\mathsf H}\mathbf{v})\right\vert ,
\end{aligned}
\\[0.5em]
\begin{aligned}[t]
\left\vert \norm{j\bm{A}_{\bm{\gamma}}\mathbf{u}
-\bm{A}_{\bm{\tau}}\mathbf{v}}_{2}^{2}
- E_{g}\norm{j\mathbf{u}-\mathbf{v}}_{2}^{2}\right\vert
&\leq
\frac{2V(P_{g})}{3(\Delta-2\mathfrak{d}_\infty)}
\left\Vert j\mathbf{u}-\mathbf{v}\right\Vert_{2}^{2}
\\
&\quad
+ \mathfrak{d}_\infty^{2}E_{g'}
\bigg(1+\frac{1}{2}
\frac{V(P_{g'})}
     {E_{g'}(\Delta-2\mathfrak{d}_\infty)}
\bigg)
\left\vert\Re(\mathbf{u}^{\mathsf H}\mathbf{v})\right\vert ,
\end{aligned}
\\[0.5em]
\begin{aligned}[t]
\left\vert \norm{j\bm{A}_{\bm{\gamma}}\mathbf{u}
+\bm{A}_{\bm{\tau}}\mathbf{v}}_{2}^{2}
- E_{g}\norm{j\mathbf{u}+\mathbf{v}}_{2}^{2}\right\vert
&\leq
\frac{2V(P_{g})}{3(\Delta-2\mathfrak{d}_\infty)}
\left\Vert j\mathbf{u}+\mathbf{v}\right\Vert_{2}^{2}
\\
&\quad
+ \mathfrak{d}_\infty^{2}E_{g'}
\bigg(1+\frac{1}{2}
\frac{V(P_{g'})}
     {E_{g'}(\Delta-2\mathfrak{d}_\infty)}
\bigg)
\left\vert\Re(\mathbf{u}^{\mathsf H}\mathbf{v})\right\vert .
\end{aligned}
\end{align}
\end{subequations}
Next, we substitute \eqref{eq:parallelo_iden_interim}
into \eqref{eq:parallelogram_iden_psf_bound}, and utilize the triangle inequality to yield
\[  \bigg| \mathbf{u}^\mathsf{H}\bm{A}_{\bm{\gamma}}^\mathsf{H}   \bm{A}_{\bm{\tau}} \mathbf{v}  - E_{g} \mathbf{u}^\mathsf{H} \mathbf{v} \bigg|\leq \frac{{2}}{3}V(P_{g})(\Delta-2\mathfrak{d}_\infty)^{-1} +  \mathfrak{d}_\infty^2 E_{g'}  \bigg(1+ \frac{1}{2} \frac{V(P_{g'}) (\Delta-2\mathfrak{d}_\infty)^{-1}}{E_{g'}}  \bigg)  {\norm{\mathbf{v}}}_2
\]
which concludes the desired result in \eqref{eq:main_lem_res}.
For the second result in \eqref{eq:main_lem_res2}, we use the fact that 
$\bm{P_{\bm{\gamma}}}^\perp \bm{A}_{\bm{\gamma}}=\bm{P_{\bm{\gamma}}}^\perp \bm{G} \bm{\Phi}_{\bm{\gamma}}  = \bm{0} $ where $\bm{P_{\bm{\gamma}}}^\perp = \bm{I}_N - \bm{G} \bm{\Phi}_{\bm{\gamma}} (\bm{G} \bm{\Phi}_{\bm{\gamma}})^\dagger$ and thus, one can write
\[
\norm{\bm{P_{\bm{\gamma}}}^\perp\bm{A}_{\bm{\tau}}}_2 = 
\norm{\bm{P_{\bm{\gamma}}}^\perp \bm{G} \bm{\Phi}_{\bm{\tau}}}_2
= 
\norm{\bm{P_{\bm{\gamma}}}^\perp \bm{G} \left( \bm{\Phi}_{\bm{\tau}} - \bm{\Phi}_{\bm{\gamma}} \right)}_2
\leq \left\|  \bm{A}_{\bm{\tau}} - \bm{A}_{\bm{\gamma}} \right\|_2
\]
and lastly we substitute $\mathbf{v} = -\mathbf{u}$ as a special case in \eqref{eq:FT_majorant_upperbound} to obtain the upper bound in \eqref{eq:main_lem_res2}, which completes the proof.
\end{proof}

\begin{lemma}  
\label{lem:exsv_schur_comp}
Let $\bm{\gamma} = \{\gamma_k\}_{k=1}^K \subset \mathbb{T}$ be an arbitrary set of points on the length-T torus, and denote by $\Delta_{\bm{\gamma}}$ the minimum separation of $\bm{\gamma}$ as defined in~\eqref{eq:min_sep}. Furthermore let $\bm{S}_{\bm{\gamma}} = \bm{A}_{\bm{\gamma}}^\mathsf{H} \bm{\Lambda}^\mathsf{H}
 \bm{P_{\bm{\gamma}}}^\perp
 \bm{\Lambda} \bm{A}_{\bm{\gamma}}$ and $\bm{F}_{\bm{\gamma}} = \bm{P}_{\bm{\gamma}}^\perp \bm{\Lambda} \bm{A}_{\bm{\gamma}}$, where 
 $\bm{A}_{\bm{\gamma}}$ and $\bm{P_{\bm{\gamma}}}^\perp$are defined in
 \eqref{eq:def_Agamma} and
\eqref{eq:def:orth_proj},  respectively. Also, let $[\bm{\Lambda}]_{i,i} = - \mathsf{j} 2 \pi f_i \quad \text{for} \quad i \in [N].$
Then, the inequalities
\begin{align*}
    \norm{\bm{S}_{\bm{\gamma}} - T\!\cdot\!E_{g'}\bm{I}_K}
    \leq \frac{2}{3} T\!\cdot\!E_{g'} \rho_{g'}  \Delta_\gamma^{-1}
\end{align*}
\begin{align*}
\norm{\bm{F}_{\bm{\gamma}}}  \leq  {\left\Vert \bm{S}_{\bm{\gamma}} \right\Vert}_2^{1/2} 
\leq   \sqrt{T\!\cdot\!E_{g'} + {\norm{\bm{S}_{\bm{\gamma}} - T\!\cdot\!E_{g'} \bm{I}_K}}} \leq \sqrt{T\!\cdot\!E_{g'} \left(1 + \frac{2}{3} \rho_{g'}  \Delta_\gamma^{-1}\right)}
\end{align*}
hold, where the constants $E_{g'}$ and $\rho_{g'}$ defined in \eqref{eq:def_energy}, \eqref{eq:def_rho_g}. 
\end{lemma}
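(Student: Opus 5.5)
The plan is to split the statement into a Gram-matrix part and a projection correction. The key identity is $\bm{\Lambda}\bm{A}_{\bm{\gamma}} = \bm{G}'\bm{\Phi}_{\bm{\gamma}}$, where $\bm{G}'=\bm{\Lambda}\bm{G}$ is the diagonal matrix of samples of $\widehat{g'}(f)=\mathsf{j}2\pi f\,\widehat g(f)$ up to sign. So $\bm{\Lambda}\bm{A}_{\bm{\gamma}}$ is exactly the matrix $\bm{A}_{\bm{\gamma}}$ built from the PSF $g'$, whose truncated PSD is $P_{g'}(f)=4\pi^2f^2P_g(f)$. Since $\bm{P}^\perp_{\bm{\gamma}}$ is an orthogonal projector, $\bm{F}_{\bm{\gamma}}^\mathsf{H}\bm{F}_{\bm{\gamma}}=\bm{S}_{\bm{\gamma}}$. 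This gives $\norm{\bm{F}_{\bm{\gamma}}}=\norm{\bm{S}_{\bm{\gamma}}}^{1/2}$ at once. The remaining two inequalities of the second display then follow from the triangle inequality $\norm{\bm{S}_{\bm{\gamma}}}\le T E_{g'}+\norm{\bm{S}_{\bm{\gamma}}-TE_{g'}\bm{I}_K}$ together with the first claim. So the whole lemma reduces to the two-sided spectral bound
\[
T E_{g'}\Bigl(1-\tfrac{2}{3}\rho_{g'}\Delta_\gamma^{-1}\Bigr)\bm{I}_K \preceq \bm{S}_{\bm{\gamma}} \preceq T E_{g'}\Bigl(1+\tfrac{2}{3}\rho_{g'}\Delta_\gamma^{-1}\Bigr)\bm{I}_K .
\]

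The upper bound is easy. Since $\bm{P}^\perp_{\bm{\gamma}}\preceq \bm{I}_N$, we have $\bm{S}_{\bm{\gamma}}\preceq (\bm{\Lambda}\bm{A}_{\bm{\gamma}})^\mathsf{H}\bm{\Lambda}\bm{A}_{\bm{\gamma}}$. I then apply the Beurling--Selberg argument of Lemma~\ref{lem:bounding_sing_val_phi_gamm_phi_tau} with $\bm{\tau}=\bm{\gamma}$, $\mathfrak{d}_\infty=0$ and $g$ replaced by $g'$, or equivalently Lemma~\ref{lem:ferreira_thm_3} with a $\Delta_\gamma$-bandlimited majorant $C_+\ge P_{g'}$. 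As in \eqref{eq:FT_majorant_upperbound}, Poisson summation kills every off-diagonal term and leaves
\[
\norm{\bm{\Lambda}\bm{A}_{\bm{\gamma}}\mathbf{v}}_2^2\le T\,\widehat C_+(0)\norm{\mathbf{v}}_2^2\le T\bigl(E_{g'}+\tfrac23 V(P_{g'})\Delta_\gamma^{-1}\bigr)\norm{\mathbf{v}}_2^2 .
\]

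The lower bound is the main obstacle, because the projector removes energy. Write
\[
\mathbf{v}^\mathsf{H}\bm{S}_{\bm{\gamma}}\mathbf{v}=\min_{\mathbf{u}\in\mathbb{C}^K}\norm{\bm{\Lambda}\bm{A}_{\bm{\gamma}}\mathbf{v}-\bm{A}_{\bm{\gamma}}\mathbf{u}}_2^2 .
\]
For fixed $\mathbf{u}$, this norm equals $\sum_l P_g(l/T)\,\bigl|\sum_j(-\mathsf{j}2\pi \tfrac lT v_j - u_j)e^{\mathsf{j}2\pi\gamma_j l/T}\bigr|^2$, and I treat it as a quadratic form in the stacked vector $(\mathbf{u},\mathbf{v})$. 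The first attempt is to minorize it using bandlimited minorants $C_-$ for $P_g$ and $D_-$ for $P_{g'}$, together with an approximation of the odd weight $fP_g$ for the cross terms. After Poisson summation, all off-diagonal terms with $j\neq j'$ vanish by the $\Delta_\gamma$-bandlimitedness. The diagonal cross terms involve $\int f\,C(f)\,df$, which vanishes because the constructions are even. This leaves $\ge T\widehat C_-(0)\norm{\mathbf{u}}^2+T\widehat D_-(0)\norm{\mathbf{v}}^2\ge T\widehat D_-(0)\norm{\mathbf{v}}^2$, uniformly in $\mathbf{u}$, and $\widehat D_-(0)\ge E_{g'}-\tfrac23V(P_{g'})\Delta_\gamma^{-1}$ by Lemma~\ref{lem:ferreira_thm_3}.

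The delicate point is that minorizing the three pieces separately is not legitimate, since the integrand is a rank-one PSD $2\times2$ matrix weight and not a sum of nonnegative terms. If this cannot be justified directly, the fallback is a perturbation argument. I would write $\bm{S}_{\bm{\gamma}}=\bm{B}^\mathsf{H}\bm{B}-\bm{B}^\mathsf{H}\bm{A}_{\bm{\gamma}}(\bm{A}_{\bm{\gamma}}^\mathsf{H}\bm{A}_{\bm{\gamma}})^{-1}\bm{A}_{\bm{\gamma}}^\mathsf{H}\bm{B}$ with $\bm{B}=\bm{\Lambda}\bm{A}_{\bm{\gamma}}$. The diagonal of $\bm{A}_{\bm{\gamma}}^\mathsf{H}\bm{B}$ is zero because $fP_g$ is odd. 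Its off-diagonal part is controlled by a Beurling--Selberg approximation of the odd bounded-variation function $fP_g$, as in \cite{vaaler1985ExtremalFunctions}. This route only yields the claimed constant up to a second-order correction, so I would first try hard to make the joint-minorant argument rigorous.
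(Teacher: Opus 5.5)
\paragraph{Gap: the lower bound on $\lambda_{\min}(\bm S_{\bm\gamma})$.}
Two parts of your proposal are correct. The reduction of the second display to the first, via $\bm F_{\bm\gamma}^\mathsf{H}\bm F_{\bm\gamma}=\bm S_{\bm\gamma}$ and the triangle inequality, is fine. So is the upper bound $\bm S_{\bm\gamma}\preceq(\bm\Lambda\bm A_{\bm\gamma})^\mathsf{H}\bm\Lambda\bm A_{\bm\gamma}$ combined with a $\Delta_\gamma$-bandlimited majorant of $P_{g'}$; this is more direct than the paper, which gets the upper bound by interlacing too. The genuine gap is the lower bound on $\lambda_{\min}(\bm S_{\bm\gamma})$, which your proposal does not establish.

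Your identity $\mathbf v^\mathsf{H}\bm S_{\bm\gamma}\mathbf v=\min_{\mathbf u}\norm{\bm\Lambda\bm A_{\bm\gamma}\mathbf v-\bm A_{\bm\gamma}\mathbf u}_2^2$ is the right reduction, and it is equivalent to the paper's. The paper sets $\alpha=\sqrt{E_{g'}/E_g}$, $\bm U=[\alpha\bm A_{\bm\gamma},\,\bm\Lambda\bm A_{\bm\gamma}]$ and $\bm M=\bm U^\mathsf{H}\bm U$. Then $\bm S_{\bm\gamma}$ is the Schur complement of the upper-left block, so $\bm S_{\bm\gamma}^{-1}$ is a principal block of $\bm M^{-1}$, and interlacing gives $\lambda_{\min}(\bm S_{\bm\gamma})\ge\lambda_{\min}(\bm M)$. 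In your notation, this is putting $\mathbf u=-\alpha\mathbf w$ and discarding $\norm{\mathbf w}_2^2$.

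What you lack is a valid lower bound on this joint quadratic form in $(\mathbf w,\mathbf v)$. The paper does not derive it either. It imports it as Lemma~\ref{lem:second_order}, from the Beurling--Selberg conditioning results of \cite{ferreira2023conditionNumber} for the derivative-augmented Vandermonde Gram matrix. The factor $\alpha$ is chosen so that both diagonal blocks of $\bm M$ are close to $TE_{g'}\bm I_K$.

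Your own attempt fails exactly where you suspect. Replacing $P_g$ and $P_{g'}$ by minorants is legitimate, because they multiply nonnegative quantities $|\cdot|^2$. Replacing the signed cross weight $2\pi fP_g(f)$ by an odd bandlimited approximant, however, leaves a remainder that you dropped. So $T\widehat C_-(0)\norm{\mathbf u}_2^2+T\widehat D_-(0)\norm{\mathbf v}_2^2$ is not a valid lower bound as written.

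\paragraph{What keeping the remainder gives.}
Your route closes if you keep that remainder, but it proves a slightly different statement.
Take a nonnegative $\Delta_\gamma$-bandlimited envelope $E\ge|2\pi fP_g-H|$; the Vaaler-type construction underlying Lemma~\ref{lem:ferreira_thm_3} gives one with $\int E\lesssim V(fP_g)\Delta_\gamma^{-1}$.
The $H$-part vanishes after Poisson summation, by oddness.
Cauchy--Schwarz and Poisson summation bound the remainder by $2T\widehat E(0)\norm{\mathbf u}_2\norm{\mathbf v}_2$.
Minimizing over $\norm{\mathbf u}_2$ then gives $\lambda_{\min}(\bm S_{\bm\gamma})\ge T\bigl(\widehat D_-(0)-\widehat E(0)^2/\widehat C_-(0)\bigr)$.

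This is the constant $\tfrac23\rho_{g'}$ at first order plus a second-order correction, the same outcome as your fallback. So neither of your routes yields the lower bound with exactly the stated constant. The paper's argument does not either: the imported estimate on $\lambda_{\min}(\bm M)$ carries $\tfrac32\rho\Delta^{-1}$ with $\rho=\max\{\rho_g,\rho_{g'},\rho_{g''}\}$, so interlacing delivers the two-sided bound only with that weaker constant.

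To finish, you must do one of two things:
\begin{itemize}
\item invoke the joint conditioning result for $\bm M$ as the paper does, accepting its constant; or
\item carry the cross-term remainder explicitly as above.
\end{itemize}
Separate scalar minorants alone cannot give the lower bound.
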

\begin{proof}
    Let $\alpha = \sqrt{\frac{E_{g'}}{E_{g}}}$ be a scaling factor, define $\bm{U} = [\alpha \bm{A}_{\bm{\tau}}, \; \bm{\Lambda} \bm{A}_{\bm{\tau}}]$, and $\bm{M} = \bm{U}^{\ast} \bm{U}$. 
By a direct calculation, we have the block decomposition
\begin{align*}
\bm{M} = \begin{bmatrix}
\alpha^2 \bm{A}_{\bm{\tau}}^{\ast}   \bm{A}_{\bm{\tau}} 
& 
\alpha \bm{A}_{\bm{\tau}}^{\ast} \bm{\Lambda}  \bm{A}_{\bm{\tau}} 
\\ 
\alpha \bm{A}_{\bm{\tau}}^{\ast} \bm{\Lambda}^* \bm{A}_{\bm{\tau}} 
& \bm{A}_{\bm{\tau}}^{\ast} \bm{\Lambda}^* \bm{\Lambda} \bm{A}_{\bm{\tau}} 
\end{bmatrix}.
\end{align*}
By the linear independence of trigonometric polynomials and their derivatives, and since $\bm{G}$ has at least $2K$ non-zero diagonal entries, the matrices $ \bm{A}_{\bm{\tau}}$, $\bm{\Lambda} \bm{A}_{\bm{\tau}}$, and  $\bm{U}$ are full column rank. This implies the matrix $\bm{M}$, and its two diagonal blocks are invertible. Hence, from the Schur block inversion formula, we have
\begin{align*}
    \bm{M}^{-1} = \begin{bmatrix}
        \ast & \ast \\
        \ast & \bm{S}^{-1}
    \end{bmatrix},
\end{align*}
where we neglected the derivation of blocks marked with an asterisk. Since $\bm{M}$, $\bm{S}$ and their inverse are positive-definite, one may write
\begin{equation*}
    \lambda_{\max} \left(\bm{M}^{-1} \right) \geq  \lambda_{\max} \left( \bm{S}^{-1} \right), \quad \lambda_{\min} \left(\bm{M}^{-1} \right) \leq  \lambda_{\min} \left( \bm{S}^{-1} \right).
\end{equation*}
It comes with $\lambda_{\min}(\bm{Q}^{-1}) = \lambda_{\max}(\bm{Q})^{-1}$, and $\lambda_{\max}(\bm{Q}^{-1}) = \lambda_{\min}(\bm{Q})^{-1}$ for any positive definite matrix $\bm{Q}$ on the inequalities
\begin{align}\label{eq:bound_S_M}
    \lambda_{\max} \left(\bm{M} \right) & \geq  \lambda_{\max} \left( \bm{S} \right),  & \lambda_{\min} \left(\bm{M} \right) & \leq  \lambda_{\min} \left( \bm{S} \right).
\end{align}
With~\eqref{eq:bound_S_M}, we seek to bound the extremal eigenvalues of $\bm{M}$. Relevant bounds are provided in the following Lemma, recalled from~\cite{ferreira2023conditionNumber} eigenvalues of $\bm{M}$, and rely on the Beurling--Selberg extremal approximation of functions with bounded variation.

\begin{lemma}\label{lem:second_order} For any $\Delta > \frac{2}{3} \rho \kappa^2$, one has the inequalities
\begin{align*}
        \lambda_{\min} \left(\bm{M} \right) & \geq T\!\cdot\!E_{g'}(1-\frac{3}{2}\rho\Delta^{-1}), & \lambda_{\max} \left(\bm{M} \right) & \geq T\!\cdot\!E_{g'}(1+\frac{3}{2}\rho\Delta^{-1}).
\end{align*}
\end{lemma}
One concludes immediately with~\eqref{eq:bound_S_M} and Lemma~\ref{lem:second_order}.

\end{proof}

\section{Proof of Main Results}
\label{sec5 derivation proof}
We now prove the theoretical guarantees stated in 
Section~\ref{sec:main_results}. The argument proceeds in three stages.

We first establish the local geometric properties of the VarProSD 
objective within the basin $\mathcal{N}(\bm{\tau},\varrho)$. 
Specifically, we show that the objective is strongly convex and has 
Lipschitz continuous gradient throughout this region 
(Lemma~\ref{lem:local_geometry}), which immediately yields the 
existence and uniqueness of a local minimizer 
$\bm{\gamma}_\star \in \mathcal{N}(\bm{\tau},\varrho)$ and proves Theorem~\ref{thm:local_geometry}.

Building on this local geometry, we then analyze gradient descent 
through a single one-step recursion on the iterates. This recursion 
simultaneously yields linear convergence to $\bm{\gamma}_\star$ and 
an explicit estimation error bound, which together prove Theorem~\ref{thm:minimizer_error} 
and Theorem~\ref{thm:gd_convergence}.

Finally, we establish a deterministic stability guarantee under 
adversarial noise via the local Lipschitz property of the inverse map, 
which proves Theorem~\ref{thm:mainalt}.

\subsection{Local Geometry of the VarProSD Objective within the Basin}
\label{subsec:proof_local_geometry}

We first establish the local geometric properties of the VarProSD objective
inside the basin $\mathcal N(\bm{\tau},\varrho)$. In particular, we show that the
objective is strongly convex and has Lipschitz continuous gradient in this
region. These properties form the foundation for all subsequent convergence and
stability analyses.

\begin{lemma}[Local Geometry Within the Basin]
\label{lem:local_geometry}
Let $\mathcal N(\bm{\tau},\varrho)$ denote the basin of convexity defined in
\eqref{eq:neighborohood}, where the radius $\varrho$ is given by
\eqref{eq:basin_radius_cond}. Suppose that the residual covariance satisfies
\begin{align*}
\|\bm R\|
\le
c_2 \sqrt{E_g E_{g'}}\, T\, r_{\min}^2(\bm X)
\sqrt{\frac{E_g}{E_{g'}} \wedge \frac{E_{g'}}{E_{g''}}}
\end{align*}
for a sufficiently small absolute constant $c_2>0$. Then, for all
$\bm{\gamma} \in \mathcal N(\bm{\tau},\varrho)$, the Hessian of the VarProSD
objective $\ell(\bm{\gamma})$ satisfies
\begin{align}
\label{eq:min_sing_value_lemma}
\sigma_{\min}\!\left(\nabla_{\bm{\gamma}}^2 \ell(\bm{\gamma})\right)
&\ge
\frac{1}{3} E_{g'}\, T\, r_{\min}^2(\bm X), \\
\label{eq:max_sing_value_lemma}
\sigma_{\max}\!\left(\nabla_{\bm{\gamma}}^2 \ell(\bm{\gamma})\right)
&\le
E_{g'}\, T\, r_{\max}^2(\bm X).
\end{align}
Consequently, $\ell(\bm{\gamma})$ is $\mu$-strongly convex and has $\nu$-Lipschitz continuous gradient on $\mathcal N(\bm{\tau},\varrho)$, where
$\mu = \frac{1}{3} E_{g'} T r_{\min}^2(\bm{X})$ and $\nu = E_{g'} T r_{\max}^2(\bm{X})$. 
\end{lemma}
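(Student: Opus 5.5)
The plan is to decompose the Hessian of $\ell$ into a noiseless ``population'' part evaluated at $\bm\gamma$ and a perturbation part driven by $\bm R$. Then bound each using the structured-matrix lemmas of Section~\ref{sec4 key lemma}.

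\textbf{Step 1: Hessian decomposition.} Using the closed-form Hessian from Lemma~\ref{lem:expression_for_hess_alg} (Appendix~\ref{secA3}), write $\nabla^2\ell(\bm\gamma)$ as a real-part Hadamard expression that is linear in the sample covariance $\frac1L\bm Y\bm Y^\mathsf H$. Since the Hessian is invariant to adding $c\bm I_N$ to the covariance (the $c\bm I_N$ contributions are annihilated by $\bm P_{\bm\gamma}^\perp$ and the pseudoinverse structure up to terms that cancel), replace $\frac1L\bm Y\bm Y^\mathsf H$ by $\frac1L\bm Y_0\bm Y_0^\mathsf H+\bm R$. I would check this invariance first, since the definition of $\|\bm R\|$ as a minimum over $c$ relies on it. The Hessian then splits as $\bm H_0(\bm\gamma)+\bm H_R(\bm\gamma)$.

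\textbf{Step 2: Noiseless Hessian at $\bm\gamma$.} At $\bm\gamma=\bm\tau$, $\bm H_0(\bm\tau)=\frac1L\mathrm{Re}(\overline{\bm X}\bm X^\mathsf T\odot \bm S_{\bm\tau})$, as in \eqref{eq:jacobian_at_zero_gamma}. By Lemma~\ref{lem:exsv_schur_comp}, $\bm S_{\bm\tau}\approx T E_{g'}\bm I_K$ up to $\frac23 TE_{g'}\rho_{g'}\Delta^{-1}$. Using the Schur product inequality (the Hadamard product with a PSD matrix whose diagonal entries are row norms squared, via Appendix properties of $\odot$), we get $\lambda_{\min}\ge \frac1L TE_{g'}r_{\min}^2(\bm X)(1-\frac23\rho\Delta^{-1}\kappa^2)$, and similarly an upper bound with $r_{\max}^2$. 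The factor $\kappa^2$ arises because the off-diagonal error is controlled by $\|\bm X\|^2$ rather than by row norms; this is what produces the hypothesis $\Delta>\frac23\rho\kappa^2$. For general $\bm\gamma\in\mathcal N(\bm\tau,\varrho)$, $\bm H_0(\bm\gamma)$ contains the terms $\bm P_{\bm\gamma}^\perp\bm A_{\bm\tau}$, $\bm A_{\bm\gamma}^\dagger\bm A_{\bm\tau}-\bm I$, and a second-derivative term involving $\bm\Lambda^2$. I would bound these with Lemma~\ref{cor:bounding_sing_val_phi_gamm_phi_tau}: $\|\bm P_{\bm\gamma}^\perp\bm A_{\bm\tau}\|\lesssim \mathfrak d_\infty\sqrt{TE_{g'}}$, and likewise $\|\bm A_{\bm\gamma}^\mathsf H\bm A_{\bm\tau}-TE_g\bm I\|$. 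The analogous $g''$ versions come from Lemma~\ref{lem:bounding_sing_val_phi_gamm_phi_tau} applied with $g'$ in place of $g$. Because $\mathfrak d_\infty\le\mathrm d_2\le\varrho\le\frac12(\Delta-\frac23\rho\kappa^2)$, we have $\Delta-2\mathfrak d_\infty\ge\frac23\rho\kappa^2$, which keeps the Beurling--Selberg factors bounded. The deviation $\|\bm H_0(\bm\gamma)-\bm H_0(\bm\tau)\|$ is then at most a constant times $\frac1L\|\bm X\|^2 TE_{g'}\,\varrho\big(\sqrt{E_{g'}/E_g}+\sqrt{E_{g''}/E_{g'}}\big)$. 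The second term in $\varrho$, namely $c_1\kappa^{-2}\sqrt{E_g/E_{g'}\wedge E_{g'}/E_{g''}}$, makes this a small fraction of $\frac1L TE_{g'}r_{\min}^2$.

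\textbf{Step 3: Noise part, and conclusion.} The perturbation satisfies $\|\bm H_R\|\lesssim\|\bm R\|\,\|\bm A_{\bm\gamma}^\dagger\|\,\|\bm F_{\bm\gamma}\|+\dots$. Lemma~\ref{lem:exsv_schur_comp} gives $\|\bm F_{\bm\gamma}\|\lesssim\sqrt{TE_{g'}}$, and Lemma~\ref{cor:bounding_sing_val_phi_gamm_phi_tau} with $\bm\gamma=\bm\tau$ gives $\|\bm A_{\bm\gamma}^\dagger\|\lesssim (TE_g)^{-1/2}$. Together these yield $\|\bm H_R\|\lesssim\|\bm R\|\sqrt{E_{g'}/E_g}$, plus higher-order terms scaled by $\sqrt{E_{g''}/E_{g'}}$. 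Under the assumed bound on $\|\bm R\|$ with $c_2$ small, this is at most a small fraction of $TE_{g'}r_{\min}^2$. Collecting the three pieces by Weyl's inequality gives the lower bound $\frac13$ and the upper bound $1$ in \eqref{eq:min_sing_value_lemma}--\eqref{eq:max_sing_value_lemma}, after absorbing $1/L$ normalizations consistently with the statement. Strong convexity and $\nu$-smoothness on the convex set $\mathcal N(\bm\tau,\varrho)$ then follow by integrating the Hessian along segments.

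The main obstacle is Step 2 for general $\bm\gamma$. The Hessian's second-order terms involve $\bm\Lambda^2\bm A_{\bm\gamma}$ and derivatives of the projector, and tracking them to get exactly the $E_{g''}/E_{g'}$ scaling without losing extra factors of $K$ or $\kappa$ is delicate. I would first try to write every term as products of $\bm P_{\bm\gamma}^\perp\bm A_{\bm\tau}$, $\bm F_{\bm\gamma}$, and $\bm A_{\bm\gamma}^\dagger$, each bounded by one of the lemmas above.
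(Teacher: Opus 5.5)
\textbf{Step 1.} Your Step~1 coincides with the paper's. The invariance under $\bm Y\bm Y^\mathsf H\mapsto\bm Y\bm Y^\mathsf H-cL\bm I_N$ holds because, at $\bm Y\bm Y^\mathsf H=\bm I_N$, the three middle terms of \eqref{eq:hess_D_full} contain $\bm A_{\bm\gamma}^\dagger\bm P_{\bm\gamma}^\perp=\bm 0$. The first and last terms cancel via $\bm F_{\bm\gamma}^\mathsf H\bm F_{\bm\gamma}=\bm S_{\bm\gamma}$.

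\textbf{Step 2.} Here you take a different but workable route. Expanding around $\bm\tau$ forces you to control $\norm{\bm S_{\bm\gamma}-\bm S_{\bm\tau}}$. That requires a $g'$-version of Lemma~\ref{cor:bounding_sing_val_phi_gamm_phi_tau} for $\norm{\bm\Lambda(\bm A_{\bm\gamma}-\bm A_{\bm\tau})}$, together with a projector perturbation bound $\norm{\bm P_{\bm\gamma}-\bm P_{\bm\tau}}\le\norm{\bm P_{\bm\gamma}^\perp\bm A_{\bm\tau}}\,\norm{\bm A_{\bm\tau}^\dagger}$.

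The paper avoids both. It lower-bounds the principal term directly at $\bm\gamma$:
\begin{itemize}
\item it splits $\bm S_{\bm\gamma}=TE_{g'}\bm I_K+(\bm S_{\bm\gamma}-TE_{g'}\bm I_K)$ and applies Lemma~\ref{lem:exsv_schur_comp} at separation $\Delta-2\delta$;
\item it lower-bounds the row norms of $\bm A_{\bm\gamma}^\dagger\bm Y_0=(\bm I_K+\bm A_{\bm\gamma}^\dagger(\bm A_{\bm\tau}-\bm A_{\bm\gamma}))\bm X$.
\end{itemize}
Every other signal term then carries a factor $\bm P_{\bm\gamma}^\perp\bm A_{\bm\tau}$ and is $O(\delta)$.

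Your deviation estimate, of order $\delta\,TE_{g'}\norm{\bm X}^2(\sqrt{E_{g'}/E_g}+\sqrt{E_{g''}/E_{g'}})$, is correct. The second branch of $\varrho$ absorbs it into $O(c_1)\,TE_{g'}r_{\min}^2(\bm X)$.

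\textbf{Step 3: the genuine error.} The $\bm R$-part of the Hessian is not of order $\norm{\bm R}\,\norm{\bm A_{\bm\gamma}^\dagger}\,\norm{\bm F_{\bm\gamma}}\sim\norm{\bm R}\sqrt{E_{g'}/E_g}$. That is the size of the \emph{gradient} perturbation $\bm A_{\bm\gamma}^\dagger\bm R\bm F_{\bm\gamma}$ (the $\chi_2$ term in the gradient-descent analysis), and it falls short of a Hessian scale by a factor $\sqrt{E_{g'}/E_g}$.

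Each term of \eqref{eq:hess_D_full} with $\widehat{\bm R}_{\bm Y}$ replaced by $\bm R$ carries one of two structures:
\begin{itemize}
\item $\norm{\bm A_{\bm\gamma}^\dagger}^2$ times two first-derivative factors ($\norm{\bm S_{\bm\gamma}}$, $\norm{\bm F_{\bm\gamma}}\norm{\bm\Lambda\bm A_{\bm\gamma}}$, or $\norm{\bm F_{\bm\gamma}}^2$);
\item $\norm{\bm A_{\bm\gamma}^\dagger}\,\norm{\bm\Lambda^2\bm A_{\bm\gamma}}$.
\end{itemize}
So the correct estimate is $\norm{\bm H_R}\lesssim\norm{\bm R}\,(E_{g'}/E_g+\sqrt{E_{g''}/E_g})$, as in the paper.

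With your estimate, the hypothesis only gives $\norm{\bm H_R}\lesssim c_2\,TE_{g'}r_{\min}^2(\bm X)\sqrt{E_g/E_{g'}\wedge E_{g'}/E_{g''}}$. This is not a small fraction of $TE_{g'}r_{\min}^2(\bm X)$: the extra factor has units of time and is large at small bandwidth. So the conclusion of Step~3 does not follow as written.

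With the correct estimate, the two branches of the minimum in the hypothesis match the two terms one-to-one, each giving at most $c_2\,TE_{g'}r_{\min}^2(\bm X)$. This is exactly why $E_{g''}$ appears in the hypothesis.

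\textbf{A caveat shared with the paper.} At $\bm\tau$ your principal bound $TE_{g'}r_{\min}^2(\bm X)(1-\tfrac23\rho\kappa^2\Delta^{-1})$ is merely positive under $\Delta>\tfrac23\rho\kappa^2$. Reaching the constant $\tfrac13$ therefore needs a fixed margin, such as $\Delta\ge(1+\epsilon)\rho\kappa^2$. The paper's claim $\eta_-\ge\sqrt{2/3}$ likewise does not follow from $\delta<\tfrac12(\Delta-\tfrac23\rho\kappa^2)$ alone.
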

The proof of Lemma~\ref{lem:local_geometry} is given in Appendix~\ref{secA4}.

\paragraph{Proof of Theorem~\ref{thm:local_geometry}}
By Lemma~\ref{lem:local_geometry}, the objective function is strongly convex on
$\mathcal N(\bm{\tau},\varrho)$, which guarantees the existence and uniqueness of
a local minimizer $\bm{\gamma}_\star$ within this basin.

\subsection{Proof of Theorem~\ref{thm:minimizer_error} and Theorem~\ref{thm:gd_convergence}}
\label{subsec:proof_gd}

We analyze gradient descent under the local geometric conditions established in
Lemma~\ref{lem:local_geometry}, which guarantees strong convexity and smoothness
of the objective within the basin $\mathcal N(\bm{\tau},\varrho)$.

The proof is obtained by establishing two properties of the gradient descent
iterates. First, assuming $\bm{\gamma}_t \in \mathcal N(\bm{\tau},\varrho)$, we
derive the following one-step recursion for the estimation error
\begin{equation}
\label{eq:gd_recursion}
\mathrm{d}_2(\bm{\gamma}_{t+1},\bm{\tau})^2
\;\le\;
\vartheta^2\,\mathrm{d}_2(\bm{\gamma}_t,\bm{\tau})^2
\;+\;
\frac{C K}{T^2 E_g E_{g'}}
\frac{\|\bm{R}\|^2}{r_{\min}^2(\bm{X})\,r_{\max}^2(\bm{X})},
\end{equation}
where
\[
\vartheta
=
\frac{6r_{\max}^2(\bm{X})}{6r_{\max}^2(\bm{X})+r_{\min}^2(\bm{X})}
\in (0,1).
\]
Second, we show that the gradient descent trajectory remains within
$\mathcal N(\bm{\tau},\varrho)$ for all iterations.
\begin{align*}
\bm{\gamma}_t \in \mathcal N(\bm{\tau},\varrho)
\;\Longrightarrow\;
\bm{\gamma}_{t+1} \in \mathcal N(\bm{\tau},\varrho).
\end{align*}
This step is essential since the basin of attraction is centered at
$\bm{\tau}$ rather than at the local minimizer.

We start by proving the recursion in \eqref{eq:gd_recursion}.
Assume that $\bm{\gamma}_t \in \mathcal N(\bm{\tau},\varrho)$, then without loss of
generality, we may assume that
$\mathrm{d}_2(\bm{\gamma}_t,\bm{\tau}) = \|\bm{\gamma}_t - \bm{\tau}\|_2$ by
choosing the appropriate 
element-wise shifts by integer multiples
of $T$. Recall that in the noiseless case, $\bm{\tau}$ is a global minimizer of the
cost function, and hence
$\nabla_{\bm{\gamma}}\ell(\bm{\tau},\bm{0})=\bm{0}$. Furthermore, since
$\bm{A}_{\bm{\gamma}}^\dagger \bm{P}_{\bm{\gamma}}^\perp=\bm{0}$, the expression of
the gradient in Lemma~\ref{lem:expression_for_q_alg} can be equivalently written as
\begin{align*}
\frac{\partial \ell(\bm{\gamma},\bm{Z})}{\partial \gamma_k}
=
-\frac{1}{L}\mathrm{Re}\!\left[
\bm{e}_k^\mathsf{T}
\bm{A}_{\bm{\gamma}}^\dagger
(\bm{Y}\bm{Y}^\mathsf{H}-c\bm{I}_N)
\bm{P}_{\bm{\gamma}}^\perp
\bm{\Lambda}
\bm{A}_{\bm{\gamma}}
\bm{e}_k
\right],
\end{align*}
for any $c\in\mathbb{R}$.
By definition of the residual covariance matrix 
$\bm{R} = \frac{1}{L}\bm{Y}\bm{Y}^\mathsf{H} - \frac{1}{L}\bm{Y}_0\bm{Y}_0^\mathsf{H} - c\bm{I}_N$ 
with $\bm{Y}_0 = \bm{G}\bm{\Phi}_{\bm{\tau}}\bm{X}$, the gradient at $\bm{\gamma}_t$ 
admits the decomposition
\begin{align*}
\nabla_{\bm{\gamma}}\ell(\bm{\gamma}_t,\bm{Z})
&=
\nabla_{\bm{\gamma}}\ell(\bm{\gamma}_t,\bm{0})
-
\operatorname{Re}\!\left[
\operatorname{diag}\!\left(
(\bm{A}_{\bm{\gamma}_t})^\dagger
\bm{R}
\bm{P}_{\bm{\gamma}_t}^\perp
\bm{\Lambda}
\bm{A}_{\bm{\gamma}_t}
\right)
\right].
\end{align*}
Using the gradient descent update with step size
$\alpha = \frac{2}{\mu+\nu}$, we obtain
\begin{align}
\label{eq:gd_proof_interm1}
\bm{\gamma}_{t+1}-\bm{\tau}
&=
\underbrace{
\bm{\gamma}_t-\bm{\tau}
-\alpha\nabla_{\bm{\gamma}}\ell(\bm{\gamma}_t,\bm{0})
+\alpha\nabla_{\bm{\gamma}}\ell(\bm{\tau},\bm{0})
}_{\chi_1}
-
\underbrace{
\alpha\,\mathrm{Re}\!\left[
\operatorname{diag}\!\left(
(\bm{A}_{\bm{\gamma}_t})^\dagger
\bm{R}
\bm{P}_{\bm{\gamma}_t}^\perp
\bm{\Lambda}
\bm{A}_{\bm{\gamma}_t}
\right)
\right]
}_{\chi_2}.
\end{align}
Squaring both sides in \eqref{eq:gd_proof_interm1} yields
\[
\|\bm{\gamma}_{t+1}-\bm{\tau}\|_2^2
=
\|\chi_1\|_2^2+\|\chi_2\|_2^2
-2\langle\chi_1,\chi_2\rangle.
\]
By the AM-GM inequality, for any $\varphi>0$, the cross-term can be upper bounded as
\[
2\langle\chi_1,\chi_2\rangle
\le
\varphi\|\chi_1\|_2^2+\frac{1}{\varphi}\|\chi_2\|_2^2,
\]
which implies
\begin{equation}
\label{eq:gd_proof_interm2}
\|\bm{\gamma}_{t+1}-\bm{\tau}\|_2^2
\le
(1+\varphi)\|\chi_1\|_2^2
+\left(1+\frac{1}{\varphi}\right)\|\chi_2\|_2^2.
\end{equation}

Since the objective is $\mu$-strongly convex and $\nu$-smooth on
$\mathcal N(\bm{\tau},\varrho)$, the classic result from
\cite[Theorem~2.1.15]{nesterov2013introductory} can be used to bound the $\|\chi_1\|_2$ term in \eqref{eq:gd_proof_interm2} as
\begin{align}
\|\chi_1\|_2
& \le
\left(
\frac{\nu-\mu}{\nu+\mu}
\right)
\|\bm{\gamma}_t-\bm{\tau}\|_2 \nonumber\\
& \leq  \left(
\frac{3r_{\max}^2(\bm{X})}{3r_{\max}^2(\bm{X})+r_{\min}^2(\bm{X})}
\right) \|\bm{\gamma}_t-\bm{\tau}\|_2
\label{eq:chi1_bound_final}
\end{align}
where
$\mu = \frac{1}{3} E_{g'} T r_{\min}^2(\bm{X})$ and $\nu = E_{g'} T r_{\max}^2(\bm{X})$. 

Next, we bound the perturbation term $\|\chi_2\|_2$. Using the inequality $\|\operatorname{diag}(\bm{M})\|_2 \le \sqrt{K}\,\|\bm{M}\|$ for any matrix $\bm{M}$, together with submultiplicativity of the operator norm,
we obtain
\begin{align}
\|\chi_2\|_2
&\leq
\alpha \sqrt{K}\,
\bigl\|
\bm{A}_{\bm{\gamma}_t}^\dagger
\bm{R}
\bm{P}_{\bm{\gamma}_t}^\perp
\bm{\Lambda}
\bm{A}_{\bm{\gamma}_t}
\bigr\|
\nonumber\\
&\leq
\frac{6 \sqrt{K}}{ T \, E_{g'}\Bigl(r_{\min}^2(\bm{X})+3\,r_{\max}^2(\bm{X})\Bigr)}
\;\cdot\;
\underbrace{\norm{\bm{A}_{\bm{\gamma}_t}^\dagger}}_{\mathrm{(c)}}
\;\cdot\;
\norm{\bm{R}}
\;\cdot\;
\norm{\bm{P}_{\bm{\gamma}_t}^\perp}
\;\cdot\;
\underbrace{\norm{\bm{\Lambda}\bm{A}_{\bm{\gamma}_t}}}_{\mathrm{(d)}}
\nonumber\\
& \leq
\frac{6 \sqrt{K}}{3\,r_{\max}^2(\bm{X})}
\;\cdot\;
\frac{1}{T \, \sqrt{E_{g}E_{g'}}}
\;\cdot\;
\frac{\eta_+}{\eta_-}
\;\cdot\;
\norm{\bm{R}},
\label{eq:chi2_bound_final}
\end{align}
where the quantities $\eta_+$ and $\eta_-$ are defined as
\begin{align}
\label{eq:def_eta}
\eta_+
=
\sqrt{1 + \frac{2}{3} \rho (\Delta-2\delta)^{-1}},
\qquad
\eta_-
=
\sqrt{1 - \frac{2}{3} \rho (\Delta-2\delta)^{-1}},
\end{align}
and $\delta 
:= 
\mathrm{d}_{\infty}(\bm{\tau},\bm{\gamma}) $. Here, $\norm{\bm{P}_{\bm{\gamma}_t}^\perp} \leq 1$, and the terms
$\mathrm{(c)}$ and $\mathrm{(d)}$ are bounded using the result of
\cite[Theorem~1]{ferreira2023conditionNumber}. For completeness, a paraphrased
version of these bounds in our notation is provided in the appendix
(Lemma~\ref{lem:ferreira_thm1} and Lemma~\ref{lem:cor_of_ferreira_thm1}). By the definition of the distance metric, any
$\bm{\gamma}\in\mathcal N(\bm{\tau},\varrho)$ satisfies
$\mathrm{d}_{\infty}(\bm{\gamma},\bm{\tau}) \le \mathrm{d}_{2}(\bm{\gamma},\bm{\tau}) \le \varrho$. Then 
using the assumption of the Theorem in \eqref{eq:neighborohood}, it further implies $\delta < \frac{1}{2} \left(\Delta - \frac{2}{3}\,\rho \, \kappa^2\right)$ and one can bound the  constants $\eta_+,
\eta_-$ as 
\begin{align*}
 \eta_+ \leq \sqrt{\frac{4}{3}}, \quad
\eta_- \geq \sqrt{\frac{2}{3}}. 
\end{align*}

Next, substituting the bounds in \eqref{eq:chi1_bound_final} and
\eqref{eq:chi2_bound_final} into \eqref{eq:gd_proof_interm2} yields
\begin{align}
\label{eq:gd_intermediate}
\|\bm{\gamma}_{t+1}-\bm{\tau}\|_2^2
&\le
(1+\varphi)
\left(
\frac{3r_{\max}^2(\bm{X})}{3r_{\max}^2(\bm{X})+r_{\min}^2(\bm{X})}
\right)^2
\|\bm{\gamma}_t-\bm{\tau}\|_2^2
+
\left(1+\frac{1}{\varphi}\right)
\frac{C K}{r_{\max}^4(\bm{X})}
\frac{\|\bm{R}\|^2}{T^2 E_g E_{g'}}.
\end{align}
The bound \eqref{eq:gd_intermediate} holds for any $\varphi>0$. We choose
$\varphi$ such that
\[
(1+\varphi)
\left(\frac{3r_{\max}^2(\bm X)}{3r_{\max}^2(\bm X)+r_{\min}^2(\bm X)}\right)^2
=
\left(\frac{6r_{\max}^2(\bm X)}{6r_{\max}^2(\bm X)+r_{\min}^2(\bm X)}\right)^2,
\]
which gives the simplified upper bound on $1+\frac{1}{\varphi}$ as
\begin{align}
\label{eq:gd_intermediate_phi1}
1+\frac{1}{\varphi} \leq \frac{48}{7}\,
\frac{r_{\max}^2(\bm{X})}{r_{\min}^2(\bm{X})}.
\end{align}

Substituting \eqref{eq:gd_intermediate_phi1} into \eqref{eq:gd_intermediate} yields
\begin{equation}
\label{eq:gd_one_step_recursion}
\|\bm{\gamma}_{t+1}-\bm{\tau}\|_2^2
\le
\underbrace{
\left(
\frac{6r_{\max}^2(\bm{X})}{6r_{\max}^2(\bm{X})+r_{\min}^2(\bm{X})}
\right)^2
}_{\vartheta}
\|\bm{\gamma}_t-\bm{\tau}\|_2^2
+
\underbrace{
\frac{C K}{T^2 E_g E_{g'}}
\frac{\|\bm{R}\|^2}{r_{\min}^2(\bm{X})r_{\max}^2(\bm{X})}
}_{\varepsilon_R}.
\end{equation}
We now convert the above bound from the $\ell_2$ metric to the
$\mathrm{d}_2$ distance. By construction, for the current iterate
$\bm{\gamma}_t$ we have
$\mathrm{d}_2(\bm{\gamma}_t,\bm{\tau})=\|\bm{\gamma}_t-\bm{\tau}\|_2$.
Moreover, by definition of $\mathrm{d}_2$,
\[
\mathrm{d}_2(\bm{\gamma}_{t+1},\bm{\tau})
\le
\|\bm{\gamma}_{t+1}-\bm{\tau}\|_2.
\]
Combining this inequality with \eqref{eq:gd_one_step_recursion} yields the
one-step recursion~\eqref{eq:gd_recursion}.
We now show that the gradient descent trajectory remains within
$\mathcal N(\bm{\tau},\varrho)$ for all iterations.
Suppose that $\bm{\gamma}_t \in \mathcal N(\bm{\tau},\varrho)$.
Using the recursion~\eqref{eq:gd_recursion} together with the induction hypothesis
$\mathrm{d}_2(\bm{\gamma}_t,\bm{\tau}) \le \varrho$
and the condition on $\norm{\bm{R}}$ in~\eqref{eq:cond_on_R1}, which ensures
\[
\frac{C K}{T^2 E_g E_{g'}}
\frac{\|\bm{R}\|^2}{r_{\min}^2(\bm{X})\,r_{\max}^2(\bm{X})}
\le
(1-\vartheta^2)\varrho^2,
\]
we obtain
\[
\mathrm{d}_2(\bm{\gamma}_{t+1},\bm{\tau})^2
\le
\vartheta^2\,\varrho^2 + (1-\vartheta^2)\varrho^2
=
\varrho^2, \, 
\]
which implies $\bm{\gamma}_{t+1} \in \mathcal N(\bm{\tau},\varrho)$. Using this result and the initial assumption that $\bm{\gamma}_0 \in \mathcal N(\bm{\tau},\varrho)$, we obtain
$\bm{\gamma}_t \in \mathcal N(\bm{\tau},\varrho)$ for all $t \ge 0$.
As a consequence, the recursion~\eqref{eq:gd_recursion} holds for all iterations,
and taking the limit superior yields
\[
\limsup_{t\to\infty}
\mathrm{d}_2(\bm{\gamma}_{t},\bm{\tau})^2
\le
\frac{1}{1-\vartheta^2}
\frac{C K}{T^2 E_g E_{g'}}
\frac{\|\bm{R}\|^2}{r_{\min}^2(\bm{X})\,r_{\max}^2(\bm{X})}.
\]

Finally, since all iterates remain within the neighborhood
$\mathcal N(\bm{\tau},\varrho)$ where the objective function
$\ell(\bm{\gamma},\bm{Z})$ is $\mu$-strongly convex and $\nu$-smooth, classical
results for gradient descent~\cite[Theorem~2.1.15]{nesterov2013introductory}
guarantee that the sequence $\{\bm{\gamma}_t\}$ converges linearly to the unique
local minimizer $\bm{\gamma}_\star \in \mathcal N(\bm{\tau},\varrho)$.
In particular, the iterates satisfy
\[
\mathrm{d}_2(\bm{\gamma}_{t},\bm{\gamma}_\star)^2
\le
\vartheta^{2t}\mathrm{d}_2(\bm{\gamma}_{0},\bm{\gamma}_\star)^2,
\]
which proves the linear convergence claim in~\eqref{eq:gd_linear_conv}. To get the
explicit estimation error bound, since $\bm{\gamma}_t \to \bm{\gamma}_\star$, we apply the limsup bound to
the recursion~\eqref{eq:gd_recursion} and obtain 
\begin{align*}
\mathrm{d}_2(\bm{\gamma}_\star, \bm{\tau})^2
\le
\frac{1}{1 - \vartheta^2}
\cdot
\frac{C K}{T^2 E_g E_{g'}}
\cdot
\frac{\|\bm R\|^2}{r_{\min}^2(\bm X)\, r_{\max}^2(\bm X)},
\end{align*}
which, after taking square roots, gives
\begin{equation}
\label{eq:err_bound_intermediate}
\mathrm{d}_2(\bm{\gamma}_\star, \bm{\tau})
\le
\frac{c_2 \sqrt{K}}{T \sqrt{E_g E_{g'}}}
\cdot
\frac{\|\bm R\|}{r_{\min}(\bm X)\, r_{\max}(\bm X)}
\cdot
\frac{1}{\sqrt{1 - \vartheta^2}}.
\end{equation}
We now simplify the right-hand side of \eqref{eq:err_bound_intermediate} into 
the form stated in Theorem~\ref{thm:minimizer_error}. A direct computation gives
\[
1 - \vartheta^2 
=
\frac{r_{\min}^2(\bm X) \big(12 r_{\max}^2(\bm X) + r_{\min}^2(\bm X)\big)}
     {\big(6 r_{\max}^2(\bm X) + r_{\min}^2(\bm X)\big)^2},
\]
so that
\[
\frac{1}{r_{\min}(\bm X)\, r_{\max}(\bm X) \sqrt{1 - \vartheta^2}}
=
\frac{1}{r_{\min}^2(\bm X)}
\cdot
\frac{6\varkappa^2 + 1}{\varkappa \sqrt{12\varkappa^2 + 1}},
\]
where $\varkappa := r_{\max}(\bm X) / r_{\min}(\bm X) \ge 1$ denotes the row 
dynamic range of the amplitude matrix. Multiplying and dividing by 
$\sqrt{E_g}$, the bound in \eqref{eq:err_bound_intermediate} becomes
\begin{equation}
\label{eq:err_bound_explicit}
\mathrm{d}_2(\bm{\gamma}_\star, \bm{\tau})
\le
c_2 \sqrt{K} \,
\sqrt{\frac{E_g}{E_{g'}}}
\cdot
\frac{\|\bm R\|}{T E_g\, r_{\min}^2(\bm X)}
\cdot
\frac{6\varkappa^2 + 1}{\varkappa \sqrt{12\varkappa^2 + 1}}.
\end{equation}
where $\frac{6\varkappa^2 + 1}{\varkappa \sqrt{12\varkappa^2 + 1}} \leq \frac{7}{\sqrt{13}}$ since $\varkappa \geq 1$. Absorbing 
this $\mathcal{O}(1)$ factor into the constant $c_2$ in 
\eqref{eq:err_bound_explicit} yields the assertion of 
Theorem~\ref{thm:minimizer_error},
\[
\mathrm{d}_2(\bm{\gamma}_\star, \bm{\tau})
\le
c_2 \sqrt{K}\, \sqrt{\frac{E_g}{E_{g'}}}\,
\frac{\|\bm R\|}{T E_g\, r_{\min}^2(\bm X)}.
\]
This completes the proof of Theorem~\ref{thm:minimizer_error} and Theorem~\ref{thm:gd_convergence}.

\subsection{Proof of Theorem~\ref{thm:mainalt}}
\label{subsec:proof_mainalt}
To begin with the proof, we first recall that $\bm{\gamma}_\star$ is defined as the image of the inverse map introduced in \eqref{eq:Psi_def}. Our aim is to bound the perturbation in $\mathrm{d}_2(\bm{\gamma}_\star,\bm{\tau})$ using the local Lipschitz analysis of $\bm{\psi}$. By definition, the inverse map $\bm{\psi}$ is \textit{locally Lipschitz continuous} at $\bm{z} = \bm{0} \in \mathbb{C}^{NL\times 1}$ if there exists a neighborhood $\mathcal{N}_{\bm{Z}}$ around $\bm{z} = \bm{0}$ and a constant $L_{\bm{\psi},\mathcal{N}} > 0$ such that
\begin{equation}
\label{eq:local_Lip2}
\frac{\norm{\bm{\psi}(\bm{z}_1)- \bm{\psi}(\bm{z}_2)}_2}{\norm{\bm{z}_1- \bm{z}_2}_\mathrm{2}} \leq L_{\bm{\psi},\mathcal{N}}, \quad \forall \bm{z}_1, \bm{z}_2 \in \mathcal{N}_{\bm{Z}}: \bm{z}_1 \neq \bm{z}_2. 
\end{equation}
Here, the local Lipschitz constant $L_{\bm{\psi},\mathcal{N}}$ is characterized via the spectral norm of the Jacobian matrix of $\bm{\psi}$ as
\begin{equation*}
L_{\bm{\psi},\mathcal{N}} = \sup_{\bm{z} \in \mathcal{N}} \norm{\nabla_{\bm{z}}\bm{\psi}(\bm{z})}
\end{equation*}
where $\nabla_{\bm{z}}\bm{\psi}(\bm{z})$ is defined in \eqref{eq:jacobian}. 
Then, by utilizing the local Lipschitz property introduced in \eqref{eq:local_Lip2}, one can obtain the following upper bound on $\mathrm{d}_{2}(\bm{\gamma}_\star,\bm{\tau})$ as
\begin{align}
\label{eq:lip_cond_ub}
\mathrm{d}_{2}(\bm{\gamma}_\star,\bm{\tau}) \leq \norm{\bm{\gamma}_\star-\bm{\tau}}_2 
\leq
\sup_{\bm{z} \in \mathcal{N}_{\bm{Z}}} \norm{\nabla_{\bm{z}}\bm{\psi}(\bm{z})} \cdot
\norm{\bm{z}}_\mathrm{2}, \quad \forall \bm{z} \in \mathcal{N}_{\bm{Z}}, 
\end{align}
where 
$\mathcal{N}_{\bm{Z}} = \{\mathrm{vec}(\bm{Z}) :\;\norm{\bm{Z}} \, \,\text{satisfies}\, 
\eqref{eq:cond_on_R1} \text{ and } \eqref{eq:cond_on_Z_in_lemma}\}$. Now, it only remains to derive an upper-bound on $\norm{\nabla_{\bm{z}}\bm{\psi}(\bm{z})}$. 

Thus, we substitute $\bm{\gamma} = \bm{\gamma}_\star$ and utilize the properties of the spectral norm in \eqref{eq:jacobian} and obtain the following upper-bound
\begin{equation}
\label{eq:jacobian_bnd}
\norm{\nabla_{\bm{z}} \bm{\psi}(\bm{z})} 
\leq  \frac{\norm{\nabla_{\bm{z}}\nabla_{\bm{\gamma}} \bm{\ell}({\bm{\gamma}_\star} ;\bm{Z}) }}{\sigma_{\min}(\nabla^2_{\bm{\gamma}} \bm{\ell}({\bm{\gamma}_\star}) )}.
\end{equation}
In the right-hand side of \eqref{eq:jacobian_bnd}, we identify that the lower-bound on $\sigma_{\min}\left(\nabla^2_{\bm{\gamma}} \bm{\ell}(\bm{\gamma}_\star)\right)$ is already computed in Lemma~\ref{lem:local_geometry} with $\bm{\gamma}_\star$ substituted by $\bm{\gamma}$. Next, we present the upper-bound on $\norm{\nabla_{\bm{z}}\nabla_{\bm{\gamma}} \bm{\ell}(\bm{\gamma}_\star;\bm{Z}) }$ in the following lemma.  
\begin{lemma}
\label{lem:spectral_norm_grad} Let $\delta 
:= 
\mathrm{d}_{\infty}(\bm{\tau},\bm{\gamma}) $ and 
suppose $\delta$ satisfies \eqref{eq:neighborohood}. Then, the following result holds
\begin{equation}
\label{eq:lemma_spectral_norm_jac_z}
\norm{\nabla_{\bm{z}}\nabla_{\bm{\gamma}} \bm{\ell}(\bm{\gamma}_\star;\bm{Z})}  
  \leq c\, \bigg(  \underbrace{ \sqrt{T\, E_{g'}}  \cdot \norm{\bm{X}}}_{\psi_1} +   \underbrace{\sqrt{T\, E_{g'}}  \cdot \norm{\bm{X}} \cdot \delta \cdot \sqrt{\frac{E_{g'}}{E_{g}}}}_{\psi_2}  + \underbrace{\sqrt{\frac{E_{g'}}{E_{g}}}  \cdot \norm{\bm{Z}}}_{\psi_3}\bigg).   
\end{equation}
for some small constant $c>0$. 
\end{lemma}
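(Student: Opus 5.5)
We start from the closed form in Lemma~\ref{lem:jacobian_iota}, evaluated at $\bm{\gamma}=\bm{\gamma}_\star$:
\[
\nabla_{\bm{z}}\nabla_{\bm{\gamma}} \bm{\ell} = -\tfrac{1}{2L}\bigl[\bm{Y}^\mathsf{T}\bm{A}_{\bm{\gamma}}^{\dagger\mathsf{T}}\ast \bm{F}_{\bm{\gamma}} + \bm{Y}^\mathsf{H}\bm{F}_{\bm{\gamma}}\ast \bm{A}_{\bm{\gamma}}^{\dagger\mathsf{T}}\bigr]^\mathsf{H},
\qquad \bm{F}_{\bm{\gamma}}=\bm{P}_{\bm{\gamma}}^\perp\bm{\Lambda}\bm{A}_{\bm{\gamma}}.
\]
By the triangle inequality it suffices to bound the two Khatri--Rao blocks separately. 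For a Khatri--Rao product we use the standard bound $\norm{\bm{B}\ast\bm{C}}\le \norm{\bm{B}}\,\norm{\bm{C}}_{\infty\to 2}\le\norm{\bm{B}}\,\norm{\bm{C}}$, recorded in the appendix on Khatri--Rao and Hadamard products. Each block is then controlled by products of $\norm{\bm{Y}^\mathsf{T}\bm{A}_{\bm{\gamma}}^{\dagger\mathsf{T}}}$ (or $\norm{\bm{Y}^\mathsf{H}\bm{F}_{\bm{\gamma}}}$) with $\norm{\bm{F}_{\bm{\gamma}}}$ (or $\norm{\bm{A}^\dagger_{\bm{\gamma}}}$). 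The normalization $1/(2L)$ is absorbed by the $1/L$ scaling already implicit in the Hessian of Lemma~\ref{lem:local_geometry}, so we track only the matrix factors up to absolute constants.

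\textbf{Substituting the model.} We substitute $\bm{Y}=\bm{A}_{\bm{\tau}}\bm{X}+\bm{Z}$ and split each block into a signal part and a noise part.

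\emph{First block.} Here $\bm{A}_{\bm{\gamma}}^\dagger\bm{Y}=\bm{A}_{\bm{\gamma}}^\dagger\bm{A}_{\bm{\tau}}\bm{X}+\bm{A}_{\bm{\gamma}}^\dagger\bm{Z}$.
\begin{itemize}
\item Lemma~\ref{cor:bounding_sing_val_phi_gamm_phi_tau} gives $\bm{A}_{\bm{\gamma}}^\mathsf{H}\bm{A}_{\bm{\tau}}\approx TE_g\bm{I}$, and Lemma~\ref{lem:ferreira_thm1} gives $\norm{\bm{A}_{\bm{\gamma}}^\dagger}\lesssim (TE_g)^{-1/2}$. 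Together these yield $\norm{\bm{A}_{\bm{\gamma}}^\dagger\bm{A}_{\bm{\tau}}}=O(1)$.
\item Lemma~\ref{lem:exsv_schur_comp} gives $\norm{\bm{F}_{\bm{\gamma}}}\lesssim\sqrt{TE_{g'}}$, with the $\eta_\pm$-type factors bounded by absolute constants inside the basin.
\end{itemize}
The signal part therefore contributes $\sqrt{TE_{g'}}\norm{\bm{X}}$, which is $\psi_1$. The noise part contributes $(TE_g)^{-1/2}\norm{\bm{Z}}\sqrt{TE_{g'}}=\sqrt{E_{g'}/E_g}\,\norm{\bm{Z}}$, which is $\psi_3$.

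\emph{Second block.} Here $\bm{F}_{\bm{\gamma}}^\mathsf{H}\bm{Y}$ contains $\bm{A}_{\bm{\gamma}}^\mathsf{H}\bm{\Lambda}^\mathsf{H}\bm{P}^\perp_{\bm{\gamma}}\bm{A}_{\bm{\tau}}\bm{X}$. We use the identity $\bm{P}^\perp_{\bm{\gamma}}\bm{A}_{\bm{\tau}}=\bm{P}^\perp_{\bm{\gamma}}(\bm{A}_{\bm{\tau}}-\bm{A}_{\bm{\gamma}})$ together with \eqref{eq:main_lem_res2}, which gives $\norm{\bm{P}^\perp_{\bm{\gamma}}\bm{A}_{\bm{\tau}}}\lesssim \delta\sqrt{TE_{g'}}$. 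This yields the bound $\norm{\bm{F}_{\bm{\gamma}}}\cdot\delta\sqrt{TE_{g'}}\cdot\norm{\bm{X}}\cdot\norm{\bm{A}_{\bm{\gamma}}^\dagger}\lesssim \sqrt{TE_{g'}}\,\norm{\bm{X}}\,\delta\sqrt{E_{g'}/E_g}$, which is exactly $\psi_2$. The noise part of this block is again $\norm{\bm{F}_{\bm{\gamma}}}\norm{\bm{Z}}\norm{\bm{A}_{\bm{\gamma}}^\dagger}$, i.e. a $\psi_3$ contribution.

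\textbf{Expected obstacle.} The main difficulty is making the basin constants uniform. The bounds from Lemmas~\ref{cor:bounding_sing_val_phi_gamm_phi_tau} and \ref{lem:exsv_schur_comp} depend on $(\Delta-2\delta)^{-1}$ and on the separation $\Delta_{\bm{\gamma}}\ge\Delta-2\delta$. To absorb them into the absolute constant $c$, we use $\delta\le\varrho\le\frac12(\Delta-\frac23\rho\kappa^2)$ exactly as in the bound on $\eta_\pm$ in Section~\ref{subsec:proof_gd}; this gives $\Delta-2\delta\ge\frac23\rho\kappa^2\ge\frac23\rho$.

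A secondary point is the term $\mathfrak{d}_\infty^2TE_{g'}$ in \eqref{eq:main_lem_res}. To keep $\sigma_{\min}(\bm{A}_{\bm{\gamma}}^\mathsf{H}\bm{A}_{\bm{\tau}})$ comparable to $TE_g$, we need it to be a small fraction of $TE_g$. This follows from the second term of $\varrho$ in \eqref{eq:basin_radius_cond}, since $\delta^2\le c_1^2 E_g/E_{g'}$. Collecting the three contributions and the constants yields \eqref{eq:lemma_spectral_norm_jac_z}.
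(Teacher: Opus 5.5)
Your proposal is correct and follows essentially the same route as the paper's proof. The Khatri--Rao norm bound and the triangle inequality reduce everything to $\|\bm{A}_{\bm{\gamma}}^\dagger\bm{Y}\|\,\|\bm{F}_{\bm{\gamma}}\|+\|\bm{Y}^\mathsf{H}\bm{F}_{\bm{\gamma}}\|\,\|\bm{A}_{\bm{\gamma}}^\dagger\|$; splitting $\bm{Y}=\bm{A}_{\bm{\tau}}\bm{X}+\bm{Z}$ and using $\bm{P}_{\bm{\gamma}}^\perp\bm{A}_{\bm{\tau}}=\bm{P}_{\bm{\gamma}}^\perp(\bm{A}_{\bm{\tau}}-\bm{A}_{\bm{\gamma}})$ then yields $\psi_1,\psi_2,\psi_3$, and the differences from the paper are cosmetic. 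You bound $\|\bm{A}_{\bm{\gamma}}^\dagger\bm{A}_{\bm{\tau}}\|=O(1)$ directly, whereas the paper expands it as $\bm{I}_K+\bm{A}_{\bm{\gamma}}^\dagger(\bm{A}_{\bm{\tau}}-\bm{A}_{\bm{\gamma}})$ and picks up an extra $\psi_2$-type term. The factor $1/(2L)$ can be dropped simply because $1/(2L)\le 1/2$, with no appeal to the Hessian scaling. Your ``secondary point'' on $\sigma_{\min}(\bm{A}_{\bm{\gamma}}^\mathsf{H}\bm{A}_{\bm{\tau}})$ is unnecessary, since only upper bounds enter.
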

The proof of Lemma~\ref{lem:spectral_norm_grad} is given in Appendix~\ref{secA5}.
\noindent \newline
Since $\bm{\gamma} \in \mathcal N(\bm{\tau},\varrho)$, the matching distance satisfies $\delta \le \varrho$. By choosing the constant $c_1$ sufficiently small in \eqref{eq:basin_radius_cond}, one can make $\psi_2$ dominated by the first term $\psi_1$ in \eqref{eq:lemma_spectral_norm_jac_z}. Similarly, 
choosing $c_5$ sufficiently small in \eqref{eq:cond_on_Z_in_lemma} also makes $\psi_3$ dominated by $\psi_1$. 
Hence, the expression in \eqref{eq:lemma_spectral_norm_jac_z} can be simplified as
\begin{align}
    \label{eq:simpl_cros_driv}
\norm{\nabla_{\bm{z}}\nabla_{\bm{\gamma}} \bm{\ell}(\bm{\gamma}_\star;\bm{Z})}  
 \lesssim \sqrt{T\, E_{g'}}  \cdot \norm{\bm{X}}. 
\end{align}
Next, we plug in \eqref{eq:min_sing_value_lemma} and \eqref{eq:simpl_cros_driv} into \eqref{eq:jacobian_bnd} to obtain
\begin{align}
\label{eq_of_lem:spectral_norm_jacob2_ub}
\norm{\nabla_{\bm{z}} \bm{\psi}(\bm{z})} 
& \lesssim 
\frac{ \kappa^2}{  \sqrt{T\, E_{g'}}\!\cdot\! \norm{\bm{X}}}, 
\end{align}
and finally, plugging in \eqref{eq_of_lem:spectral_norm_jacob2_ub} into the right-hand side of \eqref{eq:lip_cond_ub} gives the desired assertion in \eqref{eq:theorem_main_result}. This concludes the proof.

\section{Discussion}\label{sec6 discussion}
We study multi-snapshot spike deconvolution under the variable projection formulation (VarProSD). The objective is nonconvex, so gradient descent recovers the spike locations only when initialized within a basin of convexity around the ground truth. Prior work does not quantify the size of
this basin. In this work, we characterize it explicitly through a radius $\varrho$ expressed in interpretable problem parameters: the spectral properties of the PSF, the minimum spike separation, the sampling bandwidth, and the
dynamic range of the amplitudes. To our knowledge, this is the first quantifiable local convergence guarantee for gradient descent in multi-snapshot spike deconvolution under an arbitrary PSF.

The unifying technical ingredient in our analysis is the Beurling--Selberg extremal approximation. We use it to bound the conditioning of the
structured generalized Vandermonde matrices that arise in the VarProSD landscape. These conditioning bounds support both parts of our analysis. They control the curvature of the objective, which yields the basin of
convexity. They also control the Hessian and cross-derivative terms that govern the Jacobian of the inverse map in the adversarial stability
analysis. The bounds are independent of the number of spikes and hold for any PSF of bounded variation. Our convergence and estimation-error
guarantees therefore depend explicitly on interpretable problem parameters.

Our analysis also emphasizes the role of the sampling bandwidth $B = N/T$. Since the spectral quantities that determine the basin radius are themselves functions of $B$, our characterization leads to a principled, PSF-driven criterion for selecting the bandwidth. Our experiments show
that this criterion closely tracks the empirically optimal choice and reveals that a larger bandwidth is not always advantageous.

One open question concerns the gap between the basin of convexity and the larger region from which gradient descent converges in practice. The radius $\varrho$ follows from a sufficient condition for strong convexity, and is therefore conservative. Gradient descent may reach $\bm{\gamma}_\star$ from initializations outside this region. Characterizing the larger region would require guarantees under weaker assumptions than strong convexity. We leave this to future work.

A further direction is to extend the present convergence analysis to the Gauss--Newton method. Gauss--Newton is the method of choice for variable
projection in practice. The closed-form Jacobian Gramian and Hessian, together with the Beurling--Selberg conditioning bounds derived in this work, can be used to analyze the local convergence of Gauss--Newton in this setting.

\appendix
This appendix collects the technical material supporting the main results. Appendices~\ref{secA1} and~\ref{secA2} recall prior-art results on the conditioning of structured matrices and on the Khatri-Rao and Hadamard products that are repeatedly invoked in our proofs. Appendix~\ref{secA3} collects the closed-form derivative identities for the VarProSD objective (gradient, Jacobian Gramian, Hessian, and cross-derivative with respect to the noise). Appendices~\ref{secA4} and~\ref{secA5} present the proofs of Lemma~\ref{lem:local_geometry} and Lemma~\ref{lem:spectral_norm_grad}, which underpin the local geometry of the basin and the adversarial stability analysis, respectively.

\section{Prior Art on the Conditioning of Structured Matrices}
\label{secA1}

The following lemmas are presented as paraphrased versions or special cases of key results from the literature. They are included here for completeness and to ensure the paper is self-contained.

The bandlimited approximation of functions of bounded variation has been widely studied in previous works \cite{ferreira2023second,ferreira2023conditionNumber,vaaler2023NumberLattice}. These results are of crucial importance to our analysis as they provide explicit constructions and bounds for bandlimited approximations that majorize or minorize the power spectral density $P_g$.
For completeness, we recall the main result from the literature in the following lemma. It includes explicit constructions of bandlimited majorants and minorants, along with bounds on their residuals in terms of the total variation and the bandwidth parameter $\beta$.

\begin{lemma}[{A special case of \cite[Theorem~3]{ferreira2023second}}]
\label{lem:ferreira_thm_3}
Suppose that $P_g$ is a function of bounded variation, and that $P_g$ and $f \mapsto 4 \pi^2 f^2 P_g(f)$ are absolutely integrable.
Let $\beta > 0$.
Then there exist $\beta$-bandlimited approximations of $P_g$ by a majorant $C_{+}$ and a minorant $C_{-}$ that satisfy
\begin{align*}
    C_+(f) \geq P_g(f), \quad C_-(f) \leq P_g(f), \quad \forall f \in \mathbb{R},
\end{align*}
and $ \widehat{C}_+(u) = \widehat{C}_-(u) = 0$ for $|u| \geq \beta$. Furthermore,
\begin{align*}
    \int_{-\infty}^\infty \left( C_+(f)-P_g(f) \right) df
    &= \int_{-\infty}^\infty \left( P_g(f) - C_-(f) \right) df
    \leq \tfrac{2}{3} V(P_g) \beta^{-1}, \\
    \int_{-\infty}^\infty 4 \pi^2 f^2\left( C_+(f)-P_g(f) \right) df
    &= \int_{-\infty}^\infty 4 \pi^2 f^2\left( P_g(f) - C_-(f) \right) df
    \leq \tfrac{1}{2} V(P_{g'}) \beta^{-1}.
\end{align*}
In addition, $\widehat{C}_+$ and $\widehat{C}_-$ are twice differentiable.
Furthermore, if $P_g$ is even symmetric, then $C_+$ and $C_-$ are also even symmetric.
\end{lemma}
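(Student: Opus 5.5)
The proof specializes the Beurling--Selberg construction of \cite[Theorem~3]{ferreira2023second} to $P_g$. Since $P_g=|\widehat g|^2\mathbbm{1}_{J_N}$ is compactly supported and of bounded variation, it is represented through its distributional derivative, a finite signed measure $\mu=\mathrm{d}P_g$ with $V(\mu)=V(P_g)$:
\[
P_g(f)=\int_{-\infty}^{\infty}\mathbbm{1}_{\{x\le f\}}\,\mathrm{d}\mu(x).
\]
Take the Jordan decomposition $\mu=\mu^+-\mu^-$. Let $K_\beta^{\pm}$ be $\beta$-bandlimited majorant/minorant kernels of the Heaviside function $f\mapsto\mathbbm{1}_{\{f\ge 0\}}$, so that $K_\beta^-\le \mathbbm{1}_{\{\cdot\ge 0\}}\le K_\beta^+$ and $\widehat{K_\beta^{\pm}}$ is supported in $[-\beta,\beta]$. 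Set
\[
C_\pm(f)=\int K_\beta^{\pm}(f-x)\,\mathrm{d}\mu^+(x)-\int K_\beta^{\mp}(f-x)\,\mathrm{d}\mu^-(x).
\]
Using the majorant against $\mu^+$ and the minorant against $\mu^-$ makes $C_+\ge P_g$ pointwise, and symmetrically $C_-\le P_g$. Both are bandlimited to $[-\beta,\beta]$, because convolution with a finite measure preserves the Fourier support. Integrating $C_+-P_g$ and applying Fubini gives
\[
\int (C_+-P_g)\le \Big(\max_\pm \|K_\beta^\pm-\mathbbm{1}_{\{\cdot\ge0\}}\|_{L_1}\Big)\,V(P_g),
\]
so the zeroth-moment bound reduces to the kernel estimate $\|K^\pm_\beta-\mathbbm{1}\|_{L_1}\le \frac{2}{3}\beta^{-1}$.

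The second-moment bound is the main obstacle. The classical Beurling--Selberg kernels differ from the Heaviside function only by $O(f^{-2})$, so $f^2(C_+-P_g)$ would not even be integrable. I would therefore use the modified kernels of \cite{ferreira2023second}, whose error decays fast enough that $\int f^2|K^\pm_\beta-\mathbbm{1}|<\infty$. Then I would exploit the identity $4\pi^2f^2P_g=P_{g'}$ on $J_N$. The idea is to bound $\int 4\pi^2 f^2(C_+-P_g)$ by redoing the Fubini computation with the weight $4\pi^2 f^2$. One splits $f^2=(f-x)^2+2x(f-x)+x^2$. The $x^2$ term gives $\int 4\pi^2x^2\,\mathrm{d}|\mu|$, which should be absorbed into $V(P_{g'})$ after integrating by parts against $\mathrm{d}(4\pi^2x^2P_g)$. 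The cross and pure-kernel moment terms are controlled by the kernel moment bounds of \cite{ferreira2023second}, which yield the constant $\frac12\beta^{-1}$. If this bookkeeping does not close with the stated constant, I would instead construct a single pair of approximants directly for $P_{g'}$ and divide by $4\pi^2 f^2$ away from the origin. I consider that fallback riskier, because of the singularity at $f=0$.

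The equality of the majorant and minorant residuals follows if the kernel pair is chosen with $K^+_\beta-\mathbbm{1}$ and $\mathbbm{1}-K^-_\beta$ having equal integrals and equal second moments, as in the Selberg construction ($K^+-K^-$ is a Fejér-type kernel). Evenness is obtained by replacing $C_\pm(f)$ with $\tfrac12\bigl(C_\pm(f)+C_\pm(-f)\bigr)$ when $P_g$ is even; this preserves majorization, bandlimitedness and both integrals. Finally, $\widehat{C}_\pm$ is twice differentiable because $C_\pm$ and $f^2C_\pm$ are absolutely integrable, being the sum of $P_g$ (respectively $f^2P_g$) and an integrable residual, so the usual differentiation under the integral sign applies.
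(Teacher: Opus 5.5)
\paragraph{The gap: the second-moment bound.} Your route cannot close it. The step ``absorb $\int 4\pi^2x^2\,\mathrm{d}|\mu|$ into $V(P_{g'})$ by integrating by parts'' is false. On $J_N$ one has $\mathrm{d}P_{g'}=4\pi^2x^2\,\mathrm{d}\mu+8\pi^2xP_g(x)\,\mathrm{d}x$, which only yields $\int4\pi^2x^2\,\mathrm{d}|\mu|\le V(P_{g'})+8\pi^2\int|x|P_g(x)\,\mathrm{d}x$, and the last term is not controlled by $V(P_{g'})$. Take $P_g(f)=\min(1,f^{-2})\,\mathbbm{1}_{[-W,W]}(f)$ with $W>1$. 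The two pieces of $\mathrm{d}P_{g'}$ cancel exactly on $1<|x|<W$, so $V(P_g)=2$ and $V(P_{g'})=16\pi^2$ for every $W$, while $\int4\pi^2x^2\,\mathrm{d}|\mu|=8\pi^2(1+2\ln W)$.

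The cross terms of your splitting cannot compensate. The residuals $r_+=K^+_\beta-H$ and $r_-=H-K^-_\beta$ are nonnegative with masses $m_0^\pm>0$ (at least $1/(2\beta)$ by Beurling's extremal bound). Since $m_1^2\le m_0m_2$, you get $\int(x+u)^2r_\pm(u)\,\mathrm{d}u\ge\tfrac12m_0^\pm x^2-m_2^\pm$, hence
\[
\int4\pi^2f^2\bigl(C_+-P_g\bigr)\,\mathrm{d}f\;\ge\;\tfrac12\min_\pm m_0^\pm\int4\pi^2x^2\,\mathrm{d}|\mu|(x)\;-\;4\pi^2\max_\pm m_2^\pm\,V(P_g).
\]
In the example this grows like $\ln W$, whereas $\tfrac12V(P_{g'})\beta^{-1}=8\pi^2\beta^{-1}$ is fixed. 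So no choice of ``modified kernels'' makes your bookkeeping deliver the stated inequality. What a construction of your type controls is $\int4\pi^2x^2\,\mathrm{d}|\mathrm{d}P_g|(x)$, which agrees with $V(P_{g'})$ only up to $8\pi^2\int|x|P_g$. Your fallback does not help either: dividing a bandlimited majorant $D_+$ of $P_{g'}$ by $4\pi^2f^2$ stays bandlimited only if $D_+$ has a double zero at the origin, and nothing in it controls the zeroth-moment residual by $V(P_g)$.

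\paragraph{Comparison with the paper.} The paper proves none of the moment bounds. It quotes the construction $C_\pm=P_g\ast J_\beta\pm(2\beta)^{-1}\int K_\beta(f-u)\,|\mathrm{d}P_g(u)|$, with all its estimates, from \cite[Theorem~3]{ferreira2023second}, and itself only argues evenness and twice-differentiability of $\widehat C_\pm$. Your construction is the same object. Write $K^\pm_\beta=\tfrac12(K^+_\beta+K^-_\beta)\pm\tfrac12(K^+_\beta-K^-_\beta)$ and integrate by parts against $\mu$; then your $C_\pm$ equals $P_g\ast\tfrac12(K^+_\beta+K^-_\beta)'\pm\int\tfrac12(K^+_\beta-K^-_\beta)(f-x)\,\mathrm{d}|\mu|(x)$.

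The same completing-the-square argument therefore applies to the quoted formula. Its $|\mathrm{d}P_g|$-term has positive mass, and its $J_\beta$-term contributes only $E_g\int4\pi^2v^2J_\beta(v)\,\mathrm{d}v$, which is bounded in the example. So the $V(P_{g'})$ form of the second-moment bound cannot be re-derived along these lines; the paper simply asserts it by citation. A self-contained version of your argument should state that bound with $\int4\pi^2x^2\,\mathrm{d}|\mathrm{d}P_g|(x)$ in place of $V(P_{g'})$.

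The rest of your proposal is sound:
\begin{itemize}
\item the zeroth-moment estimate;
\item the symmetrization $\tfrac12\bigl(C_\pm(f)+C_\pm(-f)\bigr)$, a cleaner route to evenness than the paper's appeal to the structure of the construction;
\item differentiation under the integral once $f^2(C_\pm-P_g)\in L_1$.
\end{itemize}

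One small correction concerns the equality of the two second-moment residuals. Their difference is $4\pi^2\int(q_+-q_-)\,\mathrm{d}\mu$ with $q_\pm(x)=\int(x+u)^2r_\pm(u)\,\mathrm{d}u$. Since $\mu(\mathbb{R})=0$ and $\int x\,\mathrm{d}\mu=-E_g$, it is the first moments of $r_\pm$ (and their masses, unless $P_g$ is even) that must coincide, not their second moments.
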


\begin{proof}
We begin with the explicit construction of $C_+$ and $C_-$ from \cite[Theorem~3]{ferreira2023second}:
\begin{subequations}
\begin{align*}
    C_{+}(f) &= (P_g \ast J_\beta)(f) + (2\beta)^{-1} \int_{-\infty}^{\infty} K_\beta (f-u)\, |dP_g(u)|, \\
    C_{-}(f) &= (P_g \ast J_\beta)(f) - (2\beta)^{-1} \int_{-\infty}^{\infty} K_\beta (f-u)\, |dP_g(u)|,
\end{align*}
\end{subequations}
where $J_\beta$ and $K_\beta$ are even, bandlimited auxiliary functions defined in \cite{ferreira2023second}. These constructions satisfy the properties listed in Lemma~\ref{lem:ferreira_thm_3}, except for the even symmetry. Now consider the special case where $g$ is real. In this case, the magnitude of the Fourier transform is even, implying that $P_g$ is even symmetric. Furthermore, since $|dP_g|$ represents the absolute derivative measure of an even function, it is also symmetric. Therefore, by construction, $C_+$ and $C_-$ inherit this even symmetry.

Moreover, Lemma~\ref{lem:ferreira_thm_3} guarantees that $C_+(f)-P_g(f)$ and $P_g(f)-C_-(f)$ decay sufficiently fast at high frequencies. Thus, by the Riemann-Lebesgue lemma, the Fourier transforms of $C_+$ and $C_-$ are twice differentiable.
\end{proof}

The following lemma provides bounds on the minimum and maximum singular values of $\bm{A}_{\bm{\tau}} = \bm{G}\bm{\Phi}_{\bm{\tau}}$.

\begin{lemma}[{A paraphrase of \cite[Theorem~1]{ferreira2023conditionNumber}}]
\label{lem:ferreira_thm1}
Using the definitions of $\bm{\Phi}_{\bm{\tau}}$ and $\bm{G}$ in \eqref{eq:Phi_tau_def} and \eqref{eq:G_def}, and of the minimum separation $\Delta$ in \eqref{eq:min_sep}, the following holds for all $\bm{\tau} \in \mathbb{T}^K$:
\[
 \sigma_{\max} \left( \bm{G}\bm{\Phi}_{\bm{\tau}} \right) \leq  \sqrt{T E_{g} \left(1 + \tfrac{1}{2} \rho_g \Delta^{-1} \right)}, \quad
 \sigma_{\min} \left(\bm{G}\bm{\Phi}_{\bm{\tau}} \right) \geq \sqrt{T E_{g} \left(1 - \tfrac{1}{2} \rho_g \Delta^{-1} \right)},
\]
where $E_g$ and $\rho_g$ are defined in \eqref{eq:def_energy} and \eqref{eq:def_rho_g}.
\end{lemma}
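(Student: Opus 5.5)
The plan is to prove both bounds at once by sandwiching the quadratic form $\norm{\bm{G}\bm{\Phi}_{\bm{\tau}}\mathbf{u}}_2^2$, for an arbitrary $\mathbf{u}\in\mathbb{C}^K$, between bandlimited majorants and minorants of $P_g$. This follows the template of the proof of Lemma~\ref{lem:bounding_sing_val_phi_gamm_phi_tau}, specialized to $\bm{\gamma}=\bm{\tau}$, so that $\mathfrak{d}_\infty=0$ and the Taylor-expansion term disappears. The only change is a sharper accounting of the residual mass of the extremal functions. That accounting is what produces the constant $\tfrac12$ rather than the $\tfrac23$ that Lemma~\ref{lem:ferreira_thm_3} would give.

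\textbf{Step 1: rewrite the quadratic form as a lattice sum.} As in \eqref{eq:GPhi_tau_sq_step1}, I would write
\[
\norm{\bm{G}\bm{\Phi}_{\bm{\tau}}\mathbf{u}}_2^2=\sum_{i}P_g(f_i)\Bigl|\sum_j u_j e^{-\mathsf{j}2\pi\tau_j f_i}\Bigr|^2 .
\]
The nodes $f_i$ lie on a shifted lattice $\tfrac{1}{T}(\mathbb{Z}+s)$ with $s\in\{0,\tfrac12\}$, and $P_g$ vanishes at every other lattice point because of the truncation to $J_N$. So this is a full lattice sum. The shift $s$ only multiplies the Poisson-dual terms by unimodular phases, and those phases disappear once the off-diagonal terms are shown to vanish.

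\textbf{Step 2: build extremal functions with residual mass $V(P_g)/(2\beta)$.} For a bandwidth $\beta>0$, I would use Vaaler's construction recalled in the proof of Lemma~\ref{lem:ferreira_thm_3}:
\[
C_\pm=P_g\ast J_\beta\pm(2\beta)^{-1}\int K_\beta(\cdot-u)\,|dP_g(u)| .
\]
Majorization and minorization come from Vaaler's pointwise inequality $|{\rm sgn}\ast J_\beta-{\rm sgn}|\le(2\beta)^{-1}K_\beta$, integrated against $dP_g$. Both functions are $\beta$-bandlimited. The key computation is their integral. Since $\widehat{J_\beta}(0)=1$ and $\int K_\beta=1$ (Fejér normalization), Fubini gives
\[
\int (C_+-P_g)=\int(P_g-C_-)=\frac{V(P_g)}{2\beta}.
\]
Hence $\widehat C_\pm(0)=E_g\pm V(P_g)/(2\beta)$. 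I expect this step to be the main obstacle. One has to verify the normalizations of $J_\beta$ and $K_\beta$ and the sign inequality at the scale $\beta$, rather than citing the cruder $\tfrac23$ bound of Lemma~\ref{lem:ferreira_thm_3}. If the normalization I recall is off, I would fall back on Vaaler's theorem directly: for a BV function there are exponential-type majorants and minorants with $L_1$ error at most $V/(2\beta)$ when the Fourier support is $[-\beta,\beta]$.

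\textbf{Step 3: Poisson summation with $\beta=\Delta$.} Replacing $P_g$ by $C_+$ (respectively $C_-$) and expanding the modulus, Poisson summation as in \eqref{eq:expand_quadratic_form} gives
\[
T\sum_{j,j'}u_j\bar u_{j'}\sum_{l}(\text{phase})\,\widehat C_\pm(lT-\tau_j+\tau_{j'}).
\]
Unless $j=j'$ and $l=0$, we have $|lT-\tau_j+\tau_{j'}|\ge\Delta=\beta$, and $\widehat C_\pm$ vanishes for $|u|\ge\beta$. For $K=1$ the convention $\Delta=T$ handles $l\neq0$. Only the diagonal survives, so
\[
T\,E_g\Bigl(1-\tfrac{1}{2}\rho_g\Delta^{-1}\Bigr)\norm{\mathbf{u}}_2^2\le\norm{\bm{G}\bm{\Phi}_{\bm{\tau}}\mathbf{u}}_2^2\le T\,E_g\Bigl(1+\tfrac{1}{2}\rho_g\Delta^{-1}\Bigr)\norm{\mathbf{u}}_2^2 .
\]
Taking the supremum and infimum over unit $\mathbf{u}$ and then square roots yields both claims. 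When $1-\tfrac12\rho_g\Delta^{-1}<0$, the lower bound is read as the trivial statement, or assumed away.
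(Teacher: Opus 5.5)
Your proof is correct and takes essentially the same route as the result being paraphrased. The paper gives no proof of its own beyond citing Theorem~1 of \cite{ferreira2023conditionNumber}, which rests on this Beurling--Selberg majorant/minorant plus Poisson-summation argument (the paper also uses the same template for Lemma~\ref{lem:bounding_sing_val_phi_gamm_phi_tau}), and your choice of Vaaler's original Fej\'er-kernel pair with $\beta=\Delta$ does give $\widehat C_\pm(0)=E_g\pm V(P_g)/(2\Delta)$, since the extra smoothness of $\widehat C_\pm$ supplied by Lemma~\ref{lem:ferreira_thm_3} is not needed here. One bookkeeping slip: Vaaler's pointwise inequality reads $|\mathrm{sgn}\ast J_\beta-\mathrm{sgn}|\le \beta^{-1}K_\beta$ (with $K_\beta=\beta K(\beta\,\cdot)$), and the factor $(2\beta)^{-1}$ in $C_\pm$ comes from writing $P_g=\tfrac12\int\mathrm{sgn}(\cdot-u)\,dP_g(u)$, so your construction and final constant are unaffected.
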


The following lemma is obtained as a corollary of Lemma~\ref{lem:ferreira_thm1}.

\begin{lemma}
\label{lem:cor_of_ferreira_thm1}
Let $\bm{\Lambda} \in \mathbb{C}^{N \times N}$ be the diagonal matrix satisfying $[\bm{\Lambda}]_{i,i} = -\mathsf{j} 2\pi f_i$ for $i \in [N]$. Then, for all $\bm{\tau} \in \mathbb{T}^K$,
\[
\norm{\bm{\Lambda}\bm{G}\bm{\Phi}_{\bm{\tau}}} \leq \sqrt{T E_{g'} \left(1 + \tfrac{1}{2} \rho_{g'} \Delta^{-1} \right)}
\]
and
\[
 \norm{\bm{\Lambda}^2\bm{G}\bm{\Phi}_{\bm{\tau}}} \leq \sqrt{T E_{g''} \left(1 + \tfrac{1}{2} \rho_{g''} \Delta^{-1} \right)},
\]
where $E_{g'}$, $E_{g''}$ are defined in \eqref{eq:def_energy} and $\rho_{g'}$, $\rho_{g''}$ are defined in \eqref{eq:def_rho_g}.
\end{lemma}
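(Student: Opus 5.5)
The plan is to reduce both inequalities to the quadratic-form representation already used in the proof of Lemma~\ref{lem:bounding_sing_val_phi_gamm_phi_tau}. The key observation is that $\bm{\Lambda}$ is diagonal with entries $-\mathsf{j}2\pi f_i$. Hence $\bm{\Lambda}\bm{G}$ is itself a diagonal matrix, with entries $-\mathsf{j}2\pi f_i\,\widehat{g}(f_i)$. Since $\widehat{g'}(f) = \mathsf{j}2\pi f\,\widehat{g}(f)$, this gives $|[\bm{\Lambda}\bm{G}]_{i,i}|^2 = 4\pi^2 f_i^2 |\widehat{g}(f_i)|^2 = |\widehat{g'}(f_i)|^2$. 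In other words, $\bm{\Lambda}\bm{G}\bm{\Phi}_{\bm{\tau}}$ coincides with $\bm{G}'\bm{\Phi}_{\bm{\tau}}$ up to a unimodular diagonal factor $\bm{D}$, where $\bm{G}' := \mathrm{diag}(\widehat{g'}(f_1),\dots,\widehat{g'}(f_N))$. Similarly, $\bm{\Lambda}^2\bm{G} = \bm{D}'\bm{G}''$, where $\bm{G}''$ is built from $\widehat{g''}(f_i) = (\mathsf{j}2\pi f_i)^2\widehat{g}(f_i)$ and $\bm{D}'$ is again a unimodular diagonal matrix (here it is just a sign).

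The second step is to use that unimodular diagonal matrices are unitary. This gives $\norm{\bm{\Lambda}\bm{G}\bm{\Phi}_{\bm{\tau}}} = \norm{\bm{G}'\bm{\Phi}_{\bm{\tau}}}$ and $\norm{\bm{\Lambda}^2\bm{G}\bm{\Phi}_{\bm{\tau}}} = \norm{\bm{G}''\bm{\Phi}_{\bm{\tau}}}$. More concretely, for any $\mathbf{u}$ I will write
\[
\norm{\bm{\Lambda}\bm{G}\bm{\Phi}_{\bm{\tau}}\mathbf{u}}_2^2 = \sum_{l} P_{g'}\!\left(\tfrac{l}{T}\right)\Big|\sum_j u_j e^{\mathsf{j}2\pi\tau_j l/T}\Big|^2,
\]
where $P_{g'}$ is the truncated PSD of $g'$ from \eqref{eq:def_P_g}. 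This quadratic form is exactly the one controlled in Lemma~\ref{lem:ferreira_thm1}, with $P_g$ replaced by $P_{g'}$.

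The third step is to invoke Lemma~\ref{lem:ferreira_thm1} with the PSF $g$ replaced by $g'$, and then by $g''$. That lemma depends on the PSF only through its truncated PSD, its $L_1$ norm $E$ and its normalized variation $\rho$. Substituting gives $\sigma_{\max}(\bm{G}'\bm{\Phi}_{\bm{\tau}}) \le \sqrt{T E_{g'}(1+\tfrac12\rho_{g'}\Delta^{-1})}$, and the analogous bound with $E_{g''}$ and $\rho_{g''}$. If one prefers not to treat Lemma~\ref{lem:ferreira_thm1} as a black box, the same conclusion follows by rerunning the majorant argument. One takes the $\Delta$-bandlimited Beurling--Selberg majorant $C_+ \ge P_{g'}$ from Lemma~\ref{lem:ferreira_thm_3}, applies Poisson summation, and uses separation to kill all off-diagonal terms. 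This leaves $T\widehat{C}_+(0)\norm{\mathbf{u}}^2$, and $\widehat{C}_+(0)$ is at most $E_{g'}$ plus the majorant excess.

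The main obstacle is checking that the hypotheses carry over to $g'$ and $g''$, namely that $P_{g'}$ and $P_{g''}$ are integrable and of bounded variation. Because of the truncation to $J_N$, both are compactly supported. Each is the product of $|\widehat{g}|^2$, which has bounded variation, with the polynomial $(2\pi f)^2$ or $(2\pi f)^4$, which is smooth on a compact interval. Such products have bounded variation, and the jumps at $\pm\frac{N-1}{2T}$ contribute only finitely to the variation. So the quantities $\rho_{g'}$ and $\rho_{g''}$ are finite and the bounds follow. I would also check that the constant $\tfrac12$ (rather than $\tfrac23$) in Lemma~\ref{lem:ferreira_thm1} is inherited unchanged, which it is, since only the PSD changes.
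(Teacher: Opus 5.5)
Your proposal is correct and is essentially the paper's argument: the paper states this lemma as an immediate corollary of Lemma~\ref{lem:ferreira_thm1}, which is exactly your observation that $\bm{\Lambda}\bm{G}=-\mathrm{diag}(\widehat{g'}(f_i))$ and $\bm{\Lambda}^2\bm{G}=\mathrm{diag}(\widehat{g''}(f_i))$. Lemma~\ref{lem:ferreira_thm1} applied to $g'$ and $g''$ then gives the two bounds. One caveat on your optional fallback: rerunning the majorant argument with Lemma~\ref{lem:ferreira_thm_3} as stated here gives only the constant $\tfrac{2}{3}$ instead of $\tfrac{1}{2}$, because its $L_1$-excess bound is $\tfrac{2}{3}V(P_{g'})\Delta^{-1}$, so the stated $\tfrac{1}{2}$ relies on using Lemma~\ref{lem:ferreira_thm1} as a black box.
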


The next lemma is a special case of Lemma~\ref{lem:ferreira_thm1}.

\begin{lemma}
\label{lem:G_frob}
Consider the diagonal matrix $\bm{G} \in \mathbb{C}^{N \times N}$ as defined in \eqref{eq:G_def}. Then
\[
\norm{\bm{G}}_F \leq \sqrt{T E_g \left(1+\tfrac{1}{2}\rho_g \right)}.
\]
\end{lemma}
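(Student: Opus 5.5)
Since $\bm{G}$ is diagonal, $\norm{\bm{G}}_F^2 = \sum_{i=1}^N |\widehat{g}(f_i)|^2$. The plan is to recognize this sum as a quantity already controlled by Lemma~\ref{lem:ferreira_thm1} in the degenerate case $K=1$. Take a single spike, $K=1$, at an arbitrary location $\tau_1$. Then $\bm{\Phi}_{\bm{\tau}}$ is the single column $[e^{-\mathsf{j}2\pi \tau_1 f_i}]_{i}$, whose entries all have unit modulus. Hence $\bm{G}\bm{\Phi}_{\bm{\tau}}$ is a vector with
\[
\norm{\bm{G}\bm{\Phi}_{\bm{\tau}}}_2^2 = \sum_{i=1}^N |\widehat{g}(f_i)|^2 = \norm{\bm{G}}_F^2 ,
\]
and for a one-column matrix $\sigma_{\max}$ is exactly its Euclidean norm.

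Next I would invoke Lemma~\ref{lem:ferreira_thm1} with $K=1$. By the convention \eqref{eq:min_sep}, the minimum separation in this case is $\Delta = T$. The lemma then gives
\[
\norm{\bm{G}}_F^2 = \sigma_{\max}^2(\bm{G}\bm{\Phi}_{\bm{\tau}}) \le T E_g\bigl(1 + \tfrac12 \rho_g T^{-1}\bigr).
\]

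The remaining step is to compare $\rho_g T^{-1}$ with $\rho_g$, which is where I expect the only friction. If $T \ge 1$, the bound is monotone in $\Delta^{-1}$, so $1 + \tfrac12\rho_g T^{-1} \le 1 + \tfrac12 \rho_g$ and the claim follows after taking square roots. This is the regime used throughout the numerical section, where $T = N/B \ge 1$. If instead one wants the statement for every $T$, the inequality only holds under the normalization $T \ge 1$, equivalently under reading $\Delta$ in units where the torus length is at least one; I would state that this normalization is implicit. Alternatively, I would rederive the bound directly via Poisson summation with a $T$-bandlimited Beurling--Selberg majorant from Lemma~\ref{lem:ferreira_thm_3}, which produces the additive term $\tfrac{2}{3}V(P_g)$ (up to constants) and allows the constant to be absorbed.

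Beyond this check of the $T\ge1$ normalization, the argument is a one-line specialization.
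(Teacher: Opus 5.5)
Your argument is the paper's own: it also writes $\norm{\bm{G}}_F^2=\sigma_{\max}^2(\bm{G}\bm{\Phi}_0)$ for a single spike and invokes Lemma~\ref{lem:ferreira_thm1} with $K=1$. Your caveat is real, and the paper's proof passes over it silently. With $K=1$ one has $\Delta=T$, so the lemma yields $T E_g+\tfrac12 V(P_g)$, and the stated form genuinely requires $T\ge 1$. For the ideal low-pass PSF, $E_g=(N-1)/T$ and $V(P_g)=2$, so $\norm{\bm{G}}_F^2=N$ exceeds $T E_g(1+\tfrac12\rho_g)=N-1+T$ whenever $T<1$. This is also why your alternative route through Lemma~\ref{lem:ferreira_thm_3} cannot remove the restriction.
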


\begin{proof}
Let $\mathbf{1}_{N} \in \mathbb{C}^N$ denote the all-ones vector. Since $\bm{G}$ is diagonal,
\[
\norm{\bm{G}}_F^2
= \norm{\bm{G} \mathbf{1}_{N}}^2 = \sigma_{\max}^2(\bm{G} \mathbf{1}_{N}) = \sigma_{\max}^2(\bm{G} \bm{\Phi}_0),
\]
where the last equality identifies the all-ones vector with the structured matrix representing a single pulse located at $0$. It follows from Lemma~\ref{lem:ferreira_thm1} that
\[
\sigma_{\max}^2(\bm{G} \bm{\Phi}_0) \leq T E_g \left(1+\tfrac{1}{2}\rho_g \right),
\]
which concludes the proof.
\end{proof}

\section{Properties of the Khatri-Rao and Hadamard Products}
\label{secA2}
This appendix collects standard properties of the Khatri-Rao and Hadamard products that are used in our theoretical analysis. 

In the following lemma, we first recall the spectral norm bound for the Hadamard product.

\begin{lemma}[{A special case of \cite[Theorem~1]{zhan1997inequalities}}]
\label{lem:zhan_hadamard}
Let $\bm{A}, \bm{B} \in \mathbb{C}^{m\times n}$. Then
\[
\sigma_{\max}(\bm{A} \odot \bm{B}) \leq \min\{r_1(\bm{A}),\, c_1(\bm{A})\} \cdot \sigma_{\max}(\bm{B}),
\]
where $r_1(\bm{A})$ and $c_1(\bm{A})$ denote the maximum Euclidean row and column lengths of $\bm{A}$, respectively.
\end{lemma}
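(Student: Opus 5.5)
This is Zhan's Hadamard inequality. The plan is to prove it directly, using only the Cauchy--Schwarz inequality and the operator norm of $\bm B$. By symmetry, taking conjugate transposes swaps rows and columns and leaves the spectral norm unchanged, since $(\bm A\odot\bm B)^\mathsf{H}=\bm A^\mathsf{H}\odot\bm B^\mathsf{H}$. It therefore suffices to prove $\sigma_{\max}(\bm A\odot\bm B)\le c_1(\bm A)\,\sigma_{\max}(\bm B)$. The row bound then follows by applying this to $\bm A^\mathsf{H},\bm B^\mathsf{H}$, and taking the minimum of the two bounds gives the claim.

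For the column bound, fix unit vectors $\mathbf{x}\in\mathbb{C}^n$ and $\mathbf{y}\in\mathbb{C}^m$ and write
\[
\mathbf{y}^\mathsf{H}(\bm A\odot\bm B)\mathbf{x}=\sum_{i,j}\bar y_i a_{ij} b_{ij} x_j .
\]
The key step is a factorization of the Hadamard product. For each column $j$, set $D_j=\mathrm{diag}(a_{1j},\dots,a_{mj})$; then the $j$th column of $\bm A\odot\bm B$ equals $D_j\bm B\mathbf{e}_j$. Equivalently, $\bm A\odot\bm B=\bm W^\mathsf{H}(\bm I_n\otimes\bm B)\bm V$ with suitable selector matrices $\bm W,\bm V$, which is the standard representation $\bm A\odot\bm B=\bm P^\mathsf{T}(\bm A\otimes\bm B)\bm Q$.

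The cleanest route is Schur's argument. First write $\bm A=\bm R^\mathsf{H}\bm S$, where $\bm R=\bm I_m$ and $\bm S=\bm A$. Let $\mathbf r_i$ denote the columns of $\bm R$ and $\mathbf s_j$ the columns of $\bm S$, so that $a_{ij}=\mathbf r_i^\mathsf{H}\mathbf s_j$. Then
\[
\mathbf{y}^\mathsf{H}(\bm A\odot\bm B)\mathbf{x}=\sum_k\sum_{i,j}\overline{(\bar r_{ki}y_i)}\,b_{ij}\,(s_{kj}x_j)=\sum_k \mathbf{u}_k^\mathsf{H}\bm B\mathbf{w}_k,
\]
where $(\mathbf u_k)_i=\bar r_{ki}y_i$ and $(\mathbf w_k)_j=s_{kj}x_j$. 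Bound each term by $\sigma_{\max}(\bm B)\norm{\mathbf u_k}\norm{\mathbf w_k}$. Cauchy--Schwarz over $k$ then gives
\[
\sigma_{\max}(\bm B)\Big(\sum_k\norm{\mathbf u_k}^2\Big)^{1/2}\Big(\sum_k\norm{\mathbf w_k}^2\Big)^{1/2}.
\]
The two factors evaluate to
\[
\sum_k\norm{\mathbf u_k}^2=\sum_i|y_i|^2\norm{\mathbf r_i}^2\le\max_i\norm{\mathbf r_i}^2=1,
\qquad
\sum_k\norm{\mathbf w_k}^2=\sum_j|x_j|^2\norm{\mathbf s_j}^2\le c_1(\bm A)^2 .
\]
Taking the supremum over unit $\mathbf x,\mathbf y$ yields $\sigma_{\max}(\bm A\odot\bm B)\le c_1(\bm A)\sigma_{\max}(\bm B)$.

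The main point to verify is the index bookkeeping in this Schur factorization, namely that the conjugations and the trivial choice $\bm R=\bm I$ are placed correctly. Everything else is routine.
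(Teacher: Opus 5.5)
Your proof is correct, but it takes a different route from the paper. The paper gives no argument and simply cites Zhan's Theorem~1. That result is stronger: it bounds every partial sum $\sum_{j\le k}\sigma_j(\bm A\odot\bm B)$ by $\sum_{j\le k}\min\{c_j(\bm A),r_j(\bm A)\}\,\sigma_j(\bm B)$, i.e.\ it controls all Ky Fan norms, and the lemma is its case $k=1$.

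You instead prove the spectral-norm case directly with Schur's classical argument. For a factorization $\bm A=\bm R^\mathsf{H}\bm S$, you expand $\mathbf y^\mathsf{H}(\bm A\odot\bm B)\mathbf x$ as $\sum_k\mathbf u_k^\mathsf{H}\bm B\mathbf w_k$ and apply Cauchy--Schwarz, which gives $\sigma_{\max}(\bm A\odot\bm B)\le c_1(\bm R)\,c_1(\bm S)\,\sigma_{\max}(\bm B)$. The trivial factorization $\bm R=\bm I_m$, $\bm S=\bm A$ yields the column bound. The identity $(\bm A\odot\bm B)^\mathsf{H}=\bm A^\mathsf{H}\odot\bm B^\mathsf{H}$, together with $c_1(\bm A^\mathsf{H})=r_1(\bm A)$, then yields the row bound.

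Your route buys a self-contained, elementary proof that handles rectangular $\bm A,\bm B$ directly and covers everything the paper actually uses. The citation buys the full majorization statement, which the paper never needs.

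On the bookkeeping you flagged: since $a_{ij}=\mathbf r_i^\mathsf{H}\mathbf s_j=\sum_k\bar r_{ki}s_{kj}$, the correct choice for a general factorization is $(\mathbf u_k)_i=r_{ki}y_i$, so that $\overline{(\mathbf u_k)_i}=\bar r_{ki}\bar y_i$. Your choice $(\mathbf u_k)_i=\bar r_{ki}y_i$ conjugates the wrong factor. This is harmless here only because $\bm R=\bm I_m$ is real, and in either case $\mathbf u_k=y_k\mathbf e_k$.

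The selector-matrix paragraph is not needed, and $[\bm D_1,\dots,\bm D_n]$ is not a selector. Still, the identity $\bm A\odot\bm B=[\bm D_1,\dots,\bm D_n]\,(\bm I_n\otimes\bm B)\,\mathrm{blkdiag}(\mathbf e_1,\dots,\mathbf e_n)$ is true. It gives the row bound in one line, because $\norm{[\bm D_1,\dots,\bm D_n]}=r_1(\bm A)$ and the right factor has orthonormal columns.
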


The next lemma relates the Khatri-Rao product to the Hadamard product via the Gramian.

\begin{lemma}[{A paraphrase of \cite[Proposition~6.4.2]{rao1998matrix} and \cite{bro1998multi}}]
\label{lem:khatri_hadamard}
Let $\bm{A}, \bm{B} \in \mathbb{C}^{m\times n}$ be two complex-valued matrices with the same number of columns. Then
\[
(\bm{A} \ast \bm{B})^\mathsf{H} (\bm{A} \ast \bm{B}) = (\bm{A}^\mathsf{H} \bm{A}) \odot (\bm{B}^\mathsf{H} \bm{B}).
\]
\end{lemma}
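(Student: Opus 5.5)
The identity will be verified entrywise, using only the column-wise definition of the Khatri--Rao product and the mixed-product rule for Kronecker products. Let $\bm{a}_k$ and $\bm{b}_k$ denote the $k$th columns of $\bm{A}$ and $\bm{B}$, for $k \in [n]$. By definition of the Khatri--Rao product, the $k$th column of $\bm{A} \ast \bm{B}$ is $\bm{a}_k \otimes \bm{b}_k$. This is the same column-wise description used in the Remark following \eqref{eq:jacobian_at_zero}. Hence the $(k,l)$ entry of the Gramian on the left-hand side is
\[
\bigl[(\bm{A} \ast \bm{B})^\mathsf{H} (\bm{A} \ast \bm{B})\bigr]_{k,l} = (\bm{a}_k \otimes \bm{b}_k)^\mathsf{H} (\bm{a}_l \otimes \bm{b}_l).
\]

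Next, we use $(\bm{a}_k \otimes \bm{b}_k)^\mathsf{H} = \bm{a}_k^\mathsf{H} \otimes \bm{b}_k^\mathsf{H}$ together with the mixed-product property $(\bm{P}\otimes\bm{Q})(\bm{R}\otimes\bm{S}) = (\bm{P}\bm{R})\otimes(\bm{Q}\bm{S})$, which applies because the inner dimensions are compatible. This rewrites the entry as
\[
(\bm{a}_k^\mathsf{H}\bm{a}_l)\otimes(\bm{b}_k^\mathsf{H}\bm{b}_l) = (\bm{a}_k^\mathsf{H}\bm{a}_l)(\bm{b}_k^\mathsf{H}\bm{b}_l),
\]
since the Kronecker product of two scalars is their ordinary product. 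The two scalar factors are exactly $[\bm{A}^\mathsf{H}\bm{A}]_{k,l}$ and $[\bm{B}^\mathsf{H}\bm{B}]_{k,l}$. Their product is therefore $[(\bm{A}^\mathsf{H}\bm{A})\odot(\bm{B}^\mathsf{H}\bm{B})]_{k,l}$. Since $k,l$ are arbitrary, the two $n\times n$ matrices coincide.

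There is no genuine obstacle; the only point needing care is bookkeeping of conventions. Some references define the Khatri--Rao column as $\bm{b}_k\otimes\bm{a}_k$ instead of $\bm{a}_k\otimes\bm{b}_k$, but the resulting entry is the same scalar product, so the identity holds under either convention. Likewise, the argument never uses that $\bm{A}$ and $\bm{B}$ have the same number of rows, only the same number of columns. This means the lemma applies verbatim to the instance $\bm{A}=\bm{X}^\mathsf{T}$, $\bm{B}=\bm{P}_{\bm{\tau}}^\perp\bm{\Lambda}\bm{A}_{\bm{\tau}}$ used in the Remark.
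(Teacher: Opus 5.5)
Your entrywise argument is correct and complete: the identity $(\bm{a}_k\otimes\bm{b}_k)^\mathsf{H}(\bm{a}_l\otimes\bm{b}_l)=(\bm{a}_k^\mathsf{H}\bm{a}_l)(\bm{b}_k^\mathsf{H}\bm{b}_l)$ obtained from the mixed-product rule is the standard verification, and the paper gives no proof of its own beyond citing the literature for this fact. Keep your remark that only the column counts must agree: the paper's downstream use, namely the Khatri--Rao norm bound invoked in the proof of Lemma~\ref{lem:spectral_norm_grad}, pairs the $L\times K$ factor $\bm{Y}^\mathsf{T}\bm{A}_{\bm{\gamma}_\star}^{\dagger\mathsf{T}}$ with the $N\times K$ factor $\bm{F}_{\bm{\gamma}_\star}$, and the statement as written (both matrices $m\times n$) does not literally cover that case.
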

The next following lemma is a corollary of Lemma~\ref{lem:zhan_hadamard} and Lemma~\ref{lem:khatri_hadamard}.

\begin{lemma}
\label{lem:khatri_spectral_norm}
For two complex matrices $\bm{A}, \bm{B} \in \mathbb{C}^{m\times n}$,
\[
\norm{\bm{A} \ast \bm{B}} \leq \norm{\bm{A}} \cdot \norm{\bm{B}}.
\]
\end{lemma}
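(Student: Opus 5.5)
The plan is to reduce the claim to the Gramian identity in Lemma~\ref{lem:khatri_hadamard} combined with the Hadamard-product bound of Lemma~\ref{lem:zhan_hadamard}. Since $\norm{\bm{A}\ast\bm{B}}^2 = \norm{(\bm{A}\ast\bm{B})^\mathsf{H}(\bm{A}\ast\bm{B})}$, we first write
\[
\norm{\bm{A}\ast\bm{B}}^2 = \sigma_{\max}\bigl((\bm{A}^\mathsf{H}\bm{A})\odot(\bm{B}^\mathsf{H}\bm{B})\bigr).
\]
The remaining task is to bound the spectral norm of a Hadamard product of two positive semidefinite $n\times n$ matrices.

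We apply Lemma~\ref{lem:zhan_hadamard} with the first factor $\bm{A}^\mathsf{H}\bm{A}$ playing the role of the matrix whose row or column lengths appear, and $\bm{B}^\mathsf{H}\bm{B}$ as the second factor. This gives
\[
\sigma_{\max}\bigl((\bm{A}^\mathsf{H}\bm{A})\odot(\bm{B}^\mathsf{H}\bm{B})\bigr) \le c_1(\bm{A}^\mathsf{H}\bm{A})\,\norm{\bm{B}^\mathsf{H}\bm{B}} = c_1(\bm{A}^\mathsf{H}\bm{A})\,\norm{\bm{B}}^2 .
\]
The maximum column length of $\bm{A}^\mathsf{H}\bm{A}$ is $\max_k\norm{\bm{A}^\mathsf{H}\bm{A}\bm{e}_k}_2$. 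Each such column norm is at most $\norm{\bm{A}^\mathsf{H}\bm{A}} = \norm{\bm{A}}^2$, so $c_1(\bm{A}^\mathsf{H}\bm{A}) \le \norm{\bm{A}}^2$. Combining the two displays gives $\norm{\bm{A}\ast\bm{B}}^2 \le \norm{\bm{A}}^2\norm{\bm{B}}^2$, and taking square roots concludes the proof.

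The only step needing care is the column-length estimate $c_1(\bm{A}^\mathsf{H}\bm{A})\le\norm{\bm{A}}^2$; everything else is a direct combination of the two preceding lemmas. If one prefers to avoid Lemma~\ref{lem:zhan_hadamard}, there is a cleaner alternative route. The Schur product theorem shows that $(\norm{\bm{A}}^2\bm{I}-\bm{A}^\mathsf{H}\bm{A})\odot\bm{B}^\mathsf{H}\bm{B}\succeq\bm{0}$. Hence the Hadamard product is dominated by $\norm{\bm{A}}^2\,\bm{I}\odot\bm{B}^\mathsf{H}\bm{B}$, whose norm is $\norm{\bm{A}}^2\max_k\norm{\bm{b}_k}^2\le\norm{\bm{A}}^2\norm{\bm{B}}^2$. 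Both routes give the same bound.

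Note that the lemma is stated with $\bm{A},\bm{B}$ of equal size $m\times n$. However, the argument only uses that they have the same number of columns, which is the setting in which the Khatri--Rao product is defined.
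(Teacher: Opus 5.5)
Your proof is correct and uses the paper's skeleton: the Gramian identity of Lemma~\ref{lem:khatri_hadamard}, then Lemma~\ref{lem:zhan_hadamard}. It differs at the one step that matters, and there your version is the valid one.

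\textbf{Where you and the paper differ.} After reaching $\lambda_{\max}\bigl((\bm{A}^\mathsf{H}\bm{A})\odot(\bm{B}^\mathsf{H}\bm{B})\bigr)$, the paper rewrites it as $\sigma_{\max}^2(\bm{A}\odot\bm{B})$ and applies Zhan's bound to $\bm{A}\odot\bm{B}$. That identification is false in general, because $(\bm{A}\odot\bm{B})^\mathsf{H}(\bm{A}\odot\bm{B})\neq(\bm{A}^\mathsf{H}\bm{A})\odot(\bm{B}^\mathsf{H}\bm{B})$. For example, take $\bm{A}=\bm{B}=\begin{pmatrix}1&0\\1&0\end{pmatrix}$. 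Then $(\bm{A}^\mathsf{H}\bm{A})\odot(\bm{B}^\mathsf{H}\bm{B})=\mathrm{diag}(4,0)$, whereas $\sigma_{\max}^2(\bm{A}\odot\bm{B})=\sigma_{\max}^2(\bm{A})=2$. You instead apply Zhan's bound to the Gram matrices themselves and finish with $c_1(\bm{A}^\mathsf{H}\bm{A})\le\norm{\bm{A}^\mathsf{H}\bm{A}}=\norm{\bm{A}}^2$. That is what makes the argument actually go through.

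\textbf{Your Schur-product alternative.} It is also sound. The Loewner upper bound turns into a norm bound because $(\bm{A}^\mathsf{H}\bm{A})\odot(\bm{B}^\mathsf{H}\bm{B})=(\bm{A}\ast\bm{B})^\mathsf{H}(\bm{A}\ast\bm{B})\succeq\bm{0}$. This route avoids Lemma~\ref{lem:zhan_hadamard} entirely and gives the sharper bound $\norm{\bm{A}\ast\bm{B}}\le\norm{\bm{A}}\max_k\norm{\bm{b}_k}_2$, where $\bm{b}_k$ are the columns of $\bm{B}$.

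\textbf{Your remark on dimensions.} This is more than cosmetic. In the proof of Lemma~\ref{lem:spectral_norm_grad}, the lemma is applied to $\bm{Y}^\mathsf{T}\bm{A}_{\bm{\gamma}_\star}^{\dagger\mathsf{T}}\ast\bm{F}_{\bm{\gamma}_\star}$, whose factors are $L\times K$ and $N\times K$. So the version actually used there is the one requiring only equal column counts. Both of your routes deliver it; the paper's argument, which forms $\bm{A}\odot\bm{B}$, cannot.
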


\begin{proof}
Recall that for any matrix $\bm{M} \in \mathbb{C}^{m\times n}$, the spectral norm satisfies
\[
\norm{\bm{M}}^2 = \sigma_{\max}^2(\bm{M}) = \lambda_{\max}(\bm{M}^\mathsf{H}\bm{M}),
\]
where $\lambda_{\max}$ denotes the largest eigenvalue. Hence,
\[
\norm{\bm{A} \ast \bm{B}}^2 = \lambda_{\max}\!\left((\bm{A} \ast \bm{B})^\mathsf{H}(\bm{A} \ast \bm{B})\right).
\]
By using Lemma~\ref{lem:khatri_hadamard},
\[
\lambda_{\max}\!\left((\bm{A} \ast \bm{B})^\mathsf{H}(\bm{A} \ast \bm{B})\right)
= \lambda_{\max}\!\left((\bm{A}^\mathsf{H}\bm{A}) \odot (\bm{B}^\mathsf{H}\bm{B})\right)
= \sigma_{\max}^2(\bm{A} \odot \bm{B}).
\]
Furthermore, by Lemma~\ref{lem:zhan_hadamard},
\[
\sigma_{\max}^2(\bm{A} \odot \bm{B}) \leq \sigma_{\max}^2(\bm{A}) \cdot \sigma_{\max}^2(\bm{B}),
\]
which concludes the proof.
\end{proof}

\section{Gradient, Jacobian and Hessian Computations for the VarProSD Objective}
\label{secA3}

This appendix collects the gradient, Jacobian Gramian, Hessian, and cross-derivative of the VarProSD objective $\ell(\bm{\gamma})$ defined in \eqref{eq:loss_function} that are used throughout the proofs of the main results. We first derive the expression for the gradient (Lemma~\ref{lem:expression_for_q_alg}), then the expression for Jacobian Gramian of the residual in Lemma~\ref{lem:expression_for_jac_alg}, then the Hessian decomposition. Finally, we derive the cross-derivative with respect to the noise (Lemma~\ref{lem:jacobian_iota}).

\subsection{Proof of Lemma~\ref{lem:expression_for_q_alg}}
\label{subsecA3:grad}

Recall that the cost function in \eqref{eq:loss_function} can be written as
\[
\ell(\bm{\gamma}) = \frac{1}{2L}\norm{\bm{P}_{\bm{\gamma}}^\perp \bm{Y}}_\mathrm{F}^2 = \frac{1}{2L}\Big(\norm{\bm{Y}}_\mathrm{F}^2 - \mathrm{Trace}(\bm{Y}\bm{Y}^\mathsf{H}\bm{P}_{\bm{\gamma}})\Big).
\]
Taking the partial derivative with respect to $\gamma_k$,
\begin{align}
\frac{\partial \ell(\bm{\gamma})}{\partial \gamma_k}
= -\frac{1}{2L}\,\frac{\partial}{\partial \gamma_k}\mathrm{Trace}\!\left(\bm{Y}\bm{Y}^\mathsf{H}\bm{P}_{\bm{\gamma}}\right).
\label{eq:first_deriv}
\end{align}
Since $\bm{Y}\bm{Y}^\mathsf{H}$ does not depend on $\gamma_k$, we only need to compute the derivative of the projection $\bm{P}_{\bm{\gamma}}$. Extending the Fr{\'e}chet derivative of the orthogonal projection in \cite[Lemma~4.1]{golub1973differentiation} to the complex-valued case by replacing the transpose with the Hermitian transpose, one can obtain
\begin{align}
\frac{\partial \bm{P}_{\bm{\gamma}}}{\partial \gamma_k}
&= \bm{P}_{\bm{\gamma}}^\perp\,\frac{\partial \bm{A}_{\bm{\gamma}}}{\partial \gamma_k}\,\bm{A}_{\bm{\gamma}}^\dagger
 + \left(\bm{P}_{\bm{\gamma}}^\perp\,\frac{\partial \bm{A}_{\bm{\gamma}}}{\partial \gamma_k}\,\bm{A}_{\bm{\gamma}}^\dagger\right)^\mathsf{H} \nonumber\\
&= \bm{P}_{\bm{\gamma}}^\perp\bm{\Lambda}\bm{A}_{\bm{\gamma}}\bm{e}_k\bm{e}_k^\mathsf{T}\bm{A}_{\bm{\gamma}}^\dagger
 + \left(\bm{P}_{\bm{\gamma}}^\perp\bm{\Lambda}\bm{A}_{\bm{\gamma}}\bm{e}_k\bm{e}_k^\mathsf{T}\bm{A}_{\bm{\gamma}}^\dagger\right)^\mathsf{H},
\label{eq:projection_deriv}
\end{align}
where we used
\begin{align}
\label{eq:Phi_gamma_deriv}
\frac{\partial \bm{A}_{\bm{\gamma}}}{\partial \gamma_k}
= \bm{G}\bm{\Lambda}\bm{\Phi}_{\bm{\gamma}}\bm{e}_k\bm{e}_k^\mathsf{T}
= \bm{\Lambda}\bm{A}_{\bm{\gamma}}\bm{e}_k\bm{e}_k^\mathsf{T}.
\end{align}
Substituting \eqref{eq:projection_deriv} into \eqref{eq:first_deriv} gives
\[
\frac{\partial \ell(\bm{\gamma})}{\partial \gamma_k}
= -\frac{1}{2L}\,\mathrm{Trace}\!\left(\bm{Y}\bm{Y}^\mathsf{H}\bm{P}_{\bm{\gamma}}^\perp\bm{\Lambda}\bm{A}_{\bm{\gamma}}\bm{e}_k\bm{e}_k^\mathsf{T}\bm{A}_{\bm{\gamma}}^\dagger
+ \left(\bm{Y}\bm{Y}^\mathsf{H}\bm{P}_{\bm{\gamma}}^\perp\bm{\Lambda}\bm{A}_{\bm{\gamma}}\bm{e}_k\bm{e}_k^\mathsf{T}\bm{A}_{\bm{\gamma}}^\dagger\right)^\mathsf{H}\right).
\]
Using $\mathrm{Trace}(\bm{M}+\bm{M}^\mathsf{H}) = 2\operatorname{Re}[\mathrm{Trace}(\bm{M})]$ for any square matrix $\bm{M}$, together with the cyclic property of the trace,
\begin{align}
\label{eq:first_deriv_gamma}
\frac{\partial \ell(\bm{\gamma})}{\partial \gamma_k}
&= -\frac{1}{L}\operatorname{Re}\!\left[\mathrm{Trace}\!\left(\bm{Y}^\mathsf{H}\bm{P}_{\bm{\gamma}}^\perp\bm{\Lambda}\bm{A}_{\bm{\gamma}}\bm{e}_k\bm{e}_k^\mathsf{T}\bm{A}_{\bm{\gamma}}^\dagger\bm{Y}\right)\right] \nonumber\\
&= -\frac{1}{L}\operatorname{Re}\!\left[\bm{e}_k^\mathsf{T}\bm{A}_{\bm{\gamma}}^\dagger\bm{Y}\bm{Y}^\mathsf{H}\bm{P}_{\bm{\gamma}}^\perp\bm{\Lambda}\bm{A}_{\bm{\gamma}}\bm{e}_k\right],
\end{align}
which gives the assertion of Lemma~\ref{lem:expression_for_q_alg} and concludes the proof.

\subsection{Jacobian Gramian of the residual}
\label{subsecA3:jac}

To derive the Hessian of the VarProSD objective, a natural approach is to first express the objective in residual form and compute the Jacobian of that residual, since the Hessian of a least-squares objective decomposes into a Jacobian Gramian term and a residual curvature term. Recall from \eqref{eq:loss_function} that the objective admits the residual representation
\begin{align*}
\bm f(\bm{\gamma})
=
\mathrm{vec}\!\left(\bm P_{\bm{\gamma}}^\perp \bm Y \right)
\in \mathbb{C}^{NL},
\end{align*}
whose Jacobian with respect to $\bm{\gamma}$ is
\begin{equation}
\label{eq:jacobian_gn_alg}
\bm J(\bm{\gamma})
=
\frac{\partial \bm f(\bm{\gamma})}{\partial \bm{\gamma}}
=
\Big[
\tfrac{\partial \bm f}{\partial \gamma_1},
\ldots,
\tfrac{\partial \bm f}{\partial \gamma_K}
\Big]
\in \mathbb{C}^{(NL)\times K}.
\end{equation}
In this subsection we compute the Jacobian Gramian $\bm{J}(\bm{\gamma})^\mathsf{H}\bm{J}(\bm{\gamma})$ in closed form, and the residual curvature term is computed in the Hessian decomposition that follows.

\begin{lemma}
\label{lem:expression_for_jac_alg}
Suppose that all entries of $\bm{\gamma}$ are distinct and $\bm{\Lambda}$ is defined as in \eqref{def:lambda}. Then, the Jacobian Gramian admits the following expression.
\begin{multline*}
\bm{J}(\bm{\gamma})^\mathsf{H}\bm{J}(\bm{\gamma})
=
[\bm{A}_{\bm{\gamma}}^\mathsf{H}
\bm{\Lambda}^\mathsf{H}
\bm{P}_{\bm{\gamma}}^\perp
\bm{\Lambda}
\bm{A}_{\bm{\gamma}}
\odot
(\bm{A}_{\bm{\gamma}}^\dagger
\bm{Y}\bm{Y}^\mathsf{H}
\bm{A}_{\bm{\gamma}}^{\dagger\mathsf{H}})^\mathsf{T}]
\nonumber\\
+
[\bm{A}_{\bm{\gamma}}^\dagger
\bm{A}_{\bm{\gamma}}^{\dagger\mathsf{H}}
\odot
(\bm{A}_{\bm{\gamma}}^\mathsf{H}
\bm{\Lambda}^\mathsf{H}
\bm{P}_{\bm{\gamma}}^\perp
\bm{Y}\bm{Y}^\mathsf{H}
\bm{P}_{\bm{\gamma}}^\perp
\bm{\Lambda}
\bm{A}_{\bm{\gamma}})^\mathsf{T} ].
\end{multline*}
\end{lemma}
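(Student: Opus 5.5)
We compute the column derivatives of the residual $\bm f(\bm{\gamma})=\mathrm{vec}(\bm P_{\bm{\gamma}}^\perp\bm Y)$ explicitly, using the derivative of the projector already obtained in \eqref{eq:projection_deriv}. Since $\bm P_{\bm{\gamma}}^\perp=\bm I_N-\bm P_{\bm{\gamma}}$, we get
\[
\frac{\partial \bm P^\perp_{\bm{\gamma}}\bm Y}{\partial\gamma_k}
= -\bm P_{\bm{\gamma}}^\perp\bm\Lambda\bm A_{\bm{\gamma}}\bm e_k\bm e_k^\mathsf{T}\bm A_{\bm{\gamma}}^\dagger\bm Y
-\bm A_{\bm{\gamma}}^{\dagger\mathsf H}\bm e_k\bm e_k^\mathsf{T}\bm A_{\bm{\gamma}}^\mathsf{H}\bm\Lambda^\mathsf H\bm P_{\bm{\gamma}}^\perp\bm Y ,
\]
where we use the Hermitian identity $(\bm A_{\bm{\gamma}}^\dagger)^\mathsf H=\bm A_{\bm{\gamma}}^{\dagger\mathsf H}$ and $\bm P^\perp_{\bm{\gamma}}$ Hermitian. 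Denote $\bm a_k:=\bm P_{\bm{\gamma}}^\perp\bm\Lambda\bm A_{\bm{\gamma}}\bm e_k$, $\bm b_k^\mathsf T:=\bm e_k^\mathsf T\bm A_{\bm{\gamma}}^\dagger\bm Y$, $\bm c_k:=\bm A_{\bm{\gamma}}^{\dagger\mathsf H}\bm e_k$ and $\bm d_k^\mathsf T:=\bm a_k^\mathsf H\bm Y$. Then the $k$th column of $\bm J(\bm{\gamma})$ is $-\mathrm{vec}(\bm a_k\bm b_k^\mathsf T+\bm c_k\bm d_k^\mathsf T)=-(\bm b_k\otimes\bm a_k+\bm d_k\otimes\bm c_k)$, so that $\bm J=-[\bm B\ast\bm A+\bm D\ast\bm C]$ with the obvious column-stacked matrices.

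Next we expand the Gramian entrywise: $[\bm J^\mathsf H\bm J]_{k,k'}$ is a sum of four terms of the form $(\bm x_k\otimes\bm y_k)^\mathsf H(\bm x_{k'}\otimes\bm y_{k'})=(\bm x_k^\mathsf H\bm x_{k'})(\bm y_k^\mathsf H\bm y_{k'})$, i.e.\ by Lemma~\ref{lem:khatri_hadamard} each diagonal block is a Hadamard product of Gramians. The two ``pure'' terms give $(\bm A^\mathsf H\bm A)\odot(\bm B^\mathsf H\bm B)$ with $\bm A^\mathsf H\bm A=\bm A_{\bm{\gamma}}^\mathsf H\bm\Lambda^\mathsf H\bm P_{\bm{\gamma}}^\perp\bm\Lambda\bm A_{\bm{\gamma}}$ and $\bm B^\mathsf H\bm B=(\bm A_{\bm{\gamma}}^\dagger\bm Y\bm Y^\mathsf H\bm A_{\bm{\gamma}}^{\dagger\mathsf H})^\mathsf T$ (the transpose appearing because $\bm b_k$ is a row of $\bm A^\dagger_{\bm{\gamma}}\bm Y$, so $\bm b_k^\mathsf H\bm b_{k'}=\overline{[\,\cdot\,]_{k,k'}}$, i.e.\ the $(k,k')$ entry of the transpose of the Hermitian matrix), and $(\bm C^\mathsf H\bm C)\odot(\bm D^\mathsf H\bm D)$ with $\bm C^\mathsf H\bm C=\bm A_{\bm{\gamma}}^\dagger\bm A_{\bm{\gamma}}^{\dagger\mathsf H}$ and $\bm D^\mathsf H\bm D=(\bm A_{\bm{\gamma}}^\mathsf H\bm\Lambda^\mathsf H\bm P_{\bm{\gamma}}^\perp\bm Y\bm Y^\mathsf H\bm P_{\bm{\gamma}}^\perp\bm\Lambda\bm A_{\bm{\gamma}})^\mathsf T$, matching the two brackets of the claim.

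The remaining step, which we expect to be the only real point to check, is that the cross terms vanish. They involve the factor $\bm a_k^\mathsf H\bm c_{k'}=\bm e_k^\mathsf H\bm A_{\bm{\gamma}}^\mathsf H\bm\Lambda^\mathsf H\bm P_{\bm{\gamma}}^\perp\bm A_{\bm{\gamma}}^{\dagger\mathsf H}\bm e_{k'}$, and since the columns of $\bm A_{\bm{\gamma}}^{\dagger\mathsf H}$ lie in the range of $\bm A_{\bm{\gamma}}$ (as $\bm A_{\bm{\gamma}}^{\dagger\mathsf H}=\bm A_{\bm{\gamma}}(\bm A_{\bm{\gamma}}^\mathsf H\bm A_{\bm{\gamma}})^{-1}$, valid because distinct $\gamma_k$ and the nonvanishing $\hat g$ entries give full column rank), we get $\bm P_{\bm{\gamma}}^\perp\bm A_{\bm{\gamma}}^{\dagger\mathsf H}=\bm 0$; likewise for the conjugate cross term. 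Hence the Gramian equals the sum of the two Hadamard products, as stated. The care needed is mainly bookkeeping of conjugates versus transposes in the Kronecker ordering of $\mathrm{vec}$.
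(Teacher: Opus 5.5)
Your proposal is correct and follows essentially the same route as the paper: differentiate the projector via \eqref{eq:projection_deriv}, expand the Gramian into four products, keep the two ``pure'' terms as Hadamard products (with the transposes you track correctly), and kill the two cross terms because $\bm{A}_{\bm{\gamma}}^\dagger\bm{P}_{\bm{\gamma}}^\perp=\bm{0}$, equivalently $\bm{P}_{\bm{\gamma}}^\perp\bm{A}_{\bm{\gamma}}^{\dagger\mathsf{H}}=\bm{0}$. The only differences are that you work with $\mathrm{vec}$ and Khatri--Rao columns where the paper writes each $(j,l)$ entry as a trace and uses cyclicity, and that your appeal to full column rank for the vanishing identity is harmless but unnecessary, since $\mathrm{range}(\bm{A}_{\bm{\gamma}}^{\dagger\mathsf{H}})=\mathrm{range}(\bm{A}_{\bm{\gamma}})$ holds for any matrix.
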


\begin{proof}
    Using the definition of the Jacobian of the residual in \eqref{eq:jacobian_gn_alg}, the $(j,l)$-entry of the Jacobian Gramian can be written as
\begin{align*}
[\bm{J}(\bm{\gamma})^\mathsf{H}\bm{J}(\bm{\gamma})]_{j,l}
&= \mathrm{Trace}\!\left(\left(\frac{\partial \bm{P}_{\bm{\gamma}}^\perp\bm{Y}}{\partial \gamma_j}\right)^{\!\mathsf{H}}\frac{\partial \bm{P}_{\bm{\gamma}}^\perp\bm{Y}}{\partial \gamma_l}\right) \nonumber\\
&= \mathrm{Trace}\!\left(\left(\frac{\partial \bm{P}_{\bm{\gamma}}\bm{Y}}{\partial \gamma_j}\right)^{\!\mathsf{H}}\frac{\partial \bm{P}_{\bm{\gamma}}\bm{Y}}{\partial \gamma_l}\right),
\end{align*}
where we used $\bm{P}_{\bm{\gamma}}^\perp = \bm{I}_N - \bm{P}_{\bm{\gamma}}$. Substituting the projection derivative from \eqref{eq:projection_deriv} and applying the cyclic property of the trace,
\begin{multline*}
[\bm{J}(\bm{\gamma})^\mathsf{H}\bm{J}(\bm{\gamma})]_{j,l}
= \mathrm{Trace}\!\left(\bm{Y}\bm{Y}^\mathsf{H}\!\left[\left(\bm{P}_{\bm{\gamma}}^\perp\bm{\Lambda}\bm{A}_{\bm{\gamma}}\bm{e}_j\bm{e}_j^\mathsf{T}\bm{A}_{\bm{\gamma}}^\dagger\right)^{\!\mathsf{H}} + \bm{P}_{\bm{\gamma}}^\perp\bm{\Lambda}\bm{A}_{\bm{\gamma}}\bm{e}_j\bm{e}_j^\mathsf{T}\bm{A}_{\bm{\gamma}}^\dagger\right] \right.\\
\qquad \times \left.\left[\bm{P}_{\bm{\gamma}}^\perp\bm{\Lambda}\bm{A}_{\bm{\gamma}}\bm{e}_l\bm{e}_l^\mathsf{T}\bm{A}_{\bm{\gamma}}^\dagger + \left(\bm{P}_{\bm{\gamma}}^\perp\bm{\Lambda}\bm{A}_{\bm{\gamma}}\bm{e}_l\bm{e}_l^\mathsf{T}\bm{A}_{\bm{\gamma}}^\dagger\right)^{\!\mathsf{H}}\right]\right).
\end{multline*}
Expanding the product into four trace terms,
\begin{align*}
[\bm{J}(\bm{\gamma})^\mathsf{H}\bm{J}(\bm{\gamma})]_{j,l}
&= \bm{e}_j^\mathsf{T}\bm{A}_{\bm{\gamma}}^\mathsf{H}\bm{\Lambda}^\mathsf{H}\bm{P}_{\bm{\gamma}}^\perp\bm{P}_{\bm{\gamma}}^\perp\bm{\Lambda}\bm{A}_{\bm{\gamma}}\bm{e}_l \cdot \bm{e}_l^\mathsf{T}\bm{A}_{\bm{\gamma}}^\dagger\bm{Y}\bm{Y}^\mathsf{H}\bm{A}_{\bm{\gamma}}^{\dagger\mathsf{H}}\bm{e}_j \\
&\quad+ \bm{e}_j^\mathsf{T}\bm{A}_{\bm{\gamma}}^\mathsf{H}\bm{\Lambda}^\mathsf{H}\bm{P}_{\bm{\gamma}}^\perp\bm{A}_{\bm{\gamma}}^{\dagger\mathsf{H}}\bm{e}_l \cdot \bm{e}_l^\mathsf{T}\bm{A}_{\bm{\gamma}}^\mathsf{H}\bm{\Lambda}^\mathsf{H}\bm{P}_{\bm{\gamma}}^\perp\bm{Y}\bm{Y}^\mathsf{H}\bm{A}_{\bm{\gamma}}^{\dagger\mathsf{H}}\bm{e}_j \\
&\quad+ \bm{e}_j^\mathsf{T}\bm{A}_{\bm{\gamma}}^\dagger\bm{P}_{\bm{\gamma}}^\perp\bm{\Lambda}\bm{A}_{\bm{\gamma}}\bm{e}_l \cdot \bm{e}_l^\mathsf{T}\bm{A}_{\bm{\gamma}}^\dagger\bm{Y}\bm{Y}^\mathsf{H}\bm{P}_{\bm{\gamma}}^\perp\bm{\Lambda}\bm{A}_{\bm{\gamma}}\bm{e}_j \\
&\quad+ \bm{e}_j^\mathsf{T}\bm{A}_{\bm{\gamma}}^\dagger\bm{A}_{\bm{\gamma}}^{\dagger\mathsf{H}}\bm{e}_l \cdot \bm{e}_l^\mathsf{T}\bm{A}_{\bm{\gamma}}^\mathsf{H}\bm{\Lambda}^\mathsf{H}\bm{P}_{\bm{\gamma}}^\perp\bm{Y}\bm{Y}^\mathsf{H}\bm{P}_{\bm{\gamma}}^\perp\bm{\Lambda}\bm{A}_{\bm{\gamma}}\bm{e}_j.
\end{align*}
Identifying each summand as an entry of a Hadamard product, and using $(\bm{P}_{\bm{\gamma}}^\perp)^2 = \bm{P}_{\bm{\gamma}}^\perp$,
\begin{align*}
[\bm{J}(\bm{\gamma})^\mathsf{H}\bm{J}(\bm{\gamma})]_{j,l}
&= \big[\bm{A}_{\bm{\gamma}}^\mathsf{H}\bm{\Lambda}^\mathsf{H}\bm{P}_{\bm{\gamma}}^\perp\bm{\Lambda}\bm{A}_{\bm{\gamma}} \odot (\bm{A}_{\bm{\gamma}}^\dagger\bm{Y}\bm{Y}^\mathsf{H}\bm{A}_{\bm{\gamma}}^{\dagger\mathsf{H}})^\mathsf{T}\big]_{j,l} \\
&\quad+ \big[\bm{A}_{\bm{\gamma}}^\mathsf{H}\bm{\Lambda}^\mathsf{H}\bm{P}_{\bm{\gamma}}^\perp\bm{A}_{\bm{\gamma}}^{\dagger\mathsf{H}} \odot (\bm{A}_{\bm{\gamma}}^\mathsf{H}\bm{\Lambda}^\mathsf{H}\bm{P}_{\bm{\gamma}}^\perp\bm{Y}\bm{Y}^\mathsf{H}\bm{A}_{\bm{\gamma}}^{\dagger\mathsf{H}})^\mathsf{T}\big]_{j,l} \\
&\quad+ \big[\bm{A}_{\bm{\gamma}}^\dagger\bm{P}_{\bm{\gamma}}^\perp\bm{\Lambda}\bm{A}_{\bm{\gamma}} \odot (\bm{A}_{\bm{\gamma}}^\dagger\bm{Y}\bm{Y}^\mathsf{H}\bm{P}_{\bm{\gamma}}^\perp\bm{\Lambda}\bm{A}_{\bm{\gamma}})^\mathsf{T}\big]_{j,l} \\
&\quad+ \big[\bm{A}_{\bm{\gamma}}^\dagger\bm{A}_{\bm{\gamma}}^{\dagger\mathsf{H}} \odot (\bm{A}_{\bm{\gamma}}^\mathsf{H}\bm{\Lambda}^\mathsf{H}\bm{P}_{\bm{\gamma}}^\perp\bm{Y}\bm{Y}^\mathsf{H}\bm{P}_{\bm{\gamma}}^\perp\bm{\Lambda}\bm{A}_{\bm{\gamma}})^\mathsf{T}\big]_{j,l}.
\end{align*}
The second and third terms vanish because $\bm{A}_{\bm{\gamma}}^\dagger\bm{P}_{\bm{\gamma}}^\perp = \bm{0}$. Collecting the remaining first and fourth terms over all $j,l$ gives the assertion of Lemma~\ref{lem:expression_for_jac_alg}, which concludes the proof.

\end{proof}

\subsection{Hessian expression for the VarProSD objective}
We now state and prove the Hessian decomposition for the VarProSD objective. We first index the entries of the residual $\bm{f}(\bm{\gamma}) = \mathrm{vec}(\bm{P}_{\bm{\gamma}}^\perp\bm{Y}) \in \mathbb{C}^{NL}$ by sample $n \in [N]$ and snapshot $\ell \in [L]$ such that
\begin{align}
\label{eq:r_nl_def}
r_{n,\ell}(\bm{\gamma}) = \bm{e}_n^\mathsf{T}\bm{P}_{\bm{\gamma}}^\perp\bm{y}_\ell, \quad n \in [N],\, \ell \in [L],
\end{align}
with $r_{n,\ell}^{\mathrm{Re}} = \operatorname{Re}(r_{n,\ell})$ and $r_{n,\ell}^{\mathrm{Im}} = \operatorname{Im}(r_{n,\ell})$.
\begin{lemma}
\label{lem:expression_for_hess_alg}
Suppose that all entries of $\bm{\gamma}$ are distinct and let $\bm{\Lambda}$ be defined as in \eqref{def:lambda}. Then the Hessian of $\ell(\bm{\gamma})$ admits the decomposition
\begin{align}
\label{eq:hess_decomp}
\nabla^2 \ell(\bm{\gamma}) = \frac{1}{L}\operatorname{Re}\!\left(\bm{J}(\bm{\gamma})^\mathsf{H}\bm{J}(\bm{\gamma})\right) + \frac{1}{L}\sum_{n=1}^N\sum_{\ell=1}^L\Big(r_{n,\ell}^{\mathrm{Re}}\,\nabla_{\bm{\gamma}}^2 r_{n,\ell}^{\mathrm{Re}} + r_{n,\ell}^{\mathrm{Im}}\,\nabla_{\bm{\gamma}}^2 r_{n,\ell}^{\mathrm{Im}}\Big),
\end{align}
where $\bm{J}(\bm{\gamma})^\mathsf{H}\bm{J}(\bm{\gamma})$ is the Jacobian Gramian whose closed-form expression is given in Lemma~\ref{lem:expression_for_jac_alg}. Furthermore, the residual curvature term in \eqref{eq:hess_decomp} admits the explicit expression
\begin{align}
\label{eq:residual_curv}
\sum_{n,\ell}\Big(r_{n,\ell}^{\mathrm{Re}}\,\nabla_{\bm{\gamma}}^2 r_{n,\ell}^{\mathrm{Re}} + r_{n,\ell}^{\mathrm{Im}}\,\nabla_{\bm{\gamma}}^2 r_{n,\ell}^{\mathrm{Im}}\Big) &= \operatorname{Re}\!\left[\bm{A}_{\bm{\gamma}}^\dagger\bm{\Lambda}\bm{A}_{\bm{\gamma}} \odot (\bm{A}_{\bm{\gamma}}^\dagger\bm{Y}\bm{Y}^\mathsf{H}\bm{P}_{\bm{\gamma}}^\perp\bm{\Lambda}\bm{A}_{\bm{\gamma}})^\mathsf{T}\right] \nonumber\\
&\quad - \operatorname{Re}\!\left[\bm{A}_{\bm{\gamma}}^\dagger\bm{Y}\bm{Y}^\mathsf{H}\bm{P}_{\bm{\gamma}}^\perp\bm{\Lambda}^2\bm{A}_{\bm{\gamma}} \odot \bm{I}_K\right] \nonumber\\
&\quad + \operatorname{Re}\!\left[(\bm{A}_{\bm{\gamma}}^\dagger\bm{\Lambda}\bm{A}_{\bm{\gamma}})^\mathsf{T} \odot \bm{A}_{\bm{\gamma}}^\dagger\bm{Y}\bm{Y}^\mathsf{H}\bm{P}_{\bm{\gamma}}^\perp\bm{\Lambda}\bm{A}_{\bm{\gamma}}\right] \nonumber\\
&\quad - 2\operatorname{Re}\!\left[\bm{A}_{\bm{\gamma}}^\dagger\bm{A}_{\bm{\gamma}}^{\dagger\mathsf{H}} \odot (\bm{A}_{\bm{\gamma}}^\mathsf{H}\bm{\Lambda}^\mathsf{H}\bm{P}_{\bm{\gamma}}^\perp\bm{Y}\bm{Y}^\mathsf{H}\bm{P}_{\bm{\gamma}}^\perp\bm{\Lambda}\bm{A}_{\bm{\gamma}})^\mathsf{T}\right].
\end{align}
\end{lemma}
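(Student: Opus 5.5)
The plan is to differentiate the gradient expression of Lemma~\ref{lem:expression_for_q_alg} once more, working with the real and imaginary parts of the residual. The decomposition \eqref{eq:hess_decomp} is the standard Gauss--Newton split. Write $\ell(\bm{\gamma}) = \frac{1}{2L}\sum_{n,\ell}\big((r^{\mathrm{Re}}_{n,\ell})^2 + (r^{\mathrm{Im}}_{n,\ell})^2\big)$ with $r_{n,\ell}$ as in \eqref{eq:r_nl_def}; this is a sum of squares of real functions of the real variable $\bm{\gamma}$. Differentiating twice gives
\begin{equation*}
\nabla^2\ell = \frac{1}{L}\sum_{n,\ell}\Big(\nabla r^{\mathrm{Re}}_{n,\ell}(\nabla r^{\mathrm{Re}}_{n,\ell})^\mathsf{T} + \nabla r^{\mathrm{Im}}_{n,\ell}(\nabla r^{\mathrm{Im}}_{n,\ell})^\mathsf{T}\Big) + \frac{1}{L}\sum_{n,\ell}\Big(r^{\mathrm{Re}}_{n,\ell}\nabla^2 r^{\mathrm{Re}}_{n,\ell} + r^{\mathrm{Im}}_{n,\ell}\nabla^2 r^{\mathrm{Im}}_{n,\ell}\Big).
\end{equation*}
For complex column vectors $\bm{a}_j = \partial \bm{f}/\partial\gamma_j$, the identity $\operatorname{Re}(\bm{a}_j^\mathsf{H}\bm{a}_l) = \operatorname{Re}\bm{a}_j^\mathsf{T}\operatorname{Re}\bm{a}_l + \operatorname{Im}\bm{a}_j^\mathsf{T}\operatorname{Im}\bm{a}_l$ shows that the first sum equals $\frac{1}{L}\operatorname{Re}(\bm{J}^\mathsf{H}\bm{J})$. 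Combined with Lemma~\ref{lem:expression_for_jac_alg}, this settles \eqref{eq:hess_decomp}.

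For the residual curvature term \eqref{eq:residual_curv}, I would not expand entrywise. Instead I would use the equivalent identity
\begin{equation*}
\sum_{n,\ell}\big(r^{\mathrm{Re}}\nabla^2 r^{\mathrm{Re}} + r^{\mathrm{Im}}\nabla^2 r^{\mathrm{Im}}\big) = \operatorname{Re}\,\mathrm{Trace}\Big(\bm{Y}^\mathsf{H}\bm{P}^\perp_{\bm{\gamma}}\,\frac{\partial^2 \bm{P}^\perp_{\bm{\gamma}}}{\partial\gamma_j\partial\gamma_l}\,\bm{Y}\Big)_{j,l}.
\end{equation*}
This holds because the $(j,l)$ entry is $\operatorname{Re}\sum \bar r\,\partial_{jl} r$ and $\partial_{jl}(\bm{P}^\perp\bm{Y}) = -\partial_{jl}\bm{P}\,\bm{Y}$. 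The task then reduces to differentiating \eqref{eq:projection_deriv} with respect to $\gamma_l$, using \eqref{eq:Phi_gamma_deriv}, the derivative of $\bm{P}^\perp_{\bm{\gamma}}$ (the negative of \eqref{eq:projection_deriv}), and the Golub--Pereyra formula for the pseudoinverse,
\begin{equation*}
\partial_l \bm{A}^\dagger = -\bm{A}^\dagger(\partial_l\bm{A})\bm{A}^\dagger + \bm{A}^\dagger\bm{A}^{\dagger\mathsf{H}}(\partial_l\bm{A})^\mathsf{H}\bm{P}^\perp,
\end{equation*}
which holds for full column rank $\bm{A}$ and is guaranteed by the distinctness of the entries of $\bm{\gamma}$.

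The second derivative of $\bm{M}_j := \bm{P}^\perp\bm{\Lambda}\bm{A}\bm{e}_j\bm{e}_j^\mathsf{T}\bm{A}^\dagger$ produces three kinds of terms:
\begin{itemize}
\item from $\partial_l\bm{P}^\perp$, namely $-(\bm{P}^\perp\bm{\Lambda}\bm{A}\bm{e}_l\bm{e}_l^\mathsf{T}\bm{A}^\dagger + \text{h.c.})\bm{\Lambda}\bm{A}\bm{e}_j\bm{e}_j^\mathsf{T}\bm{A}^\dagger$;
\item from $\partial_l(\bm{\Lambda}\bm{A}\bm{e}_j)$, which equals $\delta_{jl}\bm{\Lambda}^2\bm{A}\bm{e}_j$ and will give the $\odot\,\bm{I}_K$ term;
\item from $\partial_l\bm{A}^\dagger$, handled with the formula above.
\end{itemize}
Each term is then inserted into $\operatorname{Re}\,\mathrm{Trace}(\bm{Y}^\mathsf{H}\bm{P}^\perp(\cdot + \cdot^\mathsf{H})\bm{Y})$. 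Using $\bm{P}^\perp\bm{A} = 0$, $\bm{A}^\dagger\bm{P}^\perp = 0$, $(\bm{P}^\perp)^2 = \bm{P}^\perp$ and cyclicity, many terms vanish, and each survivor factors as a product $[\cdot]_{j,l}[\cdot]_{l,j}$ or $[\cdot]_{j,l}[\cdot]_{j,l}$. Those products are read off as the Hadamard products in \eqref{eq:residual_curv}, exactly as in the proof of Lemma~\ref{lem:expression_for_jac_alg}. The two occurrences of the $\bm{A}^\dagger\bm{A}^{\dagger\mathsf{H}}\odot(\cdots)^\mathsf{T}$ term, one from $\bm{M}_j$ and one from its Hermitian conjugate, combine into the factor $2$.

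The main obstacle is this bookkeeping. Terms must be tracked carefully under $\operatorname{Re}$, since a matrix and its transpose or conjugate give equal real parts, and this freedom has to be used to match the four stated terms and their signs precisely. I would organize the computation by which factor gets differentiated, and check symmetry in $(j,l)$ at the end as a consistency test.
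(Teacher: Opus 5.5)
Your proposal is correct and is essentially the paper's proof: the real-equivalent Gauss--Newton split yielding $\operatorname{Re}(\bm{J}^\mathsf{H}\bm{J})$, then differentiating $\bm{P}_{\bm{\gamma}}^\perp\bm{\Lambda}\bm{A}_{\bm{\gamma}}\bm{e}_k\bm{e}_k^\mathsf{T}\bm{A}_{\bm{\gamma}}^\dagger$ by the product rule and the Golub--Pereyra pseudoinverse formula, discarding terms via $\bm{A}_{\bm{\gamma}}^\dagger\bm{P}_{\bm{\gamma}}^\perp=\bm{0}$, and reading off Hadamard products. Your direct identity $\operatorname{Re}\sum_{n,\ell}\bar{r}_{n,\ell}\,\partial_{jl}r_{n,\ell}=\operatorname{Re}\,\mathrm{Trace}(\bm{Y}^\mathsf{H}\bm{P}_{\bm{\gamma}}^\perp\,\partial_{jl}\bm{P}_{\bm{\gamma}}^\perp\,\bm{Y})$ is a shortcut to the same intermediate expression \eqref{eq:curv_term_two_inner_prods}, which the paper reaches through the Hermitian-projection operator $\mathcal{H}$. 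One precision: matching the transpose placement in the last term relies on $\bm{A}_{\bm{\gamma}}^\dagger\bm{A}_{\bm{\gamma}}^{\dagger\mathsf{H}}$ and $\bm{A}_{\bm{\gamma}}^\mathsf{H}\bm{\Lambda}^\mathsf{H}\bm{P}_{\bm{\gamma}}^\perp\bm{Y}\bm{Y}^\mathsf{H}\bm{P}_{\bm{\gamma}}^\perp\bm{\Lambda}\bm{A}_{\bm{\gamma}}$ both being Hermitian, so that moving the transpose between them only conjugates the Hadamard product; it does not follow from a general rule that a matrix and its transpose have equal real parts.
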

\begin{proof}
The classical Hessian decomposition for nonlinear least squares applies to real-valued residuals \cite{dennis1996numerical}. Since the residual $\bm{f}(\bm{\gamma}) = \mathrm{vec}(\bm{P}_{\bm{\gamma}}^\perp\bm{Y}) \in \mathbb{C}^{NL}$ is complex-valued, we begin by constructing a real-equivalent reformulation that allows the classical identity to be applied.

We define the real-equivalent residual
\[
\tilde{\bm{f}}(\bm{\gamma}) = \begin{bmatrix}\operatorname{Re}(\bm{f}(\bm{\gamma})) \\ \operatorname{Im}(\bm{f}(\bm{\gamma}))\end{bmatrix} \in \mathbb{R}^{2NL},
\]
and the corresponding real-equivalent Jacobian
\[
\tilde{\bm{J}}(\bm{\gamma}) = \begin{bmatrix}\partial \operatorname{Re}(\bm{f}(\bm{\gamma}))/\partial \gamma_1 & \cdots & \partial \operatorname{Re}(\bm{f}(\bm{\gamma}))/\partial \gamma_K \\[3pt] \partial \operatorname{Im}(\bm{f}(\bm{\gamma}))/\partial \gamma_1 & \cdots & \partial \operatorname{Im}(\bm{f}(\bm{\gamma}))/\partial \gamma_K\end{bmatrix} \in \mathbb{R}^{2NL \times K}.
\]
Since $\ell(\bm{\gamma}) = \frac{1}{2L}\norm{\bm{f}(\bm{\gamma})}_2^2 = \frac{1}{2L}\|\tilde{\bm{f}}(\bm{\gamma})\|_2^2$, the standard nonlinear least-squares Hessian identity \cite[Ch.~10]{dennis1996numerical} applied to $\tilde{\bm{f}}$ yields
\begin{align}
\label{eq:nls_hessian_real}
\nabla^2 \ell(\bm{\gamma}) = \frac{1}{L}\,\tilde{\bm{J}}(\bm{\gamma})^\mathsf{T}\tilde{\bm{J}}(\bm{\gamma}) + \frac{1}{L}\sum_{i=1}^{2NL}\tilde{f}_i(\bm{\gamma})\,\nabla_{\bm{\gamma}}^2 \tilde{f}_i(\bm{\gamma}).
\end{align}
To connect this real-equivalent identity to the complex Jacobian $\bm{J}(\bm{\gamma})$, we observe that for any pair of complex vectors $\bm{u},\bm{v} \in \mathbb{C}^{NL}$,
\begin{align}
\label{eq:complex_inner_product_identity}
\langle \operatorname{Re}(\bm{u}),\operatorname{Re}(\bm{v})\rangle + \langle \operatorname{Im}(\bm{u}),\operatorname{Im}(\bm{v})\rangle = \operatorname{Re}\!\left(\langle \bm{u},\bm{v}\rangle\right).
\end{align}
Applying \eqref{eq:complex_inner_product_identity} to the columns of $\bm{J}(\bm{\gamma})$, the $(m,n)$-entry of the real-equivalent Jacobian Gramian satisfies
\[
\big[\tilde{\bm{J}}(\bm{\gamma})^\mathsf{T}\tilde{\bm{J}}(\bm{\gamma})\big]_{m,n} = \operatorname{Re}\!\left(\left\langle \frac{\partial \bm{f}(\bm{\gamma})}{\partial \gamma_m},\frac{\partial \bm{f}(\bm{\gamma})}{\partial \gamma_n}\right\rangle\right) = \big[\operatorname{Re}(\bm{J}(\bm{\gamma})^\mathsf{H}\bm{J}(\bm{\gamma}))\big]_{m,n},
\]
so that
\begin{align}
\label{eq:real_jacob_gramian}
\tilde{\bm{J}}(\bm{\gamma})^\mathsf{T}\tilde{\bm{J}}(\bm{\gamma}) = \operatorname{Re}\!\left(\bm{J}(\bm{\gamma})^\mathsf{H}\bm{J}(\bm{\gamma})\right).
\end{align}
Substituting \eqref{eq:real_jacob_gramian} into \eqref{eq:nls_hessian_real} and re-expressing the second summand in terms of the real and imaginary parts of $r_{n,\ell}$ established earlier in \eqref{eq:r_nl_def} gives
\[
\nabla^2 \ell(\bm{\gamma}) = \frac{1}{L}\operatorname{Re}\!\left(\bm{J}(\bm{\gamma})^\mathsf{H}\bm{J}(\bm{\gamma})\right) + \frac{1}{L}\sum_{n=1}^N\sum_{\ell=1}^L\Big(r_{n,\ell}^{\mathrm{Re}}\,\nabla_{\bm{\gamma}}^2 r_{n,\ell}^{\mathrm{Re}} + r_{n,\ell}^{\mathrm{Im}}\,\nabla_{\bm{\gamma}}^2 r_{n,\ell}^{\mathrm{Im}}\Big),
\]
which establishes the decomposition \eqref{eq:hess_decomp}. We now turn to deriving the explicit expression \eqref{eq:residual_curv} for the residual curvature sum.

We begin by computing the first partial derivative of $r_{n,\ell}$ with respect to $\gamma_k$. Using $\bm{P}_{\bm{\gamma}}^\perp = \bm{I}_N - \bm{P}_{\bm{\gamma}}$ and the projection derivative \eqref{eq:projection_deriv},
\begin{align}
\label{eq:first_deriv_r}
\frac{\partial r_{n,\ell}}{\partial \gamma_k} = -\bm{e}_n^\mathsf{T}\frac{\partial \bm{P}_{\bm{\gamma}}}{\partial \gamma_k}\bm{y}_\ell = -\bm{e}_n^\mathsf{T}\big(\bm{T}_k + \bm{T}_k^\mathsf{H}\big)\bm{y}_\ell,
\end{align}
where we have introduced the shorthand
\begin{align}
\label{eq:Tk_def}
\bm{T}_k = \bm{P}_{\bm{\gamma}}^\perp\bm{\Lambda}\bm{A}_{\bm{\gamma}}\bm{e}_k\bm{e}_k^\mathsf{T}\bm{A}_{\bm{\gamma}}^\dagger.
\end{align}
Differentiating \eqref{eq:first_deriv_r} once more with respect to $\gamma_j$, taking the real part,
\begin{align}
\frac{\partial^2 \operatorname{Re}(r_{n,\ell})}{\partial \gamma_j\partial \gamma_k} &= -\operatorname{Re}\!\left(\bm{e}_n^\mathsf{T}\!\left(\frac{\partial \bm{T}_k}{\partial \gamma_j} + \left(\frac{\partial \bm{T}_k}{\partial \gamma_j}\right)^\mathsf{H}\right)\!\bm{y}_\ell\right) \nonumber \\
&= -2\operatorname{Re}\!\left(\operatorname{Trace}\!\left(\bm{e}_n^\mathsf{T}\,\mathcal{H}\!\left(\frac{\partial \bm{T}_k}{\partial \gamma_j}\right)\bm{y}_\ell\right)\right),
\label{eq:Re_second_deriv_compact}
\end{align}
where the Hermitian projection operator is defined by
\begin{align*}
\mathcal{H}(\bm{M}) := \tfrac{1}{2}\big(\bm{M} + \bm{M}^\mathsf{H}\big).
\end{align*}
We now introduce the real-valued Frobenius inner product
\begin{align}
\label{eq:real_inner_product_def}
\langle \bm{A},\bm{B}\rangle_\mathbb{R} := \operatorname{Re}\!\left(\operatorname{Trace}(\bm{A}^\mathsf{H}\bm{B})\right),
\end{align}
under which the operator $\mathcal{H}$ is self-adjoint, i.e., $\langle \mathcal{H}(\bm{A}),\bm{B}\rangle_\mathbb{R} = \langle \bm{A},\mathcal{H}(\bm{B})\rangle_\mathbb{R}$ for all complex matrices $\bm{A},\bm{B}$ of compatible dimensions. Rewriting the trace in \eqref{eq:Re_second_deriv_compact} as an inner product of the form \eqref{eq:real_inner_product_def} and applying the self-adjointness of $\mathcal{H}$, we obtain
\begin{align}
\label{eq:real_curv_term}
\frac{\partial^2 \operatorname{Re}(r_{n,\ell})}{\partial \gamma_j\partial \gamma_k} = -2\left\langle \frac{\partial \bm{T}_k}{\partial \gamma_j},\,\mathcal{H}(\bm{y}_\ell\bm{e}_n^\mathsf{T})\right\rangle_\mathbb{R}.
\end{align}
The same argument applied to the imaginary part of $r_{n,\ell}$ yields
\begin{align}
\label{eq:imag_curv_term}
\frac{\partial^2 \operatorname{Im}(r_{n,\ell})}{\partial \gamma_j\partial \gamma_k} = -2\left\langle \frac{\partial \bm{T}_k}{\partial \gamma_j},\,\mathcal{H}(-i\,\bm{y}_\ell\bm{e}_n^\mathsf{T})\right\rangle_\mathbb{R}.
\end{align}

Substituting \eqref{eq:real_curv_term} and \eqref{eq:imag_curv_term} into the residual curvature sum on the left-hand side of \eqref{eq:residual_curv} yields
\begin{align}
\sum_{n,\ell}\Big(r_{n,\ell}^{\mathrm{Re}}\nabla_{\bm{\gamma}}^2 r_{n,\ell}^{\mathrm{Re}} + r_{n,\ell}^{\mathrm{Im}}\nabla_{\bm{\gamma}}^2 r_{n,\ell}^{\mathrm{Im}}\Big) = -2\left\langle \frac{\partial \bm{T}_k}{\partial \gamma_j},\,\sum_{n,\ell}\!\left[r_{n,\ell}^{\mathrm{Re}}\mathcal{H}(\bm{y}_\ell\bm{e}_n^\mathsf{T}) + r_{n,\ell}^{\mathrm{Im}}\mathcal{H}(-i\,\bm{y}_\ell\bm{e}_n^\mathsf{T})\right]\right\rangle_\mathbb{R}.
\label{eq:curv_sum_with_inner_product}
\end{align}
By linearity of $\mathcal{H}$ on the real field, the real and imaginary contributions inside the inner product on the right-hand side of \eqref{eq:curv_sum_with_inner_product} combine into a single Hermitian-projected term as
\begin{align}
\label{eq:combining_real_imag}
r_{n,\ell}^{\mathrm{Re}}\mathcal{H}(\bm{y}_\ell\bm{e}_n^\mathsf{T}) + r_{n,\ell}^{\mathrm{Im}}\mathcal{H}(-i\,\bm{y}_\ell\bm{e}_n^\mathsf{T}) = \mathcal{H}\!\left(\big(r_{n,\ell}^{\mathrm{Re}} - i\,r_{n,\ell}^{\mathrm{Im}}\big)\,\bm{y}_\ell\bm{e}_n^\mathsf{T}\right) = \mathcal{H}\!\left(r_{n,\ell}^\ast\,\bm{y}_\ell\bm{e}_n^\mathsf{T}\right),
\end{align}
where $r_{n,\ell}^\ast = r_{n,\ell}^{\mathrm{Re}} - i\,r_{n,\ell}^{\mathrm{Im}}$ denotes the complex conjugate of $r_{n,\ell}$. Substituting the definition $r_{n,\ell} = \bm{e}_n^\mathsf{T}\bm{P}_{\bm{\gamma}}^\perp\bm{y}_\ell$ into \eqref{eq:combining_real_imag} and summing over $n$ and $\ell$ gives
\begin{align}
\sum_{n,\ell}\mathcal{H}\!\left(r_{n,\ell}^\ast\,\bm{y}_\ell\bm{e}_n^\mathsf{T}\right) &= \sum_{n,\ell}\mathcal{H}\!\left(\bm{y}_\ell\bm{y}_\ell^\mathsf{H}\bm{P}_{\bm{\gamma}}^\perp\bm{e}_n\bm{e}_n^\mathsf{T}\right) \nonumber \\
&= \mathcal{H}\!\left(\!\left(\sum_\ell \bm{y}_\ell\bm{y}_\ell^\mathsf{H}\right)\bm{P}_{\bm{\gamma}}^\perp\!\left(\sum_n \bm{e}_n\bm{e}_n^\mathsf{T}\right)\!\right) = \mathcal{H}\!\left(\bm{Y}\bm{Y}^\mathsf{H}\bm{P}_{\bm{\gamma}}^\perp\right),
\label{eq:H_YYH_proj}
\end{align}
where we used $\sum_\ell \bm{y}_\ell\bm{y}_\ell^\mathsf{H} = \bm{Y}\bm{Y}^\mathsf{H}$ and $\sum_n \bm{e}_n\bm{e}_n^\mathsf{T} = \bm{I}_N$. Substituting \eqref{eq:H_YYH_proj} into \eqref{eq:curv_sum_with_inner_product}, applying the self-adjointness of $\mathcal{H}$ once more, and expanding the Hermitian projection,
\begin{align}
\sum_{n,\ell}\Big(r_{n,\ell}^{\mathrm{Re}}\nabla_{\bm{\gamma}}^2 r_{n,\ell}^{\mathrm{Re}} + r_{n,\ell}^{\mathrm{Im}}\nabla_{\bm{\gamma}}^2 r_{n,\ell}^{\mathrm{Im}}\Big) = -\left(\!\left\langle \frac{\partial \bm{T}_k}{\partial \gamma_j},\,\bm{Y}\bm{Y}^\mathsf{H}\bm{P}_{\bm{\gamma}}^\perp\right\rangle_\mathbb{R} + \left\langle \frac{\partial \bm{T}_k}{\partial \gamma_j},\,\bm{P}_{\bm{\gamma}}^\perp\bm{Y}\bm{Y}^\mathsf{H}\right\rangle_\mathbb{R}\!\right).
\label{eq:curv_term_two_inner_prods}
\end{align}

It remains to compute the partial derivative of $\bm{T}_k$ defined in \eqref{eq:Tk_def}. Applying the product rule,
\begin{align}
\frac{\partial \bm{T}_k}{\partial \gamma_j}
&= \!\left(\frac{\partial \bm{P}_{\bm{\gamma}}^\perp}{\partial \gamma_j}\right)\!\bm{\Lambda}\bm{A}_{\bm{\gamma}}\bm{e}_k\bm{e}_k^\mathsf{T}\bm{A}_{\bm{\gamma}}^\dagger
+ \bm{P}_{\bm{\gamma}}^\perp\bm{\Lambda}\!\left(\frac{\partial \bm{A}_{\bm{\gamma}}}{\partial \gamma_j}\right)\!\bm{e}_k\bm{e}_k^\mathsf{T}\bm{A}_{\bm{\gamma}}^\dagger + \bm{P}_{\bm{\gamma}}^\perp\bm{\Lambda}\bm{A}_{\bm{\gamma}}\bm{e}_k\bm{e}_k^\mathsf{T}\!\left(\frac{\partial \bm{A}_{\bm{\gamma}}^\dagger}{\partial \gamma_j}\right).
\label{eq:Tk_deriv_expand}
\end{align}
The first two derivatives in \eqref{eq:Tk_deriv_expand} were computed earlier in \eqref{eq:projection_deriv} and \eqref{eq:Phi_gamma_deriv}. For the pseudoinverse derivative, we extend the Fr{\'e}chet identity from \cite[Theorem~4.3]{golub1973differentiation} to the complex-valued case by replacing the transpose with the Hermitian transpose, which gives
\begin{align}
\label{eq:pinv_deriv_full}
\frac{\partial \bm{A}_{\bm{\gamma}}^\dagger}{\partial \gamma_j}
&= -\bm{A}_{\bm{\gamma}}^\dagger\,\frac{\partial \bm{A}_{\bm{\gamma}}}{\partial \gamma_j}\,\bm{A}_{\bm{\gamma}}^\dagger
+ \bm{A}_{\bm{\gamma}}^\dagger\bm{A}_{\bm{\gamma}}^{\dagger\mathsf{H}}\,\frac{\partial \bm{A}_{\bm{\gamma}}^\mathsf{H}}{\partial \gamma_j}\,\bm{P}_{\bm{\gamma}}^\perp + \big(\bm{I}_K - \bm{A}_{\bm{\gamma}}^\dagger\bm{A}_{\bm{\gamma}}\big)\,\frac{\partial \bm{A}_{\bm{\gamma}}^\mathsf{H}}{\partial \gamma_j}\,\bm{A}_{\bm{\gamma}}^{\dagger\mathsf{H}}\bm{A}_{\bm{\gamma}}^\dagger.
\end{align}
Since the entries of $\bm{\gamma}$ are distinct by assumption, $\bm{A}_{\bm{\gamma}}$ is full column rank and therefore $\bm{A}_{\bm{\gamma}}^\dagger\bm{A}_{\bm{\gamma}} = \bm{I}_K$, which causes the third term in \eqref{eq:pinv_deriv_full} to vanish. Substituting the resulting expression for $\partial \bm{A}_{\bm{\gamma}}^\dagger/\partial \gamma_j$ together with \eqref{eq:projection_deriv} and \eqref{eq:Phi_gamma_deriv} back into \eqref{eq:Tk_deriv_expand} yields the closed-form expansion
\begin{align}
\frac{\partial \bm{T}_k}{\partial \gamma_j}
&= -\bm{P}_{\bm{\gamma}}^\perp\bm{\Lambda}\bm{A}_{\bm{\gamma}}\bm{e}_j\bm{e}_j^\mathsf{T}\bm{A}_{\bm{\gamma}}^\dagger\bm{\Lambda}\bm{A}_{\bm{\gamma}}\bm{e}_k\bm{e}_k^\mathsf{T}\bm{A}_{\bm{\gamma}}^\dagger
- \bm{A}_{\bm{\gamma}}^{\dagger\mathsf{H}}\bm{e}_j\bm{e}_j^\mathsf{T}\bm{A}_{\bm{\gamma}}^\mathsf{H}\bm{\Lambda}^\mathsf{H}\bm{P}_{\bm{\gamma}}^\perp\bm{\Lambda}\bm{A}_{\bm{\gamma}}\bm{e}_k\bm{e}_k^\mathsf{T}\bm{A}_{\bm{\gamma}}^\dagger \nonumber \\
&\quad + \bm{P}_{\bm{\gamma}}^\perp\bm{\Lambda}^2\bm{A}_{\bm{\gamma}}\bm{e}_j\bm{e}_j^\mathsf{T}\bm{e}_k\bm{e}_k^\mathsf{T}\bm{A}_{\bm{\gamma}}^\dagger
- \bm{P}_{\bm{\gamma}}^\perp\bm{\Lambda}\bm{A}_{\bm{\gamma}}\bm{e}_k\bm{e}_k^\mathsf{T}\bm{A}_{\bm{\gamma}}^\dagger\bm{\Lambda}\bm{A}_{\bm{\gamma}}\bm{e}_j\bm{e}_j^\mathsf{T}\bm{A}_{\bm{\gamma}}^\dagger \nonumber \\
&\quad + \bm{P}_{\bm{\gamma}}^\perp\bm{\Lambda}\bm{A}_{\bm{\gamma}}\bm{e}_k\bm{e}_k^\mathsf{T}\bm{A}_{\bm{\gamma}}^\dagger\bm{A}_{\bm{\gamma}}^{\dagger\mathsf{H}}\bm{e}_j\bm{e}_j^\mathsf{T}\bm{A}_{\bm{\gamma}}^\mathsf{H}\bm{\Lambda}^\mathsf{H}\bm{P}_{\bm{\gamma}}^\perp.
\label{eq:Tk_deriv_explicit}
\end{align}

We now substitute the explicit expression \eqref{eq:Tk_deriv_explicit} into each of the two inner products in \eqref{eq:curv_term_two_inner_prods}. Consider first the inner product $\langle \partial \bm{T}_k/\partial \gamma_j,\,\bm{Y}\bm{Y}^\mathsf{H}\bm{P}_{\bm{\gamma}}^\perp\rangle_\mathbb{R}$. Each of the five terms in \eqref{eq:Tk_deriv_explicit} contributes a separate trace, which by the cyclic property of the trace can be identified as the $(j,k)$-entry of a Hadamard product, yielding
\begin{align}
\label{eq:first_inner_contribs}
\langle \tfrac{\partial \bm{T}_k}{\partial \gamma_j},\,\bm{Y}\bm{Y}^\mathsf{H}\bm{P}_{\bm{\gamma}}^\perp\rangle_\mathbb{R}
&= -\big[\bm{A}_{\bm{\gamma}}^\dagger\bm{\Lambda}\bm{A}_{\bm{\gamma}} \odot (\bm{A}_{\bm{\gamma}}^\dagger\bm{Y}\bm{Y}^\mathsf{H}\bm{P}_{\bm{\gamma}}^\perp\bm{\Lambda}\bm{A}_{\bm{\gamma}})^\mathsf{T}\big]_{j,k} \nonumber \\
&\quad - \big[\bm{A}_{\bm{\gamma}}^\mathsf{H}\bm{\Lambda}^\mathsf{H}\bm{P}_{\bm{\gamma}}^\perp\bm{A}_{\bm{\gamma}}^\dagger\bm{Y}\bm{Y}^\mathsf{H}\bm{P}_{\bm{\gamma}}^\perp\bm{A}_{\bm{\gamma}}^{\dagger\mathsf{H}} \odot \bm{A}_{\bm{\gamma}}^\mathsf{H}\bm{\Lambda}^\mathsf{H}\bm{P}_{\bm{\gamma}}^\perp\bm{\Lambda}\bm{A}_{\bm{\gamma}}\big]_{j,k} \nonumber \\
&\quad + \delta_{j,k}\,\big[\bm{A}_{\bm{\gamma}}^\dagger\bm{Y}\bm{Y}^\mathsf{H}\bm{P}_{\bm{\gamma}}^\perp\bm{\Lambda}^2\bm{A}_{\bm{\gamma}}\big]_{j,k} \nonumber \\
&\quad - \big[\bm{A}_{\bm{\gamma}}^\dagger\bm{\Lambda}\bm{A}_{\bm{\gamma}} \odot (\bm{A}_{\bm{\gamma}}^\dagger\bm{Y}\bm{Y}^\mathsf{H}\bm{P}_{\bm{\gamma}}^\perp\bm{\Lambda}\bm{A}_{\bm{\gamma}})^\mathsf{T}\big]_{k,j} \nonumber \\
&\quad + \big[\bm{A}_{\bm{\gamma}}^\dagger\bm{A}_{\bm{\gamma}}^{\dagger\mathsf{H}} \odot (\bm{A}_{\bm{\gamma}}^\mathsf{H}\bm{\Lambda}^\mathsf{H}\bm{P}_{\bm{\gamma}}^\perp\bm{Y}\bm{Y}^\mathsf{H}\bm{P}_{\bm{\gamma}}^\perp\bm{\Lambda}\bm{A}_{\bm{\gamma}})^\mathsf{T}\big]_{k,j}.
\end{align}
The second summand on the right-hand side of \eqref{eq:first_inner_contribs} vanishes because it contains a factor $\bm{A}_{\bm{\gamma}}^\dagger\bm{P}_{\bm{\gamma}}^\perp = \bm{0}$. Applying the same argument to the second inner product $\langle \partial \bm{T}_k/\partial \gamma_j,\,\bm{P}_{\bm{\gamma}}^\perp\bm{Y}\bm{Y}^\mathsf{H}\rangle_\mathbb{R}$ yields
\begin{align}
\label{eq:second_inner_contribs}
\langle \tfrac{\partial \bm{T}_k}{\partial \gamma_j},\,\bm{P}_{\bm{\gamma}}^\perp\bm{Y}\bm{Y}^\mathsf{H}\rangle_\mathbb{R}
&= \big[\bm{A}_{\bm{\gamma}}^\dagger\bm{A}_{\bm{\gamma}}^{\dagger\mathsf{H}} \odot (\bm{A}_{\bm{\gamma}}^\mathsf{H}\bm{\Lambda}^\mathsf{H}\bm{P}_{\bm{\gamma}}^\perp\bm{Y}\bm{Y}^\mathsf{H}\bm{P}_{\bm{\gamma}}^\perp\bm{\Lambda}\bm{A}_{\bm{\gamma}})^\mathsf{T}\big]_{k,j},
\end{align}
where the other four trace terms vanish since each contains a factor $\bm{A}_{\bm{\gamma}}^\dagger\bm{P}_{\bm{\gamma}}^\perp = \bm{0}$.

Substituting \eqref{eq:first_inner_contribs} and \eqref{eq:second_inner_contribs} into \eqref{eq:curv_term_two_inner_prods}, taking the real part, and collecting the resulting $(j,k)$-entries over all $j,k \in [K]$ into matrix form gives
\begin{align*}
\sum_{n,\ell}\Big(r_{n,\ell}^{\mathrm{Re}}\nabla_{\bm{\gamma}}^2 r_{n,\ell}^{\mathrm{Re}} + r_{n,\ell}^{\mathrm{Im}}\nabla_{\bm{\gamma}}^2 r_{n,\ell}^{\mathrm{Im}}\Big)
&= \operatorname{Re}\!\left[\bm{A}_{\bm{\gamma}}^\dagger\bm{\Lambda}\bm{A}_{\bm{\gamma}} \odot (\bm{A}_{\bm{\gamma}}^\dagger\bm{Y}\bm{Y}^\mathsf{H}\bm{P}_{\bm{\gamma}}^\perp\bm{\Lambda}\bm{A}_{\bm{\gamma}})^\mathsf{T}\right] \\
&\quad - \operatorname{Re}\!\left[\bm{A}_{\bm{\gamma}}^\dagger\bm{Y}\bm{Y}^\mathsf{H}\bm{P}_{\bm{\gamma}}^\perp\bm{\Lambda}^2\bm{A}_{\bm{\gamma}} \odot \bm{I}_K\right] \\
&\quad + \operatorname{Re}\!\left[(\bm{A}_{\bm{\gamma}}^\dagger\bm{\Lambda}\bm{A}_{\bm{\gamma}})^\mathsf{T} \odot \bm{A}_{\bm{\gamma}}^\dagger\bm{Y}\bm{Y}^\mathsf{H}\bm{P}_{\bm{\gamma}}^\perp\bm{\Lambda}\bm{A}_{\bm{\gamma}}\right] \\
&\quad - 2\operatorname{Re}\!\left[\bm{A}_{\bm{\gamma}}^\dagger\bm{A}_{\bm{\gamma}}^{\dagger\mathsf{H}} \odot (\bm{A}_{\bm{\gamma}}^\mathsf{H}\bm{\Lambda}^\mathsf{H}\bm{P}_{\bm{\gamma}}^\perp\bm{Y}\bm{Y}^\mathsf{H}\bm{P}_{\bm{\gamma}}^\perp\bm{\Lambda}\bm{A}_{\bm{\gamma}})^\mathsf{T}\right],
\end{align*}
which is exactly \eqref{eq:residual_curv}. This concludes the proof.
\end{proof}

\subsection{Proof of Lemma~\ref{lem:jacobian_iota}}
\label{subsecA3:crossderiv}

The gradient \eqref{eq:first_deriv_gamma} with respect to $\bm{\gamma}$ is a vector-valued function of $\bm{\gamma}$ and $\bm{Z}$. The goal of this proof is to compute its cross-derivative with respect to $\bm{Z}$. We proceed by first evaluating the partial derivatives of each of the $K$ scalar components of \eqref{eq:first_deriv_gamma} with respect to $\mathrm{vec}(\bm{Z})$, and then combining them into a single matrix expression.

Since $\bm{Y} = \bm{Y}_0 + \bm{Z}$, the dependence of $\ell(\bm{\gamma};\bm{Z})$ on $\bm{Z}$ enters only through $\bm{Y}$ via a simple shift. The cross-derivative with respect to $\bm{Z}$ can therefore be computed equivalently by first differentiating $\ell(\bm{\gamma};\bm{Y}-\bm{Y}_0)$ with respect to $\bm{Y}$, then substituting $\bm{Y} = \bm{Y}_0 + \bm{Z}$ at the end. We use the shorthand
\begin{align}
\label{eq:F_gamma_def}
\bm{F}_{\bm{\gamma}} = \bm{P}_{\bm{\gamma}}^\perp\bm{\Lambda}\bm{A}_{\bm{\gamma}}
\end{align}
throughout this proof. Writing $\bm{Y} = \bm{Y}_R + j\,\bm{Y}_I$, where $\bm{Y}_R$ and $\bm{Y}_I$ denote the real and imaginary parts of $\bm{Y}$ respectively, and substituting into \eqref{eq:first_deriv_gamma} together with \eqref{eq:F_gamma_def} gives
\begin{multline}
\label{eq:gradient_expanded_real_imag}
\frac{\partial \ell(\bm{\gamma};\bm{Y}-\bm{Y}_0)}{\partial \gamma_k}
= -\frac{1}{L}\operatorname{Re}\!\Big[
\underbrace{\bm{e}_k^\mathsf{T}\bm{A}_{\bm{\gamma}}^\dagger\bm{Y}_R\bm{Y}_R^\mathsf{T}\bm{F}_{\bm{\gamma}}\bm{e}_k}_{(\mathrm{a})}
+ \underbrace{\bm{e}_k^\mathsf{T}\bm{A}_{\bm{\gamma}}^\dagger\bm{Y}_I\bm{Y}_I^\mathsf{T}\bm{F}_{\bm{\gamma}}\bm{e}_k}_{(\mathrm{b})}
+ j\underbrace{\bm{e}_k^\mathsf{T}\bm{A}_{\bm{\gamma}}^\dagger\bm{Y}_I\bm{Y}_R^\mathsf{T}\bm{F}_{\bm{\gamma}}\bm{e}_k}_{(\mathrm{c})} \\
- j\underbrace{\bm{e}_k^\mathsf{T}\bm{A}_{\bm{\gamma}}^\dagger\bm{Y}_R\bm{Y}_I^\mathsf{T}\bm{F}_{\bm{\gamma}}\bm{e}_k}_{(\mathrm{d})}
\Big].
\end{multline}
The cross-derivative with respect to $\bm{Z}$ then reduces to differentiating each of the four scalar quantities $(\mathrm{a})$, $(\mathrm{b})$, $(\mathrm{c})$, $(\mathrm{d})$ with respect to $\mathrm{vec}(\bm{Y}_R)$ and $\mathrm{vec}(\bm{Y}_I)$. The terms $(\mathrm{a})$ and $(\mathrm{b})$ are quadratic in the real or imaginary part respectively, while $(\mathrm{c})$ and $(\mathrm{d})$ are bilinear in both parts.

We first recall two matrix-calculus identities used in computing the derivatives of $(\mathrm{a})$--$(\mathrm{d})$. For $\bm{W} \in \mathbb{R}^{M \times L}$, $\bm{M} \in \mathbb{C}^{L \times K}$, and $\bm{B} \in \mathbb{C}^{M \times K}$, the partial derivative of the bilinear form $\bm{e}_k^\mathsf{T}\bm{B}^\mathsf{T}\bm{W}\bm{M}\bm{e}_k$ with respect to $\mathrm{vec}(\bm{W})$ satisfies the bilinear identity
\begin{align}
\label{eq:bilinear_identity}
\frac{\partial \bm{e}_k^\mathsf{T}\bm{B}^\mathsf{T}\bm{W}\bm{M}\bm{e}_k}{\partial \mathrm{vec}(\bm{W})} = \bm{M}\bm{e}_k \otimes \bm{B}\bm{e}_k,
\end{align}
while for $\bm{W} \in \mathbb{R}^{M \times L}$, $\bm{M} \in \mathbb{C}^{M \times K}$, and $\bm{B} \in \mathbb{C}^{M \times K}$, the partial derivative of the quadratic form $\bm{e}_k^\mathsf{T}\bm{B}^\mathsf{T}\bm{W}\bm{W}^\mathsf{T}\bm{M}\bm{e}_k$ with respect to $\mathrm{vec}(\bm{W})$ satisfies the quadratic identity
\begin{align}
\label{eq:quadratic_identity}
\frac{\partial \bm{e}_k^\mathsf{T}\bm{B}^\mathsf{T}\bm{W}\bm{W}^\mathsf{T}\bm{M}\bm{e}_k}{\partial \mathrm{vec}(\bm{W})} = \bm{W}^\mathsf{T}\bm{B}\bm{e}_k \otimes \bm{M}\bm{e}_k + \bm{W}^\mathsf{T}\bm{M}\bm{e}_k \otimes \bm{B}\bm{e}_k.
\end{align}
Both identities follow from the standard vectorization rule $\bm{b}^\mathsf{T}\bm{W}\bm{a} = (\bm{a}\otimes\bm{b})^\mathsf{T}\mathrm{vec}(\bm{W})$.

Applying the bilinear identity \eqref{eq:bilinear_identity} to the mixed terms $(\mathrm{c})$ and $(\mathrm{d})$ with the appropriate choices of $\bm{W}$, $\bm{M}$, and $\bm{B}$, we obtain
\begin{align}
\label{eq:deriv_c_d}
\frac{\partial\,(\mathrm{c})}{\partial \mathrm{vec}(\bm{Y}_R)} &= (\bm{A}_{\bm{\gamma}}^\dagger\bm{Y}_I)^\mathsf{T}\bm{e}_k \otimes \bm{F}_{\bm{\gamma}}\bm{e}_k, &
\frac{\partial\,(\mathrm{d})}{\partial \mathrm{vec}(\bm{Y}_R)} &= \bm{Y}_I^\mathsf{T}\bm{F}_{\bm{\gamma}}\bm{e}_k \otimes \bm{A}_{\bm{\gamma}}^{\dagger\mathsf{T}}\bm{e}_k, \nonumber\\
\frac{\partial\,(\mathrm{c})}{\partial \mathrm{vec}(\bm{Y}_I)} &= \bm{Y}_R^\mathsf{T}\bm{F}_{\bm{\gamma}}\bm{e}_k \otimes \bm{A}_{\bm{\gamma}}^{\dagger\mathsf{T}}\bm{e}_k, &
\frac{\partial\,(\mathrm{d})}{\partial \mathrm{vec}(\bm{Y}_I)} &= (\bm{A}_{\bm{\gamma}}^\dagger\bm{Y}_R)^\mathsf{T}\bm{e}_k \otimes \bm{F}_{\bm{\gamma}}\bm{e}_k.
\end{align}
Similarly, applying the quadratic identity \eqref{eq:quadratic_identity} to the quadratic terms $(\mathrm{a})$ and $(\mathrm{b})$,
\begin{align}
\label{eq:deriv_a_b}
\frac{\partial\,(\mathrm{a})}{\partial \mathrm{vec}(\bm{Y}_R)} &= \bm{Y}_R^\mathsf{T}\bm{A}_{\bm{\gamma}}^{\dagger\mathsf{T}}\bm{e}_k \otimes \bm{F}_{\bm{\gamma}}\bm{e}_k + \bm{Y}_R^\mathsf{T}\bm{F}_{\bm{\gamma}}\bm{e}_k \otimes \bm{A}_{\bm{\gamma}}^{\dagger\mathsf{T}}\bm{e}_k, \nonumber\\
\frac{\partial\,(\mathrm{b})}{\partial \mathrm{vec}(\bm{Y}_I)} &= \bm{Y}_I^\mathsf{T}\bm{A}_{\bm{\gamma}}^{\dagger\mathsf{T}}\bm{e}_k \otimes \bm{F}_{\bm{\gamma}}\bm{e}_k + \bm{Y}_I^\mathsf{T}\bm{F}_{\bm{\gamma}}\bm{e}_k \otimes \bm{A}_{\bm{\gamma}}^{\dagger\mathsf{T}}\bm{e}_k.
\end{align}
The remaining derivatives $\partial(\mathrm{a})/\partial \mathrm{vec}(\bm{Y}_I)$ and $\partial(\mathrm{b})/\partial \mathrm{vec}(\bm{Y}_R)$ vanish since $(\mathrm{a})$ and $(\mathrm{b})$ depend only on $\bm{Y}_R$ and $\bm{Y}_I$ respectively.

Having computed the derivatives of all four scalar components in \eqref{eq:gradient_expanded_real_imag} with respect to both $\mathrm{vec}(\bm{Y}_R)$ and $\mathrm{vec}(\bm{Y}_I)$, we now assemble them into a single Jacobian matrix. We define the real-equivalent vectorization map
\begin{align}
\label{eq:def_iota}
\iota(\bm{Z}) = \begin{bmatrix}\operatorname{Re}(\mathrm{vec}(\bm{Z})) \\ \operatorname{Im}(\mathrm{vec}(\bm{Z}))\end{bmatrix} \in \mathbb{R}^{2NL},
\end{align}
which stacks the real and imaginary parts of $\mathrm{vec}(\bm{Z})$ on top of each other. Concatenating the partial derivatives in \eqref{eq:deriv_c_d} and \eqref{eq:deriv_a_b} over $k = 1,\ldots,K$ produces column-wise Kronecker products, which by definition coincide with Khatri-Rao products. Stacking the contributions with respect to $\mathrm{vec}(\bm{Y}_R)$ and $\mathrm{vec}(\bm{Y}_I)$ as the two row-blocks dictated by \eqref{eq:def_iota} yields
\begin{multline}
\label{eq:cross_deriv_step1}
\frac{\partial }{\partial \iota(\bm{Y}-\bm{Y}_0)} 
\operatorname{Re}\!\left[
\begin{matrix} \bm{e}_1^\mathsf{T} \bm{A}_{\bm{\gamma}}^\dagger \bm{Y}\bm{Y}^\mathsf{H} \bm{F}_{\bm{\gamma}} \bm{e}_1 & \cdots & \bm{e}_K^\mathsf{T} \bm{A}_{\bm{\gamma}}^\dagger \bm{Y}\bm{Y}^\mathsf{H} \bm{F}_{\bm{\gamma}} \bm{e}_K
\end{matrix} \right] \\
= \begin{bmatrix}
\operatorname{Re}\!\left( \bm{Y}_R^\mathsf{T} \bm{A}_{\bm{\gamma}}^{\dagger\mathsf{T}} \ast \bm{F}_{\bm{\gamma}} + \bm{Y}_R^\mathsf{T} \bm{F}_{\bm{\gamma}} \ast \bm{A}_{\bm{\gamma}}^{\dagger\mathsf{T}} + j\,\bm{Y}_I^\mathsf{T} \bm{A}_{\bm{\gamma}}^{\dagger\mathsf{T}} \ast \bm{F}_{\bm{\gamma}} - j\,\bm{Y}_I^\mathsf{T} \bm{F}_{\bm{\gamma}} \ast \bm{A}_{\bm{\gamma}}^{\dagger\mathsf{T}} \right) \\[3pt]
\operatorname{Re}\!\left( \bm{Y}_I^\mathsf{T} \bm{A}_{\bm{\gamma}}^{\dagger\mathsf{T}} \ast \bm{F}_{\bm{\gamma}} + \bm{Y}_I^\mathsf{T} \bm{F}_{\bm{\gamma}} \ast \bm{A}_{\bm{\gamma}}^{\dagger\mathsf{T}} + j\,\bm{Y}_R^\mathsf{T} \bm{F}_{\bm{\gamma}} \ast \bm{A}_{\bm{\gamma}}^{\dagger\mathsf{T}} - j\,\bm{Y}_R^\mathsf{T} \bm{A}_{\bm{\gamma}}^{\dagger\mathsf{T}} \ast \bm{F}_{\bm{\gamma}} \right)
\end{bmatrix}.
\end{multline}
We now simplify the two row-blocks of \eqref{eq:cross_deriv_step1} by grouping the terms sharing the same Khatri-Rao factor. In the first row, the terms involving $\bm{A}_{\bm{\gamma}}^{\dagger\mathsf{T}} \ast \bm{F}_{\bm{\gamma}}$ combine into $(\bm{Y}_R^\mathsf{T} + j\,\bm{Y}_I^\mathsf{T})\bm{A}_{\bm{\gamma}}^{\dagger\mathsf{T}} \ast \bm{F}_{\bm{\gamma}}$, while the terms involving $\bm{F}_{\bm{\gamma}} \ast \bm{A}_{\bm{\gamma}}^{\dagger\mathsf{T}}$ combine into $(\bm{Y}_R^\mathsf{T} - j\,\bm{Y}_I^\mathsf{T})\bm{F}_{\bm{\gamma}} \ast \bm{A}_{\bm{\gamma}}^{\dagger\mathsf{T}}$. In the second row, factoring out $-j$ from the $\bm{A}_{\bm{\gamma}}^{\dagger\mathsf{T}} \ast \bm{F}_{\bm{\gamma}}$ contributions and $j$ from the $\bm{F}_{\bm{\gamma}} \ast \bm{A}_{\bm{\gamma}}^{\dagger\mathsf{T}}$ contributions yields $-j(j\,\bm{Y}_I^\mathsf{T} + \bm{Y}_R^\mathsf{T})\bm{A}_{\bm{\gamma}}^{\dagger\mathsf{T}} \ast \bm{F}_{\bm{\gamma}}$ and $j(-j\,\bm{Y}_I^\mathsf{T} + \bm{Y}_R^\mathsf{T})\bm{F}_{\bm{\gamma}} \ast \bm{A}_{\bm{\gamma}}^{\dagger\mathsf{T}}$ respectively. After these regroupings,
\begin{multline}
\label{eq:cross_deriv_step2}
\frac{\partial }{\partial \iota(\bm{Y}-\bm{Y}_0)} 
\operatorname{Re}\!\left[
\begin{matrix} \bm{e}_1^\mathsf{T} \bm{A}_{\bm{\gamma}}^\dagger \bm{Y}\bm{Y}^\mathsf{H} \bm{F}_{\bm{\gamma}} \bm{e}_1 & \cdots & \bm{e}_K^\mathsf{T} \bm{A}_{\bm{\gamma}}^\dagger \bm{Y}\bm{Y}^\mathsf{H} \bm{F}_{\bm{\gamma}} \bm{e}_K
\end{matrix} \right] \\
= \begin{bmatrix}
\operatorname{Re}\!\left( (\bm{Y}_R^\mathsf{T} + j\,\bm{Y}_I^\mathsf{T}) \bm{A}_{\bm{\gamma}}^{\dagger\mathsf{T}} \ast \bm{F}_{\bm{\gamma}} + (\bm{Y}_R^\mathsf{T} - j\,\bm{Y}_I^\mathsf{T}) \bm{F}_{\bm{\gamma}} \ast \bm{A}_{\bm{\gamma}}^{\dagger\mathsf{T}} \right) \\[3pt]
\operatorname{Re}\!\left( -j(j\,\bm{Y}_I^\mathsf{T} + \bm{Y}_R^\mathsf{T}) \bm{A}_{\bm{\gamma}}^{\dagger\mathsf{T}} \ast \bm{F}_{\bm{\gamma}} + j(-j\,\bm{Y}_I^\mathsf{T} + \bm{Y}_R^\mathsf{T}) \bm{F}_{\bm{\gamma}} \ast \bm{A}_{\bm{\gamma}}^{\dagger\mathsf{T}} \right)
\end{bmatrix}.
\end{multline}
The combinations of $\bm{Y}_R$ and $\bm{Y}_I$ appearing in \eqref{eq:cross_deriv_step2} can be rewritten in terms of $\bm{Y}$ and its Hermitian transpose using $\bm{Y}^\mathsf{T} = \bm{Y}_R^\mathsf{T} + j\,\bm{Y}_I^\mathsf{T}$ and $\bm{Y}^\mathsf{H} = \bm{Y}_R^\mathsf{T} - j\,\bm{Y}_I^\mathsf{T}$. Substituting these expressions into the first row-block of \eqref{eq:cross_deriv_step2} gives $\bm{Y}^\mathsf{T} \bm{A}_{\bm{\gamma}}^{\dagger\mathsf{T}} \ast \bm{F}_{\bm{\gamma}} + \bm{Y}^\mathsf{H} \bm{F}_{\bm{\gamma}} \ast \bm{A}_{\bm{\gamma}}^{\dagger\mathsf{T}}$. For the second row-block, $-j(j\,\bm{Y}_I^\mathsf{T} + \bm{Y}_R^\mathsf{T}) = -j\bm{Y}^\mathsf{T}$ and $j(-j\,\bm{Y}_I^\mathsf{T} + \bm{Y}_R^\mathsf{T}) = j\bm{Y}^\mathsf{H}$, so applying $\operatorname{Re}(-jw) = \operatorname{Im}(w)$ for any complex $w$ yields
\begin{multline}
\label{eq:cross_deriv_final}
\frac{\partial }{\partial \iota(\bm{Y}-\bm{Y}_0)} 
\operatorname{Re}\!\left[
\begin{matrix} \bm{e}_1^\mathsf{T} \bm{A}_{\bm{\gamma}}^\dagger \bm{Y}\bm{Y}^\mathsf{H} \bm{F}_{\bm{\gamma}} \bm{e}_1 & \cdots & \bm{e}_K^\mathsf{T} \bm{A}_{\bm{\gamma}}^\dagger \bm{Y}\bm{Y}^\mathsf{H} \bm{F}_{\bm{\gamma}} \bm{e}_K
\end{matrix} \right] \\
= \begin{bmatrix}
\operatorname{Re}\!\left( \bm{Y}^\mathsf{T} \bm{A}_{\bm{\gamma}}^{\dagger\mathsf{T}} \ast \bm{F}_{\bm{\gamma}} + \bm{Y}^\mathsf{H} \bm{F}_{\bm{\gamma}} \ast \bm{A}_{\bm{\gamma}}^{\dagger\mathsf{T}} \right) \\[3pt]
\operatorname{Im}\!\left( \bm{Y}^\mathsf{T} \bm{A}_{\bm{\gamma}}^{\dagger\mathsf{T}} \ast \bm{F}_{\bm{\gamma}} - \bm{Y}^\mathsf{H} \bm{F}_{\bm{\gamma}} \ast \bm{A}_{\bm{\gamma}}^{\dagger\mathsf{T}} \right)
\end{bmatrix}.
\end{multline}
Substituting $\bm{Y} = \bm{Y}_0 + \bm{Z}$ in \eqref{eq:cross_deriv_final} to recover the dependence on the noise gives the assertion of Lemma~\ref{lem:jacobian_iota}.

Including the $-\frac{1}{L}$ prefactor from the gradient \eqref{eq:first_deriv_gamma}, the cross-derivative in real-equivalent form reads
\begin{align}
\label{eq:cross_deriv_with_factor}
\frac{\partial \nabla_{\bm{\gamma}}\ell(\bm{\gamma};\bm{Y}-\bm{Y}_0)}{\partial \iota(\bm{Y}-\bm{Y}_0)} 
&= -\frac{1}{L}\begin{bmatrix}
\operatorname{Re}\!\left( \bm{Y}^\mathsf{T} \bm{A}_{\bm{\gamma}}^{\dagger\mathsf{T}} \ast \bm{F}_{\bm{\gamma}} + \bm{Y}^\mathsf{H} \bm{F}_{\bm{\gamma}} \ast \bm{A}_{\bm{\gamma}}^{\dagger\mathsf{T}} \right) \\[3pt]
\operatorname{Im}\!\left( \bm{Y}^\mathsf{T} \bm{A}_{\bm{\gamma}}^{\dagger\mathsf{T}} \ast \bm{F}_{\bm{\gamma}} - \bm{Y}^\mathsf{H} \bm{F}_{\bm{\gamma}} \ast \bm{A}_{\bm{\gamma}}^{\dagger\mathsf{T}} \right)
\end{bmatrix}.
\end{align}
The complex cross-derivative $\nabla_{\bm{z}}\nabla_{\bm{\gamma}}\ell(\bm{\gamma};\bm{Z})$ is obtained by combining the top and bottom blocks of \eqref{eq:cross_deriv_with_factor} into the single complex expression as
\begin{align*}
\nabla_{\bm{z}}\nabla_{\bm{\gamma}}\ell(\bm{\gamma};\bm{Z}) = -\frac{1}{2L}\big[\bm{Y}^\mathsf{T}\bm{A}_{\bm{\gamma}}^{\dagger\mathsf{T}} \ast \bm{F}_{\bm{\gamma}} + \bm{Y}^\mathsf{H}\bm{F}_{\bm{\gamma}} \ast \bm{A}_{\bm{\gamma}}^{\dagger\mathsf{T}}\big]^\mathsf{H}.
\end{align*}
Substituting $\bm{Y} = \bm{Y}_0 + \bm{Z}$ recovers the dependence on the noise and yields the assertion of Lemma~\ref{lem:jacobian_iota}. This concludes the proof.

\section{Proof of Lemma~\ref{lem:local_geometry}}
\label{secA4}

This appendix proves the local geometric properties of the VarProSD objective inside the basin $\mathcal{N}(\bm{\tau},\varrho)$ stated in Lemma~\ref{lem:local_geometry}, specifically the lower bound on $\sigma_{\min}(\nabla^2 \ell(\bm{\gamma}))$ and the upper bound on $\sigma_{\max}(\nabla^2 \ell(\bm{\gamma}))$.

\begin{proof}
Let $\delta := \mathrm{d}_\infty(\bm{\tau},\bm{\gamma})$ throughout this proof. By assumption $\bm{\gamma} \in \mathcal{N}(\bm{\tau},\varrho)$ together with \eqref{eq:basin_radius_cond}, $\delta < \tfrac{1}{2}(\Delta - \tfrac{2}{3}\rho\kappa^2)$. We use the shorthand $\bm{F}_{\bm{\gamma}} = \bm{P}_{\bm{\gamma}}^\perp\bm{\Lambda}\bm{A}_{\bm{\gamma}}$ and $\bm{S}_{\bm{\gamma}} = \bm{A}_{\bm{\gamma}}^\mathsf{H}\bm{\Lambda}^\mathsf{H}\bm{F}_{\bm{\gamma}}$ from Lemma~\ref{lem:exsv_schur_comp} throughout. 
By Lemma~\ref{lem:expression_for_hess_alg}, the Hessian of the VarProSD objective admits the closed-form expression
\begin{align}
\label{eq:hess_D_full}
\nabla^2 \ell(\bm{\gamma})
&= \operatorname{Re}\!\left[\left(\bm{A}_{\bm{\gamma}}^\dagger\widehat{\bm{R}}_{\bm{Y}}\bm{A}_{\bm{\gamma}}^{\dagger\mathsf{H}}\right)^\mathsf{T} \odot \bm{S}_{\bm{\gamma}}\right]
- \operatorname{Re}\!\left[\bm{A}_{\bm{\gamma}}^\dagger\widehat{\bm{R}}_{\bm{Y}}\bm{P}_{\bm{\gamma}}^\perp\bm{\Lambda}^2\bm{A}_{\bm{\gamma}} \odot \bm{I}_K\right] \nonumber\\
&\quad + \operatorname{Re}\!\left[\left(\bm{A}_{\bm{\gamma}}^\dagger\widehat{\bm{R}}_{\bm{Y}}\bm{F}_{\bm{\gamma}}\right)^\mathsf{T} \odot \bm{A}_{\bm{\gamma}}^\dagger\bm{\Lambda}\bm{A}_{\bm{\gamma}}\right]
+ \operatorname{Re}\!\left[\left(\bm{A}_{\bm{\gamma}}^\dagger\bm{\Lambda}\bm{A}_{\bm{\gamma}}\right)^\mathsf{T} \odot \bm{A}_{\bm{\gamma}}^\dagger\widehat{\bm{R}}_{\bm{Y}}\bm{F}_{\bm{\gamma}}\right] \nonumber\\
&\quad - \operatorname{Re}\!\left[\left(\bm{A}_{\bm{\gamma}}^\dagger\bm{A}_{\bm{\gamma}}^{\dagger\mathsf{H}}\right)^\mathsf{T} \odot \bm{F}_{\bm{\gamma}}^\mathsf{H}\widehat{\bm{R}}_{\bm{Y}}\bm{F}_{\bm{\gamma}}\right],
\end{align}
where $\widehat{\bm{R}}_{\bm{Y}} = \tfrac{1}{L}\bm{Y}\bm{Y}^\mathsf{H}$ denotes the empirical covariance of $\bm{Y}$.

We next observe that substituting $\bm{Y}\bm{Y}^\mathsf{H} = \bm{I}_N$ into \eqref{eq:hess_D_full} reduces the Hessian to zero, i.e.
\begin{align}
\label{eq:hess_zero_at_identity}
\nabla^2 \ell(\bm{\gamma})\big|_{\bm{Y}\bm{Y}^\mathsf{H} = \bm{I}_N}
&= \operatorname{Re}\!\left[\left(\bm{A}_{\bm{\gamma}}^\dagger\bm{A}_{\bm{\gamma}}^{\dagger\mathsf{H}}\right)^\mathsf{T} \odot \bm{S}_{\bm{\gamma}}\right]
- \operatorname{Re}\!\left[\bm{A}_{\bm{\gamma}}^\dagger\bm{P}_{\bm{\gamma}}^\perp\bm{\Lambda}^2\bm{A}_{\bm{\gamma}} \odot \bm{I}_K\right] \nonumber\\
&\quad + \operatorname{Re}\!\left[\left(\bm{A}_{\bm{\gamma}}^\dagger\bm{P}_{\bm{\gamma}}^\perp\bm{\Lambda}\bm{A}_{\bm{\gamma}}\right)^\mathsf{T} \odot \bm{A}_{\bm{\gamma}}^\dagger\bm{\Lambda}\bm{A}_{\bm{\gamma}}\right]
+ \operatorname{Re}\!\left[\left(\bm{A}_{\bm{\gamma}}^\dagger\bm{\Lambda}\bm{A}_{\bm{\gamma}}\right)^\mathsf{T} \odot \bm{A}_{\bm{\gamma}}^\dagger\bm{P}_{\bm{\gamma}}^\perp\bm{\Lambda}\bm{A}_{\bm{\gamma}}\right] \nonumber\\
&\quad - \operatorname{Re}\!\left[\left(\bm{A}_{\bm{\gamma}}^\dagger\bm{A}_{\bm{\gamma}}^{\dagger\mathsf{H}}\right)^\mathsf{T} \odot \bm{F}_{\bm{\gamma}}^\mathsf{H}\bm{F}_{\bm{\gamma}}\right] = \bm{0},
\end{align}
where the second, third, and fourth summands on the right-hand side of \eqref{eq:hess_zero_at_identity} vanish since they each contain a factor $\bm{A}_{\bm{\gamma}}^\dagger\bm{P}_{\bm{\gamma}}^\perp = \bm{0}$, and the first and last summands cancel using $\bm{F}_{\bm{\gamma}}^\mathsf{H}\bm{F}_{\bm{\gamma}} = \bm{S}_{\bm{\gamma}}$. Since the Hessian \eqref{eq:hess_D_full} is linear in $\bm{Y}\bm{Y}^\mathsf{H}$ and vanishes when $\bm{Y}\bm{Y}^\mathsf{H} = \bm{I}_N$, the substitution $\bm{Y}\bm{Y}^\mathsf{H} \mapsto \bm{Y}\bm{Y}^\mathsf{H} - cL\bm{I}_N$ leaves the Hessian unchanged for any $c \in \mathbb{R}$. We therefore decompose
\begin{align}
\label{eq:YYH_decomp}
\bm{Y}\bm{Y}^\mathsf{H} = \bm{Y}_0\bm{Y}_0^\mathsf{H} + L\bm{R},
\end{align}
where $\bm{Y}_0 = \bm{A}_{\bm{\tau}}\bm{X}$ is the noiseless signal and $\bm{R}$ is the residual covariance matrix defined in Theorem~\ref{thm:local_geometry}. Substituting \eqref{eq:YYH_decomp} into \eqref{eq:hess_D_full} splits the Hessian into a signal contribution involving $\bm{Y}_0\bm{Y}_0^\mathsf{H}$ and a residual covariance contribution involving $\bm{R}$. Using the Hadamard norm bound from Lemma~\ref{lem:zhan_hadamard} together with the triangle inequality, the minimum singular value of the Hessian is lower bounded as
\begin{align}
\label{eq:sigma_min_decomp}
\sigma_{\min}(\nabla^2 \ell(\bm{\gamma}))
&\geq \sigma_{\min}\Big(\underbrace{\operatorname{Re}\!\left[\left(\bm{A}_{\bm{\gamma}}^\dagger\bm{Y}_0\bm{Y}_0^\mathsf{H}\bm{A}_{\bm{\gamma}}^{\dagger\mathsf{H}}\right)^\mathsf{T} \odot \bm{S}_{\bm{\gamma}}\right]}_{(\mathrm{a})}\Big) - \underbrace{\norm{\bm{A}_{\bm{\gamma}}^\dagger\bm{Y}_0}}_{(\mathrm{b})}\cdot \underbrace{\norm{\bm{Y}_0^\mathsf{H}\bm{P}_{\bm{\gamma}}^\perp}}_{(\mathrm{c})}\cdot \underbrace{\norm{\bm{\Lambda}^2\bm{A}_{\bm{\gamma}}}}_{(\mathrm{d})} \nonumber\\
&\quad - 2\,\norm{\bm{A}_{\bm{\gamma}}^\dagger\bm{Y}_0}\cdot \underbrace{\norm{\bm{Y}_0^\mathsf{H}\bm{F}_{\bm{\gamma}}}}_{(\mathrm{e})}\cdot \underbrace{\norm{\bm{A}_{\bm{\gamma}}^\dagger}}_{(\mathrm{f})}\cdot \underbrace{\norm{\bm{\Lambda}\bm{A}_{\bm{\gamma}}}}_{(\mathrm{g})} - \norm{\bm{Y}_0^\mathsf{H}\bm{F}_{\bm{\gamma}}}^2\cdot \norm{\bm{A}_{\bm{\gamma}}^\dagger}^2 \nonumber\\
&\quad - \norm{\bm{R}}\cdot\bigg(\norm{\bm{A}_{\bm{\gamma}}^\dagger}\cdot \underbrace{\norm{\bm{P}_{\bm{\gamma}}^\perp}}_{(\mathrm{h})}\cdot \norm{\bm{\Lambda}^2\bm{A}_{\bm{\gamma}}} + 2\norm{\bm{A}_{\bm{\gamma}}^\dagger}^2\cdot \underbrace{\norm{\bm{F}_{\bm{\gamma}}}}_{(\mathrm{i})}\cdot \norm{\bm{\Lambda}\bm{A}_{\bm{\gamma}}} \nonumber\\
&\qquad\qquad\quad + \norm{\bm{F}_{\bm{\gamma}}}^2\cdot \norm{\bm{A}_{\bm{\gamma}}^\dagger}^2 + \norm{\bm{A}_{\bm{\gamma}}^\dagger}^2\cdot \underbrace{\norm{\bm{S}_{\bm{\gamma}}}}_{(\mathrm{j})}\bigg).
\end{align}
We now bound each of the terms $(\mathrm{a})$--$(\mathrm{j})$ in turn.

We begin with term $(\mathrm{f})$. Applying \cite[Theorem~1]{ferreira2023conditionNumber} (paraphrased in Appendix~\ref{secA1} as Lemma~\ref{lem:ferreira_thm1}),
\begin{align}
\label{eq:bound_f}
\norm{\bm{A}_{\bm{\gamma}}^\dagger} = \norm{(\bm{G}\bm{\Phi}_{\bm{\gamma}})^\dagger} \leq \frac{1}{\sqrt{T E_g}\cdot \eta_-},
\end{align}
where $\eta_+,\eta_-$ are defined in \eqref{eq:def_eta}. By the basin condition \eqref{eq:basin_radius_cond} together with $\delta < \tfrac{1}{2}(\Delta - \tfrac{2}{3}\rho\kappa^2)$, these constants satisfy
\begin{align}
\label{eq:eta_bounds_D}
\eta_+ \leq \sqrt{\tfrac{4}{3}}, \qquad \eta_- \geq \sqrt{\tfrac{2}{3}}.
\end{align}
Next, we bound term $(\mathrm{b})$ by writing $\bm{A}_{\bm{\gamma}}^\dagger\bm{Y}_0 = \bm{A}_{\bm{\gamma}}^\dagger\bm{A}_{\bm{\tau}}\bm{X} = \big(\bm{I}_K + \bm{A}_{\bm{\gamma}}^\dagger(\bm{A}_{\bm{\tau}} - \bm{A}_{\bm{\gamma}})\big)\bm{X}$, where we used $\bm{A}_{\bm{\gamma}}^\dagger\bm{A}_{\bm{\gamma}} = \bm{I}_K$ since $\bm{A}_{\bm{\gamma}}$ is full column rank. The submultiplicative property of the spectral norm together with Lemma~\ref{cor:bounding_sing_val_phi_gamm_phi_tau} and Lemma~\ref{lem:ferreira_thm1} then yields
\begin{align}
\label{eq:bound_b}
\norm{\bm{A}_{\bm{\gamma}}^\dagger\bm{Y}_0} \leq \bigg(1 + \delta\cdot\sqrt{\tfrac{E_{g'}}{E_g}}\cdot \tfrac{\eta_+}{\eta_-}\bigg)\cdot\norm{\bm{X}}.
\end{align}
Applying Lemma~\ref{cor:bounding_sing_val_phi_gamm_phi_tau} to bound term $(\mathrm{c})$,
\begin{align}
\label{eq:bound_c}
\norm{\bm{Y}_0^\mathsf{H}\bm{P}_{\bm{\gamma}}^\perp} \leq \norm{\bm{P}_{\bm{\gamma}}^\perp\bm{A}_{\bm{\tau}}}\cdot\norm{\bm{X}} \leq \delta\cdot\sqrt{T E_{g'}}\cdot \eta_+\cdot\norm{\bm{X}}.
\end{align}
Terms $(\mathrm{d})$ and $(\mathrm{g})$ are bounded using Lemma~\ref{lem:cor_of_ferreira_thm1},
\begin{align}
\label{eq:bound_dg}
\norm{\bm{\Lambda}\bm{A}_{\bm{\gamma}}} \leq \sqrt{T E_{g'}}\cdot\eta_+, \qquad \norm{\bm{\Lambda}^2\bm{A}_{\bm{\gamma}}} \leq \sqrt{T E_{g''}}\cdot\eta_+.
\end{align}
Term $(\mathrm{e})$ is bounded by writing $\bm{Y}_0^\mathsf{H}\bm{F}_{\bm{\gamma}} = \bm{X}^\mathsf{H}\bm{A}_{\bm{\tau}}^\mathsf{H}\bm{P}_{\bm{\gamma}}^\perp\bm{\Lambda}\bm{A}_{\bm{\gamma}}$ and applying submultiplicativity together with Lemma~\ref{cor:bounding_sing_val_phi_gamm_phi_tau} and \eqref{eq:bound_dg},
\begin{align}
\label{eq:bound_e}
\norm{\bm{Y}_0^\mathsf{H}\bm{F}_{\bm{\gamma}}} \leq \norm{\bm{\Lambda}\bm{A}_{\bm{\gamma}}}\cdot\norm{\bm{P}_{\bm{\gamma}}^\perp\bm{A}_{\bm{\tau}}}\cdot\norm{\bm{X}} \leq T E_{g'}\cdot\eta_+^2\cdot\delta\cdot\norm{\bm{X}}.
\end{align}
Term $(\mathrm{h})$ satisfies $\norm{\bm{P}_{\bm{\gamma}}^\perp} \leq 1$ since $\bm{P}_{\bm{\gamma}}^\perp$ is an orthogonal projection. Term $(\mathrm{i})$ is bounded using $\norm{\bm{P}_{\bm{\gamma}}^\perp} \leq 1$ together with \eqref{eq:bound_dg},
\begin{align}
\label{eq:bound_i}
\norm{\bm{F}_{\bm{\gamma}}} \leq \norm{\bm{P}_{\bm{\gamma}}^\perp}\cdot\norm{\bm{\Lambda}\bm{A}_{\bm{\gamma}}} \leq \sqrt{T E_{g'}}\cdot\eta_+.
\end{align}
Term $(\mathrm{j})$ is bounded directly using Lemma~\ref{lem:exsv_schur_comp},
\begin{align}
\label{eq:bound_j}
\norm{\bm{S}_{\bm{\gamma}}} \leq T E_{g'}\cdot\eta_+^2.
\end{align}

It remains to lower bound term $(\mathrm{a})$. Decomposing $\bm{S}_{\bm{\gamma}} = T E_{g'}\bm{I}_K + (\bm{S}_{\bm{\gamma}} - T E_{g'}\bm{I}_K)$,
\begin{multline*}
\operatorname{Re}\!\left[\left(\bm{A}_{\bm{\gamma}}^\dagger\bm{Y}_0\bm{Y}_0^\mathsf{H}\bm{A}_{\bm{\gamma}}^{\dagger\mathsf{H}}\right)^\mathsf{T} \odot \bm{S}_{\bm{\gamma}}\right]
= T E_{g'}\cdot\left(\bm{A}_{\bm{\gamma}}^\dagger\bm{Y}_0\bm{Y}_0^\mathsf{H}\bm{A}_{\bm{\gamma}}^{\dagger\mathsf{H}}\right)^\mathsf{T} \odot \bm{I}_K \\
 + \operatorname{Re}\!\left[\left(\bm{A}_{\bm{\gamma}}^\dagger\bm{Y}_0\bm{Y}_0^\mathsf{H}\bm{A}_{\bm{\gamma}}^{\dagger\mathsf{H}}\right)^\mathsf{T} \odot \left(\bm{S}_{\bm{\gamma}} - T E_{g'}\bm{I}_K\right)\right],
\end{multline*}
and applying the triangle inequality together with Lemma~\ref{lem:zhan_hadamard},
\begin{align}
\label{eq:sigma_min_a}
\sigma_{\min}\!\left(\operatorname{Re}\!\left[\left(\bm{A}_{\bm{\gamma}}^\dagger\bm{Y}_0\bm{Y}_0^\mathsf{H}\bm{A}_{\bm{\gamma}}^{\dagger\mathsf{H}}\right)^\mathsf{T} \odot \bm{S}_{\bm{\gamma}}\right]\right)
&\geq T E_{g'}\cdot \underbrace{\min_{j \in [K]}\norm{\bm{e}_j^\mathsf{T}\bm{A}_{\bm{\gamma}}^\dagger\bm{Y}_0}^2}_{(\mathrm{k})} - \underbrace{\norm{\bm{S}_{\bm{\gamma}} - T E_{g'}\bm{I}_K}}_{(\mathrm{l})}\cdot\norm{\bm{A}_{\bm{\gamma}}^\dagger\bm{Y}_0}^2.
\end{align}
Term $(\mathrm{l})$ is bounded using Lemma~\ref{lem:exsv_schur_comp},
\begin{align}
\label{eq:bound_l}
\norm{\bm{S}_{\bm{\gamma}} - T E_{g'}\bm{I}_K} \leq \tfrac{2}{3}T E_{g'}\rho(\Delta-2\delta)^{-1}.
\end{align}
Term $(\mathrm{k})$ is lower bounded by writing $\bm{e}_j^\mathsf{T}\bm{A}_{\bm{\gamma}}^\dagger\bm{Y}_0 = \bm{e}_j^\mathsf{T}\big(\bm{I}_K + \bm{A}_{\bm{\gamma}}^\dagger(\bm{A}_{\bm{\tau}} - \bm{A}_{\bm{\gamma}})\big)\bm{X}$ and applying the reverse triangle inequality,
\begin{align}
\min_{j \in [K]}\norm{\bm{e}_j^\mathsf{T}\bm{A}_{\bm{\gamma}}^\dagger\bm{Y}_0}_2
&\geq \min_{j \in [K]}\norm{\bm{e}_j^\mathsf{T}\bm{X}}_2 - \norm{\bm{A}_{\bm{\gamma}}^\dagger(\bm{A}_{\bm{\tau}} - \bm{A}_{\bm{\gamma}})\bm{X}} \nonumber\\
\label{eq:bound_k}
&\geq r_{\min}(\bm{X}) - \delta\cdot\sqrt{\tfrac{E_{g'}}{E_g}}\cdot \tfrac{\eta_+}{\eta_-}\cdot\norm{\bm{X}},
\end{align}
where the second inequality follows from $\min_{j}\norm{\bm{e}_j^\mathsf{T}\bm{X}}_2 = r_{\min}(\bm{X})$ and the bound on the second term in \eqref{eq:bound_k} (derived using the same argument as in \eqref{eq:bound_b}).

Substituting the bounds \eqref{eq:bound_f}--\eqref{eq:bound_j} along with \eqref{eq:sigma_min_a}--\eqref{eq:bound_k} into \eqref{eq:sigma_min_decomp} yields
\begin{align*}
\sigma_{\min}(\nabla^2\ell(\bm{\gamma})) 
&\geq T E_{g'}\cdot\bigg(r_{\min}(\bm{X}) - \delta\cdot\sqrt{\tfrac{E_{g'}}{E_g}}\cdot\tfrac{\eta_+}{\eta_-}\cdot\norm{\bm{X}}\bigg)^2 \nonumber\\
&\quad - \tfrac{2}{3}T E_{g'}\rho(\Delta-2\delta)^{-1}\cdot\bigg(1 + \delta\cdot\sqrt{\tfrac{E_{g'}}{E_g}}\cdot\tfrac{\eta_+}{\eta_-}\bigg)^2\cdot\norm{\bm{X}}^2 \nonumber\\
&\quad - \bigg(1 + \delta\cdot\sqrt{\tfrac{E_{g'}}{E_g}}\cdot\tfrac{\eta_+}{\eta_-}\bigg)\cdot T\cdot\norm{\bm{X}}^2\cdot\delta\cdot\sqrt{E_{g'}\cdot E_{g''}}\cdot\eta_+^2 \nonumber\\
&\quad - 2\bigg(1 + \delta\cdot\sqrt{\tfrac{E_{g'}}{E_g}}\cdot\tfrac{\eta_+}{\eta_-}\bigg)\cdot T\cdot\norm{\bm{X}}^2\cdot\delta\cdot\sqrt{\tfrac{E_{g'}}{E_g}}\cdot\tfrac{\eta_+}{\eta_-}\cdot E_{g'}\cdot\eta_+^2 \nonumber\\
&\quad - \delta^2\cdot\tfrac{E_{g'}}{E_g}\cdot\tfrac{\eta_+^2}{\eta_-^2}\cdot E_{g'}\cdot\eta_+^2\cdot T\cdot\norm{\bm{X}}^2 \nonumber\\
&\quad - \norm{\bm{R}}\cdot\bigg(\sqrt{\tfrac{E_{g''}}{E_g}}\cdot\tfrac{\eta_+}{\eta_-} + 4\cdot\tfrac{E_{g'}}{E_g}\cdot\tfrac{\eta_+^2}{\eta_-^2}\bigg).
\end{align*}
Applying the bounds in \eqref{eq:eta_bounds_D} together with the basin condition \eqref{eq:basin_radius_cond} on $\delta$ and the residual covariance condition \eqref{eq:cond_on_R1} on $\norm{\bm{R}}$ yields the lower bound
\begin{align*}
\sigma_{\min}(\nabla^2\ell(\bm{\gamma})) \geq \tfrac{1}{3}T E_{g'}\cdot r_{\min}^2(\bm{X}),
\end{align*}
which establishes \eqref{eq:min_sing_value_lemma}.

The upper bound on $\sigma_{\max}(\nabla^2\ell(\bm{\gamma}))$ is obtained by an analogous argument, with the triangle inequality applied in the reverse direction so that all signs in \eqref{eq:sigma_min_decomp} become positive. Specifically, using the Hadamard norm bound from Lemma~\ref{lem:zhan_hadamard} together with the triangle inequality, the maximum singular value of the Hessian is upper bounded as
\begin{align}
\label{eq:sigma_max_decomp}
\sigma_{\max}(\nabla^2 \ell(\bm{\gamma}))
&\leq \sigma_{\max}\Big(\underbrace{\operatorname{Re}\!\left[\left(\bm{A}_{\bm{\gamma}}^\dagger\bm{Y}_0\bm{Y}_0^\mathsf{H}\bm{A}_{\bm{\gamma}}^{\dagger\mathsf{H}}\right)^\mathsf{T} \odot \bm{S}_{\bm{\gamma}}\right]}_{(\mathrm{a})}\Big) + \norm{\bm{A}_{\bm{\gamma}}^\dagger\bm{Y}_0}\cdot \norm{\bm{Y}_0^\mathsf{H}\bm{P}_{\bm{\gamma}}^\perp}\cdot \norm{\bm{\Lambda}^2\bm{A}_{\bm{\gamma}}} \nonumber\\
&\quad + 2\,\norm{\bm{A}_{\bm{\gamma}}^\dagger\bm{Y}_0}\cdot \norm{\bm{Y}_0^\mathsf{H}\bm{F}_{\bm{\gamma}}}\cdot \norm{\bm{A}_{\bm{\gamma}}^\dagger}\cdot \norm{\bm{\Lambda}\bm{A}_{\bm{\gamma}}} + \norm{\bm{Y}_0^\mathsf{H}\bm{F}_{\bm{\gamma}}}^2\cdot \norm{\bm{A}_{\bm{\gamma}}^\dagger}^2 \nonumber\\
&\quad + \norm{\bm{R}}\cdot\bigg(\norm{\bm{A}_{\bm{\gamma}}^\dagger}\cdot \norm{\bm{P}_{\bm{\gamma}}^\perp}\cdot \norm{\bm{\Lambda}^2\bm{A}_{\bm{\gamma}}} + 2\norm{\bm{A}_{\bm{\gamma}}^\dagger}^2\cdot \norm{\bm{F}_{\bm{\gamma}}}\cdot \norm{\bm{\Lambda}\bm{A}_{\bm{\gamma}}} \nonumber\\
&\qquad\qquad\quad + \norm{\bm{F}_{\bm{\gamma}}}^2\cdot \norm{\bm{A}_{\bm{\gamma}}^\dagger}^2 + \norm{\bm{A}_{\bm{\gamma}}^\dagger}^2\cdot \norm{\bm{S}_{\bm{\gamma}}}\bigg).
\end{align}
Most terms in \eqref{eq:sigma_max_decomp} are bounded exactly as in \eqref{eq:bound_f}--\eqref{eq:bound_j}. The only term whose treatment differs from the lower bound argument is the signal contribution $(\mathrm{a})$, which is upper bounded by decomposing $\bm{S}_{\bm{\gamma}} = T E_{g'}\bm{I}_K + (\bm{S}_{\bm{\gamma}} - T E_{g'}\bm{I}_K)$ and applying Lemma~\ref{lem:zhan_hadamard} together with the triangle inequality to obtain
\begin{align*}
\sigma_{\max}\!\left(\operatorname{Re}\!\left[\left(\bm{A}_{\bm{\gamma}}^\dagger\bm{Y}_0\bm{Y}_0^\mathsf{H}\bm{A}_{\bm{\gamma}}^{\dagger\mathsf{H}}\right)^\mathsf{T} \odot \bm{S}_{\bm{\gamma}}\right]\right) 
\leq T E_{g'}\cdot \underbrace{\max_{j \in [K]}\norm{\bm{e}_j^\mathsf{T}\bm{A}_{\bm{\gamma}}^\dagger\bm{Y}_0}^2}_{(\mathrm{m})} + \norm{\bm{S}_{\bm{\gamma}} - T E_{g'}\bm{I}_K}\cdot\norm{\bm{A}_{\bm{\gamma}}^\dagger\bm{Y}_0}^2,
\end{align*}
where term $(\mathrm{m})$ is upper bounded by writing $\bm{e}_j^\mathsf{T}\bm{A}_{\bm{\gamma}}^\dagger\bm{Y}_0 = \bm{e}_j^\mathsf{T}\big(\bm{I}_K + \bm{A}_{\bm{\gamma}}^\dagger(\bm{A}_{\bm{\tau}} - \bm{A}_{\bm{\gamma}})\big)\bm{X}$ and applying the triangle inequality together with Lemma~\ref{cor:bounding_sing_val_phi_gamm_phi_tau} and Lemma~\ref{lem:ferreira_thm1},
\begin{align*}
\max_{j \in [K]}\norm{\bm{e}_j^\mathsf{T}\bm{A}_{\bm{\gamma}}^\dagger\bm{Y}_0}_2 \leq r_{\max}(\bm{X}) + \delta\cdot\sqrt{\tfrac{E_{g'}}{E_g}}\cdot\tfrac{\eta_+}{\eta_-}\cdot\norm{\bm{X}},
\end{align*}
where $r_{\max}(\bm{X}) = \max_{j \in [K]}\norm{\bm{e}_j^\mathsf{T}\bm{X}}_2$. Substituting these bounds into \eqref{eq:sigma_max_decomp} and applying \eqref{eq:eta_bounds_D}, the basin condition \eqref{eq:basin_radius_cond} on $\delta$, and the residual covariance condition \eqref{eq:cond_on_R1} on $\norm{\bm{R}}$ yields the upper bound
\begin{align*}
\sigma_{\max}(\nabla^2\ell(\bm{\gamma})) \leq T E_{g'}\cdot r_{\max}^2(\bm{X}),
\end{align*}
which establishes \eqref{eq:max_sing_value_lemma}. This concludes the proof.
\end{proof}

\section{Proof of Lemma~\ref{lem:spectral_norm_grad}}
\label{secA5}

This appendix establishes the spectral norm bound on the cross-derivative $\nabla_{\bm{z}}\nabla_{\bm{\gamma}}\ell(\bm{\gamma}_\star;\bm{Z})$ stated in Lemma~\ref{lem:spectral_norm_grad}, which underpins the adversarial noise analysis. The proof leverages the closed-form expression of the cross-derivative from Lemma~\ref{lem:jacobian_iota} together with the conditioning bounds collected in Appendix~\ref{secA4} for the proof of Lemma~\ref{lem:local_geometry}.

\begin{proof}
Let $\delta := \mathrm{d}_\infty(\bm{\tau},\bm{\gamma}_\star)$ throughout this proof. By assumption $\bm{\gamma}_\star \in \mathcal{N}(\bm{\tau},\varrho)$ together with \eqref{eq:basin_radius_cond}, $\delta < \tfrac{1}{2}(\Delta - \tfrac{2}{3}\rho\kappa^2)$, and the constants $\eta_+,\eta_-$ defined in \eqref{eq:def_eta} satisfy the bounds \eqref{eq:eta_bounds_D}. We use the shorthand $\bm{F}_{\bm{\gamma}_\star} = \bm{P}_{\bm{\gamma}_\star}^\perp\bm{\Lambda}\bm{A}_{\bm{\gamma}_\star}$ throughout. By Lemma~\ref{lem:jacobian_iota}, the cross-derivative admits the closed-form expression
\begin{align}
\label{eq:cross_deriv_E}
\nabla_{\bm{z}}\nabla_{\bm{\gamma}}\ell(\bm{\gamma}_\star;\bm{Z}) = -\frac{1}{2L}\big[\bm{Y}^\mathsf{T}\bm{A}_{\bm{\gamma}_\star}^{\dagger\mathsf{T}} \ast \bm{F}_{\bm{\gamma}_\star} + \bm{Y}^\mathsf{H}\bm{F}_{\bm{\gamma}_\star} \ast \bm{A}_{\bm{\gamma}_\star}^{\dagger\mathsf{T}}\big]^\mathsf{H}.
\end{align}
Applying the spectral norm bound on Khatri-Rao products from Lemma~\ref{lem:khatri_spectral_norm} together with the triangle inequality, the spectral norm of \eqref{eq:cross_deriv_E} is upper bounded as
\begin{align}
\label{eq:spectral_norm_split}
\norm{\nabla_{\bm{z}}\nabla_{\bm{\gamma}}\ell(\bm{\gamma}_\star;\bm{Z})}
\leq \frac{1}{2L}\bigg(\underbrace{\norm{\bm{A}_{\bm{\gamma}_\star}^\dagger\bm{Y}}}_{(\mathrm{a})}\cdot \norm{\bm{F}_{\bm{\gamma}_\star}} + \underbrace{\norm{\bm{Y}^\mathsf{H}\bm{F}_{\bm{\gamma}_\star}}}_{(\mathrm{b})}\cdot \norm{\bm{A}_{\bm{\gamma}_\star}^\dagger}\bigg).
\end{align}
The bounds on $\norm{\bm{A}_{\bm{\gamma}_\star}^\dagger}$ and $\norm{\bm{F}_{\bm{\gamma}_\star}}$ are obtained from \eqref{eq:bound_f} and \eqref{eq:bound_i} respectively, and it remains to bound the terms $(\mathrm{a})$ and $(\mathrm{b})$.

We bound term $(\mathrm{a})$ by writing $\bm{Y} = \bm{Y}_0 + \bm{Z}$ and applying the triangle inequality,
\begin{align}
\norm{\bm{A}_{\bm{\gamma}_\star}^\dagger\bm{Y}}
&\leq \norm{\bm{A}_{\bm{\gamma}_\star}^\dagger\bm{Y}_0} + \norm{\bm{A}_{\bm{\gamma}_\star}^\dagger}\cdot\norm{\bm{Z}} \nonumber\\
\label{eq:bound_a_E}
&\leq \bigg(1 + \delta\cdot\sqrt{\tfrac{E_{g'}}{E_g}}\cdot\tfrac{\eta_+}{\eta_-}\bigg)\cdot\norm{\bm{X}} + \tfrac{1}{\sqrt{T E_g}\cdot\eta_-}\cdot\norm{\bm{Z}},
\end{align}
where the bound on $\norm{\bm{A}_{\bm{\gamma}_\star}^\dagger\bm{Y}_0}$ follows from \eqref{eq:bound_b} and the bound on $\norm{\bm{A}_{\bm{\gamma}_\star}^\dagger}$ from \eqref{eq:bound_f}. Similarly, we bound term $(\mathrm{b})$ as
\begin{align}
\norm{\bm{Y}^\mathsf{H}\bm{F}_{\bm{\gamma}_\star}}
&\leq \norm{\bm{Y}_0^\mathsf{H}\bm{F}_{\bm{\gamma}_\star}} + \norm{\bm{Z}}\cdot\norm{\bm{F}_{\bm{\gamma}_\star}} \nonumber\\
\label{eq:bound_b_E}
&\leq T E_{g'}\cdot\eta_+^2\cdot\delta\cdot\norm{\bm{X}} + \sqrt{T E_{g'}}\cdot\eta_+\cdot\norm{\bm{Z}},
\end{align}
where the bound on $\norm{\bm{Y}_0^\mathsf{H}\bm{F}_{\bm{\gamma}_\star}}$ follows from \eqref{eq:bound_e} and the bound on $\norm{\bm{F}_{\bm{\gamma}_\star}}$ from \eqref{eq:bound_i}.

Substituting \eqref{eq:bound_a_E} and \eqref{eq:bound_b_E} together with the bounds \eqref{eq:bound_f} and \eqref{eq:bound_i} into \eqref{eq:spectral_norm_split} yields
\begin{multline*}
\norm{\nabla_{\bm{z}}\nabla_{\bm{\gamma}}\ell(\bm{\gamma}_\star;\bm{Z})}
\leq \frac{1}{2L}\bigg(\sqrt{T E_{g'}}\cdot\eta_+\cdot\norm{\bm{X}} + 2\delta\cdot\sqrt{T E_{g'}}\cdot\eta_+\cdot\sqrt{\tfrac{E_{g'}}{E_g}}\cdot\tfrac{\eta_+}{\eta_-}\cdot\norm{\bm{X}} \\
\qquad\qquad + 2\sqrt{\tfrac{E_{g'}}{E_g}}\cdot\tfrac{\eta_+}{\eta_-}\cdot\norm{\bm{Z}}\bigg).
\end{multline*}
Applying the bounds in \eqref{eq:eta_bounds_D} to absorb the $\eta_+, \eta_-$ factors into a constant $c$ yields
\begin{align*}
\norm{\nabla_{\bm{z}}\nabla_{\bm{\gamma}}\ell(\bm{\gamma}_\star;\bm{Z})}
&\leq c\,\bigg(\sqrt{T E_{g'}}\cdot\norm{\bm{X}} + \delta\cdot\sqrt{T E_{g'}}\cdot\sqrt{\tfrac{E_{g'}}{E_g}}\cdot\norm{\bm{X}} + \sqrt{\tfrac{E_{g'}}{E_g}}\cdot\norm{\bm{Z}}\bigg),
\end{align*}
which establishes \eqref{eq:lemma_spectral_norm_jac_z} and concludes the proof.
\end{proof}


\clearpage

\renewcommand*{\bibfont}{\footnotesize}
\printbibliography[heading=bibintoc]

\end{document}